\font\ppppppcarac=ptmr8y at 4pt
\font\pppppcarac=ptmr8y at 5pt
\font\ppppcarac=ptmr8y at 6pt
\font\pppcarac=ptmr8y at 7pt
\font\pcarac=ptmr8y at 9pt
\newcommand{\bfA}{{\bm{A}}}
\newcommand{\bfG}{{\bm{G}}}
\newcommand{\bfH}{{\bm{H}}}
\newcommand{\bfL}{{\bm{L}}}
\newcommand{\bfQ}{{\bm{Q}}}
\newcommand{\bfU}{{\bm{U}}}
\newcommand{\bfV}{{\bm{V}}}
\newcommand{\bfW}{{\bm{W}}}
\newcommand{\bfX}{{\bm{X}}}
\newcommand{\bfY}{{\bm{Y}}}
\newcommand{\bfzero}{{ \hbox{\bf 0} }}
\newcommand{\bfa}{{\bm{a}}}
\newcommand{\bfb}{{\bm{b}}}
\newcommand{\bff}{{\bm{f}}}
\newcommand{\bfh}{{\bm{h}}}
\newcommand{\bfq}{{\bm{q}}}
\newcommand{\bfu}{{\bm{u}}}
\newcommand{\bfw}{{\bm{w}}}
\newcommand{\bfx}{{\bm{x}}}
\newcommand{\bfy}{{\bm{y}}}
\newcommand{\bfz}{{\bm{z}}}
\newcommand{\bfalpha}{{\bm{\alpha}}}
\newcommand{\bfbeta}{{\bm{\beta}}}
\newcommand{\bfeta}{{\bm{\eta}}}
\newcommand{\bflambda}{{\bm{\lambda}}}
\newcommand{\bfomega}{{\bm{\omega}}}
\newcommand{\bfzeta}{{\bm{\zeta}}}
\newcommand{\AAeclair}{{\mathbb{A}}}
\newcommand{\CC}{{\mathbb{C}}}
\newcommand{\FF}{{\mathbb{F}}}
\newcommand{\HH}{{\mathbb{H}}}
\newcommand{\LL}{{\mathbb{L}}}
\newcommand{\MM}{{\mathbb{M}}}
\newcommand{\NN}{{\mathbb{N}}}
\newcommand{\RR}{{\mathbb{R}}}
\newcommand{\UU}{{\mathbb{U}}}
\newcommand{\VV}{{\mathbb{V}}}
\newcommand{\WW}{{\mathbb{W}}}
\newcommand{\ff}{{\mathbb{f}}}
\newcommand{\nn}{{\mathbb{n}}}
\newcommand{\sigmasigma}{{\mathbb{\sigma}}}
\newcommand{\epsilonepsilon}{{\mathbb{\epsilon}}}
\DeclareMathAlphabet{\mathonebb}{U}{bbold}{m}{n}
\def\11{{\ensuremath{\mathonebb{1}}}}
\newcommand{\curB}{{\mathcal{B}}}
\newcommand{\curD}{{\mathcal{D}}}
\newcommand{\curG}{{\mathcal{G}}}
\newcommand{\curH}{{\mathcal{H}}}
\newcommand{\curP}{{\mathcal{P}}}
\newcommand{\curT}{{\mathcal{T}}}
\newcommand{\curV}{{\mathcal{V}}}
\newcommand{\ad}{{\hbox{{\ppppcarac ad}}}}
\newcommand{\pad}{{\hbox{{\pppppcarac ad}}}}
\newcommand{\st}{{\hbox{{\ppppcarac st}}}}
\newcommand{\SB}{{\hbox{{\pppppcarac SB}}}}
\newcommand{\pexp}{{\hbox{{\ppppcarac targ}}}}
\newcommand{\pobs}{{\hbox{{\pppcarac obs}}}}
\newcommand{\ppobs}{{\hbox{{\ppppcarac obs}}}}
\newcommand{\ppost}{{\hbox{{\ppppcarac post}}}}
\newcommand{\pppost}{{\hbox{{\pppppcarac post}}}}
\newcommand{\pQA}{{\hbox{{\ppppcarac QA}}}}
\newcommand{\cov}{{\hbox{{\textrm cov}}}}
\newcommand{\perr}{{\hbox{{\pcarac err}}}}
\newcommand{\pmax}{{\hbox{{\ppppcarac max}}}}
\newcommand{\pMC}{{\hbox{{\pppppcarac MC}}}}
\newcommand{\pPCA}{{\hbox{{\pppppcarac PCA}}}}
\newcommand{\tr}{{\hbox{{\textrm tr}}}}
\newcommand{\error}{{\hbox{{err}}}}
\newcommand{\psol}{{\hbox{{\ppppcarac sol}}}}
\newcommand{\ppsol}{{\hbox{{\ppppppcarac sol}}}}
\newcommand{\wien}{{\hbox{{\ppppcarac wien}}}}
\newcommand{\prelax}{{\hbox{{\ppppcarac relax}}}}
\newcommand{\bfHbflambdai}{{\bfH_{\!\bflambda^{\,i}}}}
\newcommand{\RRNr}{{{\RR^{N_r}}}}
\newcommand{\RRnu}{{{\RR^{\nu}}}}
\newcommand{\RRnx}{{{\RR^{n_x}}}}
\newcommand{\RRnq}{{{\RR^{n_q}}}}
\newcommand{\RRnw}{{{\RR^{n_w}}}}
\newcommand{\varphitilde} {{\underaccent{\thicksim}{\varphi}}}
\newcommand{\phitilde} {{\underaccent{\thicksim}{\phi}}}
\newcommand{\sigmatilde} {{\underaccent{\thicksim}{\sigma}}}
\newcommand{\curC}{{\mathscr{C}}}       
\font\bf=ptmb8y at 10pt
\font\pbf=ptmb8y at 7pt
\font\vcarac=ptmr8y at 10pt
\font\pvcarac=ptmr8y at 7pt
\newcommand{\bfv}{{\hbox{\bf{v}}}}      
\newcommand{\pbfv}{{\hbox{\pbf{v}}}}    
\newcommand{\vc}{{\hbox{\vcarac{v}}}}   
\newcommand{\pvc}{{\hbox{\pvcarac{v}}}} 
\newcommand{\bfi}{{\hbox{\bf{i}}}}
\newcommand{\pbfi}{{\hbox{\pbf{i}}}}
\newcommand{\bfj}{{\hbox{\bf{j}}}}
\newcommand{\pbfj}{{\hbox{\pbf{j}}}}
\newtheorem{theorem}{Theorem}
\newdefinition{definition}{Definition}
\newtheorem{lemma}{Lemma}
\newproof{proof}{Proof}
\newdefinition{remark}{Remark}
\newdefinition{hypothesis}{Hypothesis}
\newdefinition{notation}{Notation}
\newproof{example}{Example}
\numberwithin{equation}{section}
\journal{arXiv}
\begin{document}

\begin{frontmatter}

\title{Probabilistic learning constrained by realizations using a weak \\
  formulation of Fourier transform of probability measures}


\author[1]{Christian Soize \corref{cor1}}
\ead{christian.soize@univ-eiffel.fr}
\cortext[cor1]{Corresponding author: C. Soize, christian.soize@univ-eiffel.fr}
\address[1]{Universit\'e Gustave Eiffel, MSME UMR 8208 CNRS, 5 bd Descartes, 77454 Marne-la-Vall\'ee, France}


\begin{abstract}
This paper deals with the taking into account a given set of realizations as constraints in the Kullback-Leibler minimum principle, which is used as a probabilistic learning algorithm. This permits the effective integration of data into predictive models.
We consider the probabilistic learning of a random vector that is made up of either a  quantity of interest (unsupervised case) or the couple of the quantity of interest and a control parameter (supervised case). A training set of independent realizations of this random vector is assumed to be given and to be generated with a prior probability measure that is unknown. A target set of realizations of the QoI is available for the two considered cases. The framework is the one of non-Gaussian problems in high dimension. A functional approach is developed on the basis of a weak formulation of the Fourier transform of probability measures (characteristic functions). The construction makes it possible to take into account the target set of realizations of the QoI in the Kullback-Leibler minimum principle. The proposed approach allows for estimating the posterior probability measure of the QoI (unsupervised case)  or of the posterior joint probability measure of the QoI with the control parameter (supervised case). The existence and the uniqueness of the posterior probability measure is analyzed for the two cases. The numerical aspects are detailed in order to facilitate the implementation of the proposed method. The presented application in high dimension demonstrates the efficiency and the robustness of the proposed algorithm.
\end{abstract}

\begin{keyword}
Probabilistic learning \sep realizations as targets\sep statistical inverse problem\sep Kullback-Leibler divergence\sep uncertainty quantification
\end{keyword}

\end{frontmatter}

\section{Introduction}
\label{sec:Section1}
This paper deals with a probabilistic learning inference that permits the effective integration of data (target set) into predictive models. The target set is constituted of realizations/samples of the quantity of interest (QoI) and the training set is constituted of a small number of points, each point being a realization of the pair made up of the random QoI (output) and the random control parameter (input).
Taking into account constraints in learning algorithms remains a very important question and an active research topic. Bayesian updating provides a rational framework for integrating data into predictive models (see \cite{Bernardo2000,Kennedy2001,Spall2005,Congdon2007,Carlin2008,Gentle2009,Marin2012,Givens2013,Scott2016,Ghanem2017} for general aspects,
 \cite{Kaipio2005,Marzouk2007,Stuart2010,Soize2011,Matthies2016,Bilionis2017,Dashti2017,Arnst2017,Spantini2017,Picchini2018,Perrin2020} for specific aspects related to statistical inverse problems, \cite{Shen2012,Depraetere2017} for variational Bayesian methods,
\cite{Golightly2006} for Bayesian sequential inference, or \cite{Fearnhead2006} for Bayesian inference for changepoint problems).
Bayesian inferences have also been considered in the framework of machine learning \cite{Neil2007,Sambasivan2020} and probabilistic learning for small data sets and in high dimension \cite{Soize2020b}. Bayesian inference is therefore a powerful statistical tool for integrating raw data but requires that targets be given in the form of realizations, which is not the hypothesis introduced in  this paper. Note also that Bayesian inference can remains tricky to use \cite{Owhadi2015}, in particular for the high dimension.
However, in many instances, relevant information is available in the form of sample statistics, such as statistical moments, rather than raw data; this is the case when the statistical moments have been estimated with realizations (samples) that are no longer available. In these settings, the Kullback-Leibler divergence minimum principle \cite{Kullback1951,Kapur1992,Cover2006,Gentle2009} can be used for estimating a posterior probability measure  given its prior probability measure and the constraints related to the statistical moments.
This principle has extensively been used over the last three decades for imposing constraints in the framework of learning with statistical models (see for instance \cite{Kapur1992,Vasconcelos2004,Zhang2007,Cappe2013,Saleem2018}), in particular for reinforcement learning \cite{Filippi2010} and for probabilistic learning \cite{Soize2020a,Soize2021a}). However, the use of this principle requires that the constraints (related to the target set) be expressed as the mathematical expectation of a random variable that is the transformation  of the quantity of interest by a measurable mapping.

In this paper,  we present a novel method, which makes it possible to use the Kullback-Leibler divergence minimum principle when the constraints are not defined by statistical moments but when a target set of realizations is directly integrate to define the constraints. We then obtain  a  probabilistic learning algorithm that allows for integrating raw data into predictive models.
\subsection{Framework of the considered problem, objectives of the paper, and methodology proposed}
\label{sec:Section1.1}
{\textit{(i) First case referred as the unsupervised case}}.
The quantity of interest is a $\RRnq$-valued random variable $\bfQ$, defined on a probability space $(\Theta,\curT,\curP)$, whose prior probability measure is $P_\bfQ(d\bfq)$ on $\RRnq$. This prior probability measure is unknown but is the underlying probability measure that has been used to generate the training set $D_d=\{\bfq_d^1,\ldots,\bfq_d^{N_d}\}$ constituted of $N_d$ independent realizations $\{\bfq_d^j\in\RRnq , j=1,\ldots , N_d\}$ of $\bfQ$ (the subscript "$d$" is introduced to reference the "data" of the training set).
It is assumed that $n_q$ is big (high-dimension problem). For instance, $\bfq_d^j$ can be the realizations of the discretization of a random field indexed by a bounded part of $\RR^d$ with $d \geq 2$.
Related to $\bfQ$, a target set $D_\pexp= \{\bfq_\pexp^1,\ldots , \bfq_\pexp^{N_r}\}$ is given, constituted of $N_r$ given points $\bfq_\pexp^r$ in $\RRnq$, which are $N_r$ independent realizations of a $\RRnq$-valued random variable $\bfQ_\pexp$ defined on $(\Theta,\curT,\curP)$, independent of $\bfQ$, whose probability measure $P_\bfQ^\pexp$ of $\bfQ_\pexp$ is assumed to be unknown.
Giving the training set $D_d = \{\bfq_d^1,\ldots , \bfq_d^{N_d}\}$ of $\bfQ$ and the target set $D_\pexp= \{\bfq_\pexp^1,\ldots , \bfq_\pexp^{N_r}\}$ of $\bfQ_\pexp$,  the Kullback-Leibler divergence will allow for identifying the probability measure $P_\bfQ^\ppost$ that  is closest to $P_\bfQ(d\bfq)$ while satisfying the constraint defined by $D_\pexp$. The measure $P_\bfQ^\ppost$, which is the measure updated with the constraint, will be called the posterior probability measure of the $\RRnq$-valued random variable $\bfQ_\ppost$ defined on $(\Theta,\curT,\curP)$. The probabilistic learning thus consists in using a MCMC algorithm for generating $N$ realizations $\{\bfq_\ppost^\ell,\ell=1,\ldots , N\}$ of $\bfQ_\ppost$.
Regarding the resampling of a probability measure with MCMC algorithms, it should also be noted that, when the available training set is composed of a small number of points, suitable algorithms should be used like those which have been specifically developed to deal with the case of small data (see \cite{Soize2016,Perrin2018a,Farhat2019,Ghanem2020,Guilleminot2020,Soize2020c,Arnst2021,Soize2021a,Soize2022a} for data-driven problems and \cite{Ghanem2018a,Ghanem2019,Capiez2022} for optimization problems).\\

{\textit{(ii) Second  case referred as the supervised case}}.
The quantity of interest is the above $\RRnq$-valued random variable $\bfQ$ and there is a  control parameter that is a $\RRnw$-valued random variable $\bfW$. The random variables $\bfQ$ and $\bfW$ are defined on the probability space $(\Theta,\curT,\curP)$, whose prior joint probability measure is $P_{\bfQ,\bfW}(d\bfq,d\bfw)$ on $\RRnq\times \RRnw$.
As for the unsupervised case, this prior joint probability measure is unknown but is the underlying probability measure that has been used to generate the training set $D_d=\{\bfx_d^1,\ldots,\bfx_d^{N_d}\}$ constituted of $N_d$ independent realizations, $\{\bfx_d^j = (\bfq_d^j,\bfw_d^j), j = 1,\ldots , N_d \}$ of the $\RRnx$-valued random variable $\bfX=(\bfQ,\bfW)$ with $n_x=n_q+n_w$. The probability measure of $\bfX$ is $P_\bfX(d\bfx) = P_{\bfQ,\bfW}(d\bfq,d\bfw)$. It is assumed that $n_q$ and $n_w$ are big (high dimension problem).
This supervised case can correspond to $\bfQ=\bff(\bfW)$ in which $\bff$ is an unknown  measurable mapping from $\RRnw$ into $\RRnq$
or to $\bfQ=\bff(\bfW, \bfU)$ in which $\bff$ is also an unknown  measurable mapping from $\RRnw\times\RR^{n_u}$ into $\RRnq$ and where $\bfU$ is an uncontrolled $\RRnu$-valued random variable defined on $(\Theta,\curT,\curP)$. In the first case, $\bfq_d^j = \bff(\bfw_d^j)$ and $P_{\bfQ,\bfW}(d\bfq,d\bfw)$ has no density with respect to $d\bfq\otimes d\bfw$, while in the second case, $\bfq_d^j = \bff(\bfw_d^j, \bfu_d^j)$ and $P_{\bfQ,\bfW}(d\bfq,d\bfw)$ can have a density.
As for the unsupervised case, we consider a given target set $D_\pexp= \{\bfq_\pexp^1,\ldots , \bfq_\pexp^{N_r}\}$ for the quantity of interest, constituted of $N_r$ independent realizations of the $\RRnq$-valued random variable $\bfQ_\pexp$ that is independent of $\bfQ$.
Note that no target realization is given for the control variable $\bfW$. If we gave target realizations for $\bfW$, which would amount to giving ourselves a target set of realizations for $\bfX$, then in terms of the methodology presented in this paper, we would be in a situation similar to that of the unsupervised case. In the supervised case that we consider here, the considered system is under-observed with respect to the given target set of realizations.
Similarly to the unsupervised case, giving the training set $D_d = \{\bfx_d^1,\ldots , \bfx_d^{N_d}\}$ of $\bfX$ and the target set $D_\pexp= \{\bfq_\pexp^1,\ldots , \bfq_\pexp^{N_r}\}$ of $\bfQ_\pexp$, the Kullback-Leibler divergence will allow for identifying the probability measure $P_\bfX^\ppost$ that  is closest to $P_\bfX(d\bfx)$ while satisfying the constraint defined by $D_\pexp$. The measure $P_\bfX^\ppost$, which is the measure updated with the constraint on $\bfQ$, will be called the posterior probability measure of the $\RRnq\times\RRnw$-valued random variable $(\bfQ_\ppost,\bfW_\ppost)$ defined on $(\Theta,\curT,\curP)$. The probabilistic learning thus consists in using a MCMC algorithm for generating $N$ realizations $\{\bfx_\ppost^\ell,\ell=1,\ldots , N\}$ of $\bfX_\ppost$, that is to say, $N$ realizations $\{(\bfq_\ppost^\ell,\bfw_\ppost^\ell), \ell=1,\ldots , N\}$ of $(\bfQ_\ppost,\bfW_\ppost)$.
\subsection{Novelty of the paper}
\label{sec:Section1.2}
In this paper, we propose to use the Kullback-Leibler minimum principle to estimate the closest probability measure to a prior measure, which is  indirectly defined by giving a training dataset, under the constraint defined by a set of realizations for which the statistical moments cannot be estimated and therefore are assumed to be unknown. As the considered problem is in high dimension and as the target corresponds  to given realizations from which statistics such as high-order statistical moments cannot be estimated,  this problem is not trivial at all and requires the development of an appropriate approach. We therefore propose a novel functional method, which allows the target set of the realizations to be integrated as a constraint imposed in the form of a mathematical expectation. The functional approach consists in constructing and analyzing a weak formulation of the Fourier transform of the probability measure and to derive from it a finite representation of the functional constraint.
\subsection{Organization of the paper}
\label{sec:Section1.3}
All the developments given in this paper will be presented within the framework of the supervised case. Given the proposed approach, the unsupervised case follows immediately. This paper is organized in three parts.

The first part (Sections~\ref{sec:Section2} to \ref{sec:Section5}) is devoted to the formulation and the construction of a finite representation of the functional constraint.
Section~\ref{sec:Section2} deals with the scaling and the reduced representation of random vector $\bfX$ for which the realizations are the points of the training set, which allows for constructing a normalized random variable $\bfH$ with values in $\RR^\nu$ with $\nu < N_d \leq n_x$.
In Section~\ref{sec:Section3}, Definition~\ref{definition:1} and Lemma~\ref{lemma:1}  defined the functional constraint as an equality of the Fourier transform of the probability measures of $\bfQ$ and $\bfQ_\pexp$ and then of the random variable $\bfH$ and $\bfH_\pexp$, in which $\bfH_\pexp$ is the "projection"  of $\bfQ_\pexp$ on the model.
Section~\ref{sec:Section4} is devoted to the weak formulation of the functional constraint imposed to random variable $\bfH$ giving realizations of $\bfH_\pexp$. Under adapted mathematical hypotheses for covering a large part of applications, Theorem~\ref{theorem:1} (proven with the help of three Lemmas) gives the required mathematical results that are necessary to construct the weak formulation (Definition~\ref{definition:2}) of the functional constraint defined on the space $\curH_1= L^1(\RRnu,\CC)\cap L^2(\RRnu,\CC)$.
In Section~\ref{sec:Section5}, we present the construction and the analysis of a finite representation of the functional constraint derived from the weak formulation, which is restricted to a Hilbert space $\curH_{1,\,\mu}$  that is a subset of $\curH_1$ in which $\mu =p_{\nu}(\bfv)\, d\bfv$ is a Gaussian probability measure on $\RRnu$.
Theorem~\ref{theorem:2} studies the Fourier transform $\hat\varphi$ of a function $\varphi$ in $\curH_{1,\,\mu}$, which is a $\CC$-valued analytic function on $\RRnu$ and which belongs to $\curH_0=\curC_0\cap L^2(\RRnu,\CC)$  (in which $\curC_0$ is the space of all the $\CC$-valued continuous functions on $\RRnu$, which go to zero at infinity).  While the considered weak formulation of the functional constraint is posed on a Hilbert space, the Hilbertian structure leads naturally to introduce a Hilbertian basis used to construct  a finite representation of the weak formulation of the functional constraint. Given the fact that the measure $\mu$ related to $\curH_{1,\,\mu}$ is Gaussian, the multi-dimensional Hermite polynomials could be used. However, the multi-index is in high dimension $\nu$ and consequently, the curse to dimensionality prevents using this type of finite representation. Based on Theorem~\ref{theorem:2}, Lemmas~\ref{lemma:5} and \ref{lemma:6} give sought construction of the functional family of functions in $\curH_{1,\,\mu}$, which allows for constructing the finite representation that is explicitly  described in Definition~\ref{definition:4} and that uses the realizations of $\bfQ_\pexp$ (the points of target set $D_\pexp$).
Lemma~\ref{lemma:7} gives an important property of the constructed finite representation, which will allow for analyzing the existence and the uniqueness of the posterior probability measure.
The first part ends with a numerical illustration of the behavior of the finite representation of functional constraint that is proposed.

The second part of this paper corresponds to Section~\ref{sec:Section6} in which we present the methodology to construct the posterior probability measure based on the use of the Kullback-Leiber minimum principle with the prior model and the target set. This methodology is similar to the one we have used in \cite{Soize2020a,Soize2022bb}, but for which the constraints are now the one presented in Section~\ref{sec:Section5}. Thus the mathematical proofs are adapted and modified because the hypotheses are no longer the same. The finite representation of the weak formulation of the functional constraint is taken into account by introducing a vector-valued Lagrange multiplier $\bflambda$. The posterior probability measure is constructed as the limit of a sequence of random variables $\{\bfH_\bflambda\}_\bflambda$ indexed by $\bflambda$. Theorems~\ref{theorem:3} and \ref{theorem:4} give the explicit construction of the probability measure of $\bfH_\bflambda$ and its MCMC generator based on the nonlinear stochastic dissipative Hamiltonian system studied in \cite{Soize1994}. This second part ends with the iterative algorithm for computing the optimal value of $\bflambda$  and gives elements for its numerical implementation.

The last part, Section~\ref{sec:Section7}, is devoted to a numerical illustration of the supervised case for which the training set  $D_d$ is made up of $N_d$ independent realizations  $\bfx^j\!\! = \!(\bfq_d^j,\bfw_d^j) \in \RRnx \!\! = \!\RRnq\!\times\!\RRnw$ of random variable $\bfX=(\bfQ,\bfW)$ for which $n_x=430\, 098$, $n_q=10\, 098$, $n_w= 420\, 000$, and $N_d\in\{100,200,300,400\}$. The target set $D_\pexp$ is made up of $N_r$ independent realizations $\bfq_\pexp^r \in\RRnq$ of random variable $\bfQ_\pexp$ for which $N_r \in [50\, , N_\pexp]$ with $N_\pexp\in\{100,200,300,400\}$. As we will see, we will also give a lighting  on the associated unsupervised case to this supervised case.

\bigskip

\noindent{\textbf{Notations}

\noindent $x,\eta$: lower-case Latin or Greek letters are deterministic real variables.\\
$\bfx,\bfeta$: boldface lower-case Latin or Greek letters are deterministic vectors.\\
$X$: upper-case Latin letters are real-valued random variables.\\
$\bfX$: boldface upper-case Latin letters are vector-valued random variables.\\
$[x]$: lower-case Latin letters between brackets are deterministic matrices.\\
$[\bfX]$: boldface upper-case letters between brackets are matrix-valued random variables.
\begin{multicols}{2}
\noindent $i\,$: imaginary unit, $i^2=-1$.\\
$\curC_0$: continuous $\CC$ functions on $\RRnu$ going to $0$ at $\infty$.\\
$\curC_{\ad,\bflambda}$: admissible set of $\bflambda\in\RRNr$.\\
$\CC$: set of all the complex numbers.\\
$D_d$: training set of points $\bfx_d^j$ in $\RRnx$.\\
$\curD_d$: training set of points $\bfeta_d^j$ in $\RR^\nu$.\\
$\curD_{\bfH_\pppost}$: constrained learned set for $\bflambda = \bflambda^\psol$.\\
$\curD_{\bfHbflambdai}$: constrained learned set for $\bflambda^{\,i}$.\\
$D_\pexp$: target set of $N_r$ points $\bfq_\pexp^r$ in $\RRnq$.\\
$\MM_{n,m}$: set of the $(n\times m)$ real matrices.\\
$\MM_n$: set of  the square $(n\times n)$ real matrices.\\
$\MM_n^+$: set of  the positive-definite $(n\times n)$ real matrices.\\
$\MM_n^{+0}$: set of the positive $(n\times n)$ real matrices.\\
$N$: number of points in the constrained learned set.\\
$N_d$: number of points in the training set.\\
$\NN$: set of all the integers $\{0,1,2,\ldots\}$.\\
$\NN^*$: $\NN\backslash \{ 0\}$.\\
$\RR$: set of all the real number.\\
$\RR^n$: Euclidean space of dimension $n$.\\
$[I_{n}]$: identity matrix in $\MM_n$.\\
$\bfx = (x_1,\ldots,x_n)$: point in $\RR^n$.\\
$\langle \bfx,\bfy \rangle = x_1 y_1 + \ldots + x_n y_n$: inner product in $\RR^n$.\\
$\Vert\,\bfx\,\Vert$:  norm in $\RR^n$ such that $\Vert\,\bfx\,\Vert = \langle \bfx,\bfx \rangle$.\\
$[x]^T$: transpose of matrix $[x]$.\\
$\tr \{[x]\}$: trace of the square matrix $[x]$.\\
$\Vert\, [x]\, \Vert \, = \sup_{\,\Vert\,\bfy\, \Vert\, = 1} \Vert\, [x]\,\bfy\,\Vert$.\\
$\Vert\, [x]\, \Vert_F$: Frobenius norm of matrix  $[x]$.\\
$\delta_{kk'}$: Kronecker's symbol.\\
$\delta_{\bfx_0}$: Dirac measure at point $\bfx_0$ in $\RR^n$.\\
$\overline z$: conjugate of complex number $z$.\\
$a.s.$: almost surely.\\
BVP: boundary value problem.\\
$E$: mathematical expectation operator.\\
ISDE: It\^o stochastic differential equation.\\
KDE: kernel density estimation.\\
pdf: probability density function.\\
PCA: principal component analysis.\\
PDE: partial differential equation.\\
\end{multicols}
\noindent\textit{Convention used for random variables}.
In this paper, for any finite integer $m\geq 1$, the Euclidean space $\RR^m$ is equipped with the $\sigma$-algebra $\curB_{\RR^m}$. If $\bfY$ is a $\RR^m$-valued random variable defined on the probability space $(\Theta,\curT,\curP)$, $\bfY$ is a  mapping $\theta\mapsto\bfY(\theta)$ from $\Theta$ into $\RR^m$, measurable from $(\Theta,\curT)$ into $(\RR^m,\curB_{\RR^m})$, and $\bfY(\theta)$ is a realization (sample) of $\bfY$ for $\theta\in\Theta$. The probability distribution of $\bfY$ is the probability measure $P_\bfY(d\bfy)$ on the measurable set $(\RR^m,\curB_{\RR^m})$ (we will simply say on $\RR^m$). The Lebesgue measure on $\RR^m$ is noted $d\bfy$ and
when $P_\bfY(d\bfy)$ is written as $p_\bfY(\bfy)\, d\bfy$, $p_\bfY$ is the probability density function (pdf) on $\RR^m$ of $P_\bfY(d\bfy)$ with respect to $d\bfy$.
\section{Scaling and reduced representation}
\label{sec:Section2}
Before performing the construction of the reduced representation that is performed by using a principal component analysis (PCA) of $\bfX$, it is assumed that training set $D_d$ is scaled using the formulation presented in \cite{Soize2016}. The target set $D_\pexp$ is also scaled using the same transformation that the one used for obtaining the scaled training set $D_d$.

Let $\tilde\bfx^j =\bfx_d^j -\underline\bfx$ be the realization of $\bfX$ with $\underline\bfx=(1/N_d)\sum_{j=1}^{N_d} \bfx_d^j\in\RRnx$. Let $[\tilde x] =[\tilde\bfx^1 \ldots \tilde\bfx^{N_d}]$ be the matrix in  $\MM_{n_x,N_d}$ and let
$[\Phi]\,[S]\,[\Phi]^T = [\tilde x]$ be the economy size SVD (thin SVD \cite{Golub1993}) of matrix $[\tilde x]$. The diagonal entries of $[S]$ are the singular values $S_1\geq  \ldots \geq  S_{N_d -1} > S_{N_d} = 0$ that are in increasing order and we have $S_{N_d} = 0$. The matrix $[\Phi]$ is in $\MM_{n_x,\nu}$ with $\nu = N_d-1$ and $[\Phi]^T [\Phi] = [I_{\nu}]$.
Let  $\bfX^{(\nu)}$ be the representation of $\bfX$ defined by
\begin{equation}\label{eq:eq1}
\bfX^{(\nu)}=\underline\bfx + [\Phi]\, [\kappa]^{1/2} \, \bfH \, ,
\end{equation}
in which $[\kappa]$ is the diagonal matrix in $\MM_{\nu}^+$ such that $\kappa_\alpha = [\kappa]_{\alpha\alpha} = S_\alpha^2 /(N_d-1)$, and where $\bfH=(H_1,\ldots,H_{\nu})$ is the $\RRnu$-valued random variable whose $N_d$ independent realizations are
\begin{equation}\label{eq:eq2}
\bfeta_d^j = [\kappa]^{-1/2} \,[\Phi]^T\, (\bfx_d^j -\underline\bfx)\, \quad , \quad j=1,\ldots , N_d\, .
\end{equation}
The positive real numbers $\{\kappa_\alpha\}_\alpha$ are the eigenvalues of the estimated covariance matrix $[\widehat C_\bfX]$ of the covariance matrix $[C_\bfX]$ of $\bfX$, performed using the training set. Therefore,  $[\kappa]$ and $[\Phi]$ depend on $N_d$. As it can be seen, these eigenvalues and the associated eigenvectors are computed without computing $[\widehat C_\bfX]$  because $n_x$ can be very big.
It should be noted that, if $N_d=n_x$ and $\nu < N_d -1$, then the sequence of random variables $\bfX^{(\nu)}$ is mean-square convergent to $\bfX$ when  $\nu$ goes to  $N_d -1$, and if  $\nu=N_d -1 = n_x - 1$, then Eq.~\eqref{eq:eq1} is not an approximation and corresponds to a change of basis.
In general, for the high-dimension problems, $n_x$ is very large and $N_d\ll n_x$. Therefore, Eq.~\eqref{eq:eq1} corresponds to a reduced representation, which is an approximation whose accuracy depends on $\nu$ and $N_d$ and which is classically controlled as follows.
For $N_d$ fixed and for $\nu < N_d-1$, let $\kappa_1\geq \ldots \geq \kappa_{\nu} > 0$ be the $\nu$ largest positive eigenvalues of
$[\widehat C_\bfX]$. Let $\nu$ be chosen such that
\begin{equation}\label{eq:eq2bis0}
\perr_\pPCA(\nu\, ;N_d) = \frac{E\{\Vert\, \bfX - \bfX^{(\nu)}\,\Vert^2\}}{ E\{\Vert\, \bfX\,\Vert^2\}} \simeq
 1- \frac{\sum_{\alpha=1}^\nu\kappa_\alpha}{\tr\{[\widehat C_\bfX]\}} \, \leq \varepsilon_\pPCA \quad , \quad  \nu < N_d-1  \, ,
\end{equation}
in which $\varepsilon_\pPCA$ is a given positive real number sufficiently small. The trace $\tr\{[\widehat C_\bfX]\}$ of  $[\widehat C_\bfX]$ is calculated by estimating the diagonal entries of $[\widehat C_\bfX]$ using the training set.
Note that $\nu\mapsto \perr_\pPCA(\nu\, ;N_d)$ defined by Eq.~\eqref{eq:eq2bis0} gives the relative error as a function of $\nu < N_d-1$ for a fixed value of $N_d$.

Throughout the rest of the paper, in order to simplify the notations, the superscript "$(\nu)$" will be omitted and the random variable
$\bfX^{(\nu)}=(\bfQ^{(\nu)},\bfW^{(\nu)})$ will simply be denoted by $\bfX=(\bfQ,\bfW)$.
From Eq.~\eqref{eq:eq1}, it can be deduced that
\begin{equation}\label{eq:eq2bis}
\bfQ=\underline\bfq + [\Phi_q]\, [\kappa]^{1/2} \, \bfH \quad , \quad
\bfW=\underline\bfw + [\Phi_w]\, [\kappa]^{1/2} \, \bfH \, ,
\end{equation}
in which $\underline\bfx = (\underline\bfq , \underline\bfw) \in \RRnx = \RRnq\times \RRnw$, and where
$[\Phi_q]\in \MM_{n_q,\nu}$ and $[\Phi_w]\in \MM_{n_w,\nu}$ are the corresponding block extraction with respect to $\bfQ$ and $\bfW$.
The training set related to $\bfH$ is
\begin{equation}\label{eq:eq3}
\curD_d = \{\bfeta_d^1 ,\ldots , \bfeta_d^{N_d}\}\, \quad , \quad \bfeta_d^j\in\RRnu \quad , \quad j=1,\ldots , N_d\, ,
\end{equation}
in which $\bfeta_d^j$ is given by Eq.~\eqref{eq:eq2}.
Using $\curD_d$, the estimates $\underline\bfeta\in\RRnu$ and $[\widehat C_\bfH]\in\MM_{\nu}^+$ of the mean value and the covariance matrix of $\bfH$ are such that
\begin{equation}\label{eq:eq4}
\underline\bfeta = \bfzero_{\nu} \quad , \quad [\widehat C_\bfH] = [I_{\nu}] \, .
\end{equation}
The first Eq.~\eqref{eq:eq2bis} allows for defining the $\RRnq$-valued random variable $\widetilde{\bfQ}$ such that
\begin{equation}\label{eq:eq5}
\widetilde{\bfQ} = \bfQ-\underline\bfq \quad , \quad \widetilde\bfQ = [\Phi_q]\, [\kappa]^{1/2} \, \bfH \, .
\end{equation}
 Note that we have $[\Phi_q]^T [\Phi_q] \not = [I_{\nu}]$. We also introduce the $\RRnq$-valued random variable
\begin{equation}\label{eq:eq6}
\widetilde\bfQ_\pexp = \bfQ_\pexp -\underline\bfq  \, ,
\end{equation}
whose $N_r$ realizations are
\begin{equation}\label{eq:eq7}
\tilde\bfq_\pexp^r = \bfq_\pexp^r -\underline\bfq \quad , \quad r\in\{1,\ldots , N_r\} \, .
\end{equation}
Note that $\widetilde\bfQ_\pexp$ is generally not centered because $\underline\bfq$ is not the mean value of $\bfQ_\pexp$, but $\bfH$ is a centered one (see Eq.~\eqref{eq:eq4}).
\section{Definition of the functional constraint for estimating the posterior probability measure}
\label{sec:Section3}
The objective is to construct the posterior probability measure $P_{\bfQ}^\ppost(d\bfq) = P_{\bfQ,\bfW}^\ppost(d\bfq,\RRnw)$ that is closest to $P_\bfQ^\pexp(d\bfq)$, which is equivalent (see Eqs.~\eqref{eq:eq5} and \eqref{eq:eq6}) to construct the posterior probability measure $P_{\widetilde\bfQ}^\ppost(d\bfq)$ that is closest to the probability measure
$P_{\widetilde\bfQ}^\pexp(d\bfq)$ of $\widetilde\bfQ_\pexp$ for which $N_r$ independent realizations $\{\tilde\bfq_\pexp^r,r=1,\ldots, N_r\}$ are given.
For using the Kullback-Leibler minimum principle, we need to express the constraint as a mathematical expectation of a random variable.
We propose to use the equality of the Fourier transforms of the probability measures (characteristic functions)  instead of the probability measures for the reason given in Remark~\ref{remark:1}-(ii).
%
\begin{definition}[Constraint defined by the equality of the Fourier transform of the probability measures] \label{definition:1}
{\textit{ The constraint is defined as follows,
\begin{equation}\label{eq:eq8}
\forall \bfy\in\RRnq \quad , \quad \ff(\bfy) = \ff_\pexp(\bfy) \, ,
\end{equation}
in which the complex-valued functions $\ff$ and $\ff_\pexp$ are the characteristic functions defined on $\RRnq$ of the $\RRnq$-valued random variables $\widetilde\bfQ$ and $\widetilde\bfQ_\pexp$,
\begin{equation}\label{eq:eq9}
\ff(\bfy) = E \{ e^{i\,\langle\, \bfy ,\, \widetilde\bfQ\, \rangle } \}  \quad , \quad
\ff_\pexp(\bfy) = E \{ e^{i\,\langle \,\bfy ,\, \widetilde\bfQ_\pexp \,\rangle } \} \, .
\end{equation}
}}
\end{definition}
The constraint defined by Eq.~\eqref{eq:eq8} is in high dimension because $\bfy\in\RRnq$. We then propose to reduce the dimension using the representation of $\widetilde\bfQ$ given by Eq.~\eqref{eq:eq5}.
%
\begin{lemma}[Functional constraint using the representation of $\widetilde\bfQ$] \label{lemma:1}
It is chosen to project the target on the prior model. Therefore, let $\bfH_\pexp$ be the $\RRnu$-valued random variable defined by
\begin{equation}\label{eq:eq16bis}
\bfH_\pexp = [V]^T\,\widetilde\bfQ_\pexp \, ,
\end{equation}
in which the matrix $[V]\in\MM_{n_q,\nu}$ is written as
$[V] = [\Phi_q]\,([\Phi_q]^T \,[\Phi_q])^{-1} \,[\kappa]^{-1/2}$.
Using the representation of $\widetilde\bfQ$ defined by Eq.~\eqref{eq:eq5}, the functional constraint associated with Eq.~\eqref{eq:eq8} is written as
\begin{equation}\label{eq:eq17}
\forall \bfv\in\RRnu \quad , \quad f(\bfv) = f_\pexp(\bfv) \, ,
\end{equation}
in which the complex-valued functions $f$ and $f_\pexp$ are the characteristic functions, defined on $\RRnu$, of the $\RRnu$-valued random variables $\widetilde\bfH$ and $\widetilde\bfH_\pexp$,
\begin{equation}\label{eq:eq18}
f(\bfv) = E \{ e^{i\,\langle \,\pbfv ,\, \bfH \,\rangle } \}  \quad , \quad
f_\pexp(\bfv) = E \{ e^{i\,\langle \,\pbfv ,\, \bfH_\pexp \,\rangle } \} \, .
\end{equation}
\end{lemma}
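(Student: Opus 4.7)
The plan is to reduce the lemma to a straightforward manipulation of characteristic functions under the linear representation $\widetilde\bfQ=[\Phi_q]\,[\kappa]^{1/2}\,\bfH$ given by Eq.~\eqref{eq:eq5}, together with the linear mapping $\widetilde\bfQ_\pexp\mapsto\bfH_\pexp=[V]^T\widetilde\bfQ_\pexp$ introduced in Eq.~\eqref{eq:eq16bis}. The backbone of the argument is the elementary identity $E\{e^{i\langle\bfa,[M]\bfZ\rangle}\}=E\{e^{i\langle [M]^T\bfa,\bfZ\rangle}\}$ valid for any deterministic matrix $[M]$ and random vector $\bfZ$, applied twice.

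First, I would rewrite $\ff$ in terms of $f$. Substituting Eq.~\eqref{eq:eq5} into the definition \eqref{eq:eq9} of $\ff$ yields, for every $\bfy\in\RRnq$,
\begin{equation*}
\ff(\bfy)=E\bigl\{e^{i\langle\bfy,\,[\Phi_q]\,[\kappa]^{1/2}\bfH\rangle}\bigr\}
       =E\bigl\{e^{i\langle [\kappa]^{1/2}[\Phi_q]^T\bfy,\,\bfH\rangle}\bigr\}
       =f\!\bigl([\kappa]^{1/2}[\Phi_q]^T\bfy\bigr).
\end{equation*}
Symmetrically, plugging Eq.~\eqref{eq:eq16bis} into the definition \eqref{eq:eq18} of $f_\pexp$ gives
\begin{equation*}
f_\pexp(\bfv)=E\bigl\{e^{i\langle\bfv,\,[V]^T\widetilde\bfQ_\pexp\rangle}\bigr\}
            =E\bigl\{e^{i\langle [V]\bfv,\,\widetilde\bfQ_\pexp\rangle}\bigr\}
            =\ff_\pexp\!\bigl([V]\bfv\bigr).
\end{equation*}

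Second, I would carry out the key algebraic check that $[V]$ is the left inverse of the reconstruction map $\bfH\mapsto[\Phi_q]\,[\kappa]^{1/2}\bfH$, i.e.\ $[\kappa]^{1/2}[\Phi_q]^T[V]=[I_{\nu}]$. Using the explicit form $[V]=[\Phi_q]\,([\Phi_q]^T[\Phi_q])^{-1}\,[\kappa]^{-1/2}$ (the matrix $[\Phi_q]^T[\Phi_q]\in\MM_\nu^+$ is invertible because $[\Phi]^T[\Phi]=[I_\nu]$ and $\nu<N_d\leq n_x$ rules out rank deficiency of the block extraction), a direct multiplication gives $[\Phi_q]^T[V]=[\kappa]^{-1/2}$, hence $[\kappa]^{1/2}[\Phi_q]^T[V]=[I_\nu]$. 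Applying the first step to $\bfy=[V]\bfv$ therefore collapses to $\ff([V]\bfv)=f(\bfv)$.

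Third, combining the three identities, the constraint \eqref{eq:eq8} evaluated at the particular point $\bfy=[V]\bfv\in\RRnq$ reads $\ff([V]\bfv)=\ff_\pexp([V]\bfv)$, which is exactly $f(\bfv)=f_\pexp(\bfv)$. Since this holds for every $\bfv\in\RRnu$, Eq.~\eqref{eq:eq17} is established. No obstacle is really expected here: the content of the lemma is conceptual rather than technical, namely that once the model subspace has been fixed through the PCA basis $[\Phi_q][\kappa]^{1/2}$, the constraint on $\RRnq$ projects onto an equivalent constraint on $\RRnu$ provided the target is itself projected through the pseudo-inverse $[V]^T$; the only thing to verify rigorously is the pseudo-inverse identity above, which is a two-line computation.
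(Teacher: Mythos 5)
Your proposal is correct and follows essentially the same route as the paper's own proof: pull the linear maps onto the test vector via $\langle\bfy,[\Phi_q][\kappa]^{1/2}\bfH\rangle=\langle[\kappa]^{1/2}[\Phi_q]^T\bfy,\bfH\rangle$, use that $[V]$ is a right inverse of $[\kappa]^{1/2}[\Phi_q]^T$, and evaluate the constraint \eqref{eq:eq8} at $\bfy=[V]\bfv$. Your explicit two-line verification of $[\kappa]^{1/2}[\Phi_q]^T[V]=[I_\nu]$ is a small added detail the paper only asserts (note, though, that the invertibility of $[\Phi_q]^T[\Phi_q]$ is itself an assumption on the block extraction rather than a consequence of $[\Phi]^T[\Phi]=[I_\nu]$), but otherwise the arguments coincide.
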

%
\begin{proof} (Lemma~\ref{lemma:1}).
Using Eq.~\eqref{eq:eq5}, for all $\bfy$ in $\RRnq$, we have
$\langle\,\bfy\, , \widetilde\bfQ\,\rangle = \langle\,\bfy\, , [\Phi_q]\, [\kappa]^{1/2} \bfH\,\rangle
= \langle \,[\kappa]^{1/2} \,[\Phi_q]^T \bfy\, ,   \bfH\,\rangle = \langle \,\bfv\, ,   \bfH\,\rangle$ in which
$\bfv = [\kappa]^{1/2} [\Phi_q]^T\, \bfy \in\RRnu$.
In the other hand, we perform the projection of $\widetilde\bfQ_\pexp$ on the prior model. Since the matrix
$[\Phi_q]^T[\Phi_q]\in\MM_{\nu}^+$ is invertible and since the diagonal matrix $[\kappa]\in\MM_{\nu}^+$ is also invertible, we introduce the
pseudo-inverse $[V]$ of $[\kappa]^{1/2} [\Phi_q]^T)$ (projection) such that  $([\kappa]^{1/2} [\Phi_q]^T)\, [V] = [I_\nu]$.
Therefore, taking $\bfy = [V]\, \bfv$  for all $\bfv$ in $\RRnu$, we have
$\langle\,\bfy\, , \widetilde\bfQ_\pexp\,\rangle = \langle \,[V]\, \bfv\, , \widetilde\bfQ_\pexp\,\rangle
= \langle \,\bfv\, , [V]^T\, \widetilde\bfQ_\pexp\,\rangle  = \langle \,\bfv\, ,   \bfH_\pexp\,\rangle$.
Eqs.~\eqref{eq:eq17} and \eqref{eq:eq18} are then deduced from Eqs.~\eqref{eq:eq8} and \eqref{eq:eq9}.
\end{proof}
%
%
\begin{remark} \label{remark:1}
There are two difficulties.

(i) The constraint does not concern all the variables, that is to say $\bfX=(\bfQ,\bfW)$, but only the quantity of interest $\bfQ$. We are therefore in an under-observed case with respect to the applied constraint. This choice of the developments framework imposes to project the target $\bfQ_\pexp$ on the prior model in order to obtain a representation of the target $\bfH_\pexp$ that only depends on $\bfQ_\pexp$ and not on $\bfQ_ \pexp$ and $\bfW_\pexp$ because $\bfW_\pexp$ is not given as a constraint.

(ii) The explicitness of the constraint defined by Eq.~\eqref{eq:eq17} requires to sample $\bfv$ in $\RRnu$, what is not easy, not efficient, and not accurate in high dimension ($\nu\gg 1$). If such a sampling method was used, then the number of constraints that should be considered in the Kullback-Liebler minimum principle would be huge or even unrealistic. We thus propose to construct a weak formulation of the functional equation defined by  Eq.~\eqref{eq:eq17} using the fundamental properties of the Fourier transform of the probability measures (see for instance \cite{Guelfand1964}).
\end{remark}
\section{Weak formulation of the functional constraint}
\label{sec:Section4}
%
%
\noindent \textbf{Notation 1} (\textbf{Defining $\curC_b,\curC_0,\curH_0$ and $\curH_1$ complex vector spaces}).
(i) Let $C^0(\RRnu,\CC)$ (resp. $L^\infty(\RRnu,\CC)$) be the complex vector space of continuous (resp. bounded) functions on $\RRnu$ with values in $\CC$. The norm $\Vert \,g\, \Vert_{L^\infty}$ in $L^\infty(\RRnu,\CC)$ of a function
$g\in C^0(\RRnu,\CC) \cap L^\infty(\RRnu,\CC)$ is
\begin{equation}\label{eq:eq21}
\Vert \, g\, \Vert_{L^\infty}\, = {\rm{ess}}. \, {\rm{sup}}_{\bfv}\,  \vert \,g(\bfv)\,\vert \,=  {\rm{sup}}_{\bfv} \, \vert \,g(\bfv)\,\vert \, .
\end{equation}
The norm $\Vert\,\varphi\,\Vert_{L^q}$ in $L^q(\RRnu,\CC)$ of the complex-valued functions on $\RRnu$ is
\begin{equation}\label{eq:eq22}
\Vert \, \varphi \,\Vert_{L^q} = \left ( \int_{\RRnu} \vert\, \varphi(\bfv) \,\vert^q \, d\bfv \right )^{1/q}
\quad , \quad 1\leq q < +\infty \, .
\end{equation}
(ii) We define the vector spaces of complex-valued functions, $\curC_b,\curC_0,\curH_0$ and $\curH_1$, such that
\begin{equation}\label{eq:eq22bis}
\curC_b = C^0(\RRnu,\CC)\cap L^\infty(\RRnu,\CC) \, ,
\end{equation}
\begin{equation}\label{eq:eq22ter}
\curC_0 = \left\{ g \in C^0(\RRnu,\CC)\,\, , \,\, \vert\, g(\bfv)\,\vert \rightarrow 0 \,\, {\rm{as}}\,\, \Vert\,\bfv\,\Vert \rightarrow +\infty \right \} \subset\curC_b \subset L^\infty(\RRnu,\CC) \, ,
\end{equation}
\begin{equation}\label{eq:eq23}
\curH_0 = \curC_0 \cap L^2(\RRnu,\CC) \, ,
\end{equation}
\begin{equation}\label{eq:eq24}
\curH_1 = L^1(\RRnu,\CC) \cap L^2(\RRnu,\CC) \, .
\end{equation}
%
\begin{hypothesis}[Existence, regularity, and integrability of the density of $P_\bfH$ (resp. $P_{\bfH_\pexp}$)] \label{hypothesis:1}
\textit{It is assumed that the probability measure $P_\bfH$ (resp. $P_{\bfH_\pexp}$) on $\RRnu$ of the $\RRnu$-valued random variable $\bfH$ (resp. $\bfH_\pexp$) is defined by a density $\bfeta\mapsto p_\bfH(\bfeta)$ (resp. $\bfeta\mapsto p_{\bfH_\pexp}(\bfeta)$) with
respect to the Lebesgue measure $d\bfeta$, such that
\begin{equation}\label{eq:eq20}
p_\bfH \,\,(\hbox{resp.} \, p_{\bfH_\pexp}) \,\, \in \,C^0(\RRnu,\RR)\cap L^1(\RRnu,\RR)\cap L^2(\RRnu,\RR) \, .
\end{equation}
}
\end{hypothesis}
%
\begin{remark} \label{remark:2} (i)  A probability density function on $\RRnu$ is always in $L^1(\RRnu,\RR)$. The unusual hypothesis is the belonging to $C^0(\RRnu,\RR)\cap L^2(\RRnu,\RR)$.  As part of the method we propose, the prior probability density $p_\bfH$ of $\bfH$ will be estimated using the KDE method of the nonparametric statistics with the training set (see Section~\ref{sec:Section6.1}). In this situation, the hypothesis will be verified.
This hypothesis  will allow us an efficient finite representation of the weak formulation to be constructed.

\noindent (ii)  The prior pdf $p_\bfH$ will effectively be used to construct the posterior pdf $p_\bfH^\ppost$ by using the Kullback-Leibler minimum principle under the constraint defined by the target set $D_\pexp$ as we have previously explained. The pdf $p_{\bfH_\pexp}$ is not used in the methodology proposed and moreover, if it were to be used, there would be a difficulty because it is assumed that $\nu$ is large enough and that $N_r$ is not  sufficiently large for obtaining a converged estimate of $p_{\bfH_\pexp}$ using the KDE method with the target set $D_\pexp$. The hypothesis defined by Eq.~\eqref{eq:eq20} for $p_{\bfH_\pexp}$ will strongly be used and is coherent with the one introduced for $p_\bfH$.
\end{remark}
%
%
\begin{lemma}[Fourier transform of the probability measures $P_\bfH$ and $P_{\bfH_\pexp}$] \label{lemma:2}
The Fourier transform of the probability measures $P_\bfH$ and $P_{\bfH_\pexp}$ (characteristic functions) $\bfv \mapsto f(\bfv) = E\{ e^{i\,\langle \,\pbfv , \,\bfH\,\rangle}\}$
and $\bfv \mapsto f_\pexp(\bfv) = E\{ e^{i\,\langle \,\pbfv , \,\bfH_\pexp\,\rangle}\}$ from $\RRnu$ into $\CC$ are such that
\begin{equation}\label{eq:eq25}
f \, \in \, \curH_0 \quad , \quad f_\pexp \, \in \, \curH_0\, .
\end{equation}
\end{lemma}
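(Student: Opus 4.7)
The plan is to exploit Hypothesis~\ref{hypothesis:1}, which places $p_\bfH$ (and $p_{\bfH_\pexp}$) simultaneously in $C^0(\RRnu,\RR) \cap L^1(\RRnu,\RR) \cap L^2(\RRnu,\RR)$. Because $\bfH$ admits a density, its characteristic function rewrites as the Fourier transform of $p_\bfH$,
\begin{equation*}
f(\bfv) = \int_{\RRnu} e^{i\,\langle\bfv,\bfeta\rangle}\, p_\bfH(\bfeta)\, d\bfeta \, ,
\end{equation*}
and the same for $f_\pexp$ with $p_{\bfH_\pexp}$. It then suffices to check the two memberships $f\in\curC_0$ and $f\in L^2(\RRnu,\CC)$ separately; the argument for $f_\pexp$ is identical.

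For the continuity and decay at infinity ($f\in\curC_0$), I would first note that the integrand is dominated by $p_\bfH\in L^1$, so dominated convergence gives continuity of $f$ on $\RRnu$, together with the trivial bound $\vert f(\bfv)\vert\leq \Vert p_\bfH\Vert_{L^1}=1$. To obtain $\vert f(\bfv)\vert \rightarrow 0$ as $\Vert \bfv\Vert \rightarrow +\infty$, I would invoke the Riemann--Lebesgue lemma, which applies precisely because $p_\bfH\in L^1(\RRnu,\RR)$. These two facts together yield $f\in\curC_0$ in the sense of \eqref{eq:eq22ter}.

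For the $L^2$ statement, I would invoke Plancherel's theorem. Since $p_\bfH\in L^1\cap L^2$, its Fourier transform coincides with the $L^2$-extension of the Fourier transform on the dense subspace $L^1\cap L^2$, and the Plancherel identity yields
\begin{equation*}
\Vert f\Vert_{L^2}^2 = (2\pi)^\nu \,\Vert p_\bfH\Vert_{L^2}^2 < +\infty \, ,
\end{equation*}
so $f\in L^2(\RRnu,\CC)$. Combining with the previous paragraph gives $f\in \curH_0 = \curC_0\cap L^2(\RRnu,\CC)$. Replacing $p_\bfH$ by $p_{\bfH_\pexp}$ and using Hypothesis~\ref{hypothesis:1} for $\bfH_\pexp$ gives $f_\pexp\in\curH_0$ by the same argument.

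There is no serious obstacle: the proof is essentially a direct application of two classical theorems (Riemann--Lebesgue and Plancherel). The only subtle point worth emphasizing is that the unusual regularity assumption $p_\bfH\in C^0\cap L^2$ in Hypothesis~\ref{hypothesis:1} is exactly what is needed to bring $f$ into $L^2$, whereas the standard $L^1$ integrability of any pdf already guarantees $f\in\curC_0$. Thus both properties are tight with respect to the stated hypothesis.
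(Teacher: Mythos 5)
Your proof is correct and follows essentially the same route as the paper: both write $f$ as the Fourier transform of $p_\bfH$ and then deduce $f\in\curC_0$ from $p_\bfH\in L^1(\RRnu,\RR)$ (Riemann--Lebesgue) and $f\in L^2(\RRnu,\CC)$ from $p_\bfH\in L^2(\RRnu,\RR)$ (Plancherel), the paper merely stating these two implications without naming the classical theorems. Your version simply makes the invoked results explicit.
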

%
\begin{proof} (Lemma~\ref{lemma:2}).
Since $\forall\, \bfv\in\RRnu$, $f(\bfv) = \int_{\RRnu} e^{\,i\,\langle \,\pbfv , \,\bfeta\,\rangle}\, p_\bfH(\bfeta)\, d\bfeta$ and taking into account Eq.~\eqref{eq:eq20}, it is deduced that $f\in\curC_0$ because $p_\bfH\in L^1(\RRnu,\CC)$, and on the other hand, since
$p_\bfH\in L^2(\RRnu,\CC)$ then $f\in L^2(\RRnu,\CC)$. Consequently, $f\in\curH_0$ and the proof is similar for $f_\pexp$.
\end{proof}
%
%
%
\begin{remark} \label{remark:3}
Since $f$ is the Fourier transform of probability measure $P_\bfH(d\bfeta) = p_\bfH(\bfeta)\, d\bfeta$ (positive bounded measure), then it is known (Bochner's theorem) that $f$ (same properties for $f_\pexp$) is a positive-type function, that is to say, for all integer
$m\geq 1$, for all complex numbers $z_1,\ldots, z_m$, and for all vectors $\bfv^1,\ldots , \bfv^m$in $\RRnu$, we have
\begin{equation}\label{eq:eq26}
\sum_{k=1}^m \sum_{k'=1}^m f(\bfv^k -\bfv^{k'})\, z_{k'}\, \overline z_{k} \, \geq \, 0 \, .
\end{equation}
Note that Eq.~\eqref{eq:eq26} can simply be deduced without evoking the Bochner theorem because
$\sum_{k,k'} f(\bfv^k -\bfv^{k'})\, z_{k'}\, \overline z_{k}  =
\sum_{k,k'}  E\{ e^{\,i\,\langle \,\pbfv^k -\pbfv^{k'} \!\! , \,\bfH\,\rangle}\} \, z_{k'}\, \overline z_{k} = $
$ E\left\{ \vert\,  \sum_{k'=1}^m e^{-i\,\langle\, \pbfv^{k'} \!\! , \,\bfH\,\rangle}\} \, z_{k'}\,\vert \right\} \, \geq \, 0$.
We have a similar property to Eq.~\eqref{eq:eq26} for $f_\pexp$, that is to say,
\begin{equation}\label{eq:eq26bis}
\sum_{k=1}^m \sum_{k'=1}^m f_\pexp(\bfv^k -\bfv^{k'})\, z_{k'}\, \overline z_{k} \, \geq \, 0 \, .
\end{equation}
\end{remark}
%
%
\begin{lemma}[Convolution operator $A$] \label{lemma:3}
Let $f$  be defined by Eq.~\eqref{eq:eq18}, which belongs to $\curH_0$ (see Lemma~\ref{lemma:2}). Let $A$ be the convolution operator defined, for all $\bfv\in\RRnu$, by
\begin{equation}\label{eq:eq27}
(A\varphi)(\bfv) = \int_{\RRnu} f(\bfv - \bfv')\, \varphi(\bfv')\, d\bfv' \, ,
\end{equation}
in which $\varphi$ is a function from $\RRnu$ in $\CC$.
Since $f\in\curH_0$, then $f\in L^\infty(\RRnu,\CC)$ and $A$ is a continuous linear operator from
$L^1(\RRnu,\CC)$ into $\curC_b$ and we have
$\Vert\, A\varphi \, \Vert_{L^\infty} \,\, \leq \, c_1\, \Vert \, \varphi\, \Vert_{L^1}$
in which $c_1$ is such that
$\Vert\, f \, \Vert_{L^\infty} = c_1 \, < +\infty$.
This Lemma holds by replacing $f$ by $f_\pexp$.
\end{lemma}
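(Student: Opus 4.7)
The plan is to break the statement into three pieces: (i) $f \in L^\infty(\RRnu,\CC)$ with $c_1 := \Vert f \Vert_{L^\infty} < +\infty$; (ii) the pointwise estimate $\vert (A\varphi)(\bfv)\vert \leq c_1 \Vert \varphi \Vert_{L^1}$ for every $\bfv \in \RRnu$, which yields the operator-norm bound; and (iii) continuity of $A\varphi$ on $\RRnu$, so that $A\varphi \in \curC_b$. Linearity of $A$ will then be automatic from linearity of the integral.

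Step (i) is immediate from Lemma~\ref{lemma:2} together with the inclusion chain $\curH_0 \subset \curC_0 \subset \curC_b \subset L^\infty(\RRnu,\CC)$ recalled in Notation~1, so it suffices to set $c_1 := \Vert f \Vert_{L^\infty}$. One could additionally note that $f$ being a characteristic function forces $\vert f(\bfv)\vert \leq f(\bfzero) = 1$, hence $c_1 \leq 1$; but only finiteness of $c_1$ is used in what follows.

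For step (ii), I would apply the triangle inequality directly inside the integral in \eqref{eq:eq27}: Fubini/Tonelli ensures absolute convergence of the integral for each $\bfv \in \RRnu$, since $\vert f(\bfv - \bfv')\vert \leq c_1$ and $\varphi \in L^1(\RRnu,\CC)$, and one then pulls $c_1$ out of the integral to conclude $\vert (A\varphi)(\bfv)\vert \leq c_1 \Vert \varphi \Vert_{L^1}$ uniformly in $\bfv$, which is already the claimed norm bound.

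Step (iii) is the only piece requiring a limiting argument, and it is handled by the Lebesgue dominated convergence theorem. For any sequence $\bfv^{(n)} \to \bfv$ in $\RRnu$, the integrands $\bfv' \mapsto f(\bfv^{(n)} - \bfv')\,\varphi(\bfv')$ converge pointwise to $f(\bfv - \bfv')\,\varphi(\bfv')$ by continuity of $f$ (Lemma~\ref{lemma:2} yields $f \in \curC_0$), while the integrable majorant $c_1\vert \varphi(\bfv')\vert$ is furnished by step (i) together with the hypothesis $\varphi \in L^1(\RRnu,\CC)$. Passing to the limit under the integral proves $A\varphi \in C^0(\RRnu,\CC)$; combined with step (ii) this gives $A\varphi \in \curC_b$, and the bound from step (ii) then establishes the announced continuity of $A: L^1(\RRnu,\CC) \to \curC_b$. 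The only mild obstacle is the justification of this dominated-convergence step; everything else reduces to bookkeeping of the norm inclusions in Notation~1. The analogous statement for $f_\pexp$ follows verbatim, since Lemma~\ref{lemma:2} also furnishes $f_\pexp \in \curH_0$.
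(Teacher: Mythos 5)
Your proof is correct and complete. The paper itself does not prove Lemma~\ref{lemma:3}: it simply refers the reader to standard references (Dieudonn\'e, and Propositions 1--2 of \cite{Soize1993b}) for this ``usual result''. Your three-step argument --- extracting $c_1=\Vert f\Vert_{L^\infty}<+\infty$ from $f\in\curH_0\subset\curC_0\subset L^\infty(\RRnu,\CC)$, the pointwise domination $\vert (A\varphi)(\bfv)\vert\leq c_1\Vert\varphi\Vert_{L^1}$, and continuity of $A\varphi$ by dominated convergence using the continuity of $f$ and the majorant $c_1\vert\varphi\vert$ --- is exactly the standard proof those references contain, so it faithfully fills in what the paper delegates; the invocation of Fubini/Tonelli in step (ii) is unnecessary (plain domination suffices) but harmless.
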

%
\begin{proof} (Lemma~\ref{lemma:3}).
For the proof of this usual result, we refer the reader, for instance, to \cite{Dieudonne1978} or to Propositions 1 and 2, Pages 164-165 of \cite{Soize1993b}.
\end{proof}
%
\begin{remark} \label{remark:4}
Since $f\in\curH_0$, then $f$ also belongs to $L^2(\RRnu,\CC)$ and consequently, $A$ is also a continuous linear operator from
$L^1(\RRnu,\CC)$ into $L^2(\RRnu,\CC)$, and also from $L^2(\RRnu,\CC)$ into $\curC_0$, but we do not need to use these properties. This remark holds by replacing $f$ by $f_\pexp$.
\end{remark}
%
%
\begin{lemma}[Hermitian form $F$ on $\curH_1$ associated with $f$] \label{lemma:4}
Let $\varphi\mapsto F(\varphi)$ be the functional defined on $\curH_1$ with values in $\CC$, such that
\begin{equation}\label{eq:eq30}
F(\varphi) = \int_{\RRnu} \int_{\RRnu}  f(\bfv -\bfv')\, \varphi(\bfv')\,\overline{\varphi(\bfv)}\, d\bfv'\, d\bfv \, .
\end{equation}
Then $F$ is a positive Hermitian form on $\curH_1$. There is a finite positive constant $0 < c_1 < +\infty$, such that for all $\varphi$ in $\curH_1$,
\begin{equation}\label{eq:eq31}
\vert\, F(\varphi)\, \vert \, \leq \, c_1\, \Vert \, \varphi\, \Vert_{L^1}^2 \, < \, +\infty \quad , \quad F(\varphi) \, \geq \, 0\, .
\end{equation}
This Lemma holds by replacing $f$ by $f_\pexp$.
\end{lemma}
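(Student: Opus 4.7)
The plan is to handle the three assertions of the lemma in sequence: finiteness together with the bound, the Hermitian property, and the positivity.

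\textbf{Finiteness and the bound.} Since $f\in\curH_0\subset L^\infty(\RRnu,\CC)$ with $\Vert f\Vert_{L^\infty}=c_1<+\infty$, I would bound the integrand in Eq.~\eqref{eq:eq30} pointwise by
\[
\vert f(\bfv-\bfv')\,\varphi(\bfv')\,\overline{\varphi(\bfv)}\vert\,\leq\, c_1\,\vert\varphi(\bfv')\vert\,\vert\varphi(\bfv)\vert .
\]
For $\varphi\in\curH_1\subset L^1(\RRnu,\CC)$, the right-hand side is integrable over $\RRnu\times\RRnu$ by Tonelli's theorem with value $c_1\Vert\varphi\Vert_{L^1}^2$. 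Fubini then ensures that $F(\varphi)$ is well defined in $\CC$ and gives the first inequality in Eq.~\eqref{eq:eq31}. This simultaneously justifies all subsequent interchanges of integration order.

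\textbf{Hermitian property.} I would introduce the sesquilinear form
\[
B(\varphi,\psi)=\int_{\RRnu}\int_{\RRnu} f(\bfv-\bfv')\,\psi(\bfv')\,\overline{\varphi(\bfv)}\,d\bfv'\,d\bfv,
\]
so that $F(\varphi)=B(\varphi,\varphi)$. Because $f$ is the characteristic function of $\bfH$, one has $f(-\bfu)=E\{e^{-i\langle\bfu,\bfH\rangle}\}=\overline{f(\bfu)}$, hence $\overline{f(\bfv'-\bfv)}=f(\bfv-\bfv')$. Swapping the dummy variables $\bfv\leftrightarrow\bfv'$ in $\overline{B(\psi,\varphi)}$ and using this symmetry yields $\overline{B(\psi,\varphi)}=B(\varphi,\psi)$, so $B$ is a Hermitian sesquilinear form on $\curH_1$ and $F$ is a Hermitian form.

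\textbf{Positivity.} The cleanest route uses the probabilistic representation of $f$. Starting from $f(\bfv-\bfv')=E\{e^{i\langle\bfv,\bfH\rangle}\,e^{-i\langle\bfv',\bfH\rangle}\}$, I would apply Fubini to exchange the expectation with the double Lebesgue integral; the exchange is legitimate because $\vert e^{i\langle\bfv,\bfH\rangle}e^{-i\langle\bfv',\bfH\rangle}\vert=1$ and the dominating function $\vert\varphi(\bfv)\vert\vert\varphi(\bfv')\vert$ is integrable on $\RRnu\times\RRnu$ and has integral $\Vert\varphi\Vert_{L^1}^2$ independent of the realization of $\bfH$. Setting $\hat\varphi(\bfH)=\int_{\RRnu} e^{-i\langle\bfv',\bfH\rangle}\varphi(\bfv')\,d\bfv'$, I obtain
\[
F(\varphi)\,=\,E\!\left\{\hat\varphi(\bfH)\,\overline{\hat\varphi(\bfH)}\right\}\,=\,E\!\left\{\vert\hat\varphi(\bfH)\vert^{2}\right\}\,\geq\,0,
\]
which is the second inequality in Eq.~\eqref{eq:eq31}. (Alternatively, one could approximate the double integral by Riemann sums and invoke Eq.~\eqref{eq:eq26} directly, passing to the limit via the bound of the first paragraph; but the representation above is shorter.) The statement for $f_\pexp$ is identical with $\bfH$ replaced by $\bfH_\pexp$.

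\textbf{Expected obstacle.} No step is deep; the only care required is to justify the Fubini interchange in the positivity argument, because the integrand combines a random factor with an $L^1\times L^1$ Lebesgue factor. The dominating bound $\vert\varphi(\bfv)\vert\vert\varphi(\bfv')\vert$ uniformly in $\omega\in\Theta$ removes this difficulty and is consistent with the bound used in the first step.
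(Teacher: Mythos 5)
Your proposal is correct and follows essentially the same route as the paper: the bound comes from $\Vert f\Vert_{L^\infty}=c_1$ combined with $\varphi\in L^1$ (the paper phrases this through the convolution operator $A$ of Lemma~\ref{lemma:3}, you phrase it directly via Tonelli, which is equivalent), the Hermitian property comes from $\overline{f(-\bfu)}=f(\bfu)$ and a swap of dummy variables, and the positivity comes from the representation $F(\varphi)=E\{\vert\int e^{-i\langle\bfv',\bfH\rangle}\varphi(\bfv')\,d\bfv'\vert^2\}$ exactly as in the paper's part (iii). Your explicit justification of the Fubini interchange with the expectation is a welcome addition that the paper leaves implicit.
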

%
\begin{proof} (Lemma~\ref{lemma:4}).
(i) Using Eq.~\eqref{eq:eq27}, we have
$\vert\, F(\varphi)\, \vert \,  = \vert \int_{\RRnu} (A\varphi)(\bfv)\, \overline{\varphi(\bfv)}\, d\bfv \, \vert
\, \leq \, \int_{\RRnu} \vert\, (A\varphi)(\bfv)\,\vert \,\, \vert\, \varphi(\bfv)\,\vert \, d\bfv$.
As $\varphi\in\curH_1$,  we thus have $\varphi\in L^1(\RRnu,\CC)$. Using Lemma~\ref{lemma:3}, since $f\in\curH_0$,
$\Vert\, A\varphi \, \Vert_{L^\infty} \,\, \leq \, c_1\, \Vert \, \varphi\, \Vert_{L^1}$.
Consequently,
$\vert\, F(\varphi)\, \vert \, \leq \, \Vert\, A\varphi \, \Vert_{L^\infty} \int_{\RRnu} \vert \,\varphi(\bfv)\,\vert \, d\bfv
= \Vert\, A\varphi \, \Vert_{L^\infty} \,\Vert \, \varphi\, \Vert_{L^1}$, and
$\vert\, F(\varphi)\, \vert \, \leq \, c_1\, \Vert \, \varphi\, \Vert_{L^1}^2 \, < \, +\infty$,
which proves the first part of Eq.~\eqref{eq:eq31}.

\noindent (ii) Let us proof that $F(\varphi) \in\RR$. We have
$\overline{F(\varphi)} = \int_{\RRnu} \int_{\RRnu}  \overline{f(\bfv -\bfv')}\, \overline{\varphi(\bfv')}\,\varphi(\bfv)\, d\bfv'\, d\bfv =  \int_{\RRnu} \int_{\RRnu}  \overline{f(\bfv' -\bfv)}\, \overline{\varphi(\bfv)}\,\varphi(\bfv')\, d\bfv\, d\bfv'$ and
$\overline{f(\bfv' -\bfv)} = f(\bfv -\bfv')$. We then have
$\overline{F(\varphi)} = F(\varphi)$.

\noindent (iii) Since $\varphi\in\curH_1$, we have $\varphi\in L^1(\RRnu,\CC)$.
The property $F(\varphi)\, \geq \, 0$ is similar to the one defined by Eq.~\eqref{eq:eq26} or \eqref{eq:eq26bis}.
Since $\vert\, F(\varphi)\, \vert \, < \, +\infty$, $F(\varphi)$ exists and
$F(\varphi) = \!\int_{\RRnu} \int_{\RRnu}  f(\bfv -\bfv')\, \varphi(\bfv')\,\overline{\varphi(\bfv)}\, d\bfv' d\bfv =
E\left\{ \vert \int_{\RRnu} e^{-i\,\langle\,\pbfv'\! , \, \bfH\,\rangle} \varphi(\bfv')\, d\bfv' \vert^2 \right\}  \geq  0$.
\end{proof}
%
\begin{theorem}[Representation of Hermitian forms $F$ and $F_\pexp$] \label{theorem:1}
(i) The positive Hermitian form $\varphi\mapsto F(\varphi):\curH_1\rightarrow \RR^+$, defined by Eq.~\eqref{eq:eq30}, can be rewritten as
\begin{equation}\label{eq:eq32}
 F(\varphi) =\int_{\RRnu} f(\bfv)\, \overline{\psi(\bfv)} \, d\bfv\, ,
\end{equation}
in which $\bfv\mapsto \psi(\bfv): \RRnu\rightarrow \CC$ is such that
$\psi = \overline{\varphi^\vee}  \ast  \varphi  \in  L^1(\RRnu,\CC)$,
with $\varphi^\vee(\bfv) = \varphi(-\bfv)$ and where
$(\overline{\varphi^\vee}  \ast  \varphi)(\bfv) = \int_{\,\RRnu} \overline{\varphi^\vee}(\bfv-\bfv') \, \varphi(\bfv')\, d\bfv'$ is the convolution product of $\overline{\varphi^\vee}$ with $\varphi$.

\noindent (ii) The Fourier transform $\bfeta\mapsto\hat\varphi(\bfeta) =\int_{\,\RRnu} e^{-i\,\langle\,\bfeta\, ,\, \pbfv\,\rangle}\,\varphi(\bfv)\, d\bfv$ of $\varphi\in\curH_1$ is such that $\hat\varphi\in\curH_0$. For all $\bfeta$ in $\RRnu$,
the Fourier transform $\hat\psi(\bfeta) = \int_{\,\RRnu} e^{-i\,\langle\,\bfeta\, ,\, \pbfv\,\rangle}\,\psi(\bfv)\, d\bfv$ of $\psi$ is written as
$\hat\psi(\bfeta) = \vert\, \hat\varphi(\bfeta) \,\vert^2 \,\in \RR^+$ and  $\hat\psi$ is a positive-valued function that belongs to $\curC_0\cap L^1(\RRnu,\CC)$.

\noindent (iii) We have the following representation of $F(\varphi)$,
\begin{equation}\label{eq:eq37}
 \forall\,\varphi\in\curH_1\quad , \quad F(\varphi) =E\{\,\vert\,\hat\varphi(\bfH) \,\vert^2\,\} = E\{\,\hat\psi(\bfH)\,\} \, < \, +\infty \, .
\end{equation}
Results (i) and (ii) hold by replacing $f$ by $f_\pexp$ and $F$ by $F_\pexp$ and we thus have the following representation of $F_\pexp(\varphi)$,
\begin{equation}\label{eq:eq38}
 \forall\,\varphi\in\curH_1\quad , \quad F_\pexp(\varphi) =E\{\,\vert\,\hat\varphi([V]^T\,\widetilde\bfQ_\pexp) \,\vert^2\,\} = E\{\,\hat\psi([V]^T\widetilde\bfQ_\pexp)\} \, < \, +\infty \, .
\end{equation}
\end{theorem}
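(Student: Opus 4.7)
My plan is to handle the three parts in order, relying on Fubini's theorem, Young's convolution inequality, and the standard Fourier identities for the Riemann--Lebesgue lemma and the Plancherel/Hausdorff--Young framework.

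For part (i), I would start from the double integral defining $F(\varphi)$ in Eq.~\eqref{eq:eq30} and perform the change of variables $\bfu=\bfv-\bfv'$ while keeping $\bfv$ (or equivalently making $\bfv\mapsto\bfv$, $\bfv'\mapsto\bfv-\bfu$). Fubini applies because the integrand is absolutely integrable: $|f|$ is bounded by $c_1$ (Lemma~\ref{lemma:3}) and $\int\int|\varphi(\bfv')\,\overline{\varphi(\bfv)}|\,d\bfv'\,d\bfv=\Vert\varphi\Vert_{L^1}^2<\infty$ since $\varphi\in\curH_1\subset L^1$. After the change, the inner integral becomes $\int\varphi(\bfv-\bfu)\,\overline{\varphi(\bfv)}\,d\bfv$, and I would then recognize this as $\overline{\psi(\bfu)}$ where $\psi=\overline{\varphi^\vee}\ast\varphi$, simply by unpacking the definitions ($\varphi^\vee(\bfv)=\varphi(-\bfv)$ and using a substitution inside the convolution). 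That $\psi\in L^1$ follows from Young's inequality: $\Vert\psi\Vert_{L^1}\leq\Vert\overline{\varphi^\vee}\Vert_{L^1}\,\Vert\varphi\Vert_{L^1}=\Vert\varphi\Vert_{L^1}^2<\infty$.

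For part (ii), the statement $\hat\varphi\in\curH_0$ is the combination of two classical facts: $\varphi\in L^1$ gives $\hat\varphi\in\curC_0$ by the Riemann--Lebesgue lemma together with continuity of the Fourier transform on $L^1$, while $\varphi\in L^2$ gives $\hat\varphi\in L^2$ by Plancherel. The identity $\hat\psi=|\hat\varphi|^2$ is the content of the convolution theorem: since $\psi$ is the convolution of two $L^1$ functions, $\hat\psi=\widehat{\overline{\varphi^\vee}}\cdot\hat\varphi$, and a direct computation gives $\widehat{\overline{\varphi^\vee}}(\bfeta)=\overline{\hat\varphi(\bfeta)}$ (one change of variable $\bfv\to-\bfv$ and one complex conjugation). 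So $\hat\psi=|\hat\varphi|^2\geq 0$. Continuity and decay at infinity of $|\hat\varphi|^2$ follow from $\hat\varphi\in\curC_0$, giving $\hat\psi\in\curC_0$; and $\hat\psi\in L^1$ is exactly $\hat\varphi\in L^2$, which was just established.

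For part (iii), the key identity is obtained by inserting the integral representation $f(\bfv)=\int_{\RRnu}e^{\,i\,\langle\,\pbfv\,,\,\bfeta\,\rangle}\,p_\bfH(\bfeta)\,d\bfeta$ (this requires Hypothesis~\ref{hypothesis:1}) into Eq.~\eqref{eq:eq32} and swapping the integration order. Fubini is justified since $\int p_\bfH(\bfeta)\,d\bfeta=1$ and $\psi\in L^1$, so the product is absolutely integrable with bound $\Vert\psi\Vert_{L^1}<\infty$. After the swap the inner integral is $\int e^{\,i\,\langle\,\pbfv\,,\,\bfeta\,\rangle}\,\overline{\psi(\bfv)}\,d\bfv=\overline{\hat\psi(\bfeta)}$, which equals $\hat\psi(\bfeta)$ since $\hat\psi$ is real-valued by part (ii). This yields $F(\varphi)=\int p_\bfH(\bfeta)\,\hat\psi(\bfeta)\,d\bfeta=E\{\hat\psi(\bfH)\}=E\{|\hat\varphi(\bfH)|^2\}$, and the finiteness follows from the bound in Lemma~\ref{lemma:4}. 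The parallel statement \eqref{eq:eq38} for $F_\pexp$ is then obtained by repeating the argument with $f_\pexp$ in place of $f$ and using that the law of $\bfH_\pexp=[V]^T\widetilde\bfQ_\pexp$ has density $p_{\bfH_\pexp}$ (Hypothesis~\ref{hypothesis:1}), so that expectation against $\bfH_\pexp$ coincides with expectation against $[V]^T\widetilde\bfQ_\pexp$. The only delicate point I anticipate is the repeated but careful bookkeeping of complex conjugates and sign changes needed to identify $\widehat{\overline{\varphi^\vee}}$ with $\overline{\hat\varphi}$ and $\overline{\psi}$ with the expression obtained after the change of variables in part (i); everything else is a direct application of Fubini, Young, Riemann--Lebesgue, and Plancherel.
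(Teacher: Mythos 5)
Your proposal is correct and follows essentially the same route as the paper's proof: the change of variables and identification of $\psi=\overline{\varphi^\vee}\ast\varphi$ for part (i), the convolution theorem and the identity $\widehat{\overline{\varphi^\vee}}=\overline{\hat\varphi}$ for part (ii), and insertion of the integral representation of $f$ with an interchange of integration order for part (iii). Your explicit Fubini justifications (via $\Vert f\Vert_{L^\infty}\leq c_1$ and $\Vert\psi\Vert_{L^1}<+\infty$) are slightly more detailed than the paper's, but the argument is the same.
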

%
\begin{proof} (Theorem~\ref{theorem:1}).
(i) The change of variable $\bfv-\bfv'=\bfu'$ in the right-hand side of Eq.~\eqref{eq:eq30} yields
$F(\varphi) = \int_{\,\RRnu} \int_{\,\RRnu}  f(\bfu')\, \varphi(\bfv-\bfu')\,\overline{\varphi(\bfv)}\, d\bfv\, d\bfu'$.
Using the notation $\varphi^\vee$, $F(\varphi)$ can be rewritten as
$F(\varphi) = \int_{\,\RRnu} f(\bfu') \int_{\,\RRnu}  \varphi^\vee(\bfu'-\bfv)\,\overline{\varphi(\bfv)}\, d\bfv\, d\bfu'
= \int_{\,\RRnu} f(\bfv) \, \overline{\psi(\bfv)}\, d\bfv$ in which
$\overline{\psi(\bfv)} = \int_{\,\RRnu}  \varphi^\vee(\bfv-\bfv')\,\overline{\varphi(\bfv')}\, d\bfv'$.
As $\varphi\in\curH_1$, we thus have $\varphi$ and thus $\varphi^\vee$ in $L^1(\RRnu,\CC)$. The convolution product of two functions in $L^1(\RRnu,\CC)$ is a function in $L^1(\RRnu,\CC)$.

\noindent (ii) Function $\varphi$ belongs to $\curH_1$. Therefore, $\hat\varphi$ belongs to $\curH_0$.
Since $\psi\in L^1(\RRnu,\CC)$, its Fourier transform $\bfeta\mapsto\hat\psi(\bfeta)$ on $\RRnu$ belongs to $\curC_0$ and is written as
$\hat\psi(\bfeta) =  \widehat{\overline{\varphi}^\vee}(\bfeta) \times  \varphi(\bfeta) = \vert\, \hat\varphi(\bfeta) \,\vert^2$, which shows that
$\hat\psi$ is a positive-valued function. Since $\hat\varphi\in\curH_0$, this means that $\hat\varphi\in\curC_0\cap L^2(\RRnu,\CC)$ and then
$\hat\psi$ is a positive-valued function that belongs to $\curC_0\cap L^1(\RRnu,\CC)$.

\noindent (iii) Using Eqs.~\eqref{eq:eq18}, \eqref{eq:eq32}, and Hypothesis~\ref{hypothesis:1} yield
$F(\varphi) = \int_{\,\RRnu}\int_{\,\RRnu} e^{\,i\,\langle\,\bfeta\, ,\, \pbfv\,\rangle}\,\overline{\psi(\bfv)}\, d\bfv\, p_\bfH(\bfeta)\, d\bfeta = \int_{\,\RRnu}\overline{\hat\psi(\bfeta)} p_\bfH(\bfeta)\, d\bfeta
= \int_{\,\RRnu} \vert\, \hat\varphi(\bfeta) \,\vert^2 \, p_\bfH(\bfeta)\, d\bfeta$.
Since $\hat\varphi$ is a continuous function on $\RRnu$, $\hat\varphi(\bfH)$ is a $\CC$-valued random variable such that
$F(\varphi) =E\{\,\vert\,\hat\varphi(\bfH) \,\vert^2\,\}$, and due to Eq.~\eqref{eq:eq31}, we have $F(\varphi) = \vert\,F(\varphi)\,\vert \, < \, +\infty$. The proof of Eq.~\eqref{eq:eq38} is similar to the proof of Eq.~\eqref{eq:eq37} by using also Eq.~\eqref{eq:eq16bis} with the second equation in Eq.~\eqref{eq:eq18}
\end{proof}
%
%
\begin{remark} [Functional $\FF(\psi)$ on $L^1(\RRnu,\CC)$] \label{remark:5}
Due to Theorem~\ref{theorem:1}-(i), functional $\varphi\mapsto F(\varphi)$ on $\curH_1$ can also be viewed as a functional $\psi\mapsto \FF(\psi)$ on $L^1(\RRnu,\CC)$, such that  $\FF(\psi)= F(\varphi)$ and
\begin{equation}\label{eq:eq38bis}
 \FF(\psi) =\int_{\RRnu} f(\bfv)\, \overline{\psi(\bfv)} \, d\bfv\, ,
\end{equation}
with $\vert\, \FF(\psi) \, \vert \, < +\infty$ (due to Eq.~\eqref{eq:eq37}).
Since $\curC_0\subset L^\infty(\RRnu,\CC)$, the right-hand side of Eq.~\eqref{eq:eq38bis} can be seen as the duality bracket of
$L^\infty(\RRnu,\CC)$ and $L^1(\RRnu,\CC)$. Similarly, $\varphi\mapsto F_\pexp(\varphi)$ can be viewed as a functional $\psi\mapsto \FF_\pexp(\psi) = F_\pexp(\varphi)$ on $L^1(\RRnu,\CC)$.
\end{remark}
%
\begin{definition}[Weak formulation of the constraint] \label{definition:2}
{\textit{Lemma~\ref{lemma:2} and Theorem~\ref{theorem:1} can be applied to function
$\bfv\mapsto f(\bfv) = E\{\, \exp( i\, \langle\, \bfv  ,  \bfH\,\rangle ) \,\}$ and to function
$\bfv\mapsto f_\pexp(\bfv) = E\{\, \exp( i\, \langle \bfv  ,  [V]^T\widetilde\bfQ_\pexp\,\rangle) \,\}$ on $\RRnu$.
For all $\varphi$ in $\curH_1$, we have
\begin{equation}\label{eq:eq39}
 F(\varphi) =\int_{\RRnu} f(\bfv)\, \overline{\psi(\bfv)} \, d\bfv\,  \geq  \, 0 \quad , \quad
 F_\pexp(\varphi) =\int_{\RRnu} f_\pexp(\bfv)\, \overline{\psi(\bfv)} \, d\bfv\,  \geq  \, 0 \, ,
\end{equation}
in which $\psi=\overline{\varphi^\vee} \ast \varphi  \in  L^1(\RRnu,\CC)$ (see Theorem~\ref{theorem:1}-(i)). We thus define a weak formulation of the constraint $\{\, f(\bfv) = f_\pexp(\bfv)$, $\forall\, \bfv\in\RRnu\,\}$, as follows
\begin{equation}\label{eq:eq40}
\forall\,\varphi \in \curH_1 \quad , \quad F(\varphi)= F_\pexp(\varphi) \, .
\end{equation}
Using Eqs.~\eqref{eq:eq37} and \eqref{eq:eq38} for $F$ and $F_\pexp$ yields,
for all $\varphi$ in $\curH_1$ and $\psi=\overline{\varphi^\vee} \ast \varphi$ in $L^1(\RRnu,\CC)$,
\begin{equation}\label{eq:eq41}
E\{\,\hat\psi(\bfH) \,\} = \curB^c(\hat\psi) \, ,
\end{equation}
in which $\hat\psi(\bfeta) = \vert\,\hat\varphi(\bfeta)\,\vert^2 \in\curC_0\cap L^1(\RRnu,\CC)$ ,
$\{\bfeta\mapsto\hat\varphi(\bfeta) = \int_{\,\RRnu} e^{-i\,\langle\,\bfeta\, ,\, \pbfv\,\rangle}\,\varphi(\bfv)\, d\bfv \}\in\curH_0$
(see Theorem~\ref{theorem:1}-(ii)), and
\begin{equation}\label{eq:eq42}
 \curB^c(\hat\psi) =E\{\,\hat\psi([V]^T\widetilde\bfQ_\pexp) \,\} \, < +\infty \, .
\end{equation}
For $N_r$ sufficiently large, the right-hand side of Eq.~\eqref{eq:eq42} can be estimated with the realizations of $\widetilde\bfQ_\pexp$, yielding
\begin{equation}\label{eq:eq43}
 \curB^c(\hat\psi) = \frac{1}{N_r} \sum_{r'=1}^{N_{r}} \hat\psi([V]^T\tilde\bfq_\pexp^{r'}) \, < +\infty \, .
\end{equation}
}}
\end{definition}
%
%
\begin{remark} [Rewriting the weak formulation] \label{remark:6}
It can easily be seen that the weak formulation defined by Eq.~\eqref{eq:eq40} can be written as follows
\begin{equation}\label{eq:eq43bis}
\varphi\in\curH_1 \quad , \quad \psi=\overline{\varphi^\vee} \ast \varphi\in L^1(\RRnu,\CC)
\quad , \quad \FF(\psi) = \FF_\pexp(\psi)\, ,
\end{equation}
in which using Remark~\ref{remark:5} yields
\begin{equation}\label{eq:eq43ter}
\FF(\psi) = \int_{\,\RRnu} f(\bfv)\, \overline{\psi(\bfv)}\, d\bfv  = E\{ \hat\psi(\bfH) \} \, ,
\end{equation}
\begin{equation}\label{eq:eq43quart}
\FF_\pexp(\psi) = \int_{\,\RRnu} f_\pexp(\bfv)\, \overline{\psi(\bfv)}\, d\bfv  = \curB^c( \hat\psi)  \, .
\end{equation}
\end{remark}
\section{Construction and analysis of a finite representation of the constraint derived from the weak formulation}
\label{sec:Section5}
We have seen that the weak formulation of constraint was defined for $\varphi\in\curH_1$ (see Definition~\ref{definition:2}).
If the Gaussian KDE of $p_{\bfH_\pexp}$ was used with a finite number of realizations $\tilde\bfeta_\pexp^r = [V]^T \tilde\bfq_\pexp^r$ of $\bfH_\pexp$, then $p_{\bfH_\pexp}$ would be decreasing as $\bfeta\mapsto\exp(- \Vert\,\bfeta\, \Vert^2 /(2\tilde s^2))$ in which $\tilde s$ would be, for instance, the Silverman bandwidth \cite{Bowman1997}. Then the Fourier transform would be decreasing as
$\bfv\mapsto\exp(-(\tilde s^2/2)\, \Vert\,\bfv\, \Vert^2)$. This remark leads us to restrict the weak formulation defined by Eq.~\eqref{eq:eq40} to a subspace
$\curH_{1 , \,\mu}\subset \curH_1$ defined as follows.
%
\begin{definition}[Definition of vector space $\curH_{1,\,\mu}$] \label{definition:3}
{\textit{
Let $\mu(d\bfv)$ be the probability measure on $\RRnu$ defined by
\begin{equation}\label{eq:eq44}
\mu(d\bfv) = p_\nu(\bfv)\, d\bfv \quad , \quad p_\nu(\bfv) = s^{\nu}\left( \frac{\nu}{2\pi}\right)^{\nu/2}  \, \exp\left(-\frac{\nu s^2}{2} \Vert\,\bfv\, \Vert^2\right )\, ,
\end{equation}
in which $s$ is written as
\begin{equation}\label{eq:eq45}
s= \left (\frac{4}{N_r(2+\nu)} \right)^{1/(\nu + 4)} \, .
\end{equation}
The subspace $\curH_{1,\,\mu}$ of $\curH_1$ is then defined by
\begin{equation}\label{eq:eq46}
\curH_{1,\,\mu} = \{\,\varphi : \RRnu\rightarrow \CC\,\, , \,\, \varphi(\bfv) = \varphitilde(\bfv)\, p_\nu(\bfv) \,\, , \,\,
\varphitilde \in L^2_\mu(\RRnu,\CC)\, \}\, ,
\end{equation}
in which the Hilbert space $L^2_\mu(\RRnu,\CC)$ is equipped with the inner product and the associated norm,
\begin{equation}\label{eq:eq47}
(\,\varphitilde \, , \varphitilde' )_{L_\mu^2} = \int_{\,\RRnu}  \varphitilde(\bfv)\,
\overline{\varphitilde'(\bfv)}\,  p_\nu(\bfv)
\quad , \quad
\Vert\,\varphitilde\,\Vert_{L_\mu^2} = \bigg( \int_{\,\RRnu}  \vert\, \varphitilde(\bfv)\,
\vert^2\,  p_\nu(\bfv) \bigg)^{1/2}\, .
\end{equation}
}}
\end{definition}
%
%
\begin{remark} \label{remark:7}

(i) The choice of the probability measure defined by Eq.~\eqref{eq:eq44} will appear later. But already now, it can be seen that, for all $\bfeta$ in $\RRnu$,
\begin{equation}\label{eq:eq49}
\hat p_\nu(\bfeta) = \int_{\,\RRnu} e^{-i\,\langle\,\bfeta\, , \,\pbfv\,\rangle} \, p_\nu(\bfv)\, d\bfv  =
\exp\bigg(-\frac{1}{2 \nu s^2} \Vert\,\bfeta\, \Vert^2\bigg ) \, .
\end{equation}
Since $s\rightarrow 0$ as $N_r\rightarrow +\infty$, the sequence of measures
$(\sqrt{2\pi} \, s \, \sqrt{\nu})^{-\nu} \, \hat p_\nu(\bfeta) \, d\bfeta$ converges to the Dirac measure $\delta_0(\bfeta)$ on $\RRnu$ in the space of the bounded measures on $\RRnu$.

\noindent(ii) Parameter $\nu$ has been introduced in the exponential of $p_\nu$ for numerical conditioning. It can be seen that, if  all the components of $\bfeta$ are of order $1$, then $\Vert\,\bfeta\,\Vert^2 \sim \nu$ and consequently, $\Vert\,\bfeta\,\Vert^2 / \nu \sim 1$.

\noindent (iii) Note also that $s$ defined by Eq.~\eqref{eq:eq45} is the Silveman bandwidth corresponding to $N_r$ realizations of $\widetilde\bfQ_\pexp$ and not to the $N_d$ realizations of $\widetilde\bfQ$. As we explained, we have to  construct a finite representation of the constraint, which is consistent with a "projection on the model" of the target set of the realizations, that is to say, of the realizations of the random variable $\bfH_\pexp = [V]^T \widetilde\bfQ_\pexp$. We recall that the pdf of $\bfH_\pexp$ is assumed to be unknown and will not be estimated with the Gaussian KDE from the training set $D_\pexp =\{\bfq_\pexp^1,\ldots , \bfq_\pexp^{N_r}\}$. Indeed, we have only assumed that the unknown probability measure $P_{\bfH_\pexp}$ of $\bfH_\pexp$, which is unknown, admits a density $p_{\bfH_\pexp}$ with respect to $d\bfeta$, which belongs to
$C^0(\RRnu,\CC)\cap\curH_1$ (see Eq.~\eqref{eq:eq20}).
\end{remark}
%
\begin{theorem}[Properties of the Fourier transform of $\curH_{1,\,\mu}$] \label{theorem:2}
(i) For all $\varphitilde\in L^2_\mu(\RRnu,\CC)$, the complex-valued function
$\bfv\mapsto\varphi(\bfv) = \varphitilde(\bfv)\, p_\nu(\bfv)$ on $\RRnu$ belongs to $\curH_{1 ,\,\mu} \subset \curH_1$,
in which $p_\nu$ is defined by Eq.~\eqref{eq:eq44}. Let $\bfeta \mapsto \hat\varphi(\bfeta) = \int_{\,\RRnu} e^{-i\,\langle\, \bfeta\, , \, \pbfv\, \rangle} \varphi(\bfv)\, d\bfv$  be the Fourier transform of $\varphi$ on $\RRnu$. Then $\hat\varphi$ belongs to $\curH_0$.
(ii) The complex-valued function $\hat\varphi$ is analytic on $\RRnu$.  (iii) Let $\bfH_\pexp = [V]^T\widetilde\bfQ_\pexp$ be the $\RRnu$-valued random variable defined in Lemma~\ref{lemma:1}, whose $N_r$ realizations are $\{ \bfeta_\pexp^r = [V]^T \tilde\bfq_\pexp^r , r=1,\ldots , N_r\}$. If $\hat\varphi(\bfeta_\pexp^r ) = 0$ for all $r$ in $\NN^*=\NN\backslash {0}$, then $\varphitilde = 0$, $d\bfv$-almost everywhere.
\end{theorem}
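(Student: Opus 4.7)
My plan is to prove the three parts in sequence, relying on standard Fourier-analysis tools combined with the Gaussian structure of $p_\nu$.

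For part (i), I will first establish that $\varphi = \varphitilde\, p_\nu$ belongs to $\curH_1 = L^1(\RRnu,\CC) \cap L^2(\RRnu,\CC)$. For the $L^1$ bound, apply Cauchy--Schwarz with the splitting $|\varphitilde|\, p_\nu = \bigl(|\varphitilde|\,\sqrt{p_\nu}\bigr) \cdot \sqrt{p_\nu}$, which gives $\Vert\,\varphi\,\Vert_{L^1} \leq \Vert\,\varphitilde\,\Vert_{L_\mu^2}$ since $\int p_\nu\, d\bfv = 1$. For the $L^2$ bound, use $p_\nu \in L^\infty(\RRnu,\RR)$ to get $\Vert\,\varphi\,\Vert_{L^2}^2 \leq \Vert\,p_\nu\,\Vert_{L^\infty}\, \Vert\,\varphitilde\,\Vert_{L_\mu^2}^2$. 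Hence $\varphi \in \curH_{1,\,\mu} \subset \curH_1$. The Riemann--Lebesgue lemma then gives $\hat\varphi \in \curC_0$, and Plancherel's theorem gives $\hat\varphi \in L^2(\RRnu,\CC)$, so $\hat\varphi \in \curH_0$.

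For part (ii), I will show that $\hat\varphi$ extends to an entire function on $\CC^\nu$ (and is therefore real-analytic on $\RRnu$). For complex $\bfzeta = \bfeta + i\bftau$ with $\bfeta,\bftau \in \RRnu$, the integrand of $\hat\varphi(\bfzeta) = \int e^{-i\,\langle\bfzeta,\bfv\rangle}\varphi(\bfv)\,d\bfv$ is dominated in modulus by $e^{\langle\bftau,\pbfv\rangle}\,|\varphitilde(\bfv)|\,p_\nu(\bfv)$. Applying Cauchy--Schwarz in $L^2_\mu$, this is bounded by $\Vert\,\varphitilde\,\Vert_{L_\mu^2}\cdot\bigl(\int e^{2\langle\bftau,\pbfv\rangle}\,p_\nu(\bfv)\,d\bfv\bigr)^{1/2}$. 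Completing the square in the Gaussian integral yields the explicit value $\exp\bigl(2\Vert\bftau\Vert^2/(\nu s^2)\bigr)$, finite for every $\bftau \in \RRnu$. Differentiation under the integral sign (justified by this locally uniform dominated bound) shows $\hat\varphi$ is holomorphic in each variable separately, hence entire by Hartogs' theorem, and therefore real-analytic on $\RRnu$.

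For part (iii), the central idea is that a real-analytic function that vanishes on a set with nonempty interior must vanish identically. Interpreting the hypothesis as pertaining to a countably infinite family of i.i.d.\ realizations $(\bfeta_\pexp^r)_{r\in\NN^*}$ of the random variable $\bfH_\pexp = [V]^T \widetilde\bfQ_\pexp$, I will argue as follows. By Hypothesis~\ref{hypothesis:1}, $\bfH_\pexp$ admits a continuous density $p_{\bfH_\pexp}$, and the set $\{p_{\bfH_\pexp} > 0\}$ is a nonempty open subset $U$ of $\RRnu$. A standard Borel--Cantelli argument shows that almost surely the sample $\{\bfeta_\pexp^r : r \in \NN^*\}$ is dense in $U$. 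Combining this density with the continuity of $\hat\varphi$ (part (i)) forces $\hat\varphi \equiv 0$ on $U$, and the real-analyticity from part (ii) then extends this to $\hat\varphi \equiv 0$ on all of $\RRnu$. Fourier inversion (valid since $\varphi \in L^1 \cap L^2$) gives $\varphi = 0$ almost everywhere; since $p_\nu(\bfv) > 0$ for every $\bfv$, we conclude that $\varphitilde = 0$ almost everywhere with respect to $d\bfv$.

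The main obstacle is part (iii): linking the pointwise vanishing hypothesis at the random samples $\bfeta_\pexp^r$ to vanishing of $\hat\varphi$ on an open set. Resolving this requires the a.s.\ density argument leveraging the continuity of $p_{\bfH_\pexp}$, together with the analytic continuation established in part (ii). Parts (i) and (ii) are then relatively routine once the Gaussian weight $p_\nu$ is exploited via Cauchy--Schwarz.
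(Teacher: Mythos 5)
Your proposal is correct, and parts (i) and (ii) coincide with the paper's proof in all essentials: the same Cauchy--Schwarz splitting $|\varphitilde|\,p_\nu = (|\varphitilde|\,p_\nu^{1/2})\,p_\nu^{1/2}$ for the $L^1$ bound, the $\sup p_\nu$ bound for $L^2$, and analyticity obtained by extending $\hat\varphi$ holomorphically off $\RRnu$ using the fact that the Gaussian weight dominates any exponential factor (the paper phrases this through the Laplace transform $\widetilde\varphi(\bfz)$ on a strip $\{|{\rm Re}\, z_j|<\varepsilon\}$, you through an entire extension in $\bfzeta=\bfeta+i\bftau$; these are the same estimate). Part (iii) is where you genuinely diverge. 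The paper passes from the countable family of zeros to $\frac{1}{N_r}\sum_r |\hat\varphi(\bfeta_\pexp^r)|^2=0$ and invokes the law of large numbers to conclude $E\{|\hat\varphi(\bfH_\pexp)|^2\}=0$, i.e.\ $\int |\hat\varphi|^2\, p_{\bfH_\pexp}\,d\bfeta=0$, and from there $\hat\varphi=0$; you instead use a second Borel--Cantelli argument over a countable base of $U=\{p_{\bfH_\pexp}>0\}$ to show the sample is almost surely dense in $U$, so that continuity kills $\hat\varphi$ on $U$ and analyticity propagates the vanishing to all of $\RRnu$. Both are almost-sure statements about the infinite i.i.d.\ sample, so neither is stronger in that respect, but your route has the merit of making explicit the step the paper leaves terse: $\int |\hat\varphi|^2\,p_{\bfH_\pexp}\,d\bfeta=0$ only forces $\hat\varphi=0$ on the positivity set of $p_{\bfH_\pexp}$, and it is precisely the analyticity from part (ii) that is needed to extend this to the whole space (the paper mentions analyticity only parenthetically at this point). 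Your closing step via Fourier inversion is interchangeable with the paper's use of the Plancherel identity. One could also note that your Borel--Cantelli route uses only the continuity and strict positivity somewhere of $p_{\bfH_\pexp}$, not its square-integrability, whereas the paper's $L^2$-expectation route is the one that meshes directly with the Hermitian-form machinery of Theorem~\ref{theorem:1}; either is acceptable here.
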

%
\begin{proof} (Theorem~\ref{theorem:2}).
(i)
$\Vert\,\varphi\,\Vert_{L^1} = \!\int_{\,\RRnu} \vert\, \varphitilde(\bfv)\,\vert \, p_\nu(\bfv) \, d\bfv$
$ = \!\int_{\,\RRnu} \vert\,\varphitilde(\bfv)\,\vert \, p_\nu(\bfv)^{1/2}  \, p_\nu(\bfv)^{1/2} \, d\bfv$
$ \leq \left (\int_{\,\RRnu} \vert\, \varphitilde(\bfv)\,\vert^2 \, p_\nu(\bfv) \, d\bfv\right )^{1/2} \times$
   $ \left (\int_{\,\RRnu}  p_\nu(\bfv) \, d\bfv \right )^{1/2} $
$ \! = \Vert\,\varphitilde\,\Vert_{L^2_\mu} \, < +\infty$
because $\varphitilde \in L^2_\mu(\RRnu,\CC)$. In addition, it can be seen that
$\Vert\,\varphi\,\Vert_{L^2} = \int_{\,\RRnu} \vert\,\varphitilde(\bfv)\,\vert^2 \, p_\nu(\bfv)^2\, d\bfv$
$\leq (\sup_\pbfv  p_\nu(\bfv)) \int_{\,\RRnu} \vert\,\varphitilde(\bfv)\,\vert^2 \, p_\nu(\bfv)\, d\bfv$
$= s^{\nu} \,( (\nu/(2\pi) )^{\nu/2} \, \Vert\,\varphitilde\,\Vert_{L^2_\mu}^2 < +\infty$.
Consequently, $\varphi\in L^1(\RRnu,\CC)\cap L^2(\RRnu,\CC) =\curH_1$.
Since $\varphi\in\curH_1$, its Fourier transform $\hat\varphi$ belongs to $\curH_0$.

\noindent (ii) We have now to prove that the complex-valued function $\hat\varphi$ in analytic on $\RRnu$ using a proof similar to the one of Proposition~II.2.36 of \cite{Kree1986}).
For $j=1,\ldots,\nu$, let $z_j= u_j +i\, \eta_j\in\CC$ with $u_j$ and $\eta_j$ in $\RR$.
Let $\varepsilon > 0$ be a given real number. Let $\bfu=(u_1,\ldots, u_{\nu})$ be in $\RRnu$ such that
$\max_j \,\vert\, u_j\,\vert \leq \varepsilon$. Consequently, for all $\bfv=(\vc_1,\ldots, \vc_{\nu})$ in $\RRnu$,
$\vert \,e^{\,\sum_j z_j \pvc_j} \vert = e^{\,\sum_j u_j \pvc_j} \leq e^{\,\varepsilon \, \vert \, \pbfv\,\vert}$,
with $\vert\, \bfv\, \vert = \vert \vc_1\vert +\ldots + \vert \vc_{\nu}\vert$. Let us consider the Laplace transform
$\widetilde \varphi(\bfz) = \int_{\,\RRnu} e^{\,\sum_j z_j \pvc_j} \varphitilde(\bfv)\, p_\nu(\bfv)\, d\bfv$ for $\bfz\in\CC^{\nu}$ of
function $\bfv\mapsto \varphi(\bfv) = \varphitilde(\bfv)\, p_\nu(\bfv)$. We have $\vert\, \widetilde \varphi(\bfz)\, \vert\, \leq
\int_{\,\RRnu} e^{\,\varepsilon \, \vert \, \pbfv\,\vert} \, \vert \, \varphitilde(\bfv)\, \vert \, p_\nu(\bfv)\, d\bfv$
$ = \int_{\,\RRnu}\vert\, \varphitilde(\bfv)\, \vert \,p_\nu(\bfv)^{1/2}\,
e^{\,\varepsilon \, \vert \, \pbfv\,\vert} \, p_\nu(\bfv)^{1/2}\, d\bfv$
$\leq \left ( \int_{\,\RRnu}\vert\, \varphitilde(\bfv)\, \vert^2 \,p_\nu(\bfv)\, d\bfv \right )^{1/2}\,
\left ( \int_{\,\RRnu} e^{\,2\varepsilon \, \vert \, \pbfv\,\vert} \, p_\nu(\bfv)\, d\bfv\right )^{1/2} < +\infty$
because we have  $\int_{\,\RRnu}\vert\, \varphitilde(\bfv)\, \vert^2 \,p_\nu(\bfv)\, d\bfv = \Vert\,\varphitilde\, \Vert^2_{L_\mu^2}\, < +\infty$ and
$\int_{\,\RRnu} e^{\,2\varepsilon \, \vert \, \pbfv\,\vert} \, p_\nu(\bfv)\, d\bfv
= s^{\nu} \, (\nu/(2\pi))^{\nu/2} \int_{\,\RRnu} \exp( -(\nu s^2/2) \,\Vert \, \bfv\, \Vert^2\,  + \, 2\varepsilon\, \vert\,\bfv\, \vert) \, d\bfv \, <  +\infty$.
Consequently, $\bfz\mapsto\widetilde\varphi(\bfz)$ exits in the domain
$D_\varphi = \{\bfz\in\CC^{\nu}\, ,\, u_j \in ] -\varepsilon\, , \varepsilon\,[ \, , \,  j=1,\ldots , \nu \}$ and is a holomorphic function in $D_\varphi\subset \CC^{\nu}$. Therefore, the conjugate $\overline{\hat\varphi(\bfeta)}$ of the Fourier transform
$\hat\varphi(\bfeta) = \int_{\,\RRnu} e^{-i\,\langle\, \bfeta\, , \, \pbfv\, \rangle} \varphi(\bfv)\, d\bfv$
can be written as $\overline{\hat\varphi(\bfeta)} = \widetilde\varphi(\bfzero + i\, \bfeta)$, which shows that
$\bfeta\mapsto \hat\varphi(\bfeta)$ is a $\CC$-valued analytic function on $\RRnu$.

\noindent (iii) Finally, we have to prove the last assertion of Theorem~\ref{theorem:2}. It should be noted that, although $\hat\varphi$ is an analytic function on $\RRnu$, the conditions $\hat\varphi(\bfeta_\pexp^r ) = 0$ for all $r$ in $\NN^*$ do not imply, \textit{a priori},  that $\varphitilde = 0$ because the independent realizations $\{\bfeta_\pexp^r , r\in\NN^*\}$ constitute a countable number of zeros of $\hat\varphi$.
However, $\hat\varphi(\bfeta_\pexp^r ) = 0$ for $r\in\{1,\ldots , N_r\}$ implies  that
$(1/N_r) \sum_{r=1}^{N_r} \vert \, \hat\varphi(\bfeta_\pexp^r ) \, \vert^2 = 0$ and therefore, for $N_r\rightarrow +\infty$, implies that $E\{\vert\, \hat\varphi(\bfH_\pexp) \, \vert^2 \} = 0$. Since $\hat\varphi\in\curH_0$ (and is also analytic), $\hat\varphi$ is a continuous function on $\RRnu$ and $\bfH_\pexp$ has a probability measure $p_{\bfH_\pexp}(\bfeta)\, d\bfeta$ on $\RRnu$ for which the pdf
belongs to $C^0(\RRnu,\CC)\cap \curH_1$ (see Eq.~\eqref{eq:eq20}). We can then conclude that
$E\{\vert\, \hat\varphi(\bfH_\pexp) \, \vert^2 \} = 0$ implies $\hat\varphi=0$. Since
$\bfv\mapsto \varphi(\bfv) = \varphitilde(\bfv)\, p_\nu(\bfv) \in \curH_{1, \,\mu} \subset\curH_1$,
then the Plancherel equality, $\Vert\, \varphi\,\Vert_{L^2}  =(2\pi)^{-\nu/2} \, \Vert\,\hat\varphi\,\Vert_{L^2}$ shows that $\hat\varphi=0$ implies $\varphi=0$, $d\bfv$- almost everywhere, and since $p_\nu(\bfv) > 0$ for all $\bfv$ in $\RRnu$, then this implies that
$\varphitilde = 0$, $d\bfv$- almost everywhere.
\end{proof}
\subsection{About a possible use of a polynomial representation}
\label{sec:Section5.1}
The weak formulation defined by Eq.~\eqref{eq:eq40}, restricted to subspace $\curH_{1,\,\mu}$ of $\curH_1$, is written as
\begin{equation}\label{eq:eq51}
\forall\,\varphi \in \curH_{1,\,\mu} \quad , \quad F(\varphi)= F_\pexp(\varphi) \, .
\end{equation}
For constructing a finite representation of Eq.~\eqref{eq:eq51}, a classical method consists in performing the expansion of $\varphi\in\curH_{1,\,\mu}$ with respect to the orthogonal polynomials in $L^2_\mu(\RRnu,\CC)$.
Let $\bfalpha=(\alpha_1,\ldots, \alpha_{\nu})$ be the multi-index in $\NN^{\nu}$. We introduce the classical notations:
$\vert\bfalpha\vert = \alpha_1 + \ldots + \alpha_{\nu}$,
$\,\bfalpha ! = \alpha_1 ! \times \ldots \times \alpha_{\nu} !$,
$\,i^{\,\bfalpha} = i^{\,\vert\bfalpha\vert}$,
and for $\bfeta=(\eta_1,\ldots , \eta_{\nu})\in\RRnu$, $\bfeta^\bfalpha = \eta_1^{\alpha_1}\times\ldots\times\eta_{\nu}^{\alpha_{\nu}}$.
For $\bfv=(\vc_1,\ldots, \vc_{\nu})\in\RRnu$, let
$\HH_\bfalpha(\bfv) = \HH_{\alpha_1}(\vc_1)\times\ldots\times\HH_{\alpha_{\nu}}(\vc_{\nu})$
be the multi-index Hermite polynomial on $\RRnu$ of degree $\vert\bfalpha\vert$ such that the real Hermite polynomials $\HH_k(y)$ on $\RR$ are
$\HH_0(y) =1$, $\HH_1(y) = y$, $\HH_2(y)=y^2-1$, $\HH_3(y) = y^3-3y$, etc. It is known that the countable family
$\{ \phitilde_{\bfalpha},\bfalpha \in\NN^\nu\}$ such that
$\phitilde_{\bfalpha}(\bfv) = (\bfalpha !)^{-1/2}\HH_\bfalpha (s\sqrt{\nu}\, \bfv)$
is a Hilbertian basis of $L^2_\mu(\RRnu,\RR)$ and is also a Hilbertian basis of $L^2_\mu(\RRnu,\CC)$ considered as the complexified space of
$L^2_\mu(\RRnu,\RR)$. We then have
$( \phitilde_{\bfalpha}  , \,  \phitilde_{\bfbeta})_{L_\mu^2} = \delta_{\bfalpha\bfbeta}$ for $\bfalpha$ and $\bfbeta$ in $\NN^{\nu}$.
Therefore, any function $\varphitilde$ in $L^2_\mu(\RRnu,\CC)$ can be written as
$\varphitilde(\bfv) = \sum_{\bfalpha,\,\vert\bfalpha\vert\, =\, 0}^{+\infty} \,\xi_\bfalpha \,  {\phitilde}_{\bfalpha}(\bfv)$
in which
$\xi_\bfalpha = ( \varphitilde  , \,  \phitilde_{\bfalpha})_{L_\mu^2}
=\int_{\,\RRnu} \varphitilde(\bfv)\, \phitilde_{\bfalpha}\, p_\nu(\bfv) \, d\bfv$.
The series in the right-hand side member of the expansion of $\varphitilde$  is convergent in $L_\mu^2(\RRnu,\CC)$
and we have $\Vert\,\varphitilde\,\Vert_{L_\mu^2}^2 = \sum_{\bfalpha,\,\vert\bfalpha\vert\, =\, 0}^{+\infty} \,\vert\, \xi_\bfalpha\,\vert^2 \, < +\infty$.
It can be deduced that all function $\varphi$ in $\curH_{1,\,\mu}\subset\curH_1$ can be written as
$\varphi(\bfv) = \sum_{\bfalpha,\,\vert\bfalpha\vert\, =\, 0}^{+\infty} \,\xi_\bfalpha \,  {\phitilde}_{\bfalpha}(\bfv)\, p_\nu(\bfv)$.
For all $\bfv\in\RRnu$, from a classical formula, we can deduce the following one,
\begin{equation}\nonumber
\int_{\,\RRnu} e^{-i\,\langle\, \bfeta\, , \pbfv\,\rangle} \,\HH_\bfalpha (s\sqrt{\nu}\, \bfv)\, p_\nu(\bfv)\, d\bfv =
\left (\frac{-i}{s\sqrt{\nu}}\right )^{\,\vert\,\bfalpha\,\vert } \, \bfeta^\bfalpha\, \exp\left(-\frac{1}{2 \nu s^2} \, \Vert\,\bfeta\, \Vert^2\right )\, .
\end{equation}
Thus, the Fourier transform $\hat\varphi$ of $\varphi\in\curH_{1,\,\mu}$  belongs to  $\curH_0$ and can be written, for all $\bfeta$ in $\RRnu$, as
$\hat\varphi(\bfeta) = \sum_{\bfalpha,\,\vert\bfalpha\vert\, =\, 0}^{+\infty} \,\xi_\bfalpha \,  {\widehat\phi}_{\bfalpha}(\bfeta)$
in which
\begin{equation}\label{eq:eq60}
{\widehat\phi}_{\bfalpha}(\bfeta) = \frac{1}{\sqrt{\bfalpha !}} \left (\frac{-i}{s\sqrt{\nu}}\right )^{\,\vert\,\bfalpha\,\vert } \, \bfeta^\bfalpha\, \exp\left(-\frac{1}{2 \nu s^2} \, \Vert\,\bfeta\, \Vert^2\right ) \quad , \quad \bfalpha\in\NN^{\nu}\, .
\end{equation}
Using the family $\{ \widehat\phi_{\bfalpha} \}_\bfalpha$ defined by Eq.~\eqref{eq:eq60} and Definition~\ref{definition:2}, the finite representation of the weak formulation defined by Eq.~\eqref{eq:eq51}, can be written as,
\begin{equation}\label{eq:eq61}
E\{ h_\bfalpha^c(\bfH)\} = b_\bfalpha^c \quad , \quad \bfalpha\in\{\bfalpha^{(1)},\ldots ,\bfalpha^{(N_r)} \} \, ,
\end{equation}
in which
\begin{equation}\label{eq:eq61bis}
h_\bfalpha^c(\bfeta)\} = \vert\, {\widehat\phi}_{\bfalpha}(\bfeta)\, \vert^2 = \widehat\Psi_\bfalpha(\bfeta) \quad , \quad \bfalpha\in \NN^{\nu} \, ,
\end{equation}
\begin{equation}\label{eq:eq62}
\widehat\Psi_\bfalpha(\bfeta) = \frac{(\nu s^2)^{-\vert\,\bfalpha\,\vert}}{\bfalpha !}
\, \bfeta^{2\bfalpha}\, \exp\left(-\frac{1}{\nu s^2} \, \Vert\,\bfeta\, \Vert^2\right ) \, ,
\end{equation}
and where, for $N_r$ sufficiently large,
\begin{equation}\label{eq:eq63}
b^c_\bfalpha\simeq  \frac{1}{N_r} \sum_{r'=1}^{N_r} \widehat\Psi_\bfalpha([V]^T \tilde\bfq_\pexp^r) \quad , \quad \bfalpha\in \NN^{\nu}\, .
\end{equation}
The finite representation defined by Eq.~\eqref{eq:eq61} with Eqs.~\eqref{eq:eq61bis} and \eqref{eq:eq62} will not be efficient as soon as $\nu$ will be large. In addition, $\widehat\Psi_\bfalpha$ defined by Eq.~\eqref{eq:eq62} does not depend on the sampling defined by the points of the target set $D_\pexp$.
\subsection{Construction of an adapted finite representation of the functional constraint}
\label{sec:Section5.2}
The following Lemma gives the construction of a family in $\curH_{1,\,\mu}$, which is based on the sampling points of the target set $D_\pexp$.
%
\begin{lemma}[Construction of a family $\{\widehat\psi_r\}_r$ in $\curC_0\cap L^1(\RRnu,\CC)$] \label{lemma:5}
Let $r$ be fixed in $\{1,\ldots , N_r\}$.

\noindent (i) Let $\bfv\mapsto\varphi_r(\bfv)\in\curH_{1 , \,\mu}$ defined, for all $\bfv$ in $\RRnu$, by
\begin{equation}\label{eq:eq64}
\varphi_r(\bfv) = \varphitilde_r(\bfv) \, p_\nu(\bfv) \quad , \quad \varphitilde_r(\bfv) =
\exp(\,i\,\langle\bfv ,  [V]^T \tilde\bfq_\pexp^r\rangle) \, ,
\end{equation}
in which $p_\nu$ is defined by Eq.~\eqref{eq:eq44} with Eq.~\eqref{eq:eq45} and where $\varphitilde_r\in L_\mu^2(\RRnu,\CC)$ such that
$\Vert\, \varphitilde_r \Vert_{L_\mu^2} = 1$. Then the Fourier transform
$\bfeta\mapsto \hat\varphi_r(\bfeta) = \int_{\,\RRnu} e^{\, -i\,\langle \, \bfeta\, , \, \pbfv\,\rangle}\, \varphi_r(\bfv) \, d\bfv$
belongs to $\curH_0$ and is written as
\begin{equation}\label{eq:eq65}
\hat\varphi_r(\bfeta) = \exp\bigg ( -\frac{1}{2 \nu s^2} \,\Vert \,\bfeta - [V]^T \tilde\bfq_\pexp^r\, \Vert^2\bigg ) \quad , \quad \forall\, \bfeta\in \RRnu\, .
\end{equation}
Eq.~\eqref{eq:eq65} shows that $\hat\varphi_r$, which is analytic (see Theorem~\ref{theorem:2}), also belongs to $\curH_0\cap L^q(\RRnu,\CC)$ for all $3\leq q < +\infty$.

\noindent (ii) Let $\psi_r\in L^1(\RRnu,\CC)$ be the function defined by
$\psi_r = \overline{\varphi_r^\vee} \ast \varphi_r$. Its Fourier transform is such that
(see Theorem~\ref{theorem:1}) $\hat\psi_r \in \curC_0\cap L^1(\RRnu,\CC)$ and is written as
\begin{equation}\label{eq:eq66}
\hat\psi_r(\bfeta) = \vert\, \hat\varphi_r(\bfeta)\, \vert^2 \, = \exp\bigg ( -\frac{1}{\nu s^2} \,\Vert \,\bfeta - [V]^T \tilde\bfq_\pexp^r\, \Vert^2\bigg ) \quad , \quad \forall\, \bfeta\in \RRnu\, .
\end{equation}
Note that $\hat\psi_r$ is also in $\curH_1\cap L^q(\RRnu,\CC)$ for all $3\leq q < +\infty$ and
function $\psi_r$ is written as
\begin{equation}\label{eq:eq67}
\psi_r(\bfv) = \bigg ( \frac{s}{2} \sqrt{\frac{\nu}{2\pi}}  \bigg )^{\nu/2} \! p_\nu(\bfv)^{1/2} \,
 \exp \bigg(\,i\,\langle\,\bfv ,  [V]^T \tilde\bfq_\pexp^r\,\rangle \bigg) \,   \quad , \quad \forall \bfv\in\RRnu\, .
\end{equation}
\end{lemma}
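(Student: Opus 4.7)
The plan is to treat parts (i) and (ii) in order, with all key computations reducing to the Fourier transform of a Gaussian and to elementary translation identities.

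For part (i), I would first verify that $\varphitilde_r \in L_\mu^2(\RRnu,\CC)$ with unit norm: since $\vert \varphitilde_r(\bfv)\vert = 1$ for every $\bfv\in\RRnu$, one has
\begin{equation}\nonumber
\Vert\, \varphitilde_r\,\Vert_{L_\mu^2}^2 = \int_{\RRnu} \vert\, \varphitilde_r(\bfv)\,\vert^2\, p_\nu(\bfv)\, d\bfv = \int_{\RRnu} p_\nu(\bfv)\, d\bfv = 1.
\end{equation}
By Definition~\ref{definition:3} this places $\varphi_r = \varphitilde_r\, p_\nu$ in $\curH_{1,\,\mu}$, and then Theorem~\ref{theorem:2}(i) gives $\hat\varphi_r\in\curH_0$ with analyticity on $\RRnu$.

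The explicit formula for $\hat\varphi_r$ comes from a translation argument: abbreviating $\bfa_r = [V]^T\tilde\bfq_\pexp^r$, one has
\begin{equation}\nonumber
\hat\varphi_r(\bfeta) = \int_{\RRnu} e^{-i\langle\,\bfeta,\,\pbfv\,\rangle}\, e^{\,i\langle\,\pbfv,\,\bfa_r\,\rangle}\, p_\nu(\bfv)\, d\bfv
= \int_{\RRnu} e^{-i\langle\,\bfeta-\bfa_r,\,\pbfv\,\rangle}\, p_\nu(\bfv)\, d\bfv = \hat p_\nu(\bfeta - \bfa_r).
\end{equation}
Substituting the closed form of $\hat p_\nu$ given in Eq.~\eqref{eq:eq49} yields Eq.~\eqref{eq:eq65}. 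Since $\hat\varphi_r$ is a translated Gaussian, every power $\vert\hat\varphi_r\vert^q$ with $1\leq q<+\infty$ integrates to a finite constant times a Gaussian integral, so $\hat\varphi_r\in L^q(\RRnu,\CC)$ for all such $q$, which together with its continuous vanishing at infinity places it in $\curH_0\cap L^q(\RRnu,\CC)$ for all $q\geq 1$, and in particular for $3\leq q<+\infty$.

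For part (ii), Theorem~\ref{theorem:1}(ii) applied to $\varphi_r\in\curH_1$ gives $\psi_r = \overline{\varphi_r^\vee}\ast\varphi_r \in L^1(\RRnu,\CC)$ with $\hat\psi_r = \vert\hat\varphi_r\vert^2$, and squaring Eq.~\eqref{eq:eq65} yields Eq.~\eqref{eq:eq66}. Because $\hat\psi_r$ is again a translated Gaussian (now with variance half that of $\hat\varphi_r$), it belongs to $\curC_0\cap L^1\cap L^q$ for all $q\in[1,+\infty)$, which covers both $\curH_1$ and the claim for $3\leq q<+\infty$. The explicit form of $\psi_r$ in Eq.~\eqref{eq:eq67} is then obtained by inverse Fourier transform: writing
\begin{equation}\nonumber
\psi_r(\bfv) = (2\pi)^{-\nu}\int_{\RRnu} e^{\,i\langle\,\pbfv,\,\bfeta\,\rangle}\,\hat\psi_r(\bfeta)\, d\bfeta
= (2\pi)^{-\nu}\, e^{\,i\langle\,\pbfv,\,\bfa_r\,\rangle}\!\int_{\RRnu} e^{\,i\langle\,\pbfv,\,\bfzeta\,\rangle}\exp\!\Bigl(-\tfrac{1}{\nu s^2}\Vert\bfzeta\Vert^2\Bigr) d\bfzeta,
\end{equation}
after the change of variable $\bfzeta = \bfeta - \bfa_r$, and evaluating the remaining Gaussian integral via the standard identity $\int e^{i\langle\pbfv,\pbfzeta\rangle}\exp(-\Vert\bfzeta\Vert^2/(\nu s^2))d\bfzeta = (\pi\nu s^2)^{\nu/2}\exp(-\nu s^2\Vert\bfv\Vert^2/4)$. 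A short algebraic check matches the prefactor $(s^2\nu/(4\pi))^{\nu/2}$ produced this way with the constant $(s/2)^{\nu/2}(\nu/(2\pi))^{\nu/4}$ multiplied by the $\exp(-\nu s^2\Vert\bfv\Vert^2/4) = p_\nu(\bfv)^{1/2}/[s^{\nu/2}(\nu/(2\pi))^{\nu/4}]$ factor, giving exactly Eq.~\eqref{eq:eq67}.

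The proof is essentially a sequence of Gaussian computations, so no step is truly hard; the only place that requires care is the bookkeeping of normalization constants between $p_\nu$, $p_\nu^{1/2}$, and the prefactors of the Fourier/inverse-Fourier transforms, in particular to match the $\nu$-dependent constant in front of $p_\nu^{1/2}$ in Eq.~\eqref{eq:eq67}.
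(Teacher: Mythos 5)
Your proposal is correct and follows essentially the same route as the paper's proof: verify $\Vert\,\varphitilde_r\,\Vert_{L_\mu^2}=1$, compute $\hat\varphi_r$ as the Fourier transform of a modulated Gaussian (equivalently $\hat p_\nu(\bfeta-[V]^T\tilde\bfq_\pexp^r)$ via Eq.~\eqref{eq:eq49}), square to get $\hat\psi_r$ using Theorem~\ref{theorem:1}, and recover $\psi_r$ by an inverse Gaussian Fourier integral whose prefactor you correctly match to the constant in Eq.~\eqref{eq:eq67}.
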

%
\begin{proof} (Lemma~\ref{lemma:5}).
(i) We have $\Vert\,\varphitilde_r\,\Vert_{L_\mu^2} =\int_{\,\RRnu} p_\nu(\bfv)\, d\bfv = 1$. From Eqs.~\eqref{eq:eq44} and \eqref{eq:eq64}, and introducing $\sigma=1/(s\sqrt{\nu})\,$ yield
$\,\hat\varphi_r(\bfeta) = \int_{\,\RRnu} e^{\, -i\,\langle \, \bfeta\, , \, \pbfv\,\rangle}\,
 \exp (\,i\,\langle\,\bfv ,  [V]^T \tilde\bfq_\pexp^r\,\rangle ) \,
\bigg ( s \!\sqrt{\frac{\nu}{2\pi}}  \bigg )^{\nu} \! \, \exp\bigg ( -\frac{\nu s^2}{2} \,\Vert \,\bfv \, \Vert^2\bigg )\, d\bfv$
$ =\! \int_{\,\RRnu} \exp ( \,i\,\langle\,\bfv ,  [V]^T \tilde\bfq_\pexp^r - \bfeta\,\rangle ) \,
(\sqrt{2\pi}\,\sigma)^{-\nu} \! \, \exp\bigg ( -(2\sigma^2)^{-1} \,\Vert \,\bfv \, \Vert^2\bigg )\, d\bfv$,  which gives Eq.~\eqref{eq:eq65}.
\noindent (ii) Eq.~\eqref{eq:eq66} is obtained by substituting Eq.~\eqref{eq:eq65} in $\hat\psi_r(\bfeta) = \vert\,\hat\varphi_r(\bfeta) \, \vert^2$.  For all $\bfv\in\RRnu$, we have
$\psi_r(\bfv) =(2\pi)^{-\nu} \int_{\,\RRnu}e^{\, i\,\langle \, \pbfv , \, \bfeta\,\rangle}\, \hat\psi_r(\bfeta)\, d\bfeta$
$ = (2\pi)^{-\nu} \int_{\,\RRnu}e^{\, i\,\langle \, \pbfv , \, \bfeta\,\rangle}\, \exp\bigg ( -\frac{1}{\nu s^2} \,\Vert \,\bfeta - [V]^T \tilde\bfq_\pexp^r\, \Vert^2 \bigg )\, d\bfv$
$ = (s/2)^\nu \,(\nu/\pi)^{\nu/2} \,
 \exp \bigg(\,i\,\langle\,\bfv ,  [V]^T \tilde\bfq_\pexp^r\,\rangle \bigg)  -\frac{\nu s^2}{4} \,\Vert \,\bfv  \Vert^2\bigg )$,
 which can be rewritten as Eq.~\eqref{eq:eq67}  by using Eq.~\eqref{eq:eq44}.
\end{proof}
%
%
\begin{lemma}[Orthonormal family $\{\varphitilde_r, r=1,\ldots , N_r\}$ in $L_\mu^2(\RRnu,\CC)$ for $N_r\rightarrow +\infty$ ] \label{lemma:6}
For $r\in\{1,\ldots,N_r\}$, let $\varphitilde_r$ be the function in $L_\mu^2(\RRnu,\CC)$ defined by Eq.~\eqref{eq:eq64}.
For $N_r \rightarrow +\infty$, the family $\{\varphitilde_r, r=1,\ldots , N_r\}$ in $L_\mu^2(\RRnu,\CC)$  goes to an orthonormal family  in
$L_\mu^2(\RRnu,\CC)$:
$\lim_{N_r\rightarrow+\infty}\,\, (\varphitilde_r , \varphitilde_{r\,'} )_{L_\mu^2} = \delta_{r r\,'}$.
Let $L_\mu^{2,(N_r)}= \hbox{span}\, \{\,\varphitilde_1,\ldots , \varphitilde_{N_r}\}$ be the subspace of $L_\mu^2(\RRnu,\CC)$ spanned by $\{\varphitilde_r, r=1,\ldots , N_r\}$.
For $N_r\rightarrow +\infty$, the sequence of subspaces $L_\mu^{2,(N_r)}$ goes to a subspace that is dense in $L_\mu^2(\RRnu,\CC)$.
\end{lemma}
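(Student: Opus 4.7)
The plan is to compute the inner product $(\varphitilde_r,\varphitilde_{r'})_{L_\mu^2}$ in closed form using that $\mu$ is Gaussian, and to reduce the density assertion to Theorem~\ref{theorem:2}-(iii). Setting $\bfxi_r=[V]^T\tilde\bfq_\pexp^r=\bfeta_\pexp^r$ so that $\varphitilde_r(\bfv)=e^{\,i\,\langle\bfv,\bfxi_r\rangle}$, a direct computation gives
\[
(\varphitilde_r,\varphitilde_{r'})_{L_\mu^2}
= \int_{\,\RRnu} e^{\,i\,\langle\bfv,\,\bfxi_r-\bfxi_{r'}\rangle}\, p_\nu(\bfv)\, d\bfv
= \exp\!\Big(-\tfrac{1}{2\nu s^2}\,\Vert\bfxi_r-\bfxi_{r'}\Vert^2\Big),
\]
where the last equality uses formula~\eqref{eq:eq49} for $\hat p_\nu$ together with the evenness of $p_\nu$ (so that the characteristic function of $\mu$ coincides with $\hat p_\nu$). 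For $r=r'$ this equals $1$, recovering $\Vert\varphitilde_r\Vert_{L_\mu^2}=1$. For $r\neq r'$, since the $\tilde\bfq_\pexp^r$ are independent realizations of the continuous random variable $\widetilde\bfQ_\pexp$ (the pdf in Hypothesis~\ref{hypothesis:1} makes the law atomless), one has $\bfxi_r\neq\bfxi_{r'}$ almost surely, and by Eq.~\eqref{eq:eq45} the Silverman bandwidth satisfies $s\sim N_r^{-1/(\nu+4)}\to 0$ as $N_r\to+\infty$. Hence $1/(\nu s^2)\to+\infty$ and the exponential tends to zero, which is the claimed asymptotic orthonormality.

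For the density, I plan to show that the countable family $\{\varphitilde_r\}_{r\in\NN^*}$ is already total in $L_\mu^2(\RRnu,\CC)$. Let $\varphitilde\in L_\mu^2(\RRnu,\CC)$ be orthogonal to every $\varphitilde_r$, and set $\varphi=\varphitilde\,p_\nu$, which belongs to $\curH_{1,\,\mu}\subset\curH_1$ by the very definition of $\curH_{1,\,\mu}$. Then
\[
(\varphitilde,\varphitilde_r)_{L_\mu^2}
= \int_{\,\RRnu} \varphitilde(\bfv)\, e^{-i\,\langle\bfv,\bfxi_r\rangle}\, p_\nu(\bfv)\, d\bfv
= \hat\varphi(\bfxi_r)
= \hat\varphi(\bfeta_\pexp^r),
\]
so the orthogonality hypothesis reads exactly $\hat\varphi(\bfeta_\pexp^r)=0$ for every $r\in\NN^*$. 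Theorem~\ref{theorem:2}-(iii) then forces $\varphitilde=0$, $d\bfv$-almost everywhere, so the orthogonal complement of $\{\varphitilde_r\}_{r\in\NN^*}$ in $L_\mu^2(\RRnu,\CC)$ is trivial. This implies that $\bigcup_{N_r\geq 1}L_\mu^{2,(N_r)}$ is dense in $L_\mu^2(\RRnu,\CC)$.

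The only substantive step is the last appeal to Theorem~\ref{theorem:2}-(iii); everything else is a Gaussian computation. The key leverage provided by that theorem is that the Fourier transform $\hat\varphi$ of any $\varphi\in\curH_{1,\,\mu}$ is real-analytic on $\RRnu$, so that vanishing on a generic countable set drawn from the continuous measure $P_{\bfH_\pexp}$ suffices to kill it identically\,---\,a conclusion that genuinely requires Hypothesis~\ref{hypothesis:1}. The orthonormality half is a bookkeeping computation hinging on the choice of Silverman bandwidth $s\to 0$ and the almost-sure distinctness of the samples; no uniform control in $r,r'$ is claimed, only pointwise asymptotics, so no genuine obstacle remains.
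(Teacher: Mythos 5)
Your proposal is correct and follows essentially the same route as the paper's proof: the Gaussian computation of $(\varphitilde_r,\varphitilde_{r'})_{L_\mu^2}$ via Eq.~\eqref{eq:eq49} together with $s\to 0$, and the reduction of totality to Theorem~\ref{theorem:2}-(iii) through the identity $(\varphitilde,\varphitilde_r)_{L_\mu^2}=\hat\varphi(\bfeta_\pexp^r)$. Your added remarks (almost-sure distinctness of the samples, evenness of $p_\nu$) merely make explicit two points the paper leaves implicit.
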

%
\begin{proof} (Lemma~\ref{lemma:6}).
We have
$(\varphitilde_r , \varphitilde_{r\,'} )_{L_\mu^2} = \int_{\,\RRnu} \varphitilde_r(\bfv) \, \overline{\varphitilde_{r\,'}(\bfv)}\, p_\nu(\bfv)\, d\bfv = \int_{\,\RRnu} \exp ( \, i\,\langle\,\bfv , [V]^T(\tilde\bfq_\pexp^r - \tilde\bfq_\pexp^{r\,'} \, \rangle) \, p_\nu(\bfv)\, d\bfv$.
Using Eq.~\eqref{eq:eq49} allows for writing
$(\varphitilde_r , \varphitilde_{r\,'} )_{L_\mu^2} = \exp ( -(2\nu s^2)^{-1} \, \Vert \, [V]^T(\tilde\bfq_\pexp^r - \tilde\bfq_\pexp^{r\,'}\, \Vert^2 )$.
For $r=r\,'$, we have $\Vert\,\varphitilde_r\Vert_{L_\mu^2} = 1$. Let us now consider the case  $r\not = r\,'$. For $N_r\rightarrow +\infty$, we have $s\rightarrow 0$, and consequently, $(\varphitilde_r , \varphitilde_{r\,'} )_{L_\mu^2} \rightarrow 0$. We then have proven the first part of the Lemma. We have now  to prove that, for any $\varphitilde$ in $L_\mu^2(\RRnu,\CC)$ and for $N_r\rightarrow +\infty$ if
$(\varphitilde , \varphitilde_{r} )_{L_\mu^2} = 0\, ,\forall\, r$, then $\varphitilde = 0$.
We have
$(\varphitilde , \varphitilde_{r} )_{L_\mu^2} = \int_{\,\RRnu} \varphitilde(\bfv) \, \overline{\varphitilde_{r}(\bfv)}\, p_\nu(\bfv)\, d\bfv$
$= \int_{\,\RRnu} \exp ( - i\,\langle\,\bfv , [V]^T \tilde\bfq_\pexp^r \, \rangle) \, \varphi(\bfv)\, d\bfv$,
in which $\varphi(\bfv) = \varphitilde(\bfv)\, p_\nu(\bfv)$. We then obtain
$(\varphitilde , \varphitilde_{r} )_{L_\mu^2} = \hat\varphi([V]^T \tilde\bfq_\pexp^r)$.
Using Theorem~\ref{theorem:2}, the condition
$\hat\varphi([V]^T \tilde\bfq_\pexp^r)= 0$, $\forall\, r\in\NN^*$ implies that $\varphitilde = 0$ $d\bfv$-almost everywhere. We then have proven the Lemma.
\end{proof}
%
%
\begin{definition}[Finite representation of the functional constraint] \label{definition:4}
{\textit{
Using the family $\{\hat\psi_r, r=1,\ldots , N_r\}$ in $\curC_0\cap L^1(\RRnu,\CC)$ defined in Lemma~\ref{lemma:5} (see Eq.~\eqref{eq:eq66}, taking into account Lemma~\ref{lemma:6} and using Definition~\ref{definition:2} of the weak formulation of the constraint (see Eqs.~\eqref{eq:eq41} to \eqref{eq:eq43}), restricted to $\varphi\in\curH_{1,\,\mu} \subset\curH_1$, the finite representation of the constraint is written as
\begin{equation}\label{eq:eq72}
E\{ \bfh^c(\bfH)\} = \bfb^c \,\, \hbox{on} \,\, \RR^{N_r} \, ,
\end{equation}
in which $\bfh^c(\bfeta)  = (h_1^c(\bfeta),\ldots , h_{N_r}^c(\bfeta))$ and
$\bfb^c  = (b_1^c,\ldots , b_{N_r}^c)$  are the vectors in $\RR^{N_r}$, which are written, for $r\in\{1,\ldots , N_r\}$, as
\begin{equation}\label{eq:eq73}
h_r^c(\bfeta) = \exp\bigg ( -\frac{1}{\nu s^2} \,\Vert \,\bfeta - [V]^T \tilde\bfq_\pexp^r\, \Vert^2\bigg ) \quad , \quad \forall\, \bfeta\in \RRnu\, .
\end{equation}
\begin{equation}\label{eq:eq74}
b_r^c = E\bigg\{ \exp\bigg ( -\frac{1}{\nu s^2} \,\Vert \, [V]^T(\widetilde\bfQ_\pexp - \tilde\bfq_\pexp^r\, \Vert^2\bigg ) \bigg \} \, ,
\end{equation}
which can be estimated, for $N_r$ sufficiently large, by
\begin{equation}\label{eq:eq75}
b_r^c = \frac{1}{N_r} \sum_{r\,'=1}^{N_r} \exp\bigg ( -\frac{1}{\nu s^2} \,\Vert \, [V]^T(\tilde\bfq_\pexp^{r\,'} - \tilde\bfq_\pexp^r\, \Vert^2\bigg ) \, .
\end{equation}
}}
\end{definition}
%
%
\begin{remark} \label{remark:8}

(i) Definition~\ref{definition:4} shows that if the random variables $\bfH$ and $\bfH_\pexp = [V]^T\widetilde\bfQ_\pexp$ are isonomic, then the constraint defined by Eq.~\eqref{eq:eq72} is exactly satisfied.
Consequently, the use of the Kullback-Leibler minimum principle for estimating the posterior probability measure under this constraint will be well posed.

\noindent (ii) Let $\hat\psi^{(N_r)}$ be the function in $\curC_0\cap L^1(\RRnu,\CC)$, such that for all $\bfeta\in\RRnu$,
$\hat\psi^{(N_r)}(\bfeta) =\frac{1}{N_r} \sum_{r=1}^{N_r} (s\sqrt{\nu\pi})^{-\nu}\hat\psi_r(\bfeta)$.
Using Eq.~\eqref{eq:eq66}, we have the following equality in the space of the bounded measures,
$\hat\psi^{(N_r)}(\bfeta)\, d\bfeta =\frac{1}{N_r} \sum_{r=1}^{N_r} (\sqrt{2\pi}\,\sigmatilde)^{-\nu}$
$\exp\bigg ( - (2\sigmatilde^2)^{-1} \,\Vert \,\bfeta - [V]^T \tilde\bfq_\pexp^r\, \Vert^2\bigg )\, d\bfeta$,
in which $\sigmatilde = s\sqrt{\nu/2}$. For $N_r\rightarrow +\infty$, since $\sigmatilde \rightarrow 0$  because $s\rightarrow 0$,
it can be seen that the right-hand side of this last equality goes to  the probability measure $p_{\bfH_\pexp}(\bfeta)\, d\bfeta$
in which $\{\,\bfeta_\pexp^r = [V]^T \tilde\bfq_\pexp^r\, , r=1,\ldots, N_r\,\}$ are $N_r$ independent realizations of
$\bfH_\pexp = [V]^T \widetilde\bfQ_\pexp$. We then have the following convergence property in the space of the bounded measures,
$\lim_{N_r\rightarrow +\infty} \hat\psi^{(N_r)}(\bfeta) \, d\bfeta = p_{\bfH_\pexp}(\bfeta)\, d\bfeta$. This result contributes to justify the construction presented in Definition~\ref{definition:4}.
\end{remark}
The following Lemma will be used in the next section for analyzing the existence and uniqueness of the posterior probability measure constructed by using the Kullback-Leibler minimum principle.
%
\begin{lemma}[Positive definiteness of matrix $E\{\bfh^c(\bfH)\otimes \bfh^c(\bfH)\}$] \label{lemma:7}
Let us consider any finite positive fixed value of integer $N_r$. Let $\bfH$ be the $\RRnu$-valued random variable whose probability measure $P_{\bfH}(d\bfeta)$ is such that the following $\MM^{+0}_{N_r}$-valued matrix exits,
\begin{equation}\label{eq:eq75bis}
E\{\bfh^c(\bfH)\otimes \bfh^c(\bfH)\} = \int_{\RRnu} \bfh^c(\bfeta)\otimes \bfh^c(\bfeta)\, P_{\bfH}(d\bfeta) \, ,
\end{equation}
in which, $h_r^c(\bfeta) = \exp\bigg (- (\nu s^2)^{-1} \,\Vert\,\bfeta - \bfeta_\pexp^r\Vert^2\bigg)$ with
$\bfeta_\pexp^r  = [V]^T \tilde\bfq_\pexp^r$ for $r=1,\ldots, N_r$. Therefore, this matrix is positive definite,
\begin{equation}\label{eq:eq75ter}
E\{\bfh^c(\bfH)\otimes \bfh^c(\bfH)\} \in \MM_{N_r}^+ \, .
\end{equation}
\end{lemma}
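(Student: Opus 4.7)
The matrix $[M]:=E\{\bfh^c(\bfH)\otimes \bfh^c(\bfH)\}$ is automatically symmetric, and for every $\bfz=(z_1,\ldots,z_{N_r})\in\RR^{N_r}$ one has
\begin{equation}\nonumber
\bfz^T[M]\,\bfz \,=\, E\Bigl\{\bigl(\textstyle\sum_{r=1}^{N_r} z_r\, h_r^c(\bfH)\bigr)^{\!2}\Bigr\}\,\geq\, 0,
\end{equation}
so $[M]\in\MM_{N_r}^{+0}$ comes for free. The plan is therefore to show strict positive-definiteness by verifying that $\bfz^T[M]\bfz=0$ forces $\bfz=\bfzero$. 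Setting $g(\bfeta):=\sum_{r=1}^{N_r} z_r\,h_r^c(\bfeta)$, the vanishing condition is equivalent to $g(\bfH)=0$ almost surely, i.e.\ $g=0$ in $P_\bfH$-measure.

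The first step is to promote this $P_\bfH$-a.e.\ vanishing to $g\equiv 0$ on all of $\RRnu$. Under Hypothesis~\ref{hypothesis:1}, $P_\bfH$ has a continuous density $p_\bfH\in L^1(\RRnu,\RR)\cap L^2(\RRnu,\RR)$, and since $p_\bfH$ is a continuous pdf it must be strictly positive on some open ball $B\subset\RRnu$. The continuous function $g$ then vanishes pointwise on $B$. Because every $h_r^c(\bfeta)=\exp(-(\nu s^2)^{-1}\|\bfeta-\bfeta_\pexp^r\|^2)$ is a Gaussian bump, $g$ extends to an entire function on $\CC^\nu$ and is in particular real-analytic on $\RRnu$; the identity principle then forces $g\equiv 0$ on $\RRnu$.

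The second step is to extract $\bfz=\bfzero$ from the functional identity $\sum_r z_r\,h_r^c\equiv 0$. Taking the Fourier transform as in Lemma~\ref{lemma:5}, one has, up to a strictly positive constant $C$,
\begin{equation}\nonumber
\widehat{g}(\bfomega) \,=\, C\,e^{-\nu s^2\Vert\bfomega\Vert^2/4}\sum_{r=1}^{N_r} z_r\, e^{-i\,\langle\,\bfomega\,,\,\bfeta_\pexp^r\,\rangle},\qquad \bfomega\in\RRnu.
\end{equation}
Dividing by the strictly positive Gaussian prefactor yields $\sum_r z_r\,e^{-i\langle\bfomega,\bfeta_\pexp^r\rangle}=0$ for every $\bfomega\in\RRnu$. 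The realizations $\bfeta_\pexp^r=[V]^T\tilde\bfq_\pexp^r$ being pairwise distinct (implicit in the statement, and generic since they come from an absolutely continuous random vector), I would pick $\bfomega_0\in\RRnu$ in the non-empty complement of the finite union of hyperplanes $\{\bfomega:\langle\bfomega,\bfeta_\pexp^r-\bfeta_\pexp^{r'}\rangle=0\}$, $r\neq r'$. Restricting to the line $\bfomega=t\bfomega_0$, $t\in\RR$, reduces the identity to $\sum_{r=1}^{N_r} z_r\,e^{-it\lambda_r}=0$ for all $t\in\RR$ with pairwise distinct real numbers $\lambda_r:=\langle\bfomega_0,\bfeta_\pexp^r\rangle$, and the classical linear independence of distinct characters on $\RR$ delivers $z_r=0$ for every $r$.

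The step I expect to be the main obstacle is the very first bridge: passing from the $P_\bfH$-a.e.\ vanishing to vanishing on a non-empty open set. If $P_\bfH$ were allowed to be supported on, say, a finite set of cardinality less than $N_r$, or on a lower-dimensional submanifold, $[M]$ could genuinely degenerate. It is exactly the continuous-density part of Hypothesis~\ref{hypothesis:1} that rules this out; combined with the entire-function extension of $g$, it does all the work. The Fourier-analytic part that follows is standard.
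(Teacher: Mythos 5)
Your proof is correct, and it reaches the conclusion by a genuinely different route from the paper's. The paper also starts from $\bfw^T [M]\,\bfw = E\{\langle\,\bfh^c(\bfH),\bfw\,\rangle^2\}\geq 0$, but it then replaces the $P_\bfH$-integral by the Lebesgue integral $\int_{\RRnu}\langle\,\bfh^c(\bfeta),\bfw\,\rangle^2\,d\bfeta$ (which implicitly uses that $P_\bfH$ has full support, a point made explicit only later in Section~\ref{sec:Section6.3}), expands the square, and evaluates each Gaussian integral $\int h_r^c\,h_{r'}^c\,d\bfeta$ in closed form; the quadratic form then collapses to a positive constant times $\sum_{r,r'} w_r w_{r'}\exp\big(-(2\nu s^2)^{-1}\Vert\,\bfeta_\pexp^r-\bfeta_\pexp^{r'}\Vert^2\big)$, and the final step, namely that this vanishes only for $\bfw=\bfzero$, is exactly the strict positive-definiteness of the Gaussian kernel at distinct nodes, which the paper asserts rather than proves. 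Your argument replaces that asserted fact by an explicit one: real-analyticity of $g=\sum_r z_r h_r^c$ upgrades $P_\bfH$-a.e.\ vanishing to $g\equiv 0$ on $\RRnu$, the Fourier transform strips off the common Gaussian factor, and the linear independence of distinct characters along a generic line delivers $\bfz=\bfzero$. What your version buys is self-containedness (the character argument is elementary, and is in fact the standard way one \emph{proves} strict positive-definiteness of the Gaussian kernel) together with an explicit identification of the two hypotheses the paper leaves tacit: that the points $\bfeta_\pexp^r$ are pairwise distinct, and that $P_\bfH$ is nondegenerate (you use the continuous density of Hypothesis~\ref{hypothesis:1}, where the paper uses $\hbox{supp}\,P_\bfH=\RRnu$; for the KDE prior of Section~\ref{sec:Section6.1} both hold). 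What the paper's computation buys is the explicit Gaussian Gram matrix of the target points, which connects directly to the structure of the constraint vector in Eq.~\eqref{eq:eq75}.
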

%
\begin{proof} (Lemma~\ref{lemma:7}).
For all $\bfw=(w_1,\ldots , w_{N_r}) \in \RR^{N_r}$, we have
$\langle\, E\{\bfh^c(\bfH)\otimes \bfh^c(\bfH)\}\, \bfw \, , \bfw\,\rangle
= E\{ \langle \,\bfh^c(\bfH) \, , \bfw\,\rangle^2\} \geq 0$.
We then have to prove that, for all $\bfw$ in $\RR^{N_r}$ with $\Vert\,\bfw\,\Vert \,\,\not = \! 0$, we have
$E\{ \langle \,\bfh^c(\bfH) \, , \bfw\,\rangle^2\} >  0$ or equivalently, that
$E\{ \langle \,\bfh^c(\bfH) \, , \bfw\,\rangle^2\} =  0 \Leftrightarrow \bfw = 0$.
We have
$E\{ \langle \,\bfh^c(\bfH) \, , \bfw\,\rangle^2\} = \int_{\RRnu} \langle\, \bfh^c(\bfeta) \, , \bfw\,\rangle^2 P_{\bfH}(d\bfeta) = 0$
$\Longleftrightarrow \int_{\RRnu} \langle \,\bfh^c(\bfeta) \, , \bfw\,\rangle^2 \, d\bfeta = 0$
$\Longleftrightarrow\sum_{r=1}^{N_r} \sum_{r'=1}^{N_r} w_r w_{r'}\int_{\RRnu}  \exp\big(- (\nu s^2)^{-1}\gamma_{rr'}(\bfeta)\big) \, d\bfeta = 0$
in which for all $\bfeta$ in $\RRnu$,
$\gamma_{rr'}(\bfeta) = \Vert\,\bfeta - \bfeta_\pexp^r\Vert^2 + \Vert\,\bfeta - \bfeta_\pexp^{r'}\Vert^2 \, =$
$2 \, \Vert\,\bfeta - \frac{1}{2}(\bfeta_\pexp^r + \bfeta_\pexp^{r'})\Vert^2
+ \frac{1}{2} \, \Vert \,\bfeta_\pexp^r - \bfeta_\pexp^{r'}\Vert^2 $.
Consequently,
$E\{ \langle\, \bfh^c(\bfH) \, , \bfw\,\rangle^2\} =  0$
$\Longleftrightarrow\sum_{r=1}^{N_r} \sum_{r'=1}^{N_r} w_r w_{r'}
\bigg ( \exp \big (- (2\nu s^2)^{-1}\Vert\, \bfeta_\pexp^r - \bfeta_\pexp^{r'}\Vert^2 \big ) \bigg )\,$
$\int_{\RRnu}  \exp\big (- 2 (\nu s^2)^{-1} \Vert\,\bfeta - \frac{1}{2}(\bfeta_\pexp^r + \bfeta_\pexp^{r'})\Vert^2\big ) \, d\bfeta = 0$
$\Longleftrightarrow  ( (s/2) \sqrt{2\pi \nu})^{\nu} \,\sum_{r=1}^{N_r} \sum_{r'=1}^{N_r} w_r w_{r'}
\exp \big (- (2\nu s^2)^{-1} \Vert \,\bfeta_\pexp^r - \bfeta_\pexp^{r'}\Vert^2 \big ) = 0$
$\Longleftrightarrow \bfw=(w_1,\ldots , w_{N_r}) = \bfzero_{N_r}$.
\end{proof}
\subsection{Illustration of the numerical behavior of the functional constraint}
\label{sec:Section5.3}
In order to illustrate the numerical behavior of the finite representation of the weak formulation of the functional constraint, we consider the following simple numerical case. We assume that $\bfH$ and $\bfH_\pexp = [V]^T\widetilde\bfQ_\pexp$, which are statistically independent, are Gaussian random vectors. Random vector $\bfH$ is centered and with an identity  covariance matrix $[I_\nu]$ (see Eq.~\eqref{eq:eq4}).
The mean value of random vector $\bfH_\pexp$ is written as $\underline\bfeta_\pexp = m_\pexp \, \bfa \in \RR^\nu$ in which $m_\pexp$ is given in $\RR$ and where $\bfa\in\RR^\nu$ is any given realization of a uniform random vector on $[0\, , 1]^\nu$ with independent components. The covariance matrix of $\bfH_\pexp$ is written as $\sigma_\pexp\, [I_\nu]$ in which $\sigma_\pexp$ is given in $]0\, , +\infty[$. For analyzing the numerical behavior of Eqs.~\eqref{eq:eq72} to \eqref{eq:eq75}, we introduce the function $(m_\pexp,\sigma_\pexp)\mapsto J(m_\pexp,\sigma_\pexp) = \Vert \, E\{\bfh^c(\bfH)\} - \bfb^c \, \Vert$ for
$m_\pexp \in [-3\, , 3]$ and $\sigma_   \pexp\in [0.1\, , 2.3]$.
For $\nu=100$, the values of function $J$ are estimated using the realizations $\{\bfeta_d^j,j=1,\ldots , N_d\}$ of $\bfH$ and
the realizations $\{\bfeta_\pexp^r,r=1,\ldots , N_r\}$ of $\bfH_\pexp$ with $N_d=1000$  and $N_r=100$. Fig.~\ref{fig:figure1} displays the graph of function $(m_\pexp,\sigma_\pexp)\mapsto J(m_\pexp,\sigma_\pexp)$. It can be seen that $J$ is effectively minimum in the region centered at point $(m_\pexp=0,\sigma_\pexp=1)$ (what was expected because $\bfH$ and $\bfH_\pexp$ are isonomic when $m_\pexp=0$ and $\sigma_\pexp=1$) while $J$ is larger when $\bfH_\pexp$ is not isonomic to $\bfH$, that is to say for $m_\pexp \not = 0$ and/or $\sigma_\pexp\not = 1$.
\begin{figure}[h!]
\centering
\includegraphics[width=5.5cm]{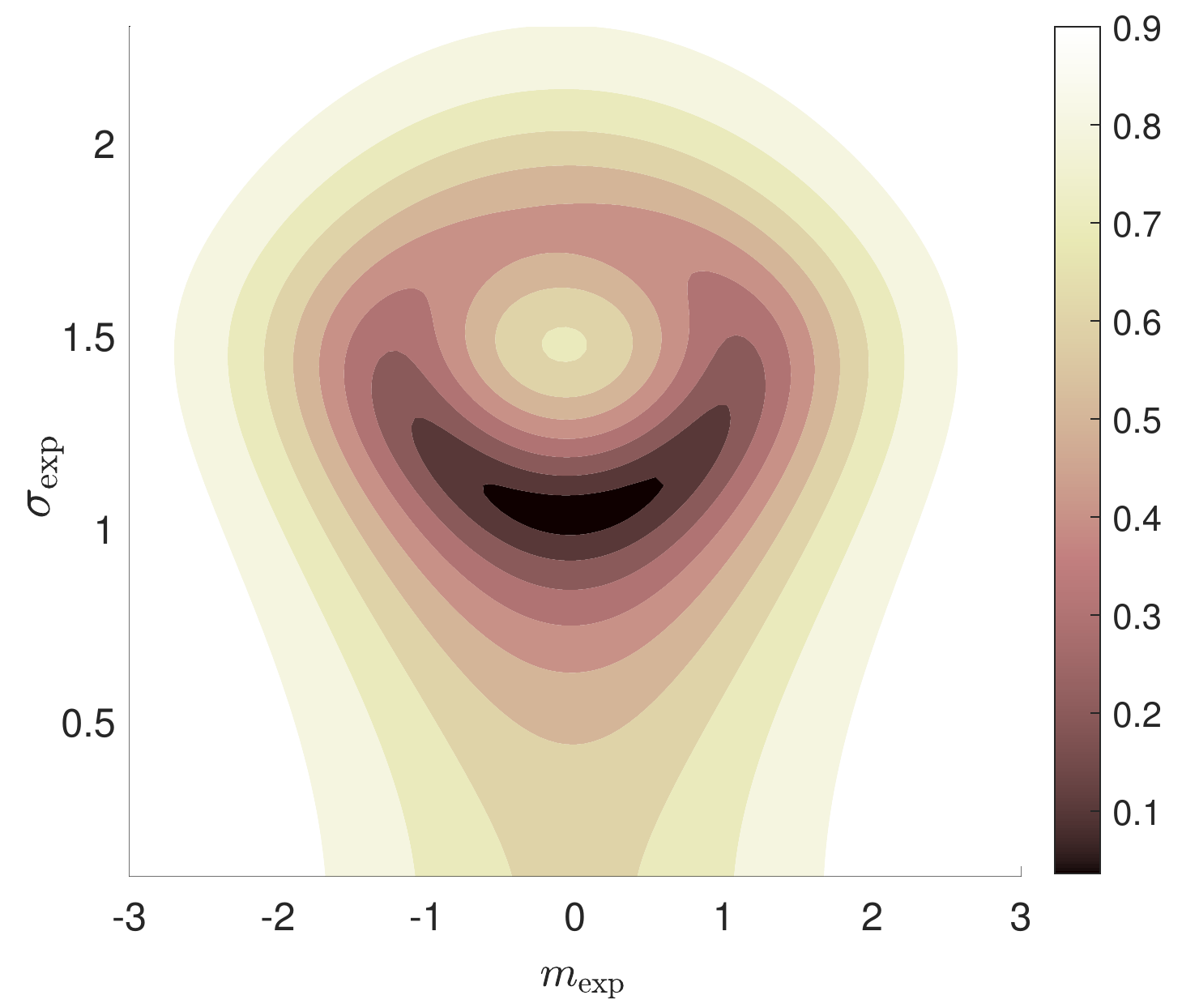}
\caption{Graph of function $(m_\pexp,\sigma_\pexp)\mapsto J(m_\pexp,\sigma_\pexp) = \Vert E\{\bfh^c(\bfH)\} - \bfb^c\Vert$ that illustrates the numerical behavior of the finite representation of the weak formulation of the functional constraint used to identify the posterior probability measure of $\bfH$ from the target set $D_\pexp$.}
\label{fig:figure1}
\end{figure}
\section{Kullback-Leibler minimum principle for estimating the posterior model}
\label{sec:Section6}
In this section we reuse part of the developments that we presented in paper \cite{Soize2022bb}. We do not want to limit ourselves to referring the reader to this reference, because the hypotheses are not the same, the Lemmas and Theorems must be reformulated, and their proofs must be adapted and modified. In addition, the presentation chosen makes it easier to read and understand, thus avoiding going back and forth with this reference.
\subsection{Prior probability measure of $\bfH$}
\label{sec:Section6.1}
Let $P_\bfH(d\bfeta) = p_\bfH(\bfeta)\, d\bfeta$ be the prior probability measure on $\RR^\nu$ of $\bfH$, whose probability density function
$\bfeta\mapsto p_\bfH(\bfeta): \RR^\nu\rightarrow \RR^+$ is estimated by using the Gaussian kernel-density estimation (KDE) with the training set $\curD_d= \{\bfeta_d^1,\ldots , \bfeta_d^{N_d}\}$, involving the modification proposed in \cite{Soize2015} of the classical formulation \cite{Bowman1997} for which $s$ is the Silverman bandwidth,
\begin{equation}\label{eq:eq100}
 p_\bfH(\bfeta) = c_\nu\,\zeta(\bfeta)\quad , \quad  \forall\bfeta\in\RR^\nu\quad  , \quad  c_\nu = (\sqrt{2\pi}\, \hat s)^{-\nu} \, ,
\end{equation}
in which $\hat s = s_\SB\, \left( s^2_\SB + (N_d-1)/N_d \right)^{-1/2}$ with $s_\SB = \left( 4/( N_d (2+\nu) ) \right)^{1/(\nu+4)}$, and where $\bfeta\mapsto\zeta(\bfeta): \RR^\nu\rightarrow \RR^+$ is written as
\begin{equation}\label{eq:eq101}
 \zeta(\bfeta) = \frac{1}{N_d}\sum_{j=1}^{N_d} \exp\bigg ( -\frac{1}{2\hat s^2}\,\Vert\,\frac{\hat s }{s_\SB}
  \, \bfeta^j_d - \bfeta\,\Vert^2 \bigg ) \, .
\end{equation}
We define the potential function $\bfeta\mapsto\phi(\bfeta): \RR^\nu\rightarrow \RR$, related to $p_\bfH$, which will be used in Lemma~\ref{lemma:8} and such that
\begin{equation}\label{eq:eq102}
\zeta(\bfeta) = \exp\{-\phi(\bfeta)\}\, .
\end{equation}
With such a modification and using Eq.~\eqref{eq:eq4}, the normalization of $\bfH$ is preserved for any value of $N_d$, that is to say,
\begin{equation}\label{eq:eq103}
E\{\bfH\} = \int_{\RR^\nu} \bfeta\, p_\bfH(\bfeta)\, d\bfeta = \frac{1}{2\hat s^2}\,\underline{\widehat\bfeta} = \bfzero_\nu\, ,
\end{equation}
\begin{equation}\label{eq:eq104}
E\{\bfH\otimes\bfH\} = \int_{\RR^\nu} \bfeta\otimes\bfeta\, p_\bfH(\bfeta)\, d\bfeta = \hat s^2 \,[I_\nu] +
     \frac{\hat s^2}{s^2} \frac{(N_d-1)}{N_d}\,[\widehat C_\bfH] = [I_\nu]\, .
\end{equation}
Theorem~3.1 in \cite{Soize2020c} proves that, for all $\bfeta$ fixed in $\RR^\nu$, Eq.~\eqref{eq:eq100} with Eq.~\eqref{eq:eq101} is a consistent estimation of the sequence $\{p_\bfH\}_{N_d}$ for $N_d\rightarrow +\infty$.
\subsection{Posterior estimate  using the Kullback-Leibler divergence minimum principle}
\label{sec:Section6.2}
The posterior probability density function $\bfeta\mapsto p_{\bfH}^\ppost(\bfeta)$ on $\RR^\nu$ of the $\RR^\nu$-valued random variable $\bfH_\ppost=(H_{\ppost,1},\ldots ,$ $H_{\ppost,\nu})$ is estimated by using the Kullback-Leibler divergence minimum principle  \cite{Kullback1951,Kapur1992,Cover2006,Soize2020a,Soize2022bb}. This estimation of $p_{\bfH}^\ppost$ is performed by using the prior pdf $\bfeta\mapsto p_{\bfH}(\bfeta)$ on $\RR^\nu$ in which $p_\bfH$ is defined by Eqs.~\eqref{eq:eq100} and
\eqref{eq:eq101}, and by using the constraint defined by  Eq.~\eqref{eq:eq72}.
The pdf $p_{\bfH}^\ppost$ on $\RR^\nu$, which satisfies the constraint defined by Eq.~\eqref{eq:eq72} and which is closest to $p_\bfH$ defined by Eq.~\eqref{eq:eq100}, is thus the solution of the following optimization problem,
\begin{equation}\label{eq:eq105}
p_{\bfH}^\ppost = \arg\,\min_{p\in\curC_{\pad,p}} \int_{\RR^\nu} p(\bfeta)\, \log\left (\frac{p(\bfeta)}{p_\bfH(\bfeta)}\right) \, d\bfeta \, ,
\end{equation}
in which the admissible set $\curC_{\ad,p}$ is defined by
\begin{equation}\label{eq:eq106}
\curC_{\ad,p} = \left\{\bfeta\mapsto p(\bfeta):\RR^\nu\rightarrow \RR^+ \, ,\int_{\RR^\nu} p(\bfeta)\, d\bfeta =1\, ,
\int_{\RR^\nu} \!\bfh^c(\bfeta)\, p(\bfeta)\, d\bfeta = \bfb^c \right\} \, .
\end{equation}
%
%
%
\subsection{Methodology for solving the optimization problem}
\label{sec:Section6.3}
The constraints defined in  admissible set $\curC_{\ad,p}$ are taken into account by introducing two Lagrange multipliers, $\lambda_0-1$ with $\lambda_0\in\RR^+$ associated with the normalization condition
and $\bflambda\in\curC_{\ad,\bflambda}\subset\RRNr$ associated with the functional constraint. The admissible set
$\curC_{\ad,\bflambda}$ of $\bflambda$ is, \textit{a priori}, a subset of $\RRNr$, which will be defined in Section~\ref{sec:Section6.4}
(in fact, we will see that $\curC_{\ad,\bflambda}=\RR^{N_r}$).
The Lagrange multiplier  $\lambda_0$ is eliminated as a function of $\bflambda$. In Eq.~\eqref{eq:eq105}, the posterior pdf $p_{\bfH}^\ppost$  is constructed as the limit of a sequence $\{p_{\bfH_\bflambda}\}_\bflambda$ of probability density functions of a sequence $\{\bfH_\bflambda\}_\bflambda$ of  $\RR^\nu$-valued random variables $\bfH_\bflambda = (H_{\bflambda, 1}, \ldots , H_{\bflambda, \nu})$ that depend on $\bflambda$.
For $\bflambda$ fixed in $\curC_{\ad,\bflambda}$, a MCMC algorithm is used for generating the constrained learned set $\curD_{\bfH_\bflambda} = \{\bfeta^1_\bflambda,\ldots \bfeta^N_\bflambda\}$ constituted of $N\gg N_d$ independent realizations $\{\bfeta_\bflambda^\ell, \ell=1,\ldots , N\}$ of $\bfH_\bflambda$.
When the convergence is reached with respect to $\bflambda$, the constrained learned set $\curD_{\bfH_\ppost} = \{\bfeta_\ppost^1,\ldots, \bfeta_\ppost^N\}$ is generated. This set is made up of $N$ independent realizations $\{\bfeta_\ppost^\ell, \ell=1,\ldots , N\}$ of $\bfH_\ppost$ whose probability measure is $p_{\bfH}^\ppost(\bfeta)\, d\bfeta$. The MCMC generator will be a nonlinear It\^o stochastic differential equation (ISDE) associated with the nonlinear stochastic dissipative Hamiltonian dynamical system proposed in \cite{Soize2008b} and based on \cite{Soize1994}. This MCMC generator allows for removing the transient part to rapidly reach the stationary response associated with the invariant measure for which measure $p_{\bfH}^\ppost(\bfeta)\, d\bfeta$ is the marginal measure. The ISDE is solved by using the St\"ormer-Verlet algorithm, which yields an efficient and accurate MCMC algorithm. This algorithm can then easily be parallelized for strongly decreasing the elapsed time on a multicore computer (See Algorithm~\ref{algorithm:1} in Section~\ref{sec:Section6.7}-(ii)). Note that this MCMC generator can be considered to belong to the class of Hamiltonian Monte Carlo methods \cite{Neal2011,Girolami2011} but is not similar due to the dissipative term, and is a MCMC algorithm \cite{Kaipio2005,Robert2005,Spall2005}.

Let us assumed that the optimization problem defined by Eq.~\eqref{eq:eq105} has one solution $p_{\bfH}^\ppost$ and that $p=p_{\bfH}^\ppost$ is a regular point of the continuously differentiable functional $p\mapsto \int_{\RR^\nu} \bfh^c(\bfeta)\, p(\bfeta)\, d\bfeta - \bfb^c$. For $\lambda_0\in\RR^+$ and $\bflambda\in\curC_{\ad,\bflambda}$, we define the Lagrangian,
\begin{equation}\nonumber
\hbox{\rm{Lag}}(p,\lambda_0,\bflambda) = \!\int_{\RR^\nu} p(\bfeta)\, \log\left (\frac{p(\bfeta)}{p_\bfH(\bfeta)}\right)  d\bfeta
                                 + (\lambda_0 - 1)\,(\! \int_{\RR^\nu}\! p(\bfeta)\, d\bfeta -1)
                                 + \langle \bflambda \, ,\! \int_{\RR^\nu} \!\bfh^c(\bfeta)\, p(\bfeta)\, d\bfeta - \bfb^c \rangle \, .
\end{equation}
We define the sequence $\{p_{\bfH_\bflambda}\}_\bflambda$ of pdf $\bfeta\mapsto p_{\bfH_\bflambda}(\bfeta\,;\bflambda)$ on $\RR^\nu$, indexed by $\bflambda$, such that $p_{\bfH_\bflambda}(.\, ; \bflambda)$ is an extremum of functional $p\mapsto \hbox{\rm{Lag}}(p,\lambda_0,\bflambda)$. Using the calculus of variations yields
\begin{equation}\label{eq:eq108}
 p_{\bfH_\bflambda}(\bfeta\,;\bflambda) = c_0(\bflambda)\, \zeta(\bfeta)\, \exp\{-\langle\bflambda\, ,\bfh^c(\bfeta) \rangle\} \quad , \quad \forall\, \bfeta\in \RR^\nu\, ,
\end{equation}
in which $c_0(\bflambda)$ is the constant of normalization that depends on $\bflambda$ (note that $\lambda_0$ is eliminated and we have $c_0(\bflambda) = c_\nu\, \exp\{-\lambda_0\}$).
Since Lemma~\ref{lemma:7} holds for any probability measure $P_\bfH$ on $\RR^\nu$ with support $\RR^\nu$, we can conclude that the $N_r$ constraints defined by the components of Eq.~\eqref{eq:eq72} are algebraically independent.
Consequently, there exists (see \cite{Luenberger2009})
$\bflambda^\psol$ in $\curC_{ad,\bflambda}$ such that the functional $(p,\lambda_0,\bflambda)\mapsto \hbox{\rm{Lag}}(p,\lambda_0,\bflambda)$
is stationary at point $p=p_{\bfH}^\ppost$ for $\bflambda=\bflambda^\psol$ and $\lambda_0 = -\log(c_0(\bflambda^\psol)/c_\nu)$.
Consequently, $p_{\bfH}^\ppost = p_{\bfH_{\!\bflambda^{\,\ppsol}}}(.\, ;\bflambda^\psol)$ and Eq.~\eqref{eq:eq108} yield
\begin{equation}\label{eq:eq110}
 p_{\bfH}^\ppost(\bfeta) = c_0(\bflambda^\psol)\, \zeta(\bfeta)\, \exp\{-\langle\bflambda^\psol ,\bfh^c(\bfeta)\rangle\} \quad , \quad \forall\, \bfeta\in \RR^\nu\, .
\end{equation}
Therefore, $p_{\bfH}^\ppost$ is the unique solution of the optimization problem defined by Eq.~\eqref{eq:eq105}, in which $\bflambda^\psol$ will be the unique solution of a convex optimization problem that will be defined by Theorem~\ref{theorem:3} in Section~\ref{sec:Section6.4}) and which will be the solution
of the following nonlinear algebraic equation in $\bflambda$,
$\int_{\RR^\nu} \bfh^c(\bfeta)\, p_{\bfH_\bflambda}(\bfeta\,;\bflambda) \, d\bfeta = \bfb^c$.
\subsection{Analysis of the optimization problem}
\label{sec:Section6.4}
In this section, we study the admissible set of the Lagrange multiplier, we analyze the integrability properties of the probability density function $p_{\bfH_\bflambda}$ of $\bfH_\bflambda$, and we give an explicit construction of $p_{\bfH_\bflambda}$.
%
\begin{lemma}[Admissible set $\curC_{\ad,\bflambda}$ of Lagrange's multiplier and integrability properties] \label{lemma:8}
Let $N_r$ be fixed. Let $\bfh^c(\bfeta)$ $ = (h_1^c(\bfeta),\ldots ,$ $h_{N_r}^c(\bfeta))$ be the function on $\RR^\nu$ with values in $\RR^{N_r}$, defined by Eq.~\eqref{eq:eq73}, and let $p_\bfH$ be the prior probability density function on $\RR^\nu$ of $\bfH$, defined by Eq.~\eqref{eq:eq100}.

\noindent (a) The admissible set $\curC_{\ad,\bflambda}$ of the Lagrange multiplier $\bflambda\,$,
which is defined by
\begin{equation}\label{eq:eq111}
\curC_{\ad,\bflambda} = \bigg \{
                         \bflambda\in\RRNr \,\, \vert \,\, 0 < E \{\, \exp \big(-\langle\,\bflambda\, ,\bfh^c(\bfH)\,\rangle \,\big) \,\}
                         \,\, < +\infty \,\bigg \},
\end{equation}
is such that $\curC_{\ad,\bflambda} = \RRNr$.

\noindent (b) For all $\bflambda$ in $\curC_{\ad,\bflambda}$, let $\bfeta\mapsto\curV_{\!\bflambda}(\bfeta)$ be the $\RR$-valued function on $\RR^\nu$ such that
\begin{equation}\label{eq:eq112}
\curV_{\!\bflambda}(\bfeta) = \phi(\bfeta) + \langle\bflambda\, , \bfh^c(\bfeta)\rangle \, ,
\end{equation}
in which $\phi(\bfeta) = -\log\bfzeta(\bfeta)$ (see Eq.~\eqref{eq:eq102}). We then have,
\begin{equation}\label{eq:eq113}
0 < \int_{\RR^\nu} \exp\{-\curV_{\!\bflambda}(\bfeta)\}\, d\bfeta \,\, < \, +\infty \, .
\end{equation}
\noindent (c) The pdf $\bfeta\mapsto p_{\bfH_\bflambda}(\bfeta\,;\bflambda)$, defined by Eq.~\eqref{eq:eq108}, which can be written as
\begin{equation}\label{eq:eq114}
p_{\bfH_\bflambda}(\bfeta\,;\bflambda) = c_0(\bflambda)\, \exp\{-\curV_{\!\bflambda}(\bfeta)\} \quad , \quad \forall\bfeta\in\RR^\nu \, ,
\end{equation}
is such that the constant $c_0(\bflambda)$ of normalization verifies
\begin{equation}\label{eq:eq115}
0 < c_0(\bflambda) < +\infty \quad , \quad \forall\bflambda \in \curC_{\ad,\bflambda}\, .
\end{equation}
\noindent (d) For all $\bflambda$ in $\curC_{\ad,\bflambda}$, we have
$\curV_{\!\bflambda}(\bfeta) \rightarrow +\infty$ if $\Vert\,\bfeta\,\Vert \rightarrow +\infty$, and
\begin{equation}\label{eq:eq116}
\int_{\RR^\nu} \Vert\,\bfh^c(\bfeta)\,\Vert^2\, \exp\{-\curV_{\!\bflambda}(\bfeta)\}\, d\bfeta \,\, < \, +\infty      \quad , \quad
\int_{\RR^\nu} \Vert\,[\nabla_{\!\bfeta}\bfh^c(\bfeta) ]\,\Vert_F\, \exp\{-\curV_{\!\bflambda}(\bfeta)\}\, d\bfeta \,\, < \, +\infty\, .
\end{equation}
\end{lemma}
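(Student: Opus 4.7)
The plan is to exploit the single decisive structural fact that each component $h_r^c$ of $\bfh^c$ is a Gaussian kernel bounded between $0$ and $1$, so $\bfh^c$ is uniformly bounded on $\RRnu$. Nearly everything follows from this observation combined with the explicit form of $p_\bfH$ as a convex combination of $N_d$ Gaussians.

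For part (a), I will use $0 < h_r^c(\bfeta) \leq 1$ together with the Cauchy--Schwarz inequality to get the deterministic bound $\vert\langle\bflambda,\bfh^c(\bfeta)\rangle\vert \leq \Vert\bflambda\Vert\sqrt{N_r}$ for every $\bfeta\in\RRnu$. Hence
\begin{equation*}
e^{-\Vert\bflambda\Vert\sqrt{N_r}} \;\leq\; E\{\exp(-\langle\bflambda,\bfh^c(\bfH)\rangle)\} \;\leq\; e^{\Vert\bflambda\Vert\sqrt{N_r}} \, ,
\end{equation*}
which gives $\curC_{\ad,\bflambda}=\RRNr$ directly. For (b) and (c), I would use Eqs.~\eqref{eq:eq100} and \eqref{eq:eq102} to rewrite $\exp(-\curV_\bflambda(\bfeta)) = \zeta(\bfeta)\exp(-\langle\bflambda,\bfh^c(\bfeta)\rangle) = c_\nu^{-1}p_\bfH(\bfeta)\exp(-\langle\bflambda,\bfh^c(\bfeta)\rangle)$, so that
\begin{equation*}
\int_{\RR^\nu}\!\exp(-\curV_\bflambda(\bfeta))\,d\bfeta \;=\; c_\nu^{-1}\,E\{\exp(-\langle\bflambda,\bfh^c(\bfH)\rangle)\} \, .
\end{equation*}
Part (b) is then immediate from (a), and (c) follows because $c_0(\bflambda)$ is the reciprocal of this integral.

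For (d), the growth of $\curV_\bflambda$ at infinity is controlled by $\phi$. Since $\langle\bflambda,\bfh^c(\bfeta)\rangle$ is uniformly bounded by $\Vert\bflambda\Vert\sqrt{N_r}$, it suffices to show $\phi(\bfeta)\to+\infty$. Expanding the squared norm inside each exponential of $\zeta$ and bounding the cross term via Young's inequality yields, for every $j$, an estimate of the form $\exp(-(2\hat s^2)^{-1}\Vert(\hat s/s_\SB)\bfeta_d^j-\bfeta\Vert^2) \leq c_j \,\exp(-\Vert\bfeta\Vert^2/(4\hat s^2))$ for $\Vert\bfeta\Vert$ large enough, whence $\zeta(\bfeta) \leq C\exp(-\Vert\bfeta\Vert^2/(4\hat s^2))$ and thus $\phi(\bfeta) \geq \Vert\bfeta\Vert^2/(4\hat s^2) - \log C$. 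Combined with the bounded $\bflambda$-term this proves coercivity of $\curV_\bflambda$. For the two integrals in \eqref{eq:eq116}, I note that $\Vert\bfh^c(\bfeta)\Vert^2 \leq N_r$ is bounded, so the first integral is dominated by $N_r$ times the finite integral from (b). For the second, a direct computation gives $\nabla_{\!\bfeta}h_r^c(\bfeta) = -(2/(\nu s^2))(\bfeta-\bfeta_\pexp^r)\,h_r^c(\bfeta)$, whose norm is bounded by $(2/(\nu s^2))\Vert\bfeta-\bfeta_\pexp^r\Vert$, hence $\Vert[\nabla_{\!\bfeta}\bfh^c(\bfeta)]\Vert_F$ grows at most linearly in $\Vert\bfeta\Vert$; the Gaussian-type decay of $\exp(-\curV_\bflambda(\bfeta))$ established above absorbs this polynomial factor, yielding finiteness of the second integral.

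The main obstacle is really only the coercivity estimate $\phi(\bfeta)\gtrsim \Vert\bfeta\Vert^2$, because the KDE expression \eqref{eq:eq101} is a log-sum-exp and one must argue that the logarithm of a finite sum of Gaussians still grows quadratically. All other steps are simple consequences of the uniform bound $0<h_r^c\leq 1$ and the bounded gradient computation above; there are no compactness or regularity issues to overcome, since the integrand is smooth and its asymptotic Gaussian envelope makes every integral dominated by a tractable Gaussian moment.
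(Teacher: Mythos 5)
Your proof is correct, and for parts (a)--(c) it follows exactly the paper's route: the uniform bound $0<h_r^c\leq 1$ gives $\curC_{\ad,\bflambda}=\RRNr$, and the identity $\exp(-\curV_{\!\bflambda}(\bfeta))=c_\nu^{-1}\,p_\bfH(\bfeta)\exp(-\langle\bflambda,\bfh^c(\bfeta)\rangle)$ reduces Eq.~\eqref{eq:eq113} and Eq.~\eqref{eq:eq115} to part (a). Where you genuinely diverge is part (d). The paper argues that $\curV_{\!\bflambda}(\bfeta)\rightarrow+\infty$ by invoking continuity of $\curV_{\!\bflambda}$ together with integrability of $\exp(-\curV_{\!\bflambda})$ at infinity --- a deduction that is not valid for a general continuous integrable function and only works here because of the specific Gaussian-mixture structure, which the paper leaves implicit; it then asserts Eq.~\eqref{eq:eq116} "can easily be proven." You instead extract the explicit quadratic lower bound $\phi(\bfeta)\geq \Vert\bfeta\Vert^2/(4\hat s^2)-\log C$ from the log-sum-exp form of $\zeta$ via Young's inequality, note that the $\bflambda$-term is uniformly bounded by $\Vert\bflambda\Vert\sqrt{N_r}$, and then dominate both integrals in Eq.~\eqref{eq:eq116} by Gaussian moments (using $\Vert\bfh^c\Vert^2\leq N_r$ for the first and the at-most-linear growth of $\Vert[\nabla_{\!\bfeta}\bfh^c]\Vert_F$ for the second). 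Your version is more self-contained and repairs the logical gap in the paper's coercivity step; the cost is only the extra bookkeeping of the envelope constant $C$, which is harmless since $N_d$ is finite.
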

%
\begin{proof} (Lemma~\ref{lemma:8}).

\noindent (a) For all $\bfeta\in\RR^\nu$ and for all $r\in\{1,\ldots ,N_r\}$ , Eq.~\eqref{eq:eq73} shows that
$0 < h_r^c(\bfeta) \leq 1$. It can then be deduced that, for all $\bflambda\in\RRNr$, we have
$0 < E\{\, \exp \big ( -\langle \, \bflambda\, ,\bfh^c(\bfH)\, \rangle \,\big ) \,\}\,\, < +\infty$, which proves that $\curC_{\ad,\bflambda} = \RRNr$.

\noindent (b) Using Eqs.~\eqref{eq:eq100}, \eqref{eq:eq102}, and \eqref{eq:eq112}, yields
$\int_{\RR^\nu} \exp\{-\curV_{\!\bflambda}(\bfeta)\}\, d\bfeta  =c_\nu^{-1} \int_{\RR^\nu} \exp\{-\langle\bflambda\, , \bfh^c(\bfeta)\rangle\}\,
p_\bfH(\bfeta)\, d\bfeta  = c_\nu^{-1} E\{\,\exp\{-\langle\bflambda\, ,$ $\bfh^c(\bfH)\rangle\} \,\}$, which is positive and finite due to Eq.~\eqref{eq:eq111} and to $0 < c_\nu < +\infty$. We have thus proven Eq.~\eqref{eq:eq113}.

\noindent (c) Using Eqs.~\eqref{eq:eq113} and \eqref{eq:eq114}, and since we need to have $\int_{\RR^\nu} p_{\bfH_\bflambda}(\bfeta)\, d\bfeta = 1$, we deduce Eq.~\eqref{eq:eq115}.

\noindent (d)  As $\bfh^c$ is continuous on $\RR^\nu$, (see Eq.~\eqref{eq:eq73}), $\forall\bflambda\in\curC_{\ad,\bflambda}$, $\bfeta\mapsto \exp\{-\curV_{\!\bflambda}(\bfeta)\}$ is continuous on $\RR^\nu$ and then is locally integrable on $\RR^\nu$. Eq.~\eqref{eq:eq113} implies the integrability at infinity of $\bfeta\mapsto \exp\{-\curV_{\!\bflambda}(\bfeta)\}$. Since $\bfeta\mapsto \curV_{\!\bflambda}(\bfeta)$ is continuous on $\RR^\nu$, it can  be deduced that $\curV_{\!\bflambda}(\bfeta) \rightarrow +\infty$ if $\Vert\,\bfeta\,\Vert \rightarrow +\infty$.
Using Eq.~\eqref{eq:eq73}, Eq.~\eqref{eq:eq116} can easily be proven.
\end{proof}
%
%
\begin{theorem}[Construction of the probability measure of $\bfH_\bflambda$] \label{theorem:3}
For all $\bflambda$ in $\curC_{\ad,\bflambda}$, let
\begin{equation} \label{eq:eq117}
p_{\bfH_\bflambda}(\bfeta\,;\bflambda)= c_0(\bflambda)\,\zeta(\bflambda)
                         \exp \big (- \langle \, \bflambda\, , \bfh^c(\bfeta)\, \rangle\,\big )
\end{equation}
be the pdf of $\bfH_\bflambda$ (see Eq.~\eqref{eq:eq108}) with
$c_0(\bflambda)$ satisfying Eq.~\eqref{eq:eq115}).

\noindent (a) The $\RRNr$-valued random variable $\bfh^c(\bfH_\bflambda)$ is a second-order random variable,
\begin{equation}\label{eq:eq118}
E\{\Vert\,\bfh^c(\bfH_\bflambda)\,\Vert^2\} < +\infty \, .
\end{equation}
\noindent (b) Let $\bflambda\mapsto \Gamma(\bflambda)$ be the real-valued function defined on $\curC_{\ad,\bflambda}$ such that
\begin{equation}\label{eq:eq119}
\Gamma(\bflambda) = \langle\,\bflambda\, , \bfb^c\,\rangle -\log c_0(\bflambda)\, ,
\end{equation}
in which $\bfb^c$ is given in $\RRNr$. For all $\bflambda$ in $\curC_{\ad,\bflambda}$, we have
\begin{equation}\label{eq:eq120}
\nabla_{\!\bflambda}\Gamma(\bflambda) = \bfb^c - E\{\bfh^c(\bfH_\bflambda) \} \in \RRNr\, ,
\end{equation}
\begin{equation}\label{eq:eq121}
[\Gamma{\,''}(\bflambda)] = [\cov\{\bfh^c(\bfH_\bflambda)\}] \in \MM_{N_r}^+ \, ,
\end{equation}
where the positive-definite covariance matrix $[\Gamma{\,''}(\bflambda)]$ of $\bfh^c(\bfH_\bflambda)$ is such that
$[\Gamma{\,''}(\bflambda)]_{kk'} = \partial^2\Gamma(\bflambda)/\partial\lambda_k\partial\lambda_{k'}$.

\noindent (c) $\Gamma$ is a strictly convex function on $\curC_{\ad,\bflambda}$. There is a unique solution $\bflambda^\psol$ in $\curC_{\ad,\bflambda}$ of the convex optimization problem,
\begin{equation}\label{eq:eq122}
\bflambda^\psol = \arg\, \min_{\bflambda\in\curC_{\ad,\bflambda}} \Gamma(\bflambda) \, ,
\end{equation}
which is the unique solution in $\bflambda$ of the following equation,
\begin{equation}\label{eq:eq123}
\nabla_{\!\bflambda} \Gamma(\bflambda) = \bfzero_{N_r} \, .
\end{equation}
The pdf $p_{\bfH}^\ppost$ of $\bfH^\ppost$, which satisfies the constraint $E\{\bfh^c(\bfH^\ppost)\} = \bfb^c$,  is written (see Eq.~\eqref{eq:eq117}) as
\begin{equation}\label{eq:eq124}
p_{\bfH}^\ppost(\bfeta) = p_{\bfH_{\!\bflambda^\ppsol}}(\bfeta\,;\bflambda^\psol) \quad, \quad \forall\bfeta\in\RR^\nu\, .
\end{equation}
\end{theorem}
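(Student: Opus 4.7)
The plan is to verify parts (a), (b), (c) in sequence, leaning on the integrability estimates of Lemma~\ref{lemma:8} and on the linear-independence argument that drives Lemma~\ref{lemma:7}. Part (a) is immediate: from Eq.~\eqref{eq:eq73} each component $h_r^c(\bfeta)=\exp(-(\nu s^2)^{-1}\Vert\bfeta-[V]^T\tilde\bfq_\pexp^r\Vert^2)$ lies in $(0,1]$ for every $\bfeta\in\RR^\nu$, so $\Vert\bfh^c(\bfeta)\Vert^2\leq N_r$ uniformly and hence $E\{\Vert\bfh^c(\bfHbflambda)\Vert^2\}\leq N_r<+\infty$.

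For part (b), I would use the normalization of $p_{\bfH_\bflambda}(\cdot\,;\bflambda)$ in Eq.~\eqref{eq:eq117} to rewrite
\begin{equation*}
\Gamma(\bflambda)=\langle\bflambda,\bfb^c\rangle+\log\!\int_{\RR^\nu}\zeta(\bfeta)\,\exp(-\langle\bflambda,\bfh^c(\bfeta)\rangle)\,d\bfeta.
\end{equation*}
The bounds of Lemma~\ref{lemma:8}(d) (finiteness of $\int\Vert\bfh^c\Vert^2 e^{-\curV_{\!\bflambda}}\,d\bfeta$ and $\int\Vert[\nabla_{\!\bfeta}\bfh^c]\Vert_F\, e^{-\curV_{\!\bflambda}}\,d\bfeta$) together with the pointwise boundedness of $\bfh^c$ justify differentiating twice under the integral sign. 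A direct computation yields $\partial_{\lambda_k}\Gamma(\bflambda)=b_k^c-E\{h_k^c(\bfHbflambda)\}$, which is Eq.~\eqref{eq:eq120}, and the quotient rule applied a second time produces $\partial_{\lambda_k}\partial_{\lambda_{k'}}\Gamma(\bflambda)=E\{h_k^c(\bfHbflambda)h_{k'}^c(\bfHbflambda)\}-E\{h_k^c(\bfHbflambda)\}E\{h_{k'}^c(\bfHbflambda)\}$, i.e.\ the $(k,k')$ entry of $[\cov\{\bfh^c(\bfHbflambda)\}]$.

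The main obstacle is upgrading this covariance matrix from positive semidefinite to strictly positive definite; this is \emph{not} what Lemma~\ref{lemma:7} directly provides, since that lemma treats the Gram matrix $E\{\bfh^c\otimes\bfh^c\}$ rather than the centered one. For $\bfw\in\RRNr$ I would write $\langle[\Gamma{\,''}(\bflambda)]\bfw,\bfw\rangle = \hbox{Var}\{\langle\bfw,\bfh^c(\bfHbflambda)\rangle\}$, so vanishing forces $\bfeta\mapsto\langle\bfw,\bfh^c(\bfeta)\rangle$ to be $p_{\bfHbflambda}$-almost-everywhere constant. Because $c_0(\bflambda)>0$ and $\zeta>0$ give $p_{\bfHbflambda}(\bfeta)>0$ on all of $\RR^\nu$, the constancy is pointwise; letting $\Vert\bfeta\Vert\to+\infty$ and using $h_r^c(\bfeta)\to 0$ shows the constant equals $0$. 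Thus $\sum_r w_r h_r^c(\bfeta)=0$ identically, and the Gaussian-kernel linear-independence argument inside the proof of Lemma~\ref{lemma:7} then delivers $\bfw=\bfzero_{N_r}$.

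For part (c), strict convexity of $\Gamma$ on $\curC_{\ad,\bflambda}=\RRNr$ follows at once from part (b). To obtain a global minimizer I would establish coercivity $\Gamma(\bflambda)\to+\infty$ as $\Vert\bflambda\Vert\to+\infty$, using that $\bfb^c$ defined by Eq.~\eqref{eq:eq75} lies in the interior of the convex hull of the range of $\bfh^c\subset(0,1]^{N_r}$ (so the constraint is admissible), together with a Jensen lower bound on the log-partition term; this compatibility step is the only delicate point in (c). A strictly convex coercive function on $\RRNr$ attains a unique minimum $\bflambda^\psol$ characterized by $\nabla\Gamma(\bflambda^\psol)=\bfzero_{N_r}$, which by Eq.~\eqref{eq:eq120} is exactly $E\{\bfh^c(\bfH_{\!\bflambda^\ppsol})\}=\bfb^c$. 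Hence $p_{\bfH_{\!\bflambda^\ppsol}}(\cdot\,;\bflambda^\psol)$ lies in $\curC_{\ad,p}$, and by the uniqueness of the Kullback--Leibler minimizer (Eq.~\eqref{eq:eq105}) it coincides with $p_\bfH^\ppost$, establishing Eq.~\eqref{eq:eq124}.
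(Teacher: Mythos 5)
Your proposal is correct and, for parts (a) and (b), follows essentially the same route as the paper: bound $\bfh^c$ pointwise by $1$ componentwise, differentiate the log-partition term under the integral sign, and identify $\nabla_{\!\bflambda}\Gamma(\bflambda)$ and $[\Gamma{\,''}(\bflambda)]$ with the centered first and second moments of $\bfh^c(\bfHbflambda)$. Two differences are worth recording. First, your observation that Lemma~\ref{lemma:7} controls the uncentered matrix $E\{\bfh^c\otimes\bfh^c\}$ rather than the covariance is well taken: positive definiteness of the former does not in general imply positive definiteness of the latter (a nonzero deterministic vector is already a counterexample), and the paper passes over this by citing Lemma~\ref{lemma:7} directly. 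Your repair --- zero variance forces $\langle\bfw,\bfh^c(\bfeta)\rangle$ to be constant on the full support of $p_{\bfHbflambda}$, the decay of each $h_r^c$ at infinity forces that constant to be $0$, and the Gaussian-kernel independence inside the proof of Lemma~\ref{lemma:7} then forces $\bfw=\bfzero_{N_r}$ --- is exactly the missing step and strengthens the argument. Second, for the \emph{existence} of $\bflambda^\psol$ you and the paper diverge: the paper does not prove coercivity but instead invokes, in Section~\ref{sec:Section6.3}, the Lagrange-multiplier existence theorem of Luenberger under the standing assumption that the primal problem~\eqref{eq:eq105} is solvable at a regular point, so that Theorem~\ref{theorem:3}(c) only adds uniqueness via strict convexity. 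Your coercivity route is the standard exponential-family argument and would, if completed, give existence without presupposing solvability of the primal problem; but its key compatibility claim --- that $\bfb^c$ lies in the interior of the convex hull of the range of $\bfh^c$ --- is asserted rather than proved, and for the empirical $\bfb^c$ of Eq.~\eqref{eq:eq75} it requires ruling out that all the points $\bfh^c([V]^T\tilde\bfq_\pexp^{r\,'})$ sit on a common face of that hull. You flag this as the delicate point, which is honest, but that step must be filled in before your version of (c) is self-contained.
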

%
\begin{proof} (Theorem~\ref{theorem:3}).

\noindent (a) Using Eq.~\eqref{eq:eq114}, Eq.~\eqref{eq:eq115}, and the first equation Eq.~\eqref{eq:eq116} yield
\begin{equation} \nonumber
E\{\Vert\,\bfh^c(\bfH_\bflambda)\,\Vert^2\} =\int_{\RR^\nu} \Vert\,\bfh^c(\bfeta)\,\Vert^2\, c_0(\bflambda)\, \exp\{-\curV_{\!\bflambda}(\bfeta)\} \, d\bfeta < +\infty\, .
\end{equation}

\noindent (b) Let us prove Eqs.~\eqref{eq:eq120} and \eqref{eq:eq121} using a similar proof to the one introduced in the discrete case for finding the maximum entropy probability measure \cite{Agmon1979,Kapur1992}.
Eq.~\eqref{eq:eq112} yields $\nabla_{\!\bflambda}\curV_{\!\bflambda}(\bfeta) = \bfh^c(\bfeta)$ and from Eq.~\eqref{eq:eq114}, it can be deduced that
\begin{equation}\label{eq:eq125}
\nabla_{\!\bflambda} p_{\bfH_\bflambda}(\bfeta\,;\bflambda) = \left (c_0(\bflambda)^{-1}\,\nabla_{\!\bflambda} c_0(\bflambda) - \bfh^c(\bfeta)\right)\,p_{\bfH_{\bflambda}}(\bfeta\,;\bflambda) \, .
\end{equation}
By integrating the two members of Eq.~\eqref{eq:eq125} with respect to $\bfeta$ on $\RR^\nu$, we obtain
\begin{equation}\label{eq:eq126}
c_0(\bflambda)^{-1}\,\nabla_{\!\bflambda} c_0(\bflambda) = \int_{\RR^\nu} \bfh^c(\bfeta)\, p_{\bfH_{\bflambda}}(\bfeta\,;\bflambda)\, d\bfeta = E\{\bfh^c(\bfH_\bflambda)\} \, .
\end{equation}
Eq.~\eqref{eq:eq119} yields $\nabla_{\!\bflambda}\Gamma(\bflambda) = \bfb^c - c_0(\bflambda)^{-1}\,\nabla_{\!\bflambda} c_0(\bflambda)$, which proves Eq.~\eqref{eq:eq120} by using Eq.~\eqref{eq:eq126}. Note that Eq.~\eqref{eq:eq118} implies the existence of the mean value $E\{\bfh^c(\bfH_\bflambda)\}$. Taking the derivative of Eq.~\eqref{eq:eq120} with respect to $\bflambda$ yields
\begin{equation}\label{eq:eq127}
[\Gamma{\,''}(\bflambda)] = - \int_{\RR^\nu} \bfh^c(\bfeta) \otimes \nabla_{\!\bflambda} p_{\bfH_\bflambda}(\bfeta\,;\bflambda)
\, d\bfeta\, .
\end{equation}
Substituting Eq.~\eqref{eq:eq126} into Eq.~\eqref{eq:eq125} yields
$\nabla_{\!\bflambda} p_{\bfH_\bflambda}(\bfeta\,;\bflambda) = ( E\{\bfh^c(\bfH_\bflambda)\} -\bfh^c(\bfeta)\,) \, p_{\bfH_\bflambda}(\bfeta\,;\bflambda)$,
which with Eq.~\eqref{eq:eq127}, gives
$[\Gamma{\,''}(\bflambda)] = E\{ \bfh^c(\bfH_\bflambda) \otimes \bfh^c(\bfH_\bflambda)\} - (E\{\bfh^c(\bfH_\bflambda)\})\otimes (E\{\bfh^c(\bfH_\bflambda)\})$
that is the covariance matrix of the $\RRNr$-valued random variable $\bfh^c(\bfH_\bflambda)$. Again Eq.~\eqref{eq:eq118} proves the existence of matrix $[\Gamma{\,''}(\bflambda)]$ as a covariance matrix, which is semi-positive definite.
Using Lemma~\ref{lemma:7}, this matrix is positive definite.

\noindent (c) Since $[\Gamma{\,''}(\bflambda)]$ is a positive-definite matrix for all $\bflambda$ in $\curC_{\ad,\bflambda}$, it can then be deduced that $\bflambda\mapsto \Gamma(\bflambda)$ is strictly convex on $\curC_{\ad,\bflambda}$. Therefore, Eq.~\eqref{eq:eq122} holds, $\bflambda^\psol$ is unique, and Eq.~\eqref{eq:eq120} shows that $E\{\bfh^c(\bfH_{\bflambda^\psol})\}=\bfb^c$. Taking into account Eq.~\eqref{eq:eq110}, the solution is given by Eq.~\eqref{eq:eq124} and is unique due to the uniqueness  of solution  $\bflambda^\psol$ of $\nabla_{\!\bflambda} \Gamma(\bflambda) = \bfzero_{N_r}$.
\end{proof}
\subsection{Dissipative stochastic Hamiltonian system as a MCMC generator of $\bfH_\bflambda$}
\label{sec:Section6.5}
For the reasons given in Section~\ref{sec:Section6.3}, the chosen MCMC generator is based on a nonlinear It\^o stochastic differential equation (ISDE) associated with the nonlinear stochastic dissipative Hamiltonian dynamical system proposed in \cite{Soize2008b} and based on \cite{Soize1994}.

Let $\{\bfW^\wien(t) = (W_1^\wien(t),\ldots ,W_\nu^\wien(t)), t\geq 0\}$ be the Wiener process, defined on $(\Theta,\curT,\curP)$, indexed by $\RR^+$, with values in $\RR^\nu$, such that $W_1^\wien,\ldots ,W_\nu^\wien$ are mutually independent, $\bfW^\wien(0) =\bfzero_\nu$ a.s., $\bfW^\wien$ is a process with independent increments such that, for all $0\leq t'  < t < +\infty$, the increment $\bfW^\wien(t)-\bfW^\wien(t')$ is a $\RR^\nu$-valued second-order, Gaussian, centered random variable whose covariance matrix is $(t-t')\,[I_\nu]$.
%
\begin{theorem}[MCMC generator of $\bfH_\bflambda$] \label{theorem:4}
Let $\bfh^c=(h_1^c,\ldots , h_{N_r}^c)$ be the function whose component $h_r^c$ is defined by Eq.~\eqref{eq:eq73}. Let $\bflambda$ be fixed in $\curC_{\ad,\bflambda}$. Consequently,
 Eq.~\eqref{eq:eq116} of Lemma~\ref{lemma:8} holds. Let $\{ ( \bfU_{\!\bflambda}(t),\bfV_{\!\bflambda}(t) ), t\geq 0\}$ be the stochastic process, defined on $(\Theta,\curT,\curP)$, indexed by $\RR^+$, with values in $\RR^\nu\times\RR^\nu$, which verifies the following ISDE for $t > 0$, with the initial condition $(\bfu_0,\bfv_0)$ given in $\RR^\nu\times\RR^\nu$,
\begin{align}
d\bfU_{\!\bflambda}(t) & = \bfV_{\!\bflambda}(t)\, dt \, , \label{eq:eq128} \\
d\bfV_{\!\bflambda}(t) & = \bfL_{\bflambda}(\bfU_\bflambda(t))\, dt - \frac{1}{2} f_0\,\bfV_{\!\bflambda}(t) \, dt
                        + \sqrt{f_0}\, d\bfW^\wien(t) \, ,\label{eq:eq129} \\
 \bfU_{\!\bflambda}(0) & = \bfu_0 \,\, , \,\, \bfV_{\!\bflambda}(0)= \bfv_0 \,\,  a.s. \label{eq:eq130}
\end{align}
\noindent (a) The initial condition $\bfu_0\in\RR^\nu$ is chosen from the points of the training set $\curD_d= \{\bfeta_d^1,\ldots , \bfeta_d^{N_d}\}$ (see Section~\ref{sec:Section6.7}-(i)) while the initial condition $\bfv_0$ is chosen as any realization of a normalized Gaussian $\RR^\nu$-valued random variable $\bfV_G$, independent of $\bfW^\wien$, whose probability density function with respect to $d\bfv$ is
$p_{\bfV_G}(\bfv) = (2\pi)^{-\nu/2}\, \exp\{-\Vert\,\pbfv\,\Vert^2 /2\}$.

\noindent (b) The parameter $f_0 > 0$ allows the dissipation term in the dissipative Hamiltonian system to be controlled and to rapidly reach the stationary response associated with the invariant measure (the value $f_0=4$ is generally a good choice).

\noindent (c) For all $\bfu =(u_1,\ldots ,u_\nu)$ in $\RR^\nu$, the vector $\bfL_\bflambda(\bfu)$ in $\RR^\nu$ is defined by
$\bfL_\bflambda(\bfu) = -\nabla_{\!\bfu}\curV_{\!\bflambda}(\bfu)$, which can be written as
\begin{equation}\label{eq:eq131}
 \bfL_\bflambda(\bfu) = \frac{1}{\zeta(\bfu)}\,\nabla_{\!\bfu}\zeta(\bfu) - [\nabla_{\!\bfu} \bfh^c(\bfu)]\,\bflambda\, .
\end{equation}
\noindent (d) The stochastic solution $\{ ( \bfU_{\!\bflambda}(t),\bfV_{\!\bflambda}(t) ), t\geq 0\}$ of the ISDE defined by
Eqs.~\eqref{eq:eq128} to \eqref{eq:eq130} is unique,  has almost-surely continuous trajectories, and is a second-order diffusion stochastic process. For $t\rightarrow +\infty$, this diffusion process converges to a stationary second-order diffusion stochastic process
$\{ ( \bfU^\st_{\!\bflambda}(\tau),\bfV^\st_{\!\bflambda}(\tau) ), \tau\geq 0\}$  associated with the unique invariant probability measure on $\RR^\nu\times\RR^\nu$,
\begin{equation}\label{eq:eq132}
 p_{\bfH_\bflambda,\bfV_G}(\bfeta,\bfv\, ; \bflambda)\, d\bfeta \otimes d\bfv =
 (p_{\bfH_\bflambda}(\bfeta \, ; \bflambda)\, d\bfeta) \otimes (p_{\bfV_G}(\bfv)\, d\bfv) \, ,
\end{equation}
in which $p_{\bfH_\bflambda}(\bfeta \, ; \bflambda)$ is the pdf defined by Eq.~\eqref{eq:eq114}.

\noindent (e)  For $t_s$ sufficiently large,  $\bfH_\bflambda$ is chosen as $\bfU_{\!\bflambda}(t_s)$. The generation of the constrained learned set $\curD_{\bfH_\bflambda} = \{\bfeta_\bflambda^1,\ldots , \bfeta_\bflambda^N\}$, made up of $N\gg N_d$ independent realizations of $\bfH_\bflambda$ whose probability density function is $p_{\bfH_\bflambda}(\bfeta \, ; \bflambda)$, consists in solving Eqs.~\eqref{eq:eq128} to \eqref{eq:eq130} for $t\in[0\, ,t_s]$ and then using the realizations of $\bfU_{\!\bflambda}(t_s)$ (see the numerical aspects in Section~\ref{sec:Section6.7}).
\end{theorem}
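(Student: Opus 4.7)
The plan is to reduce the proof to verifying the hypotheses of the general theory of nonlinear dissipative stochastic Hamiltonian systems developed in \cite{Soize1994,Soize2008b}. The ISDE \eqref{eq:eq128}--\eqref{eq:eq130} is an underdamped Langevin-type system with Hamiltonian $\curH(\bfu,\bfv)=\tfrac12\Vert\,\bfv\,\Vert^2+\curV_{\!\bflambda}(\bfu)$, dissipation $f_0/2$, and diffusion $\sqrt{f_0}$ chosen so that the fluctuation--dissipation relation is satisfied. Assertions (a) and (b) are standard choices, and (c) follows from $\phi=-\log\zeta$ together with the chain rule applied to $\curV_{\!\bflambda}=\phi+\langle\bflambda,\bfh^c\rangle$, giving $-\nabla_{\!\bfu}\phi(\bfu)=\zeta(\bfu)^{-1}\nabla_{\!\bfu}\zeta(\bfu)$ and $-\nabla_{\!\bfu}\langle\bflambda,\bfh^c(\bfu)\rangle=-[\nabla_{\!\bfu}\bfh^c(\bfu)]\,\bflambda$.

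For existence, uniqueness and the second-order diffusion property in (d), I would first observe that the drift $(\bfv,-\nabla_{\!\bfu}\curV_{\!\bflambda}(\bfu)-\tfrac12 f_0\bfv)$ is locally Lipschitz on $\RR^\nu\times\RR^\nu$: since $\zeta$ is a strictly positive finite mixture of Gaussians (see \eqref{eq:eq101}), both $\phi=-\log\zeta$ and $\bfh^c$ are of class $C^\infty$, hence so is $\bfL_\bflambda$. Global existence and non-explosion then follow from a Lyapunov argument based on $\curH$: a direct computation of the infinitesimal generator yields $\curL\curH(\bfu,\bfv)=-\tfrac12 f_0\Vert\,\bfv\,\Vert^2+\tfrac12 f_0\,\nu$, which is bounded above, so by Has'minskii's criterion the strong solution is global, non-exploding and second-order, with almost-surely continuous trajectories via the standard It\^o theory. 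The square-integrable initial datum $(\bfu_0,\bfv_0)$, independent of $\bfW^\wien$, is compatible with this framework.

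The identification of the invariant measure is the core analytical step. I would verify directly that the product density $\rho(\bfu,\bfv)=p_{\bfH_\bflambda}(\bfu\,;\bflambda)\,p_{\bfV_G}(\bfv)\propto\exp(-\curH(\bfu,\bfv))$ satisfies the stationary Fokker--Planck equation associated with \eqref{eq:eq128}--\eqref{eq:eq129}: the Hamiltonian transport part $\langle\bfv,\nabla_{\!\bfu}\rho\rangle-\langle\nabla_{\!\bfu}\curV_{\!\bflambda},\nabla_{\!\bfv}\rho\rangle$ vanishes because $\rho$ depends on $(\bfu,\bfv)$ only through $\curH$, while the Ornstein--Uhlenbeck part $\tfrac12 f_0\,\nabla_{\!\bfv}\cdot(\bfv\,\rho)+\tfrac12 f_0\,\Delta_{\!\bfv}\rho$ vanishes by the fluctuation--dissipation relation on the momentum factor. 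The integrability conditions \eqref{eq:eq116} of Lemma~\ref{lemma:8}(d) ensure that $\rho$ is a bona fide probability density, so \eqref{eq:eq132} defines an invariant probability measure whose marginal on the position variable is $p_{\bfH_\bflambda}(\cdot\,;\bflambda)$ as defined in \eqref{eq:eq114}.

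The main obstacle will be the uniqueness of the invariant measure and the relaxation of the transient regime as $t\to+\infty$. I would obtain it by combining (i) H\"ormander's hypoellipticity condition, which holds because the noise acts only on $\bfv$ and the Lie bracket of the noise vector fields $\partial/\partial v_k$ with the deterministic drift $\bfv\cdot\nabla_{\!\bfu}$ regenerates the $\bfu$-directions, with (ii) a Foster--Lyapunov drift condition built from a modified Hamiltonian, whose coercivity at infinity is guaranteed by $\curV_{\!\bflambda}(\bfeta)\to+\infty$ as $\Vert\,\bfeta\,\Vert\to+\infty$ established in Lemma~\ref{lemma:8}(d). These together yield geometric ergodicity towards the unique invariant probability measure \eqref{eq:eq132}, which is precisely the convergence result claimed in (d). Statement (e) is then an immediate consequence: for $t_s$ sufficiently large, the law of $\bfU_{\!\bflambda}(t_s)$ is arbitrarily close to $p_{\bfH_\bflambda}(\cdot\,;\bflambda)$, and $N$ independent realizations produced by the St\"ormer--Verlet discretization of \eqref{eq:eq128}--\eqref{eq:eq130} form the constrained learned set $\curD_{\bfH_\bflambda}$.
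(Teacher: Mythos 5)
Your proposal is correct, and it reaches the same conclusion as the paper, but by a more self-contained route. The paper's own proof is a hypothesis-verification argument: it checks that $\curV_{\!\bflambda}=\phi+\langle\bflambda,\bfh^c\rangle$ is twice continuously differentiable with locally bounded gradient, that $\inf_{\Vert\bfu\Vert>R}\curV_{\!\bflambda}(\bfu)\rightarrow+\infty$ with $\inf_{\bfu}\curV_{\!\bflambda}(\bfu)$ finite, and that $\int_{\RR^\nu}\Vert\nabla_{\!\bfu}\curV_{\!\bflambda}(\bfu)\Vert\,p_{\bfH_\bflambda}(\bfu\,;\bflambda)\,d\bfu<+\infty$ (using Eq.~\eqref{eq:eq116}), and then invokes Theorems 4, 6, 7, and 9 of \cite{Soize1994} for the Hamiltonian $\curH(\bfu,\bfv)=\Vert\bfv\Vert^2/2+\curV_{\!\bflambda}(\bfu)$ to obtain existence, uniqueness, the second-order diffusion property, and convergence to the stationary process with invariant measure \eqref{eq:eq132}. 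You instead unpack what those cited theorems deliver: local Lipschitz drift plus the Lyapunov computation $\curL\curH=-\tfrac12 f_0\Vert\bfv\Vert^2+\tfrac12 f_0\nu$ for non-explosion, direct verification of the stationary Fokker--Planck equation for the Gibbs product density, and H\"ormander hypoellipticity combined with a Foster--Lyapunov condition for uniqueness and relaxation. What your approach buys is independence from the specific reference (and a stronger geometric-ergodicity conclusion); what it costs is that the Foster--Lyapunov step is only sketched --- mere coercivity $\curV_{\!\bflambda}\rightarrow+\infty$ is not by itself sufficient for a geometric drift condition, and you would need to use the actual quadratic-plus-bounded structure of $\curV_{\!\bflambda}$ (Gaussian-mixture $\phi$ plus the bounded $\langle\bflambda,\bfh^c\rangle$ with bounded gradient) to make the modified-Hamiltonian Lyapunov function work. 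You should also record explicitly, as the paper does, the integrability of $\Vert\nabla_{\!\bfu}\curV_{\!\bflambda}\Vert$ against $p_{\bfH_\bflambda}$, which justifies the integration by parts needed to pass from the pointwise Fokker--Planck identity to invariance of the measure against test functions.
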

%
\begin{proof} (Theorem~\ref{theorem:4}).
For $r\in\{1,\ldots ,N_r\}$, function $\bfeta\mapsto h^c_r(\bfeta)$ defined be Eq.~\eqref{eq:eq73} is twice continuously differentiable.
Since $\phi(\bfu)= -\log\zeta(\bfu)$ with $\zeta(\bfu)$ given by Eq.~\eqref{eq:eq101}, it can be deduced that function $\bfu\mapsto \curV_{\!\bflambda}(\bfu)$ defined by Eq.~\eqref{eq:eq112} is also twice continuously differentiable. Consequently,
$\bfu\mapsto \Vert\,\nabla_{\!\bfu}\curV_{\!\bflambda}(\bfu)\,\Vert$ is locally bounded on $\RR^\nu$. Using Eqs.~\eqref{eq:eq112} and \eqref{eq:eq113}, it can be seen that, for all $\bflambda\in\curC_{\ad,\bflambda}$,  $\inf_{\Vert\,\bfu\,\Vert > R}  \curV_{\!\bflambda}(\bfu) \rightarrow +\infty$ if $R\rightarrow +\infty$, and $\inf_{\bfu\in\RR^\nu} \curV_{\!\bflambda}(\bfu)$ is a finite real number. Using Eqs.~\eqref{eq:eq101}, \eqref{eq:eq102}, and \eqref{eq:eq112} yields
\begin{equation}\label{eq:eq133}
 \int_{\RR^\nu} \Vert\,\nabla_{\!\bfu}\curV_{\!\bflambda}(\bfu)\,\Vert \, p_{\bfH_\bflambda}(\bfu \, ;\bflambda)\, d\bfu \leq
 \int_{\RR^\nu} \frac{1}{\zeta(\bfu)}\Vert\,\nabla_{\!\bfu}\zeta(\bfu)\,\Vert \, p_{\bfH_\bflambda}(\bfu \, ; \bflambda)\, d\bfu
 + \int_{\RR^\nu} \Vert\, [\nabla_{\!\bfu}\bfh^c(\bfu)]\,\Vert_F \, \Vert\, \bflambda\, \Vert\, p_{\bfH_\bflambda}(\bfu \, ;\bflambda)\, d\bfu \, ,
\end{equation}
because $\Vert\, [\nabla_{\!\bfu}\bfh^c(\bfu)]\,\bflambda\, \Vert
\,\,\leq \Vert\, [\nabla_{\!\bfu}\bfh^c(\bfu)]\,\Vert \, \Vert\, \bflambda\, \Vert$
and  $\Vert\, [\nabla_{\!\bfu}\bfh^c(\bfu)]\,\Vert  \,\,\leq \Vert\, [\nabla_{\!\bfu}\bfh^c(\bfu)]\,\Vert_F$.
From Eqs.~\eqref{eq:eq117} and \eqref{eq:eq101}, the first term in the right-hand side member of Eq.~\eqref{eq:eq133} is finite, while from the second equation~\eqref{eq:eq116}, the second term is also finite. It can then be deduced that the left-hand side member of Eq.~\eqref{eq:eq133} is finite.
Consequently, Theorems 6, 7, and 9 in Pages 214 to 216 of \cite{Soize1994}, and the expression of the invariant measure given by Theorem~4 in Page 211 of the same reference, for which the Hamiltonian is $\curH(\bfu,\bfv)=\Vert\,\bfv\,\Vert^2/2 + \curV_{\!\bflambda}(\bfu)$, prove that the solution of  Eqs.~\eqref{eq:eq128} to \eqref{eq:eq130} is unique and is a second-order diffusion stochastic process with almost-surely continuous trajectories, which converges for $t\rightarrow +\infty$ to a second-order stationary diffusion process with almost surely continuous trajectories $\{ ( \bfU^\st_{\!\bflambda}(\tau),\bfV^\st_{\!\bflambda}(\tau) ), \tau\geq 0\}$  associated with the invariant probability measure given by Eq.~\eqref{eq:eq132}. For any $\tau > 0$,
$\bfU_{\!\bflambda}^\st(\tau) = \lim_{t\rightarrow +\infty}\bfU_{\!\bflambda}(t+\tau)$ in probability measure.
\end{proof}
\subsection{Iterative algorithm for calculating $\bflambda^\psol$}
\label{sec:Section6.5}
Let us consider Theorem~\ref{theorem:3}. For $\bflambda$ fixed in $\curC_{\ad,\bflambda}$, the value of $\Gamma(\bflambda)$ cannot be evaluated in high dimension using Eq.~\eqref{eq:eq119} due to the presence of the normalization constant $c_0(\bflambda)$. Consequently, $\bflambda^\psol$ cannot directly be estimated using, for instance, the gradient descent algorithm applied to the convex optimization problem defined by Eq.~\eqref{eq:eq122}.
We will then calculate $\bflambda^\psol$ as the unique solution in $\bflambda$ of equation $\nabla_{\!\bflambda}\Gamma(\bflambda)= \bfzero_{N_r}$
(see Eq.~\eqref{eq:eq123}), that is to say (see Eq.~\eqref{eq:eq120}), solving the following equation in $\bflambda$ on $\RRNr$,
\begin{equation}\label{eq:eq134}
E\{\bfh^c(\bfH_\bflambda)\} - \bfb^c = \bfzero_{N_r}\, .
\end{equation}
This equation is solved by using the Newton iterative method \cite{Kelley2003} applied to function
$\bflambda\mapsto \nabla_{\!\bflambda}\Gamma(\bflambda)$ as proposed in \cite{Batou2013,Soize2017b}, that is to say,
\begin{equation}\label{eq:eq135}
\bflambda^{\,i+1} = \bflambda^{\,i} - \alpha_{\prelax}(i)\, [\Gamma{\,''}(\bflambda^{\,i})]^{-1} \, \nabla_{\!\bflambda}\Gamma(\bflambda^{\,i}) \quad , \quad i=0,1,\ldots ,i_\pmax \, ,
\end{equation}
with $\bflambda^{\,0} = \bfzero_{N_r}$, in which $\alpha_{\prelax}(i)\in ]0\, , 1]$ is a relaxation factor, where $\nabla_{\!\bflambda}\Gamma(\bflambda)$ and $[\Gamma{\,''}(\bflambda)]$ are defined by
Eqs.~\eqref{eq:eq120} and \eqref{eq:eq121}, and where $i_\pmax$ is a given integer sufficiently large.
An estimation of $\bflambda^\psol$ is chosen as
\begin{equation}\label{eq:eq136}
\bflambda^\psol = \bflambda^{i_\psol} \quad , \quad  i_\psol = \arg \min_{i =1,\ldots ,i_\pmax} \error(i)\, ,
\end{equation}
in which the error function $i\mapsto\error(i)$ from $\{1,\ldots ,i_\pmax\}$ into $\RR^+$ is defined by
\begin{equation}\label{eq:eq137}
\error(i) = \frac{1}{\Vert\, \bfb^{c}\, \Vert}\, \Vert\, \bfb^{c} - E\{\bfh^{c}(\bfH_{\bflambda^{\,i}})\}\Vert \,  .
\end{equation}
\subsection{Numerical implementation}
\label{sec:Section6.7}
A time-discretization scheme (see for instance \cite{Kloeden1992,Talay1990}) must be used to solve the ISDE defined by Eqs.~\eqref{eq:eq128} to \eqref{eq:eq130} for $t\in[0\, , t_s]$ with the initial condition at $t=0$ defined in Theorem~\ref{theorem:4}, in order to generate the constrained learned set $\curD_{\bfH_{\!\bflambda}} = \{\bfeta_{\bflambda}^1,\ldots , \bfeta_{\bflambda}^N \}$ with $N\gg N_d$.
It is assumed that $N$ is written as $N=N_d\times n_\pMC$ with $n_\pMC\gg 1$. The case of Hamiltonian dynamical systems has been analyzed in \cite{Talay2002} by using an implicit Euler scheme. Presently we propose to use the St\"ormer-Verlet scheme (see \cite{Hairer2003} for the deterministic case and \cite{Burrage2007} for the stochastic case), which is an efficient scheme that allows for having a long-time energy conservation for non-dissipative Hamiltonian dynamical systems. In \cite{Soize2012b}, we have proposed to use an extension of the St\"ormer-Verlet scheme for stochastic dissipative Hamiltonian systems, that we have also used in \cite{Guilleminot2013a, Soize2015,Soize2016,Soize2020a,Soize2021a,Soize2022a,Soize2022bb}.\\

\noindent (i) \textit{St\"ormer-Verlet scheme and computation of the constrained learned set $\curD_{\bfH_{\bflambda^{\,i}}}$}.
Let $i$ be the index of the sequence $\{\bflambda^{\, i},i=0,1,\ldots, i_\pmax\}$ of the Lagrange multipliers that are computed using Eq.~\eqref{eq:eq135} with $\bflambda^{\,0} = \bfzero_{N_r}$. For $m=0,1,\ldots , M_s$ (with $M_s > 1$ an integer), let $t_m = m\, \Delta t$  be the time sampling, which is such that $t_{M_s} = t_s$ with $t_s = M_s \, \Delta t$.
Let $\Delta\bfW_{m+1}^{\wien} = \bfW^\wien(t_{m+1}) - \bfW^\wien(t_{m})$ be the Gaussian, second-order, centered, $\RR^\nu$-valued random variable  such that $E\{\Delta\bfW_{m+1}^{\wien}\otimes \Delta\bfW_{m+1}^{\wien}\} = \Delta t \, [I_\nu]$. Let $\{\theta_\ell, \ell=1,\ldots , N\}$ be $N$ independent realizations in $\Theta$. For $m=0,1,\ldots, M_s-1$, let $\Delta\WW_{m+1}^\ell = \Delta\bfW_{m+1}^{\wien}(\theta_\ell)$ be the realization $\theta_\ell$ of $\Delta\bfW_{m+1}^{\wien}$.
Following the choice of $(\bfu_0,\bfv_0)$ defined in Theorem~\ref{theorem:4}, let $\bfu_0^1,\ldots, \bfu_0^N$ in $\RR^\nu$ such that
for $k=1,\ldots,n_\pMC$ and for $j=1,\ldots , N_d$, we take $\bfu_0^\ell = \bfeta_d^j$  with $\ell=j+(k-1)\times N_d$.
Let $\bfv_0^1,\ldots ,\bfv_0^N$ in $\RR^\nu$ be $N$ independent realizations of the $\RR^\nu$-valued random variable $\bfV_G$ also defined in Theorem~\ref{theorem:4}. For $\ell=1,\ldots , N$, the realizations $\Delta\WW_{m+1}^\ell$, $\bfu_0^\ell$, and $\bfv_0^\ell$  are independent of $\{\bflambda^{\,i}\}_i$.
For $i\in\{0,1,\ldots , i_\pmax\}$ and for $\ell\in\{1,\ldots , N\}$, we introduce the realizations
$\UU_m^{i,\ell} = \bfU_{\!\bflambda^{\, i}}(t_m\, ; \theta_\ell)$ and
$\VV_m^{i,\ell} = \bfV_{\!\bflambda^{\, i}}(t_m\, ; \theta_\ell)$.
For $m\in\{0,1,\ldots , M_s -1\}$, the St\"ormer-Verlet scheme applied to realization $\theta_\ell$ of  Eqs.~\eqref{eq:eq128} to \eqref{eq:eq130} yields the following recurrence,
\begin{align}
&\UU_{m+1/2}^{i,\ell}   =\UU_m^{i,\ell}  + \frac{\Delta t}{2}\, \VV_m^{i,\ell} \, , \label{eq:eq138}\\
&\VV_{m+1}^{i,\ell}     =\frac{1-\gamma}{1+\gamma}\, \VV_m^{i,\ell} + \frac{\Delta t}{1+\gamma}\,
                           \bfL_{\bflambda^{\, i-1}}(\UU_{m+1/2}^{i,\ell})
                            + \frac{\sqrt{f_0}}{1+\gamma}\, \Delta\WW_{m+1}^\ell   \, , \label{eq:eq139} \\
&\UU_{m+1}^{i,\ell}     = \UU_{m+1/2}^{i,\ell} + \frac{\Delta t}{2} \, \VV_{m+1}^{i,\ell}   \, , \label{eq:eq139-1}
\end{align}
with the initial condition
\begin{equation}\label{eq:eq139-2}
\UU_{0}^{i,\ell}   =  \bfu_0^\ell \quad , \quad \VV_{0}^{i,\ell}   =  \bfv_0^\ell  \, ,
\end{equation}
in which $\gamma =f_0\, \Delta t /4$ and where, using Eq.~\eqref{eq:eq131},
\begin{equation}\label{eq:eq139-3}
\bfL_{\bflambda^{\, i-1}}(\bfu) = \frac{1}{\zeta(\bfu)}\,\nabla_{\!\bfu}\zeta(\bfu) - [\nabla_{\!\bfu}\bfh^c(\bfu)]\, \bflambda^{\, i-1}
\end{equation}
Using Eq.~\eqref{eq:eq73}, for $\alpha\in \{1,\ldots , \nu \}$ and $r \in \{1,\ldots , N_r\}$, the entry  $[\nabla_{\!\bfu}\bfh^c(\bfu)]_{\alpha r}$ of matrix
$[\nabla_{\!\bfu}\bfh^c(\bfu)] \in\MM_{\nu,N_r}$ is written as
\begin{equation}\label{eq:eq139-4}
[\nabla_{\!\bfu}\bfh^c(\bfu)]_{\alpha r} = \frac{2}{\nu s^2} ( [V]^T \tilde\bfq_\pexp^r - u_\alpha) \,
                        \exp\bigg ( -\frac{1}{\nu s^2} \,\Vert \,[V]^T \tilde\bfq_\pexp^r - \bfu\, \Vert^2\bigg ) \, .
\end{equation}
The recurrence defined by  Eqs.~\eqref{eq:eq138} to \eqref{eq:eq139-4} allows $\curD_{\bfH_{\!\bflambda^{\,i}}}$ to be calculated as
\begin{equation}\label{eq:eq139-5}
\curD_{\bfH_{\!\bflambda^{\,i}}} = \{\bfeta_{\bflambda^{\,i}}^1,\ldots , \bfeta_{\bflambda^{\,i}}^N \} \quad , \quad
\bfeta_{\bflambda^{\,i}}^\ell = \bfU_{\!\bflambda^{\, i}}(t_s\, ; \theta_\ell) = \UU_{M_s}^{i,\ell} \, .
\end{equation}
\noindent (ii) \textit{Summary of the algorithm}.
The algorithm for calculating $\bflambda^\psol$ and the $\curD_{\bfH^\ppost} = \{ \bfeta_\ppost^1,\ldots,\bfeta_\ppost^N \}$ with
$\bfeta_\ppost^\ell = \bfeta_{\bflambda^\psol}^\ell$ for $\ell=1,\ldots, N$ is summarized in Algorithm~\ref{algorithm:1}.
We then obtained the $N$ independent realizations $\bfeta_\ppost^1,\ldots,\bfeta_\ppost^N $ of the posterior $\RR^\nu$-valued random variable $\bfH_\ppost$. Then, the realizations $\bfq_\ppost^1,\ldots , \bfq_\ppost^N$ of the posterior observations $\bfQ_\ppost$ and the realizations $\bfw_\ppost^1,\ldots , \bfw_\ppost^N$ of the posterior control parameter $\bfW_\ppost$ are computed using
Eq.~\eqref{eq:eq2bis}, that is to say,
\begin{equation}\label{eq:eq139-6}
\bfQ_\ppost=\underline\bfq + [\Phi_q]\, [\kappa]^{1/2} \, \bfH_\ppost \quad , \quad
\bfW_\ppost=\underline\bfw + [\Phi_w]\, [\kappa]^{1/2} \, \bfH_\ppost \, .
\end{equation}
%
%
%
\begin{algorithm}
\caption{Algorithm for calculating $\bflambda^\psol$ and $\curD_{\bfH_\pppost} = \{ \bfeta_\ppost^1,\ldots,\bfeta_\ppost^N \}$.}
\label{algorithm:1}
\begin{algorithmic}[1]
\State{\textbf{Data:}$N_d$, $\curD_d = \{ \bfeta_d^1,\ldots,\bfeta_d^{N_d} \}$, $N$, $i_\pmax$, $M_s$, $t_s$, $\Delta t$, $f_0$, $\gamma=f_0\,\Delta t / 4$}
\State{\textbf{Init:} $\,\,\Delta\WW_{m+1}^{\ell}, \ell \in\{ 1,\ldots , N \}, m\in \{1,\ldots , M_s-1\}$,
                  $\,\,\bfu_0^\ell$ and $\bfv_0^\ell$ for $\ell\in\{1,\ldots , N\}$, $\bflambda^{\,0} = \bfzero_{N_r}$ }
\For{$i=1:i_\pmax$}
    \For{$\ell=1:N\, (loop \, in \,  parallel \, computation)$}
    \State{$\curD_{\bfH_{\!\bflambda^{\,i}}} = \{\bfeta_{\bflambda^{\,i}}^1,\ldots , \bfeta_{\bflambda^{\,i}}^N \}$ from Eq.~\eqref{eq:eq139-5},
    using Eqs.~\eqref{eq:eq138} to \eqref{eq:eq139-4} and $\curD_{\bfH_{\!\bflambda^{\, i-1}}}$ ($\curD_{\bfH_{\!\bflambda^{\, 0}}}$ not used for $i=1$) }
    \EndFor
    \State{$\nabla_{\!\bflambda}\Gamma(\bflambda^{\, i})$ and $[\Gamma{\,''}(\bflambda^{\, i})]$ using
           Eqs.~\eqref{eq:eq120} and \eqref{eq:eq121}, and  $\curD_{\bfH_{\!\bflambda^{\, i}}}$ }
    \State{$\error(i)$ using Eq.~\eqref{eq:eq137}}
    \State{$\bflambda^{\, i+1} = \bflambda^{\, i} - \alpha_{\prelax}(i)\, [\Gamma{\,''}(\bflambda^{\, i})]^{-1}\, \nabla_{\!\bflambda}\Gamma(\bflambda^{\, i})$ using  Eq.~\eqref{eq:eq135} with a relaxation factor $\alpha_{\prelax}(i)\in ]0\, , 1]$ }
    \State{$\bflambda^{\, i} \leftarrow \bflambda^{\, i+1}$}
    \State{$\curD_{\bfH_{\!\bflambda^{\,i-1}}} \leftarrow \curD_{\bfH_{\!\bflambda^{\,i}}}$ }
\EndFor
\State{$\bflambda^\psol = \bflambda^{i_\ppsol}$, $i_\psol = \arg\,\min_i\error(i)\,\,$ from Eq.~\eqref{eq:eq136}}
\State{$\curD_{\bfH_\pppost} \leftarrow \curD_{\bfH_{\bflambda^\ppsol}}$}
\end{algorithmic}
\end{algorithm}
\section{Numerical illustration}
\label{sec:Section7}
We consider a supervised case. The training set $D_d = \{ \bfx^1,\ldots , \bfx^{N_d}\}$ with $\bfx^j\!\! = \!(\bfq_d^j,\bfw_d^j) \in \RRnx \!\! = \!\RRnq\!\times\!\RRnw$
is made up of $N_d$ independent realizations of random variable $\bfX=(\bfQ,\bfW)$, which are generated as explained in \ref{appendix:A} for which $n_x=430\, 098$, $n_q=10\, 098$, $n_w= 420\, 000$, and $N_d\in\{100,200,300,400\}$.
The target set $D_\pexp$ is made up of $N_r$ independent realizations $\bfq_\pexp^r \in\RRnq$ of random variable $\bfQ_\pexp$ for which $N_r \in [50\, , N_\pexp]$ with $N_\pexp\in\{100,200,300,400\}$.
The $\RRnq$-valued random variable $\bfQ$ corresponds to the finite element discretization of a $\RR^3$-valued random field
$\{\UU(\bfomega) = (\UU_1(\bfomega), \UU_2(\bfomega), \UU_3(\bfomega) \, ,\bfomega\in\overline\Omega\}$ and
the $\RRnw$-valued random variable $\bfW$ is a nonlinear transformation of the finite element discretization of a
$\MM_6^+$-valued random field $\{[\bfG(\bfomega)] \, , \bfomega\in\Omega\}$, in which $\Omega$ is the open bounded set of $\RR^3$ defined in \ref{appendix:A.1}, and where $\bfQ$ and $\bfW$ are constructed in \ref{appendix:A.2}.
Regarding the presentation of the results, and having to limit the number of figures, the probability density functions and the convergence curves when they will be relative to  $\bfQ$, will be limited to $3$ components, denoted by  $Q_{\pobs,1}$, $Q_{\pobs,2}$, and $Q_{\pobs,3}$ that are also defined at the end of \ref{appendix:A.2}.
\subsection{Training set}
\label{sec:Section7.1}
The training set is generated as explained in \ref{appendix:A.2} with the stochastic boundary value problem defined in \ref{appendix:A.1}.
For illustration, Fig.~\ref{fig:figure2} shows one realization $\theta\in\Theta$ of the components $(1,1)$, $(1,2)$, and $(4,4)$ of the $\MM^+_6$-valued random field $(\omega_1,\omega_2)\mapsto [\bfG(\omega_1,\omega_2,\omega_3)]$ in the plane $\omega_3 = 0.095774$
and Fig.~\ref{fig:figure3} shows the corresponding realization of the components $k=1,2,3$ of the real-valued random field
$(\omega_1,\omega_2)\mapsto \UU_k(\omega_1,\omega_2,\omega_3)$ in the plane $\omega_3 = 0.095774$.
\begin{figure}[!h]
    \centering
    \begin{subfigure}[b]{0.30\textwidth}
    \centering
        \includegraphics[width=\textwidth]{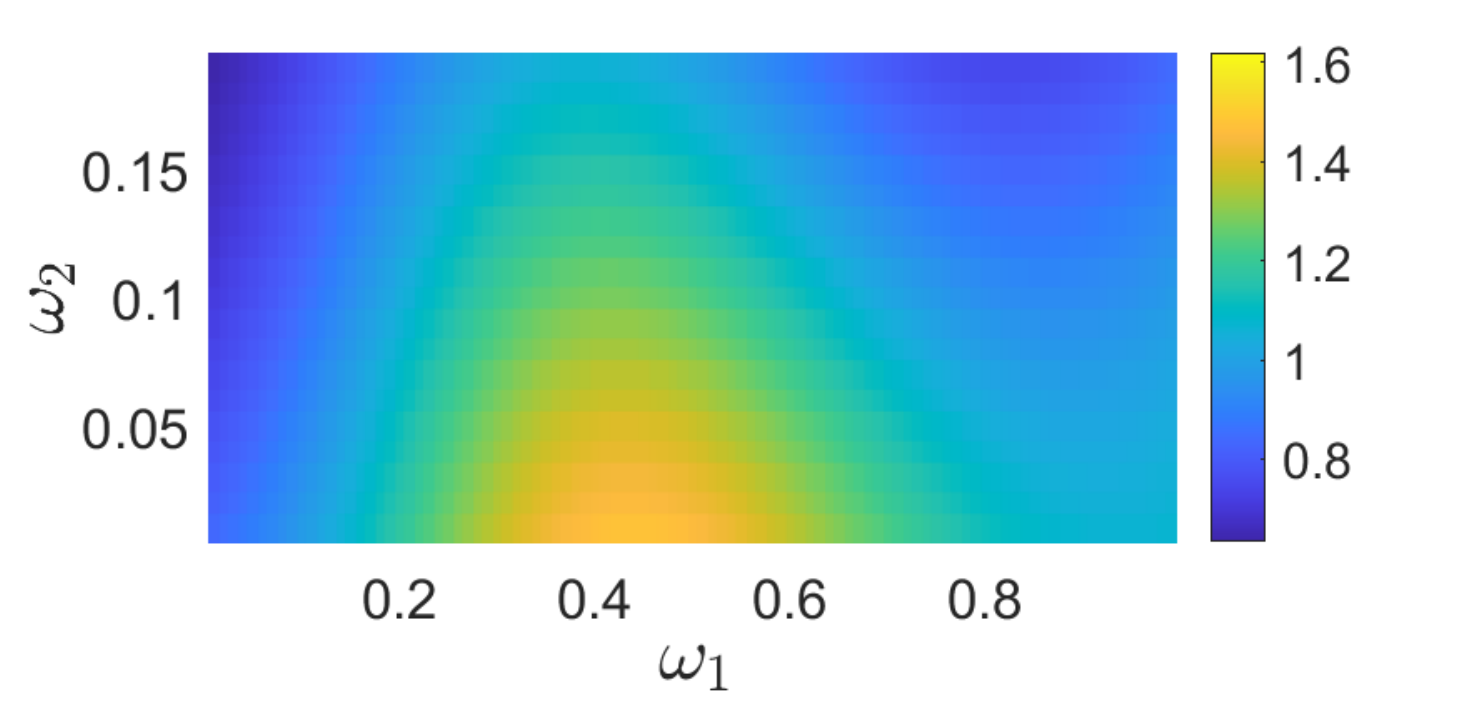}
        \caption{ $(\omega_1,\omega_2)\mapsto [\bfG(\omega_1,\omega_2,\omega_3 ; \theta)]_{11}$}
        \label{fig:figure2a}
    \end{subfigure}
    \begin{subfigure}[b]{0.30\textwidth}
        \centering
        \includegraphics[width=\textwidth]{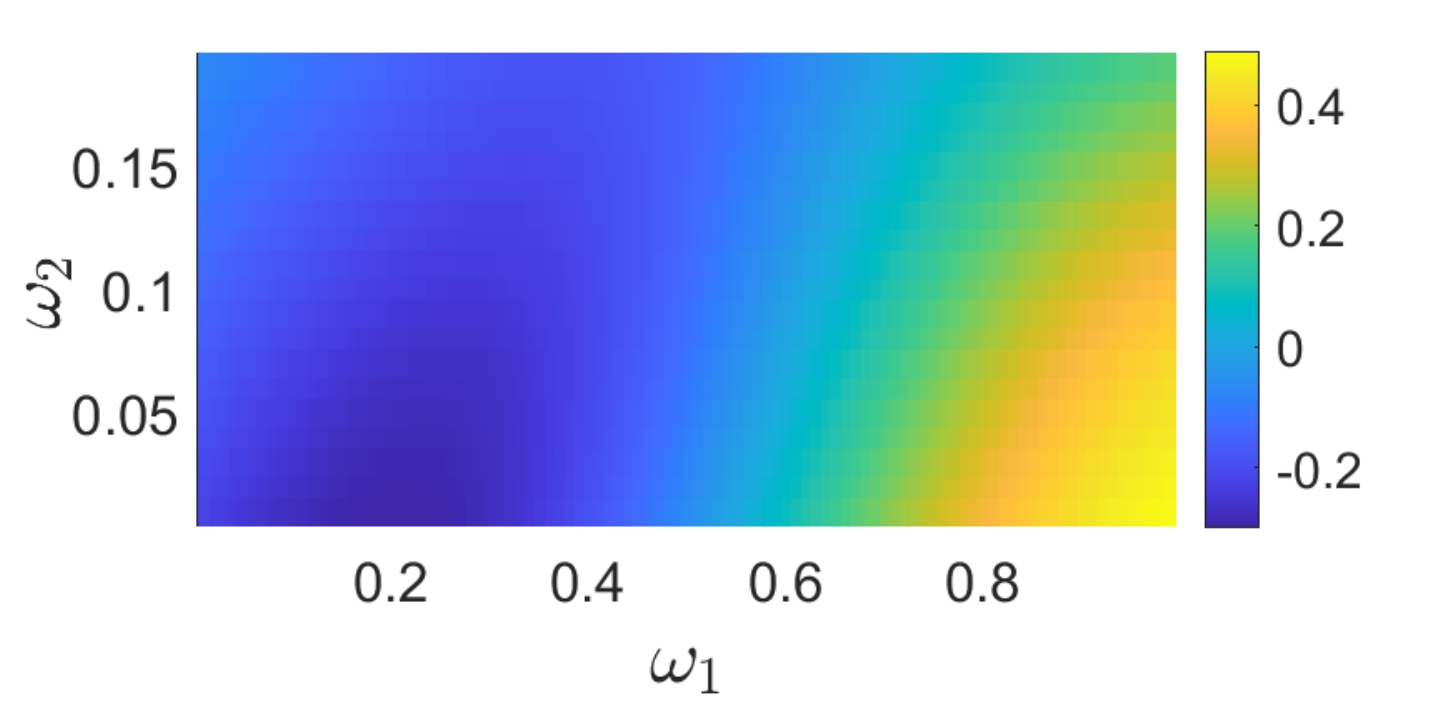}
        \caption{ $(\omega_1,\omega_2)\mapsto [\bfG(\omega_1,\omega_2,\omega_3 ;\theta)]_{12}$}
        \label{fig:figure2b}
    \end{subfigure}
    \begin{subfigure}[b]{0.30\textwidth}
        \centering
        \includegraphics[width=\textwidth]{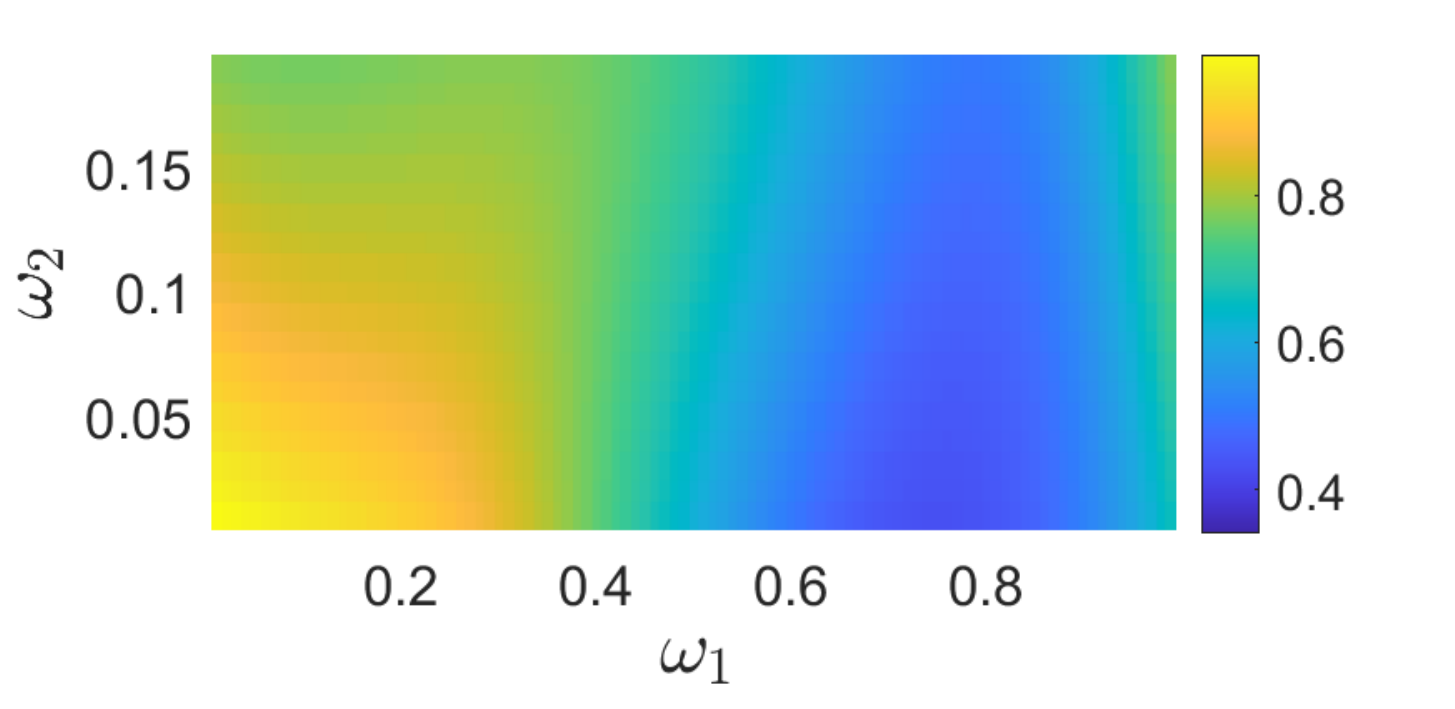}
        \caption{ $(\omega_1,\omega_2)\mapsto [\bfG(\omega_1,\omega_2,\omega_3 ;\theta)]_{44}$}
        \label{fig:figure2c}
    \end{subfigure}
    \caption{For the training set, example of one realization $\theta\in\Theta$ of the components $(1,1)$, $(1,2)$, and $(4,4)$ of the random field $(\omega_1,\omega_2)\mapsto [\bfG(\omega_1,\omega_2,\omega_3)]$ in the plane $\omega_3 = 0.095774$.}
    \label{fig:figure2}
\end{figure}
\begin{figure}[!h]
    \centering
    \begin{subfigure}[b]{0.30\textwidth}
    \centering
        \includegraphics[width=\textwidth]{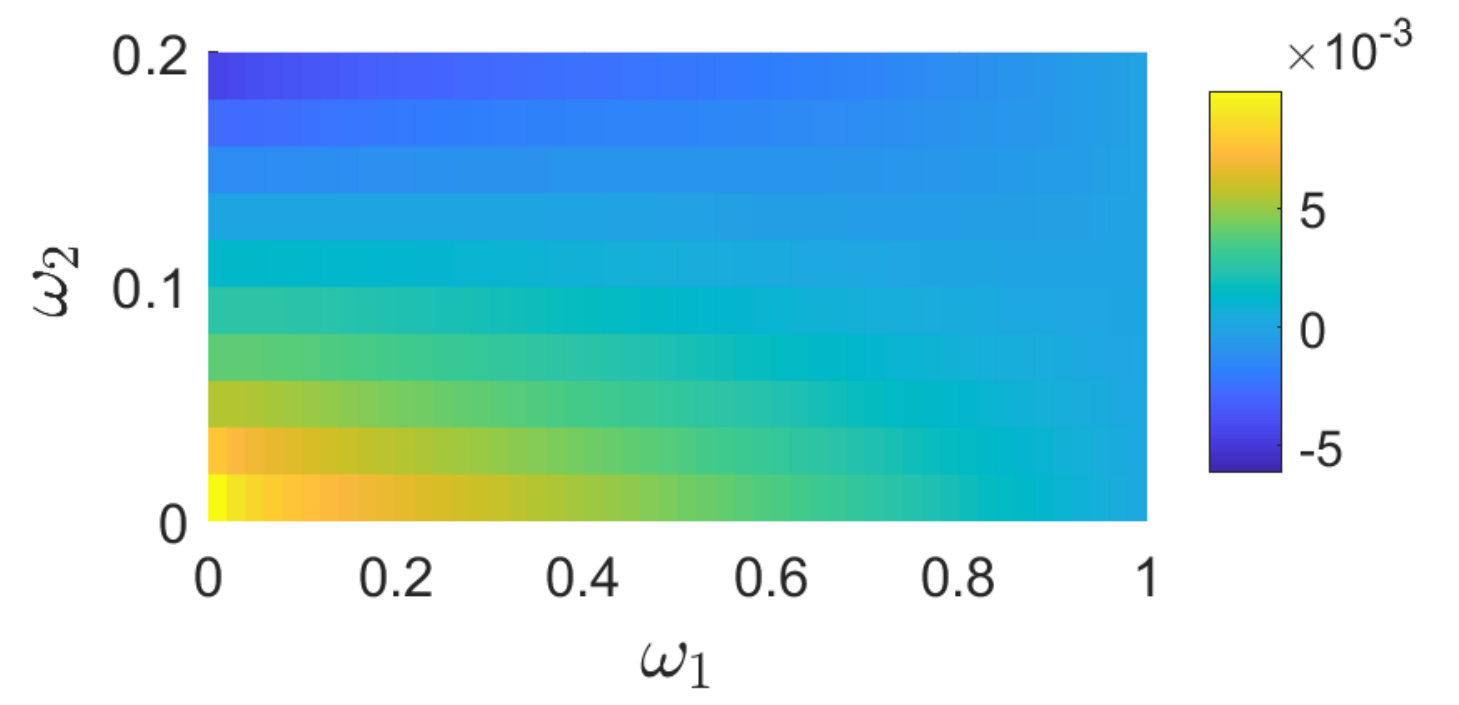}
        \caption{ $(\omega_1,\omega_2)\mapsto \UU_1(\omega_1,\omega_2,\omega_3 ;\theta)$}
        \label{fig:figure3a}
    \end{subfigure}
    \begin{subfigure}[b]{0.30\textwidth}
        \centering
        \includegraphics[width=\textwidth]{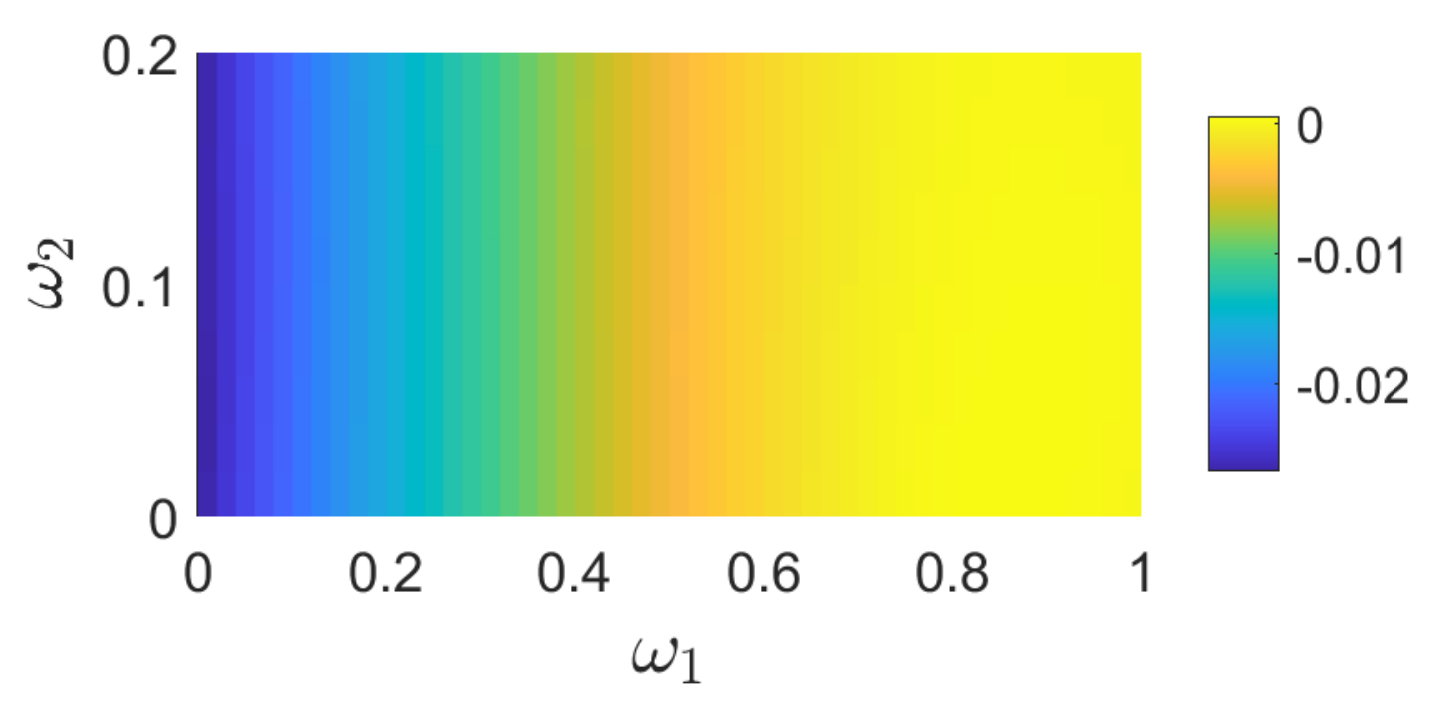}
        \caption{ $(\omega_1,\omega_2)\mapsto \UU_2(\omega_1,\omega_2,\omega_3 ;\theta)$}
        \label{fig:figure3b}
    \end{subfigure}
    \begin{subfigure}[b]{0.30\textwidth}
        \centering
        \includegraphics[width=\textwidth]{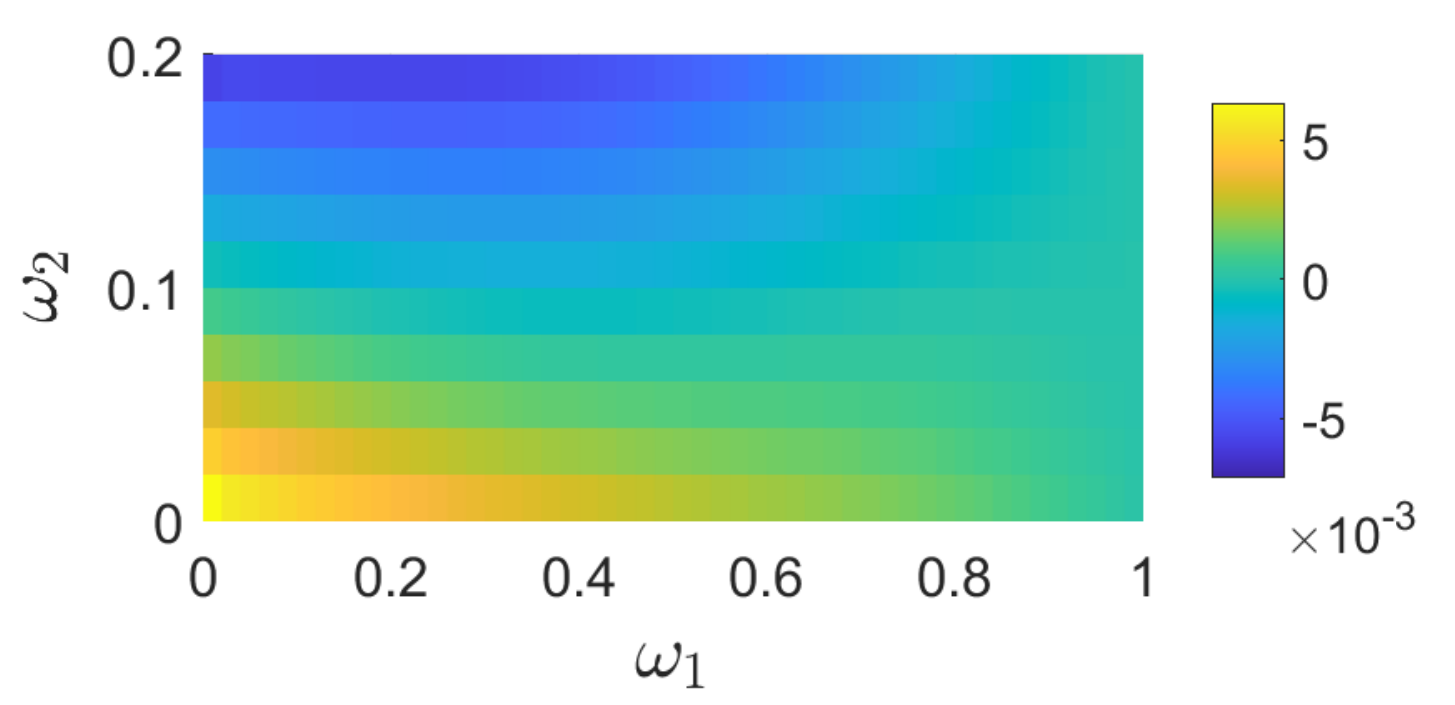}
        \caption{ $(\omega_1,\omega_2)\mapsto \UU_3(\omega_1,\omega_2,\omega_3 ;\theta)$}
        \label{fig:figure3c}
    \end{subfigure}
    \caption{For the training set, example of one realization $\theta\in\Theta$ of the components $1$, $2$, and $3$ of the random field
                $(\omega_1,\omega_2)\mapsto \UU(\omega_1,\omega_2,\omega_3)$ in the plane $\omega_3 = 0.095774$.}
    \label{fig:figure3}
\end{figure}
\subsection{Reduced representation}
\label{sec:Section7.2}
The reduced representation is constructed by using a PCA of $\bfX=(\bfQ,\bfW)$ as explained in Section~\ref{sec:Section2}. With $\varepsilon_\pPCA = 0.0001$, for $N_d=100$, $200$, $300$, and $400$,  we have, respectively, $\nu =99$, $192$, $271$, and $331$.
For $N_d=100$, Fig.~\ref{fig:figure4} displays the graph of the eigenvalues $\alpha\mapsto\kappa_\alpha$ of $[\widehat C_\bfX]$ and the graph of the error function $\nu\mapsto \perr_\pPCA(\nu\, ;N_d)$ defined by Eq.~\eqref{eq:eq2bis0}.
\begin{figure}[!h]
    \centering
    \begin{subfigure}[b]{0.25\textwidth}
    \centering
        \includegraphics[width=\textwidth]{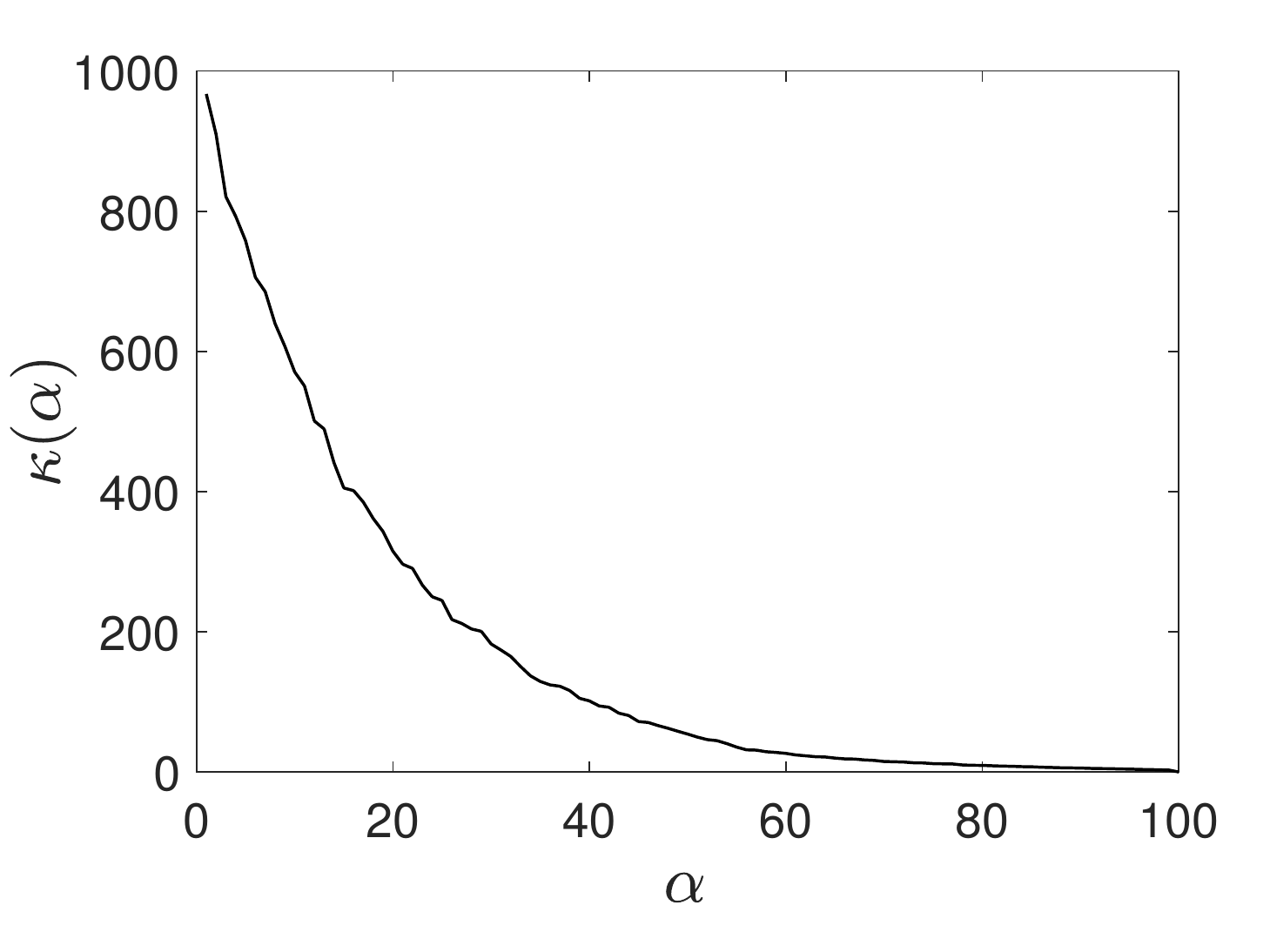}
        \caption{Graph of $\alpha\mapsto\kappa_\alpha$}
        \label{fig:figure4a}
    \end{subfigure}
    \begin{subfigure}[b]{0.25\textwidth}
        \centering
        \includegraphics[width=\textwidth]{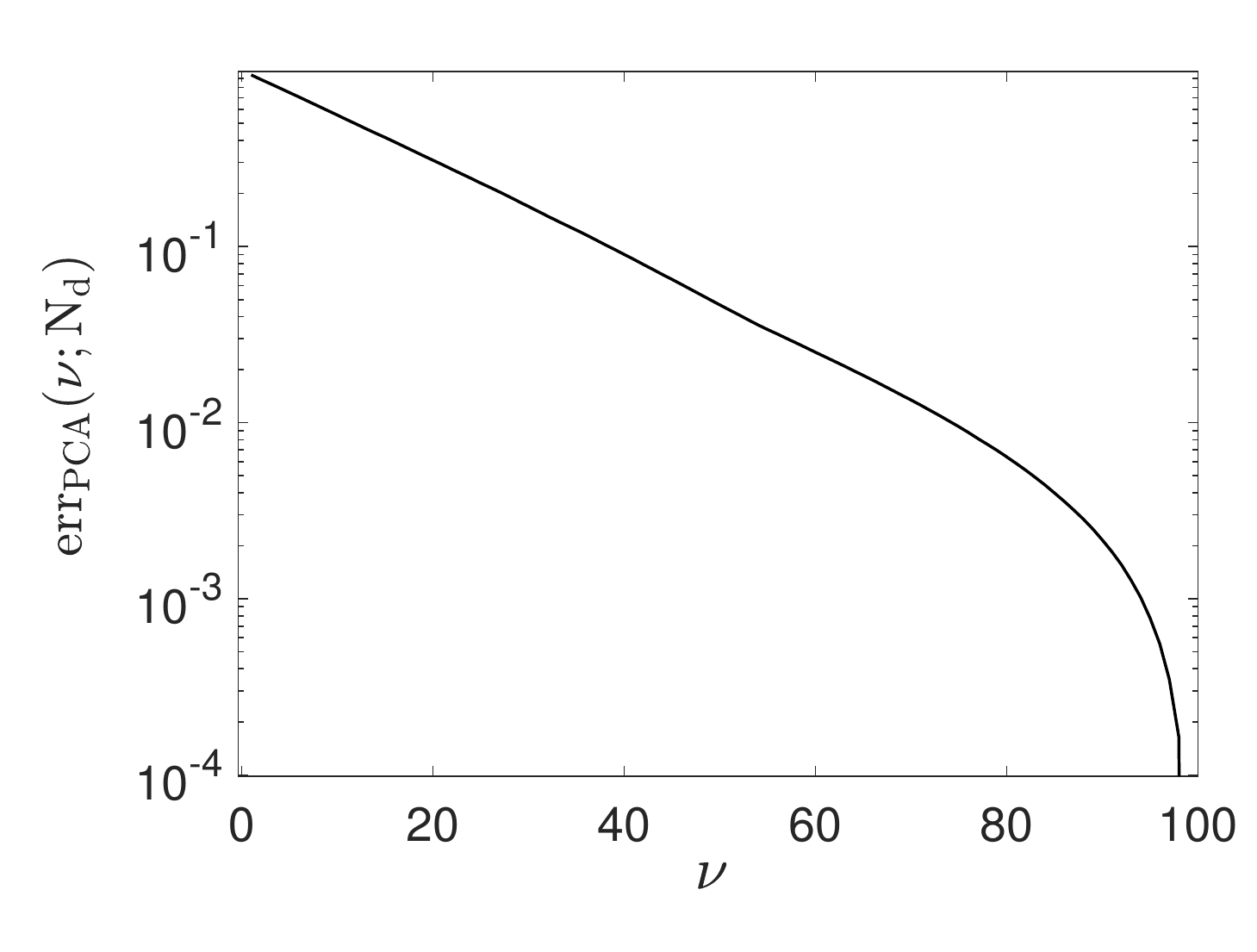}
        \caption{Graph of $\nu\mapsto \perr_\pPCA(\nu\, ;N_d)$}
        \label{fig:figure4b}
    \end{subfigure}
    \caption{Principal component analysis of the training set performed  for $N_d=100$. Graph of the eigenvalues $\alpha\mapsto\kappa_\alpha$ of $[\widehat C_\bfX]$ and error function $\nu\mapsto \perr_\pPCA(\nu\, ;N_d)$ defined by Eq.~\eqref{eq:eq2bis0}.}
    \label{fig:figure4}
\end{figure}
\subsection{Numerical values of the algorithm parameters for computing the constrained learned set.}
\label{sec:Section7.3}
In all the calculations and for any value of $\bflambda$, the number $N$ of the independent realizations $\{\bfeta_\bflambda^1,\ldots ,\bfeta_\bflambda^N\}$ of the constrained learned set $\curD_{\bfH_\bflambda}$ (see Section~\ref{sec:Section6.7}) is fixed to the value $N=1\, 000$ (this value has been obtained from a convergence analysis with respect to $N$).
The parameters of the St\"ormer-Verlet scheme are $M_s = 30$, $\Delta t = 0.2188$, and $f_0=4$. Due to the choice $f_0=4$, the stationary regime of the ISDE is obtained from instant $21\times \Delta t$ and the realizations is extracted at  $30\times \Delta t$.
\subsection{Iterative algorithm for computing $\bflambda^\psol$.}
\label{sec:Section7.4}
For computing the solution $\bflambda^\psol$ of the Lagrange multiplier, Algorithm~\ref{algorithm:1} is used.
For $N_d=100$ and $N_r=20$, Fig.~\ref{fig:figure5} displays the graph of the relaxation factor $i\mapsto \alpha_\prelax(i)$ and the graph of the error function $i\mapsto \error(i)$ defined by Eq.~\eqref{eq:eq137}. It can be seen a fast convergence of the algorithm as a function of the iteration number.
Fig.~\ref{fig:figure6} shows the graph of function  $r\mapsto b^c_r$ defined by Eq.~\eqref{eq:eq75}, representing the components of vector $\bfb^c =(b_1^c,\ldots , b_{N_r}^c)$, and the graph of function $r\mapsto \lambda^\psol_r$, representing the components of vector $\bflambda^\psol = (\lambda_1^\psol,\ldots , \lambda_{N_r}^\psol)$ defined by Eq.~\eqref{eq:eq136}.
\begin{figure}[!h]
    \centering
    \begin{subfigure}[b]{0.25\textwidth}
    \centering
        \includegraphics[width=\textwidth]{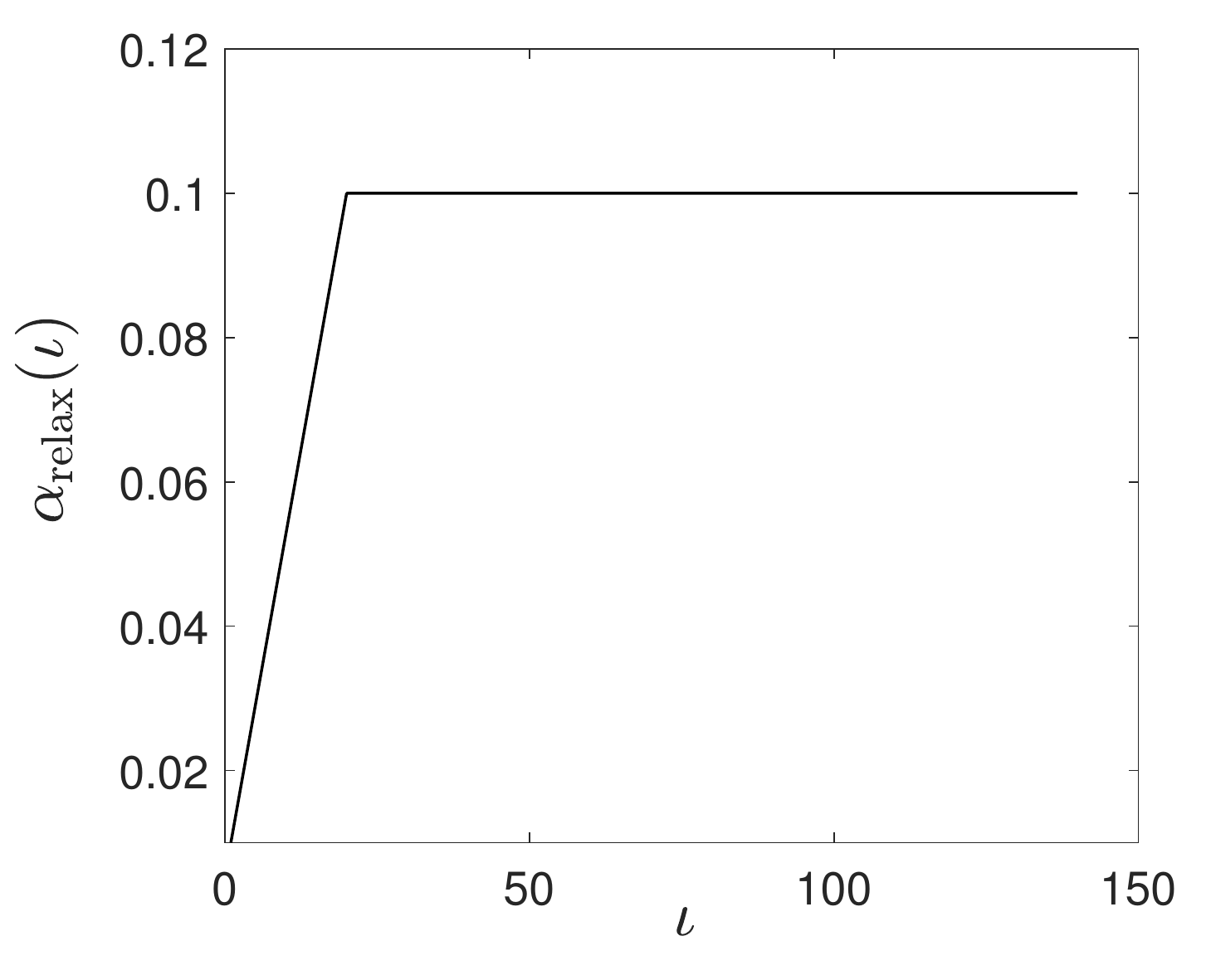}
        \caption{Graph of $i\mapsto \alpha_\prelax(i)$}
        \label{fig:figure5a}
    \end{subfigure}
    \begin{subfigure}[b]{0.25\textwidth}
        \centering
        \includegraphics[width=\textwidth]{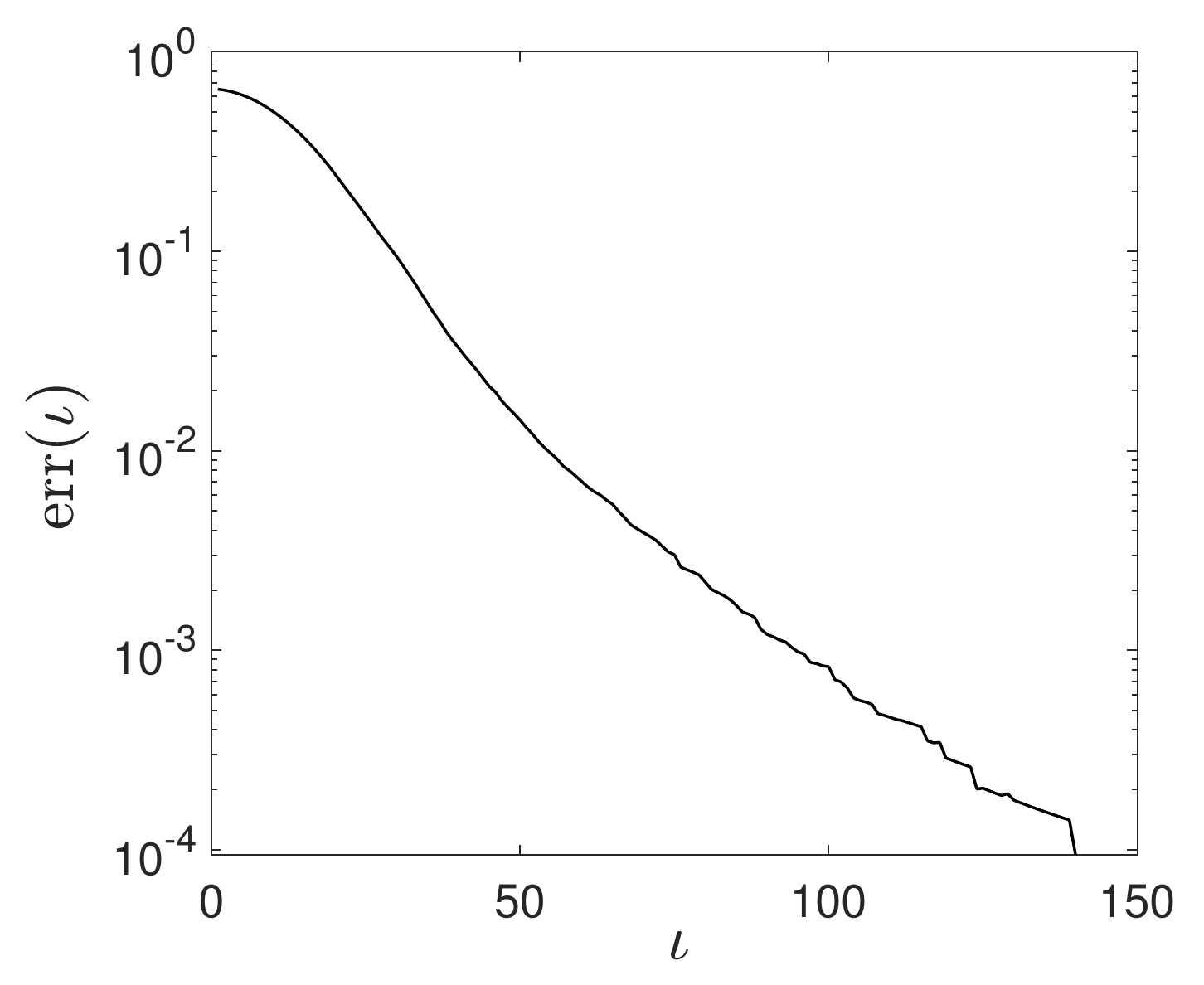}
        \caption{Graph of $i\mapsto \error(i)$}
        \label{fig:figure5b}
    \end{subfigure}
    \caption{Iterative algorithm for calculating $\bflambda^\psol$ with the constrained learned set for $N_d=100$ and $N_r=20$. Graph of the relaxation factor $i\mapsto \alpha_\prelax(i)$ defined in Algorithm~\ref{algorithm:1} and graph of the error function $i\mapsto \error(i)$ defined by Eq.~\eqref{eq:eq137}.}
    \label{fig:figure5}
\end{figure}
\begin{figure}[!h]
    \centering
    \begin{subfigure}[b]{0.25\textwidth}
    \centering
        \includegraphics[width=\textwidth]{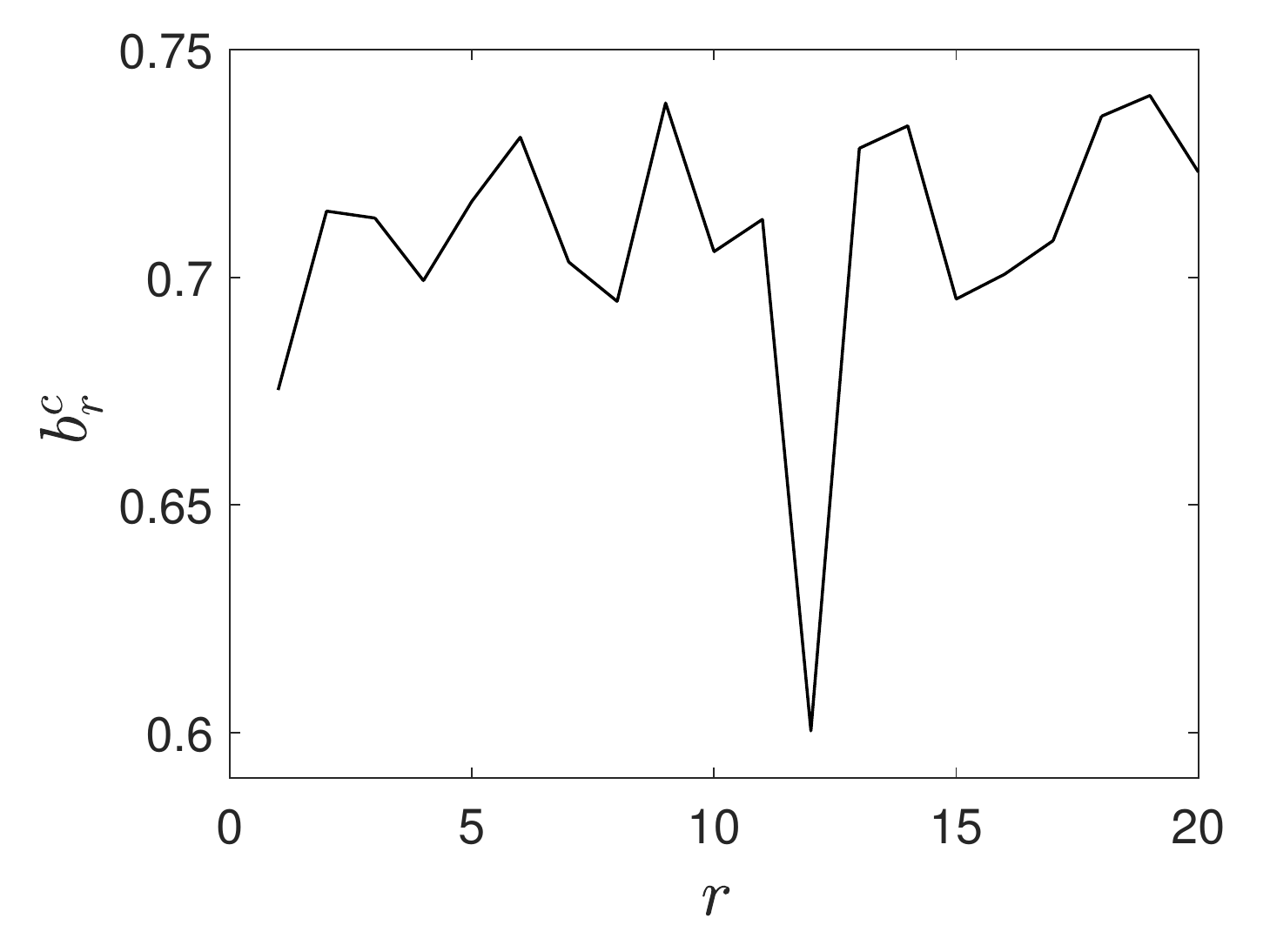}
        \caption{Graph of $r\mapsto b^c_r$}
        \label{fig:figure6a}
    \end{subfigure}
    \begin{subfigure}[b]{0.25\textwidth}
        \centering
        \includegraphics[width=\textwidth]{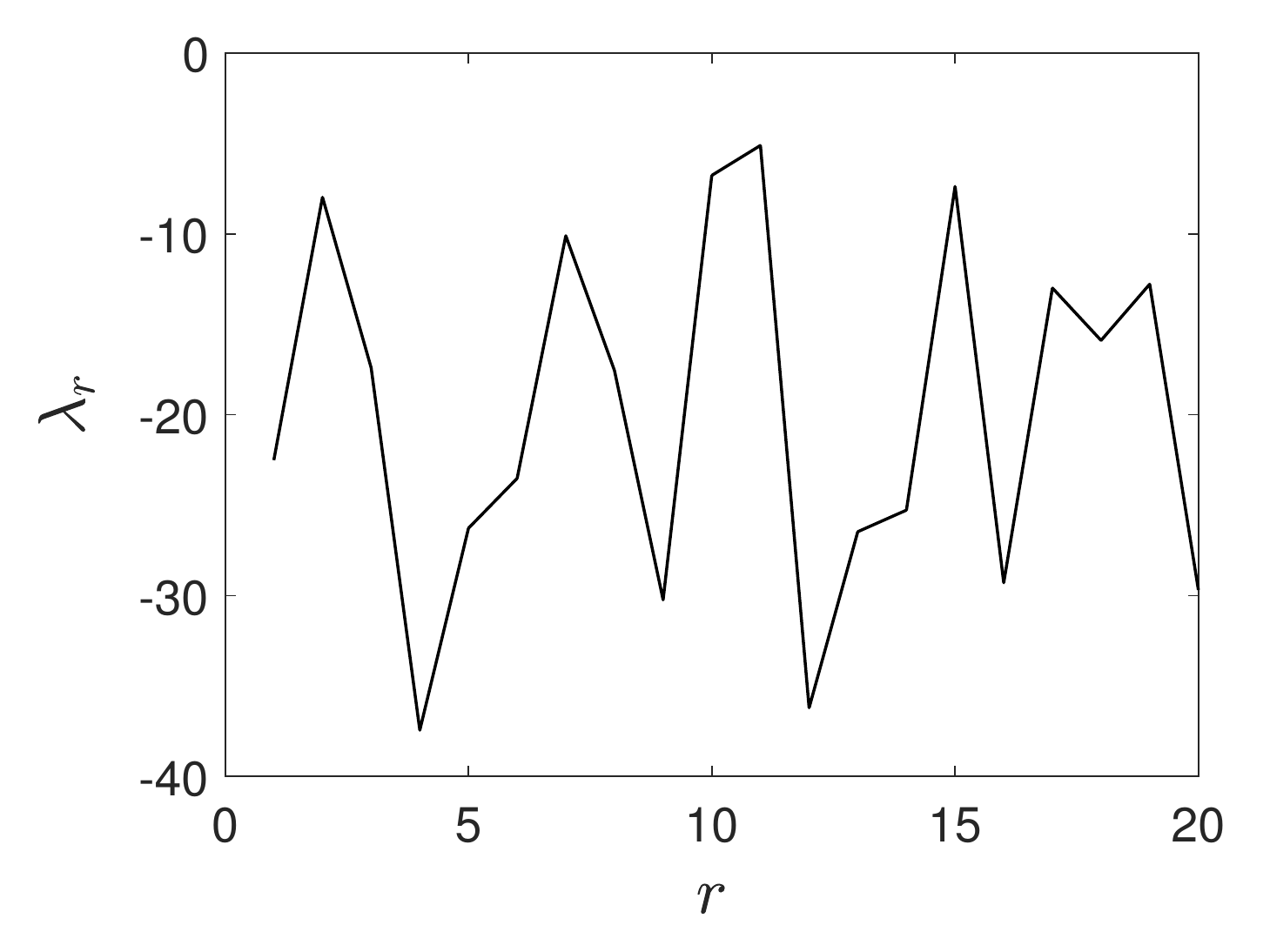}
        \caption{Graph of $r\mapsto \lambda^\psol_r$}
        \label{fig:figure6b}
    \end{subfigure}
    \caption{Constraint and optimal Lagrange multiplier estimated with the constrained learned set for $N_d=100$ and $N_r=20$. Graph of function $r\mapsto b^c_r$ defined by Eq.~\eqref{eq:eq75} and graph of function $r\mapsto \lambda^\psol_r$ with $\bflambda^\psol$ defined by Eq.~\eqref{eq:eq136}.}
    \label{fig:figure6}
\end{figure}
\subsection{Posterior probability measure of $\bfQ_\ppost$ estimated with the constrained learned set}
\label{sec:Section7.5}
For $N_d=100$, the convergence with respect to $N_r$ of the posterior probability measure of $\bfQ_\ppost(N_r)$ estimated with the constrained learned set has been analyzed by studying, for $k=1,2,3$, the  mean-square norm $|||Q_{\pobs,k}(N_r)||| = \{ E\{ Q_{\pobs,k}(N_r)^2\}\}^{1/2}$ of random component $Q_{\pobs,k}(N_r)$ of $\bfQ_\ppost(N_r)$ (which depends on $N_r$).
For $k=1,2,3$, Fig.~\ref{fig:figure7} shows the graph of function $N_r\mapsto |||Q_{\pobs,k}(N_r)|||$ as well as  the corresponding value for the training set and for the reference, which are both independent of $N_r$. This figure shows the good convergence with respect to $N_r$, which is reached for $N_r=20$.
Always for $N_d=100$, Fig.~\ref{fig:figure8} (a), (b), and (c) related to $N_r=20$, and  Fig.~\ref{fig:figure8} (d), (e), and (f) related to $N_r=100$, display the probability density functions of the random variables $Q_{\pobs,1}$, $Q_{\pobs,2}$, and $Q_{\pobs,3}$, estimated with the training set, with the constrained learned set (the posterior), and the reference.
Figs.~\ref{fig:figure7} and  \ref{fig:figure8} show that the posteriors are close to the targets (this good result holds for all the components of $\bfQ$). For this supervised framework, it can be seen that the proposed method performs very well for the quantity of interest $\bfQ$ for which a target has been given, which means that there would also be very good behavior of the method if used in an unsupervised setting.
\begin{figure}[!h]
    \centering
    \begin{subfigure}[b]{0.30\textwidth}
    \centering
        \includegraphics[width=\textwidth]{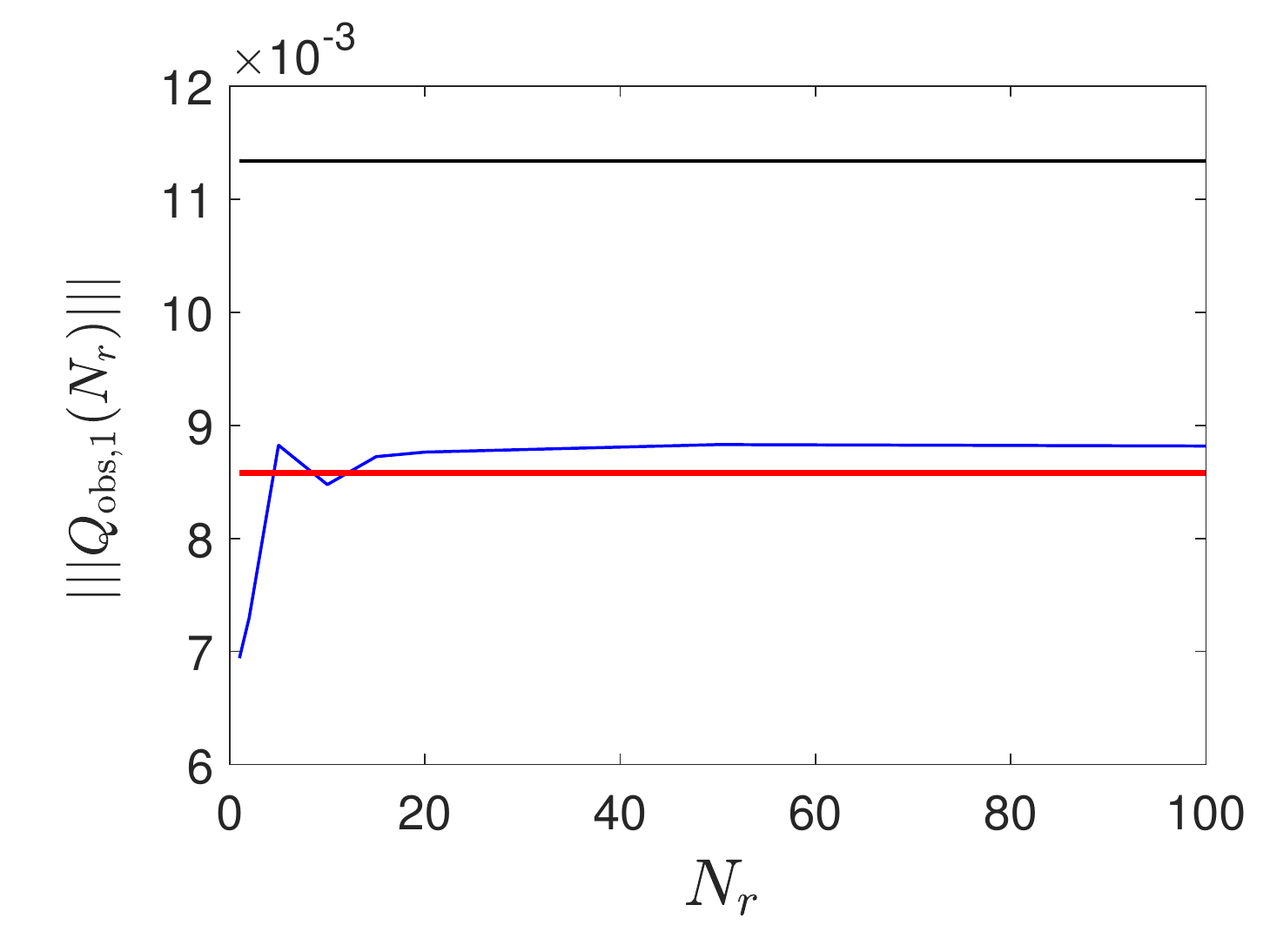}
        \caption{$N_r\mapsto |||Q_{\pobs,1}(N_r)|||$}
        \label{fig:figure7a}
    \end{subfigure}
    \begin{subfigure}[b]{0.30\textwidth}
        \centering
        \includegraphics[width=\textwidth]{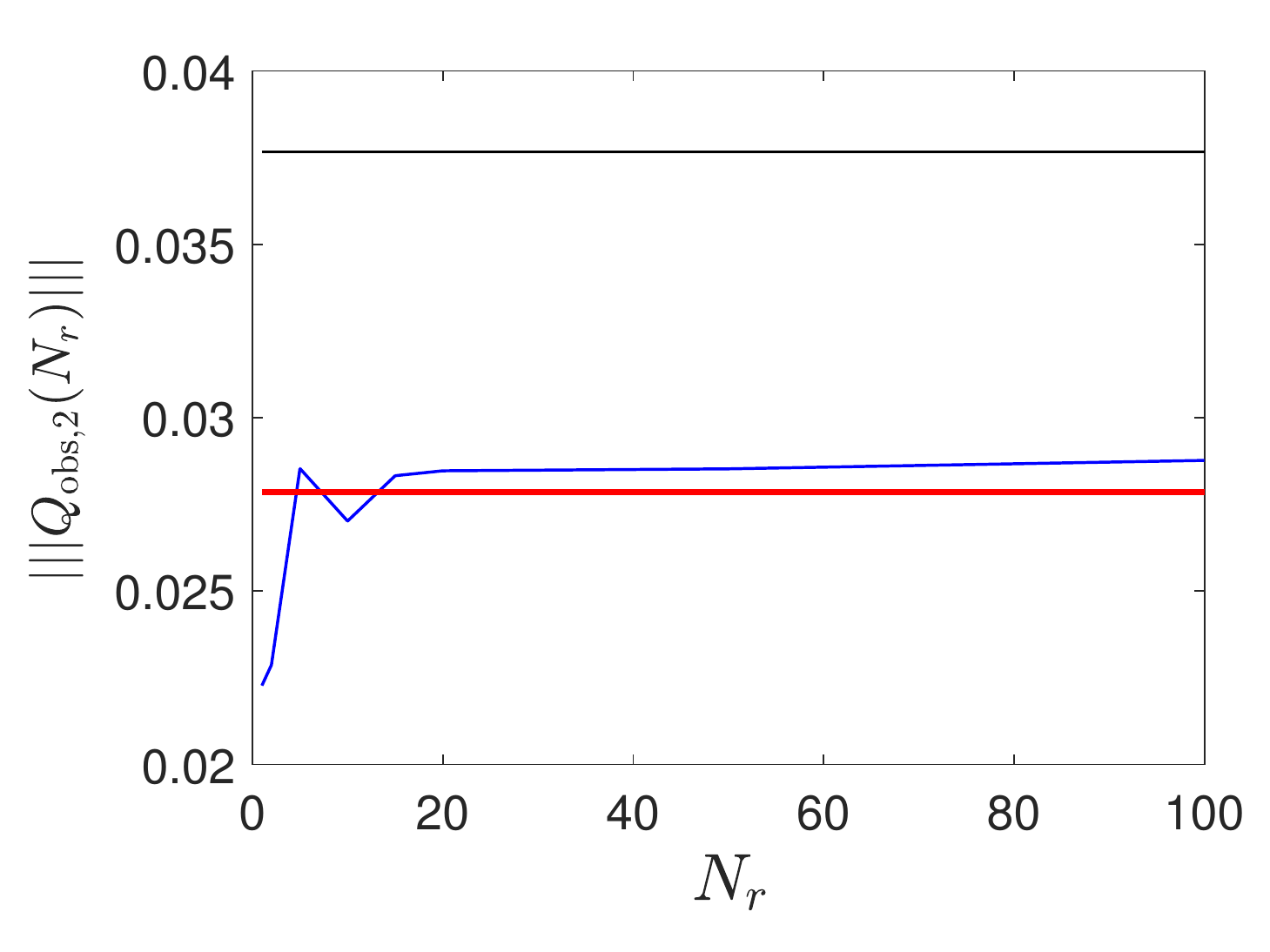}
        \caption{$N_r\mapsto |||Q_{\pobs,2}(N_r)|||$}
        \label{fig:figure7b}
    \end{subfigure}
    \begin{subfigure}[b]{0.30\textwidth}
        \centering
        \includegraphics[width=\textwidth]{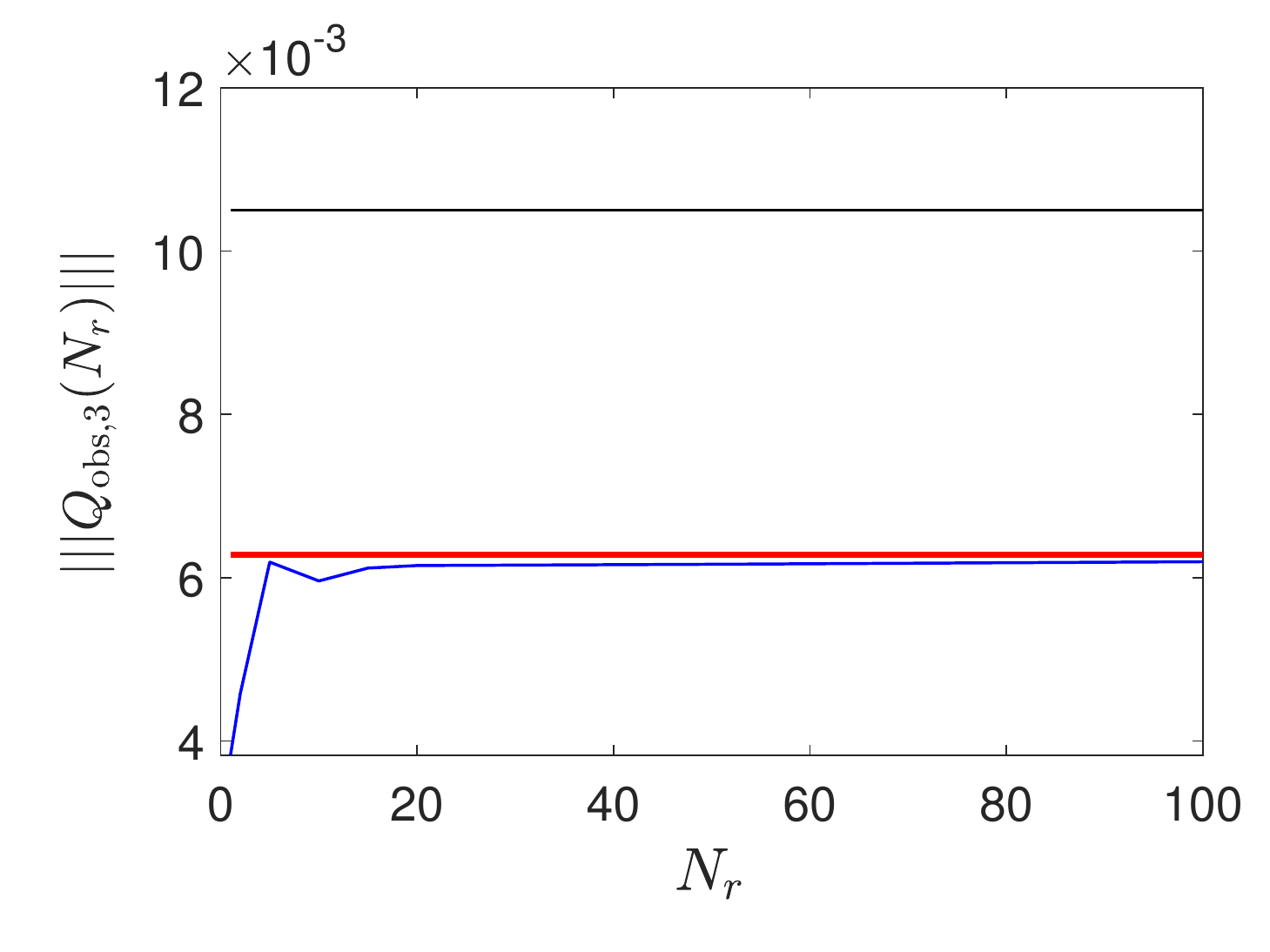}
        \caption{$N_r\mapsto |||Q_{\pobs,3}(N_r)|||$}
        \label{fig:figure7c}
    \end{subfigure}
    \caption{Convergence of the mean-square norm with respect to $N_r$ for $N_d=100$. Graph of $N_r\mapsto |||Q_{\pobs,1}(N_r)|||$, $|||Q_{\pobs,2}(N_r)|||$, and $|||Q_{\pobs,3}(N_r)|||$ estimated with the training set (top black line), with the constrained learned set (the posterior) (blue line), and the reference (red thick line).}
    \label{fig:figure7}
\end{figure}
\begin{figure}[!h]
    \centering
    \begin{subfigure}[b]{0.25\textwidth}
    \centering
        \includegraphics[width=\textwidth]{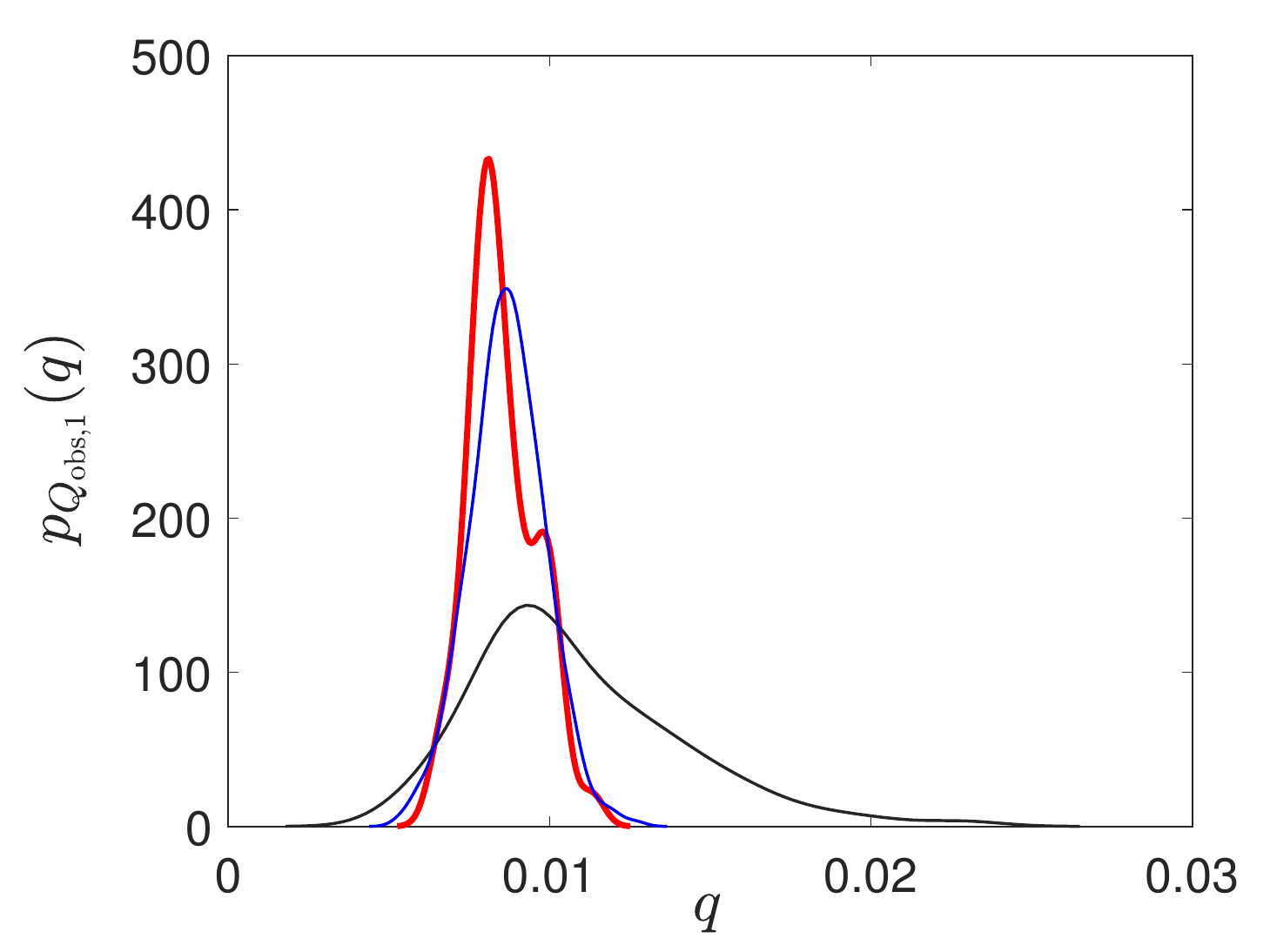}
        \caption{pdf $q\mapsto p_{Q_{\ppobs,1}}(q)$ for $N_r=20$}
        \label{fig:figure8a}
    \end{subfigure}
    \begin{subfigure}[b]{0.25\textwidth}
        \centering
        \includegraphics[width=\textwidth]{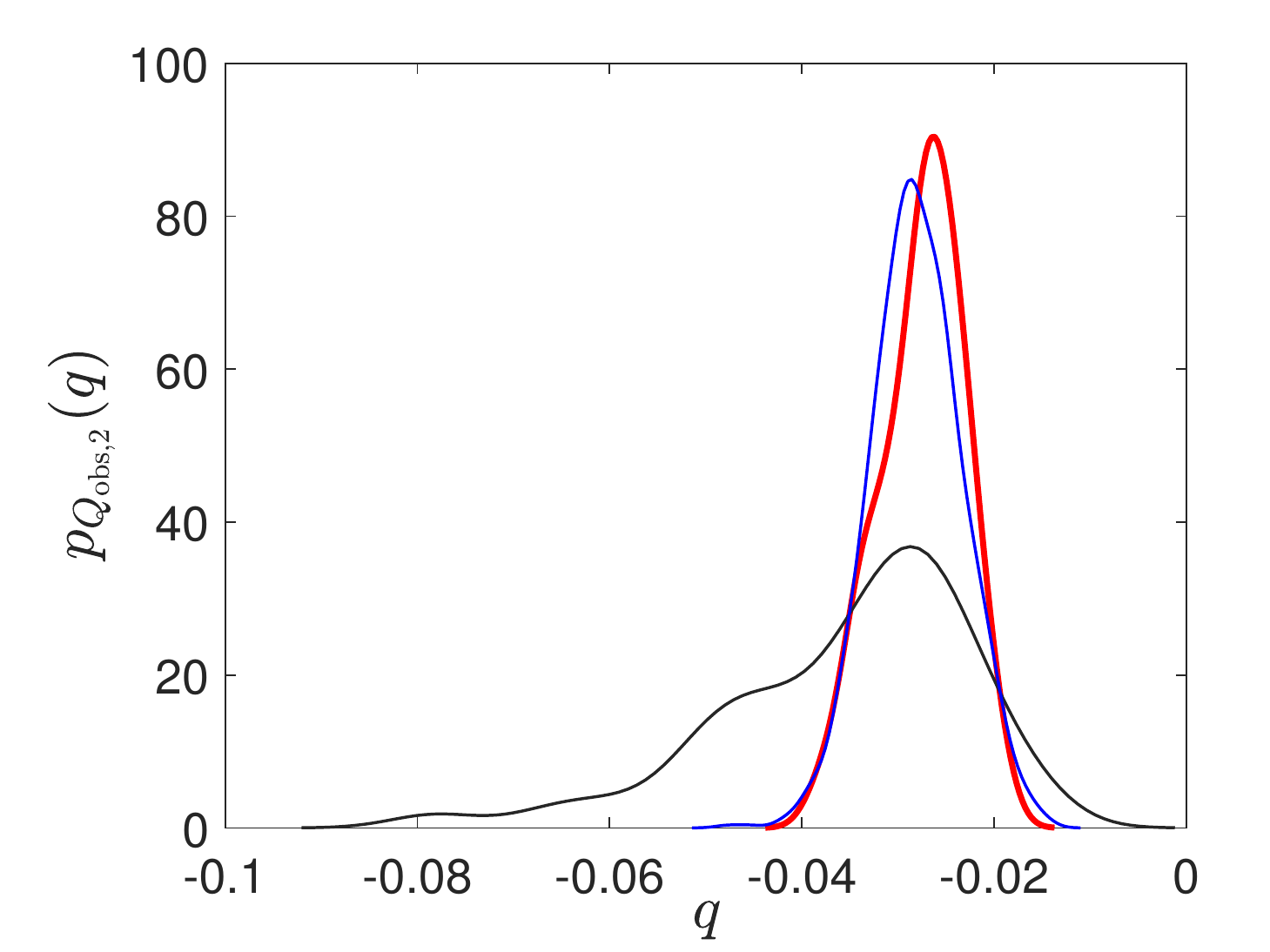}
        \caption{pdf $q\mapsto p_{Q_{\ppobs,2}}(q)$ for $N_r=20$}
        \label{fig:figure8b}
    \end{subfigure}
    \begin{subfigure}[b]{0.25\textwidth}
        \centering
        \includegraphics[width=\textwidth]{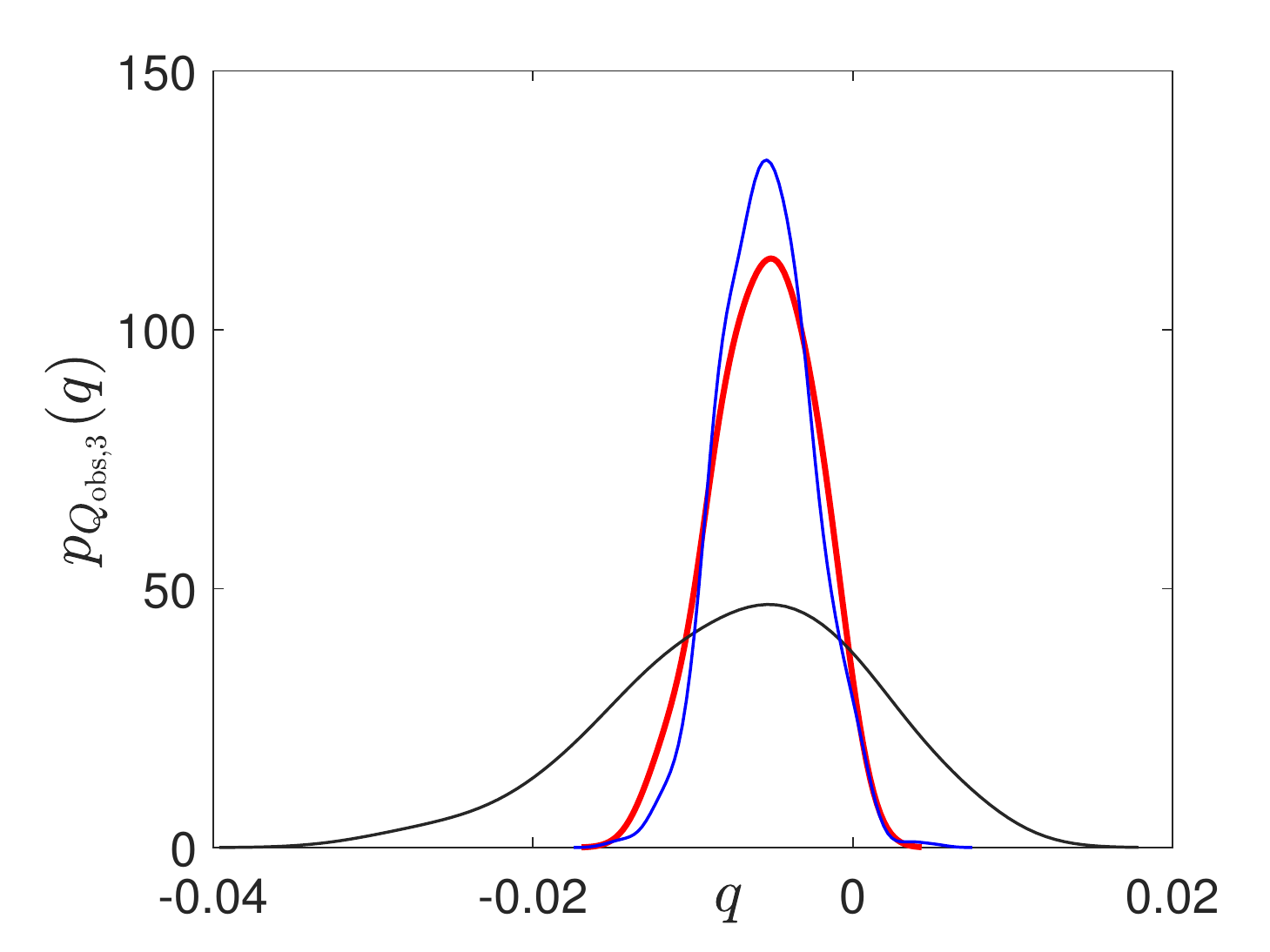}
        \caption{pdf $q\mapsto p_{Q_{\ppobs,3}}(q)$ for $N_r=20$}
        \label{fig:figure8c}
    \end{subfigure}
    \centering
    \begin{subfigure}[b]{0.25\textwidth}
    \centering
        \includegraphics[width=\textwidth]{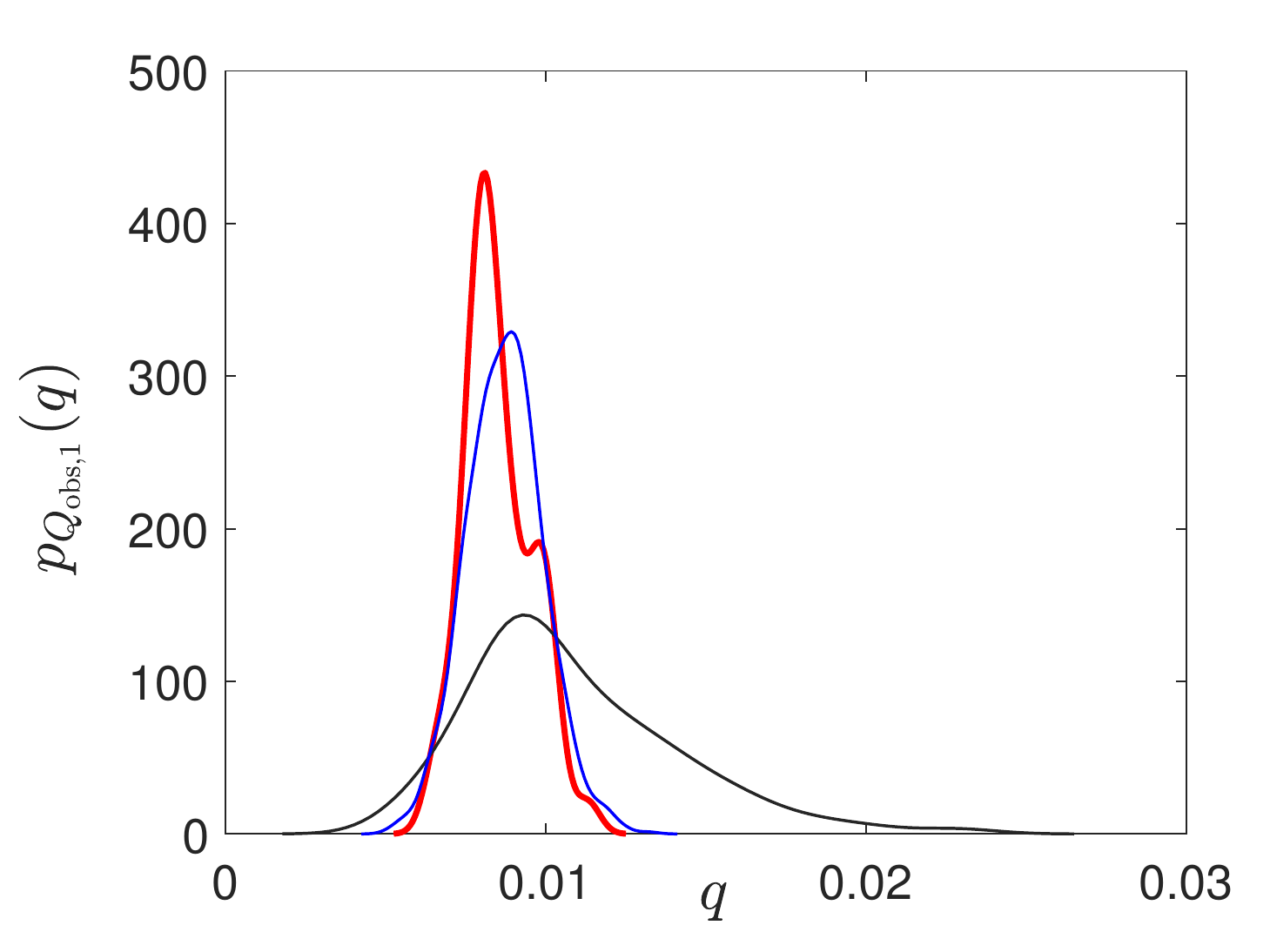}
        \caption{pdf $q\mapsto p_{Q_{\ppobs,1}}(q)$ for $N_r=100$}
        \label{fig:figure8d}
    \end{subfigure}
    \begin{subfigure}[b]{0.25\textwidth}
        \centering
        \includegraphics[width=\textwidth]{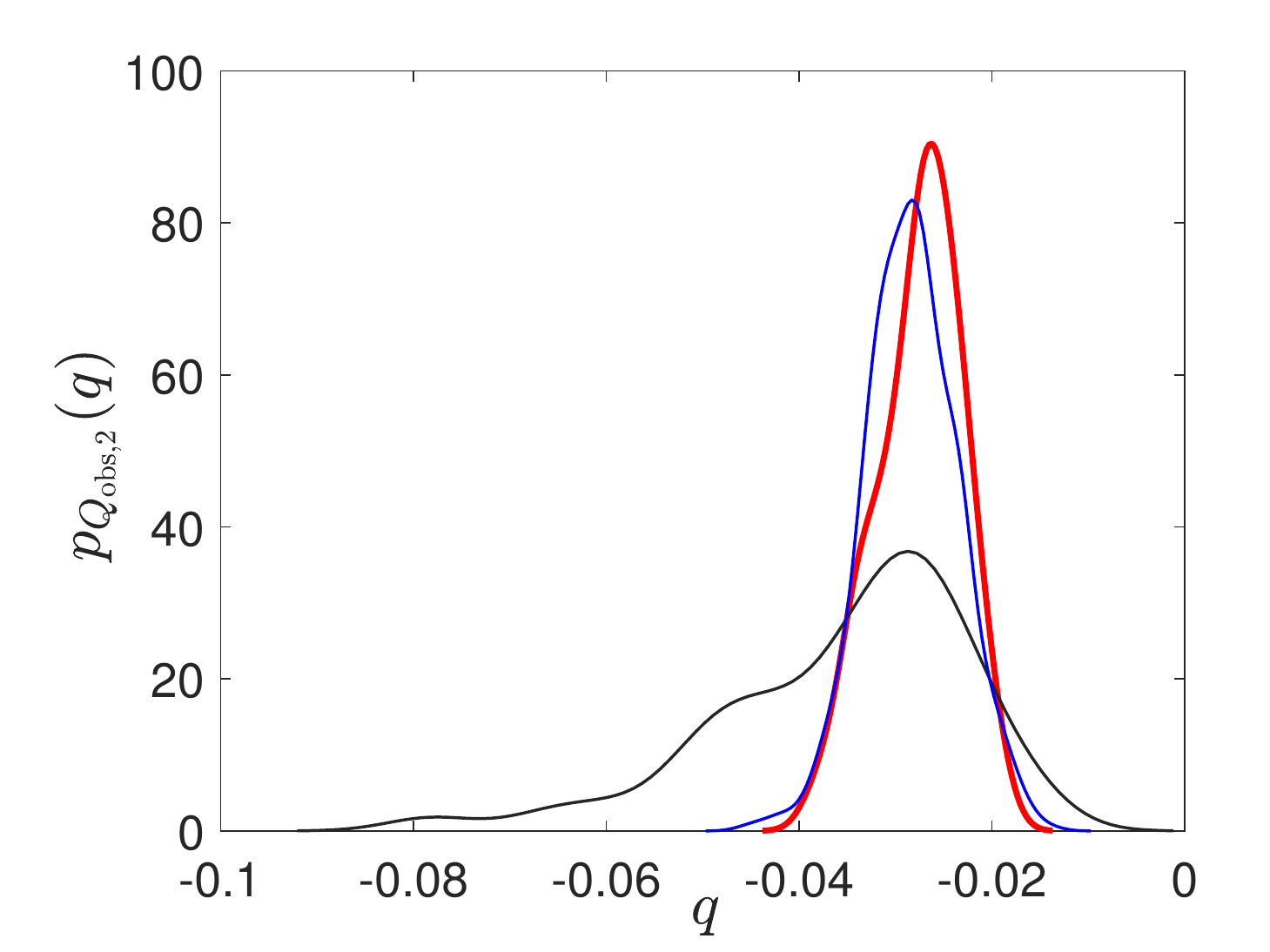}
        \caption{pdf $q\mapsto p_{Q_{\ppobs,2}}(q)$ for $N_r=100$}
        \label{fig:figure8e}
    \end{subfigure}
    \begin{subfigure}[b]{0.25\textwidth}
        \centering
        \includegraphics[width=\textwidth]{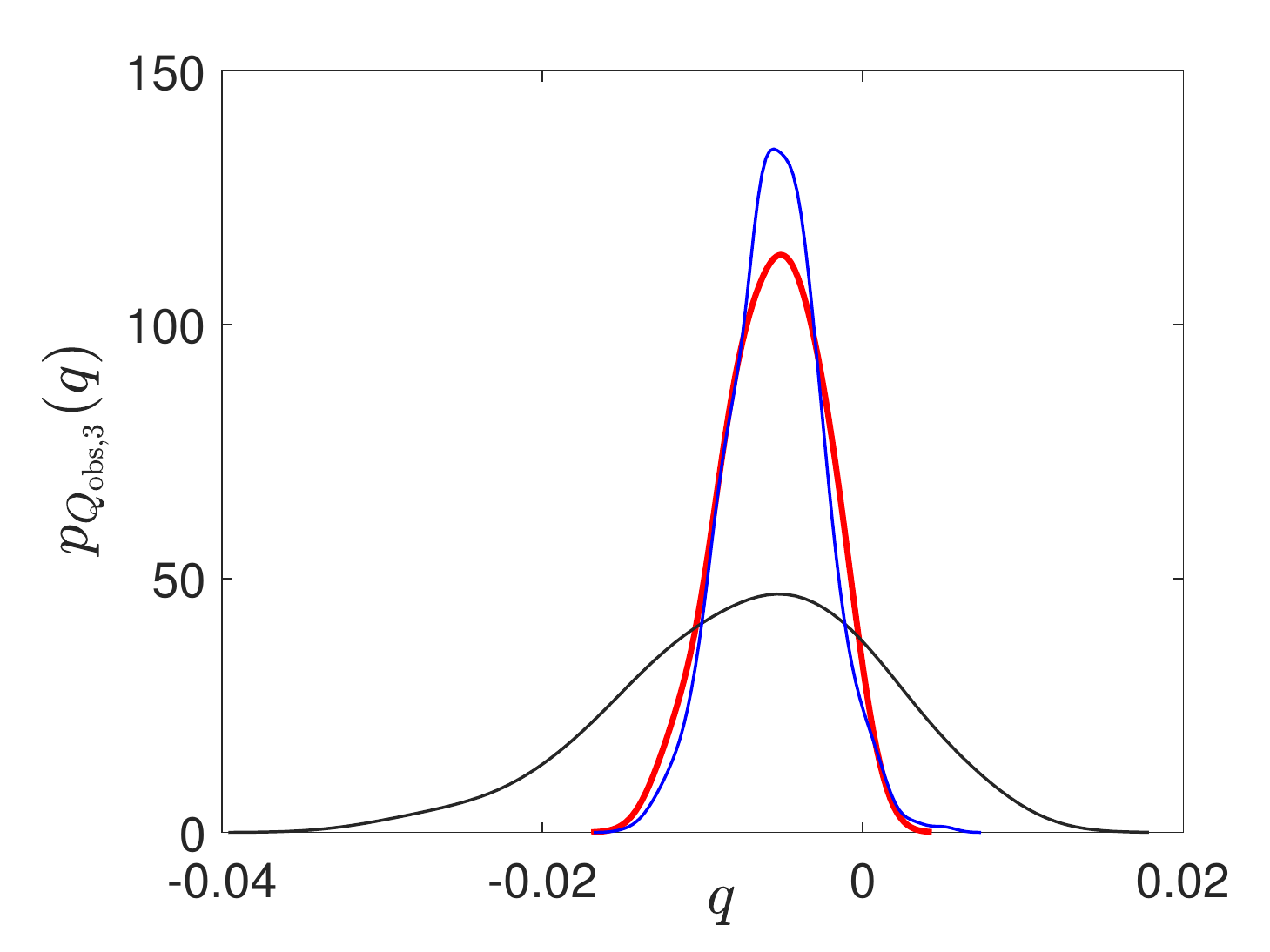}
        \caption{pdf $q\mapsto p_{Q_{\ppobs,3}}(q)$ for $N_r=100$}
        \label{fig:figure8f}
    \end{subfigure}
    \caption{For $N_d=100$, for $N_r=20$ (Figs. (a), (b), and (c)) and for $N_r=100$ (Figs. (d), (e), and (f)), probability density functions of the random variables $Q_{\pobs,1}$, $Q_{\pobs,2}$, and $Q_{\pobs,3}$, estimated with the training set (black line), with the constrained learned set (the posterior) (blue line), and the reference (red thick line).}
    \label{fig:figure8}
\end{figure}
\subsection{Posterior probability measure of $\bfW_\ppost$ estimated with the constrained learned set}
\label{sec:Section7.6}
Fig.~\ref{fig:figure9} is related to the standard deviation fields $(\omega_1,\omega_2)\mapsto \sigma_{G11}(\omega_1,\omega_2,\omega_3)$, $\sigma_{G12}(\omega_1,\omega_2,\omega_3)$, and $\sigma_{G44}(\omega_1,\omega_2,\omega_3)$ in the plane $\omega_3 = 0.095774$ of the components $(1,1)$, $(1,2)$, and $(4,4)$ of the random field  $(\omega_1,\omega_2)\mapsto [\bfG(\omega_1,\omega_2,\omega_3)]$ for the training set with $N_d=100$, for the transformation of the posterior $\bfW_\ppost$ (see \ref{appendix:A.2}) computed with the constrained learned set for which $N_d=100$ and $N_r=20$, and finally, for the reference.
\begin{figure}[!h]
    \centering
    \begin{subfigure}[b]{0.29\textwidth}
    \centering
        \includegraphics[width=\textwidth]{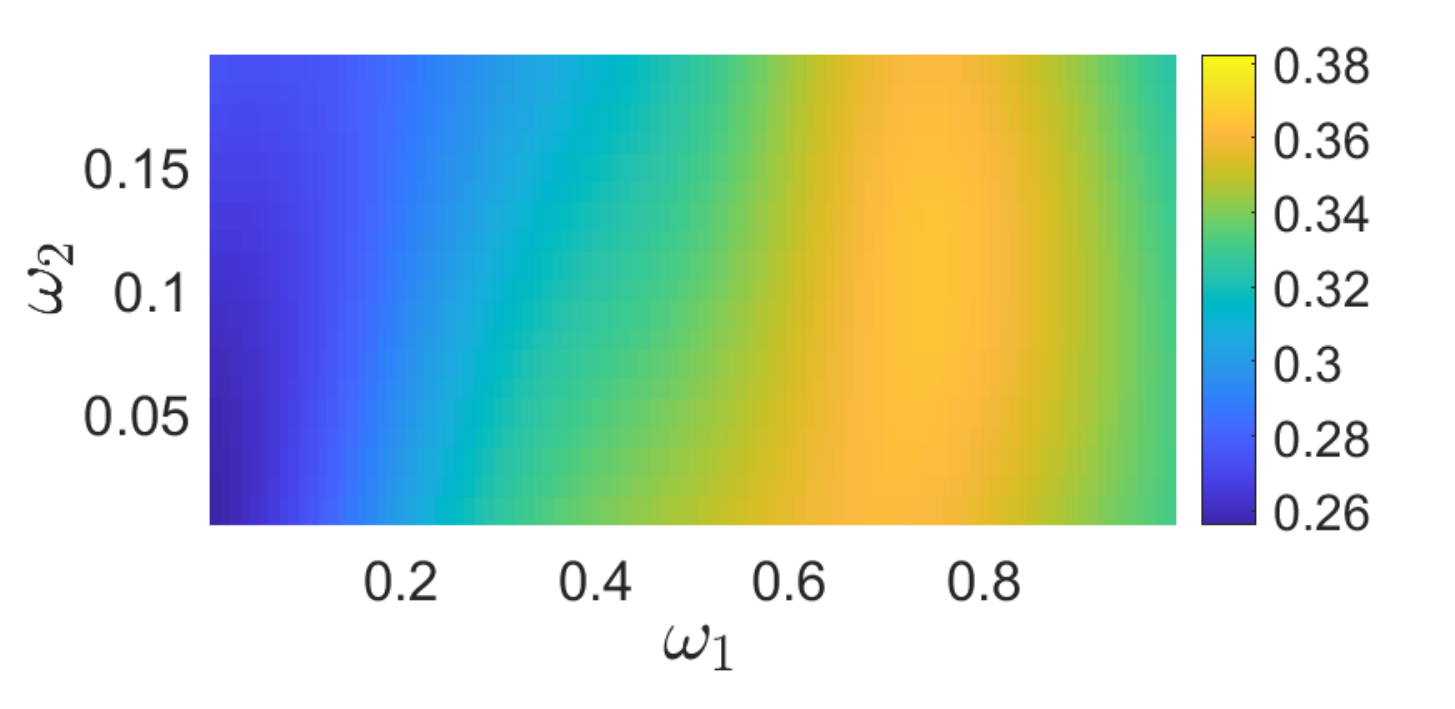}
        \caption{Training set, $(\omega_1,\omega_2)\mapsto \sigma_{G11}(\omega_1,\omega_2,\omega_3)$}
        \label{fig:figure9a}
    \end{subfigure}
    \begin{subfigure}[b]{0.29\textwidth}
        \centering
        \includegraphics[width=\textwidth]{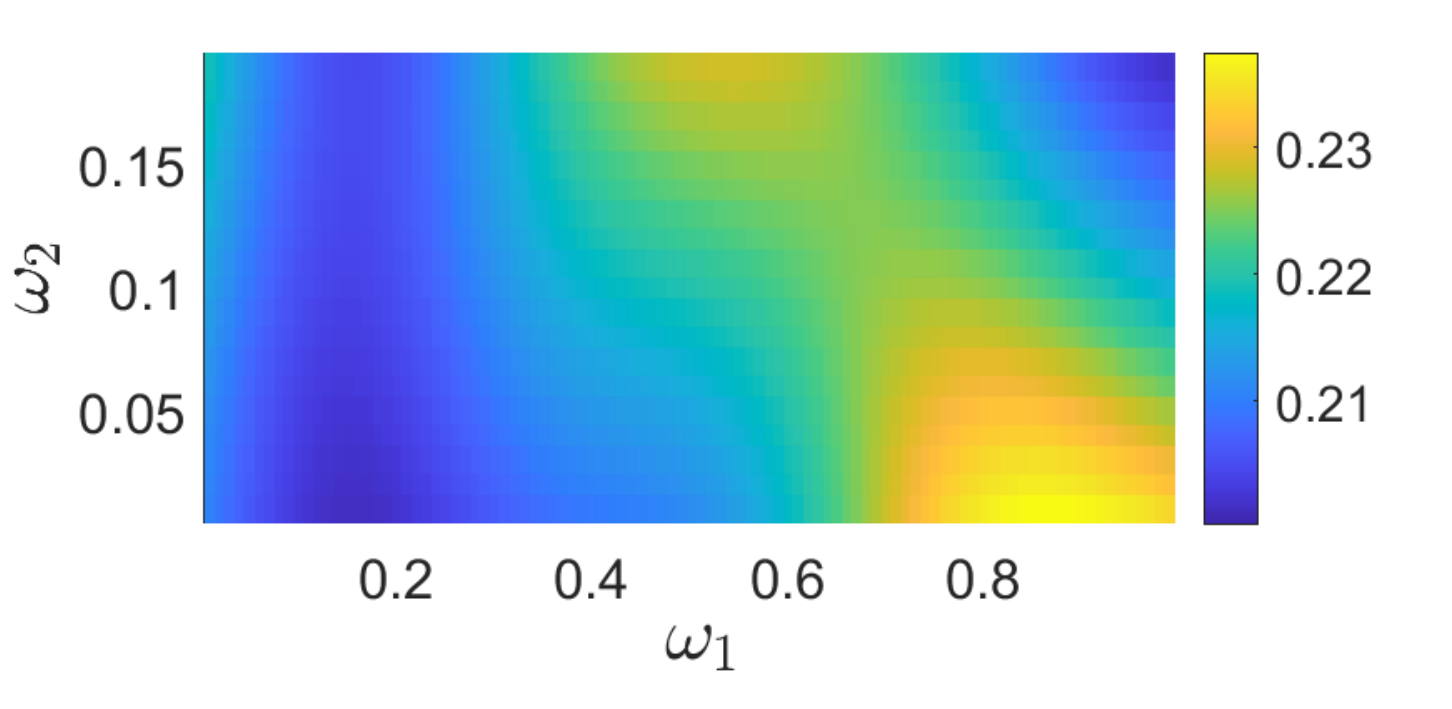}
        \caption{Training set, $(\omega_1,\omega_2)\mapsto \sigma_{G12}(\omega_1,\omega_2,\omega_3)$}
        \label{fig:figure9b}
    \end{subfigure}
    \begin{subfigure}[b]{0.29\textwidth}
        \centering
        \includegraphics[width=\textwidth]{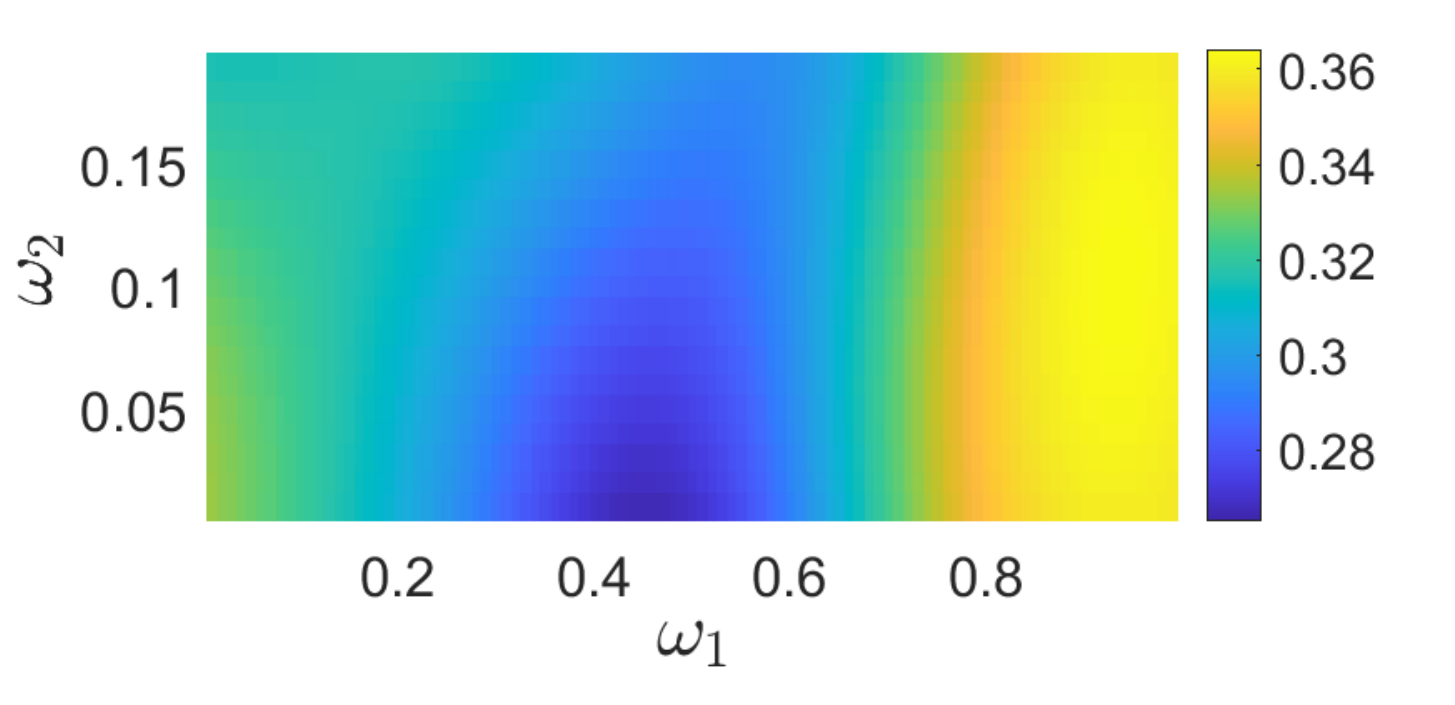}
        \caption{Training set, $(\omega_1,\omega_2)\mapsto \sigma_{G44}(\omega_1,\omega_2,\omega_3)$}
        \label{fig:figure9c}
    \end{subfigure}
    \begin{subfigure}[b]{0.29\textwidth}
    \centering
        \includegraphics[width=\textwidth]{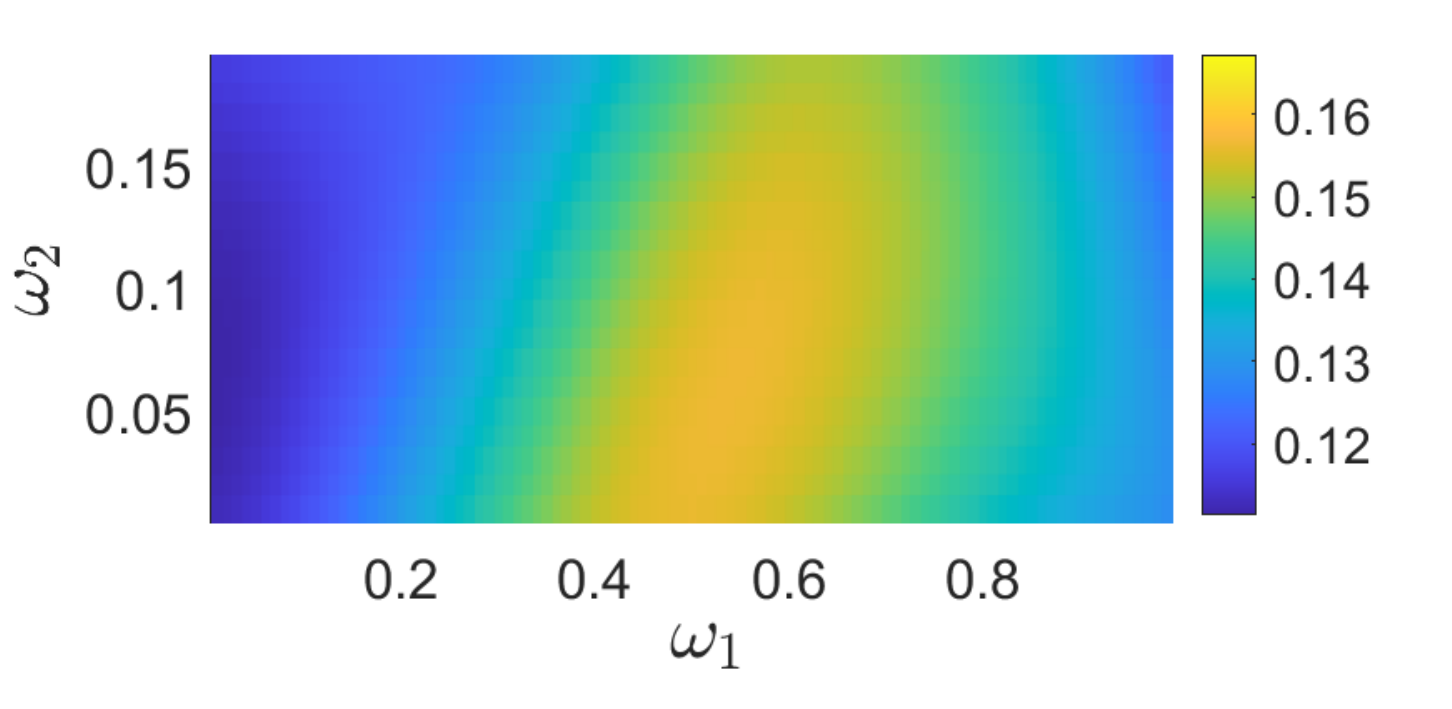}
        \caption{Posterior, $(\omega_1,\omega_2)\mapsto \sigma_{G11}(\omega_1,\omega_2,\omega_3)$}
        \label{fig:figure9d}
    \end{subfigure}
    \begin{subfigure}[b]{0.29\textwidth}
        \centering
        \includegraphics[width=\textwidth]{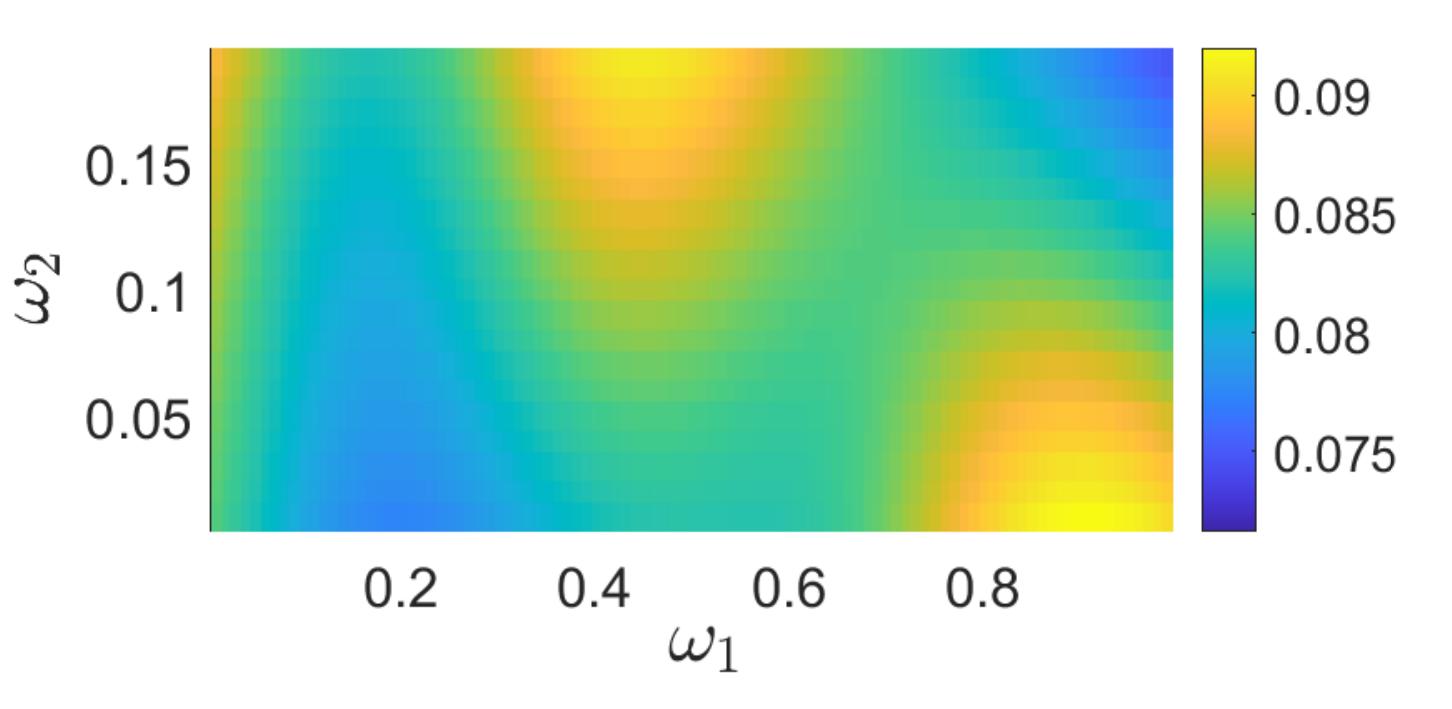}
        \caption{Posterior, $(\omega_1,\omega_2)\mapsto \sigma_{G12}(\omega_1,\omega_2,\omega_3)$}
        \label{fig:figure9e}
    \end{subfigure}
    \begin{subfigure}[b]{0.29\textwidth}
        \centering
        \includegraphics[width=\textwidth]{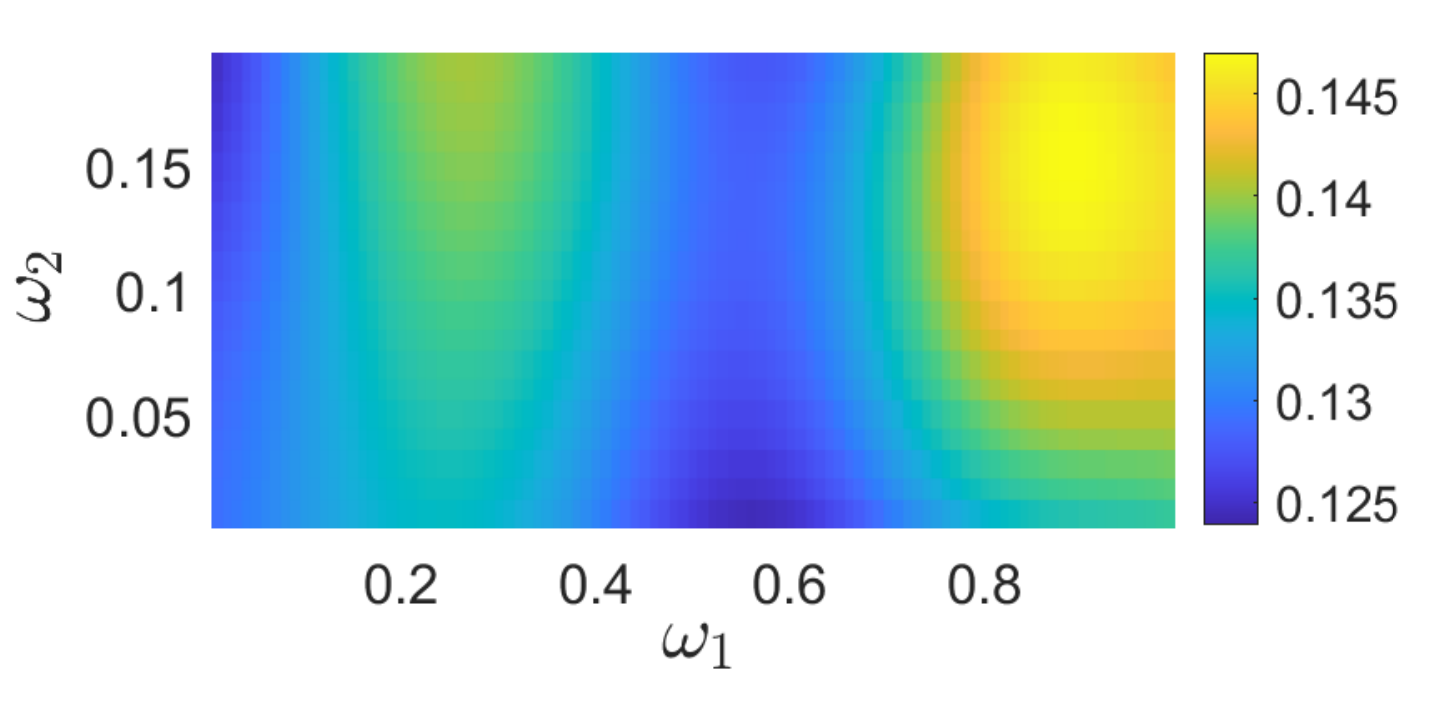}
        \caption{Posterior, $(\omega_1,\omega_2)\mapsto \sigma_{G44}(\omega_1,\omega_2,\omega_3)$}
        \label{fig:figure9f}
    \end{subfigure}
    \begin{subfigure}[b]{0.29\textwidth}
    \centering
        \includegraphics[width=\textwidth]{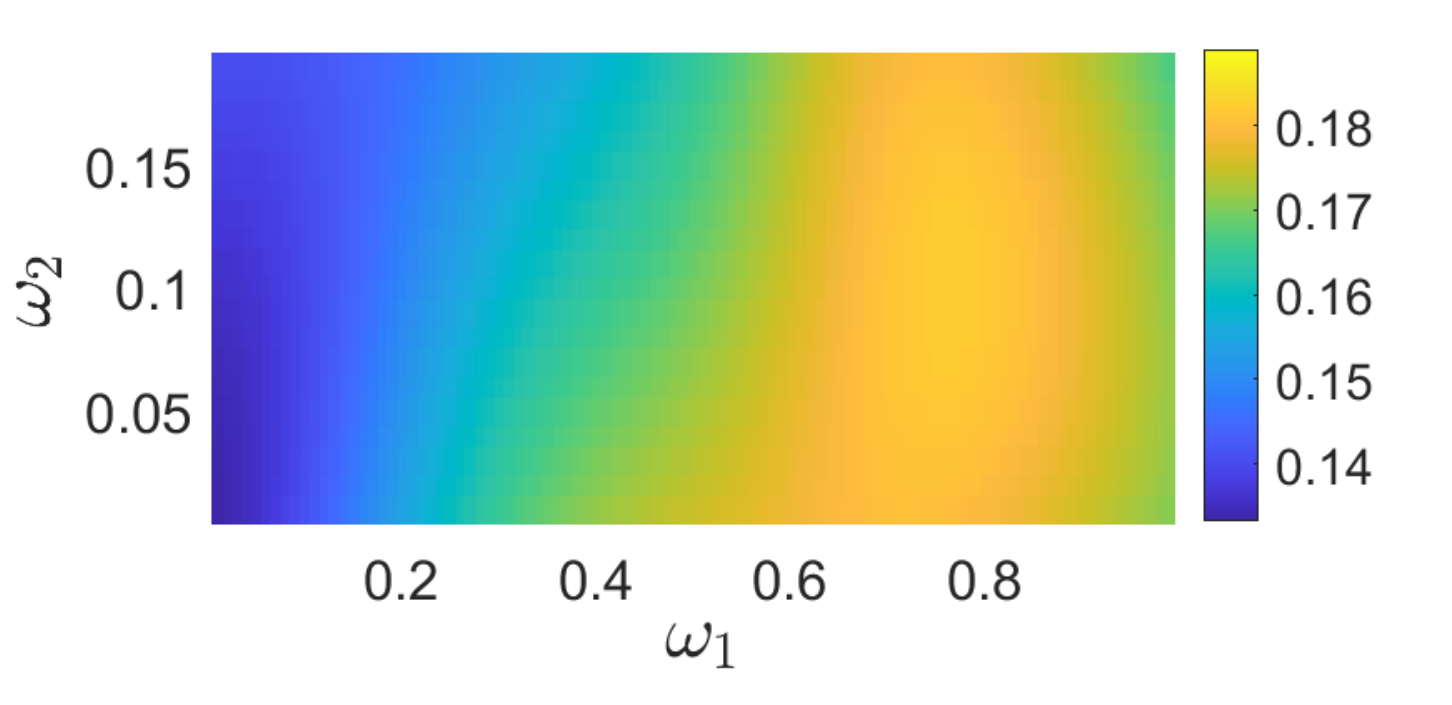}
        \caption{Reference, $(\omega_1,\omega_2)\mapsto \sigma_{G11}(\omega_1,\omega_2,\omega_3)$}
        \label{fig:figure9g}
    \end{subfigure}
    \begin{subfigure}[b]{0.29\textwidth}
        \centering
        \includegraphics[width=\textwidth]{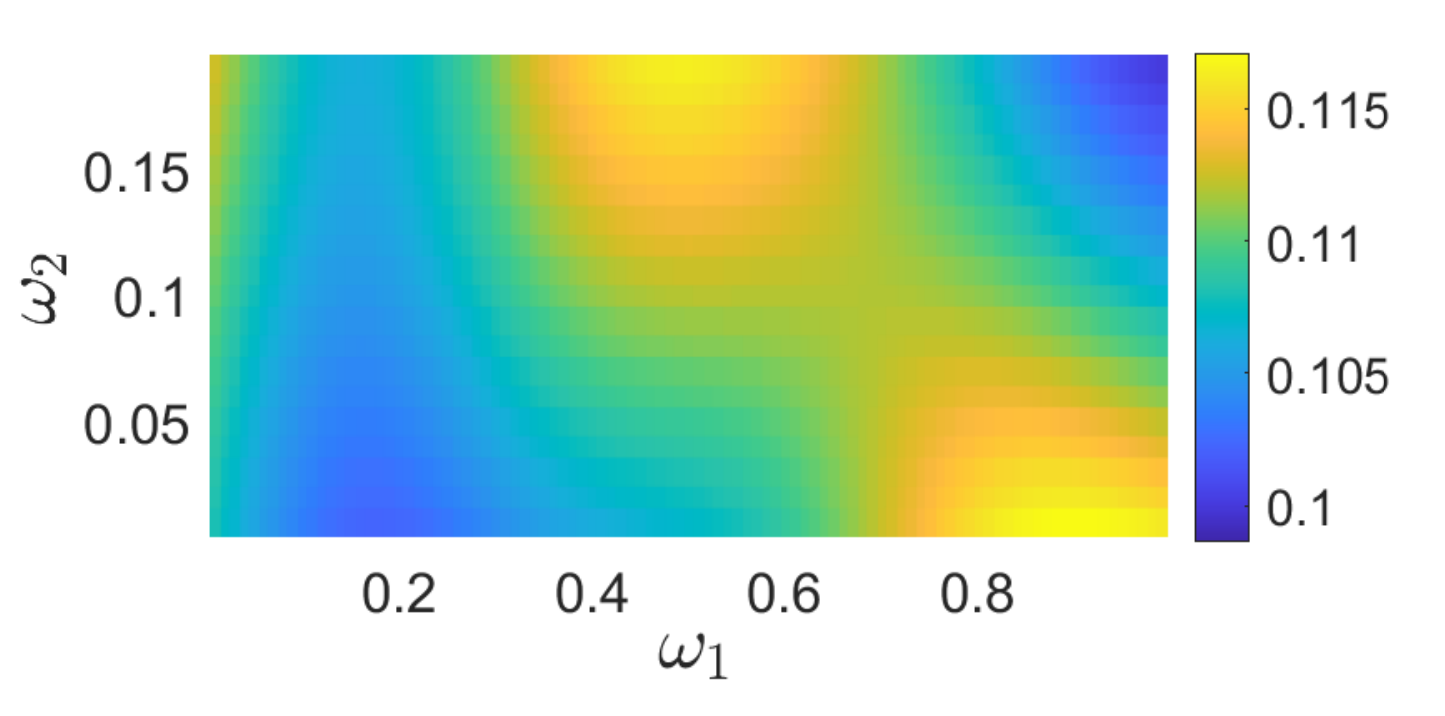}
        \caption{Reference, $(\omega_1,\omega_2)\mapsto \sigma_{G12}(\omega_1,\omega_2,\omega_3)$}
        \label{fig:figure9h}
    \end{subfigure}
    \begin{subfigure}[b]{0.29\textwidth}
        \centering
        \includegraphics[width=\textwidth]{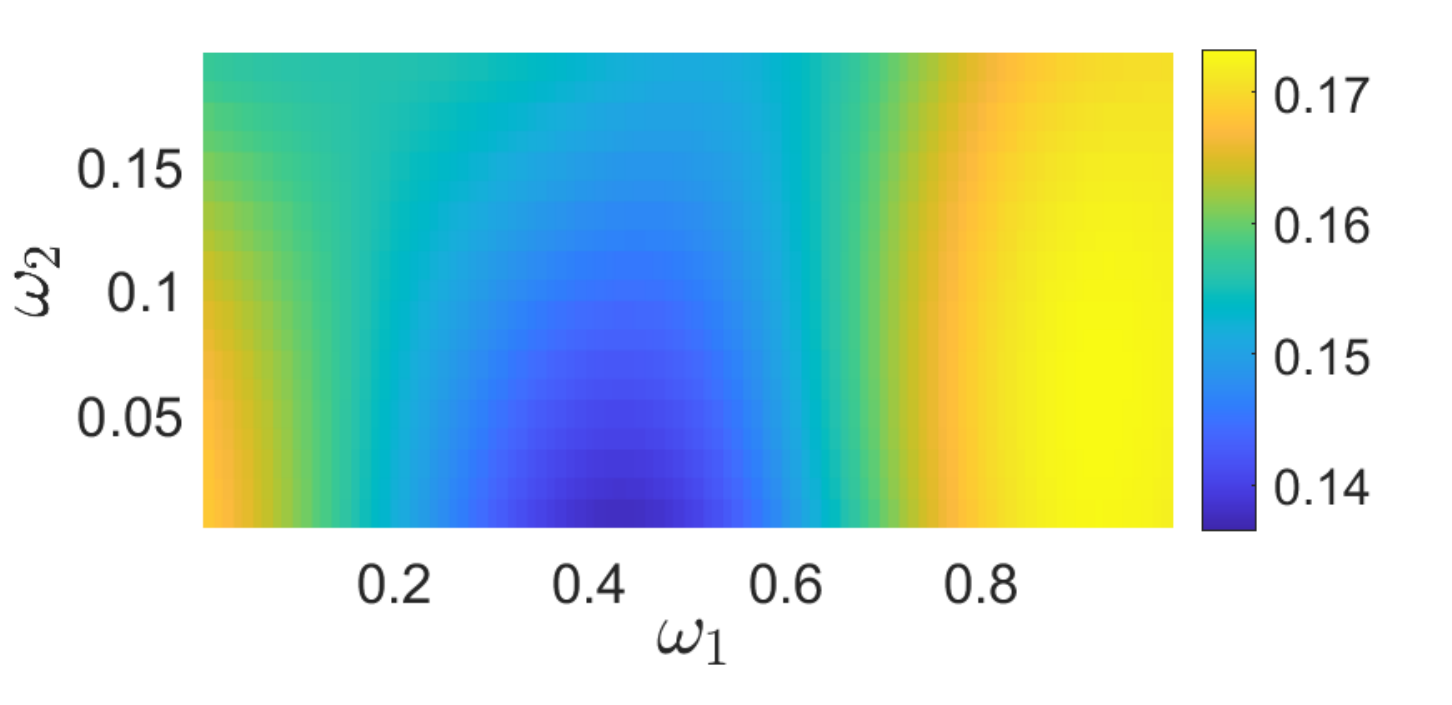}
        \caption{Reference, $(\omega_1,\omega_2)\mapsto \sigma_{G44}(\omega_1,\omega_2,\omega_3)$}
        \label{fig:figure9i}
    \end{subfigure}
    \caption{For the training set with $N_d=100$ (Figs. (a, b, and c), for the posterior estimated with the constrained learned set with $N_d=100$ and $N_r=20$ (Figs. (d, e, and f), and for the reference (Figs. (g,h, and i), standard deviation fields $(\omega_1,\omega_2)\mapsto \sigma_{G11}(\omega_1,\omega_2,\omega_3)$, $\sigma_{G12}(\omega_1,\omega_2,\omega_3)$, and $\sigma_{G44}(\omega_1,\omega_2,\omega_3)$  in the plane $\omega_3 = 0.095774$ of the components $(1,1)$, $(1,2)$, and $(4,4)$ of the random field  $(\omega_1,\omega_2)\mapsto [\bfG(\omega_1,\omega_2,\omega_3)]$.}
    \label{fig:figure9}
\end{figure}
Since there is no target for the control variable, we cannot directly compare $\bfW_\ppost$ (that is to say the random field $\{[\bfG_\ppost(\bfomega)] ,\bfomega\in\Omega\}$) with a target.
In the case of the numerical illustration that we present, the map $\bff$ such that $\bfQ=\bff(\bfW)$ is known numerically, that is, for $\bfw^j_d $ given, we have calculated $\bfq^j_d = \bff(\bfw_d^j)$ with the boundary value problem to generate the training set. This situation is particular and is not that of the general framework that we have given ourselves. Nevertheless, this particular situation allows us to use another method to qualify the quality of the probability measure of $\bfW_\ppost$ estimated with the constrained learned set, as follows.
The proposed constrained-learned-set algorithm allows for computing the realizations $\{\bfeta_\ppost^1,\ldots, \bfeta_\ppost^N\}$ of the posterior random variable  $\bfH_\ppost$, which constitute the points of the constrained learned set.
Using Eq.~\eqref{eq:eq139-6}, the realizations $\bfq_\ppost^1,\ldots ,\bfq_\ppost^{N}$ and $\bfw_\ppost^1,\ldots ,\bfw_\ppost^{N}$ are computed by the equations $\bfQ_\ppost =\underline\bfq + [\Phi_q]\, [\kappa]^{1/2} \, \bfH_\ppost $ and $\bfW_\ppost=\underline\bfw + [\Phi_w]\, [\kappa]^{1/2} \, \bfH_\ppost$. We can then compare $\bfQ_\ppost=\underline\bfq + [\Phi_q]\, [\kappa]^{1/2} \, \bfH_\ppost$
with $\bfQ^\pQA= \bff(\bfW_\ppost)$ in which $\bfW_\ppost=\underline\bfw + [\Phi_w]\, [\kappa]^{1/2} \, \bfH_\ppost$
and where mapping $\bff$ is evaluated with the computational model.

For $N_r=100$, the convergence of $\bfQ^\pQA = \bff(\bfW_\ppost)$ with respect to $N_d$ has been analyzed by studying, for $k=1,2,3$, the  mean-square norm $|||Q^\pQA_{\pobs,k}||| = \{ E\{ (Q^\pQA_{\pobs,k})^2\}\}^{1/2}$ of random component $Q^\pQA_{\pobs,k}$ of $\bfQ^\pQA$ (which depends on $N_d$). For $k=1,2,3$, Table~\ref{table:table1} yields the values of $|||Q^\pQA_{\pobs,k}|||$. The expected convergence can be viewed with respect to $N_d$ (this result is consistent with the convergence of the pdf's shown in Fig.~\ref{fig:figure10}).
\begin{table}[h]
  \caption{For $N_r=100$, convergence of the mean-square norm of $|||Q^\pQA_{\pobs,1}|||$, $|||Q^\pQA_{\pobs,2}|||$, and
    $|||Q^\pQA_{\pobs,3}|||$ as a function of $N_d$.}\label{table:table1}
\begin{center}
 \begin{tabular}{|c|c|c|c|c|} \hline
    $N_d$                    &   100                & 200                  & 300                  & 400                 \\
    \hline
    $|||Q^\pQA_{\pobs,1}|||$ & $7.59\times 10^{-3}$ & $8.15\times 10^{-3}$ & $8.12\times 10^{-3}$ & $8.20\times 10^{-3}$\\
    $|||Q^\pQA_{\pobs,2}|||$ & $2.44\times 10^{-2}$ & $2.60\times 10^{-2}$ & $2.63\times 10^{-2}$ & $2.67\times 10^{-2}$\\
    $|||Q^\pQA_{\pobs,3}|||$ & $5.11\times 10^{-3}$ & $5.46\times 10^{-3}$ & $6.35\times 10^{-3}$ & $5.56\times 10^{-3}$\\
    \hline
  \end{tabular}
\end{center}
\end{table}
\begin{figure}[!h]
    \centering
    \begin{subfigure}[b]{0.25\textwidth}
    \centering
        \includegraphics[width=\textwidth]{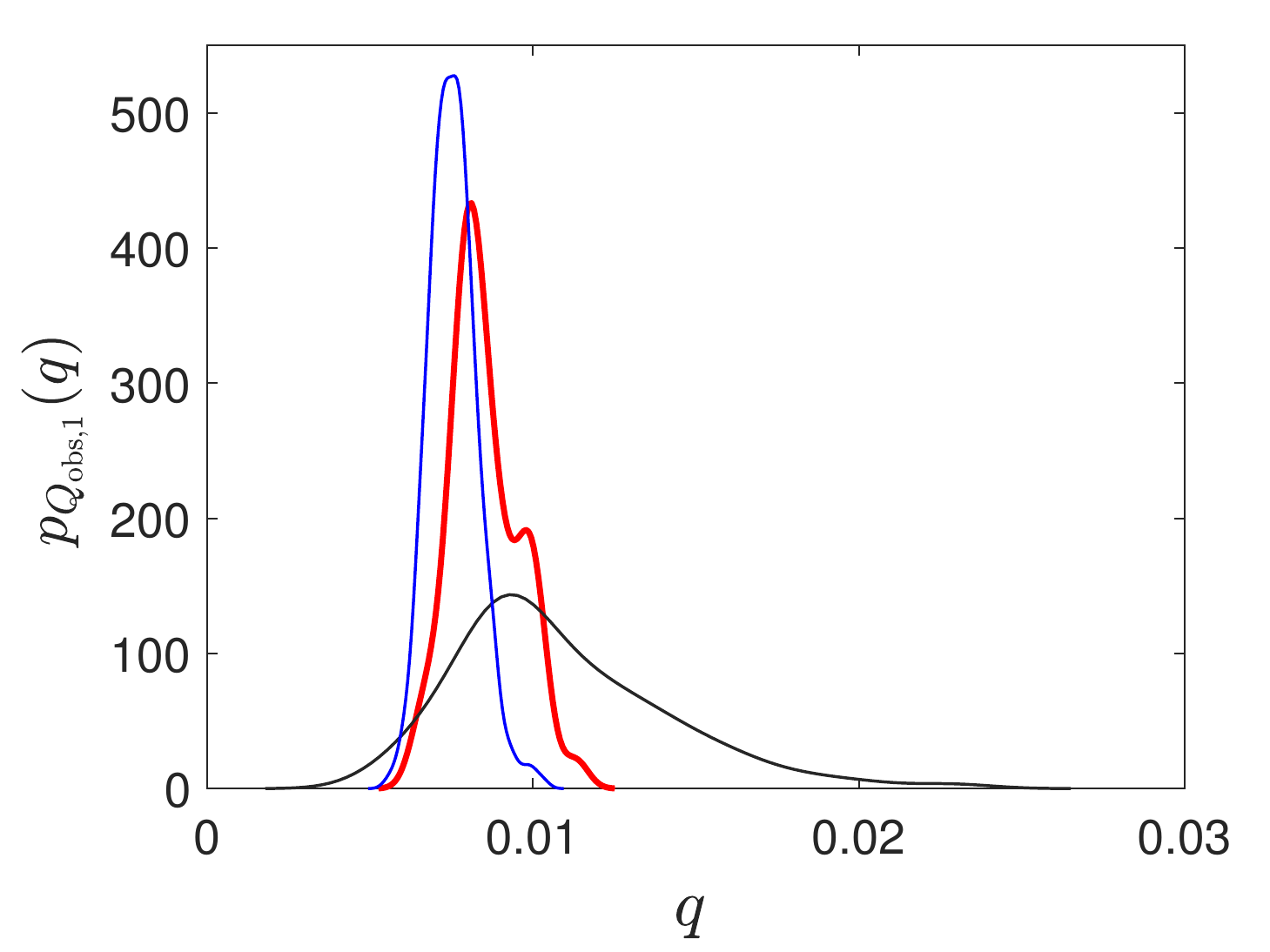}
        \caption{pdf $q\mapsto p_{Q_{\ppobs,1}}(q)$ for $N_d=100$}
        \label{fig:figure10a}
    \end{subfigure}
    \begin{subfigure}[b]{0.25\textwidth}
        \centering
        \includegraphics[width=\textwidth]{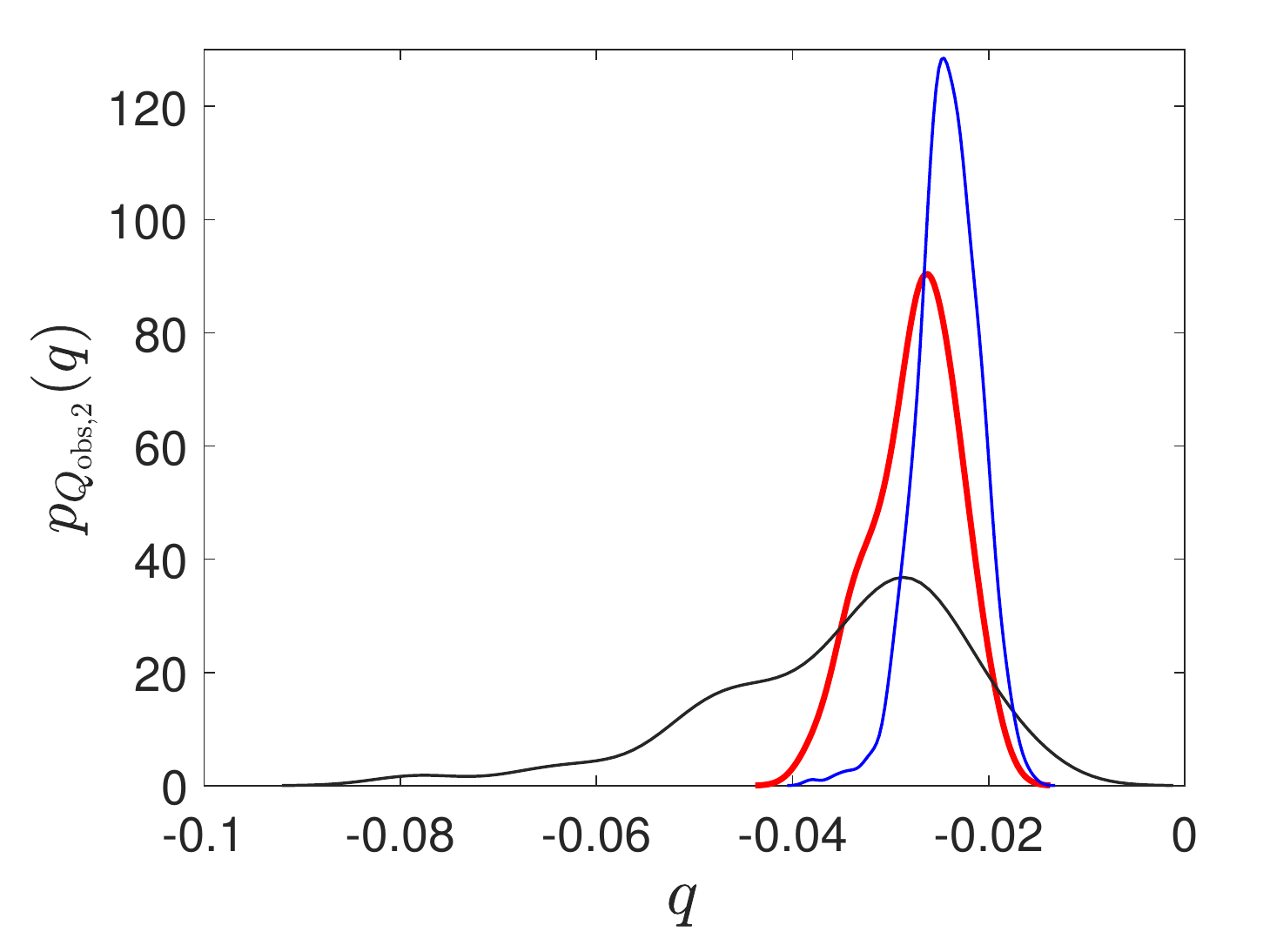}
        \caption{pdf $q\mapsto p_{Q_{\ppobs,2}}(q)$ for $N_d=100$}
        \label{fig:figure10b}
    \end{subfigure}
    \begin{subfigure}[b]{0.25\textwidth}
        \centering
        \includegraphics[width=\textwidth]{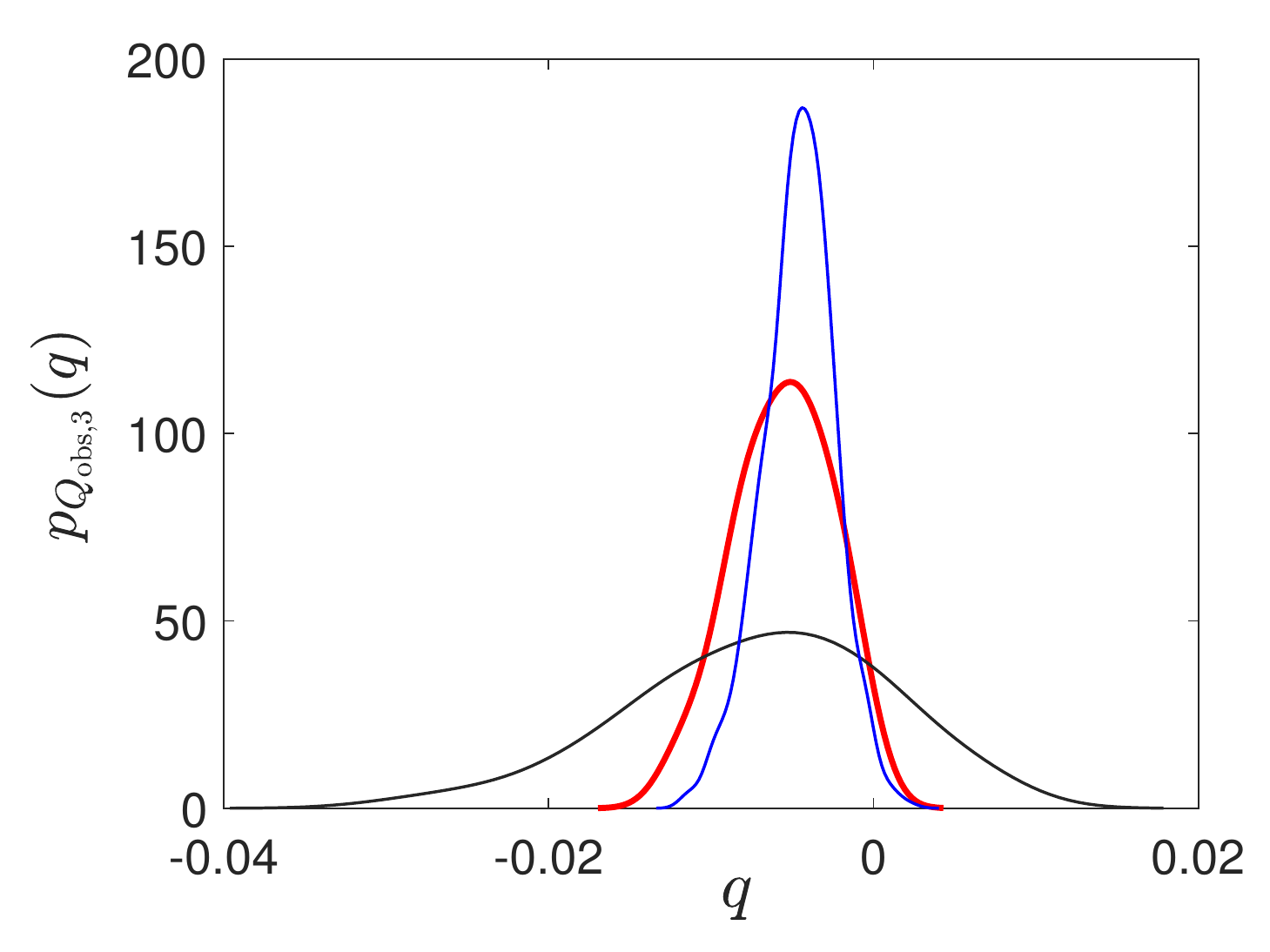}
        \caption{pdf $q\mapsto p_{Q_{\ppobs,3}}(q)$ for $N_d=100$}
        \label{fig:figure10c}
    \end{subfigure}
    \centering
    \begin{subfigure}[b]{0.25\textwidth}
    \centering
        \includegraphics[width=\textwidth]{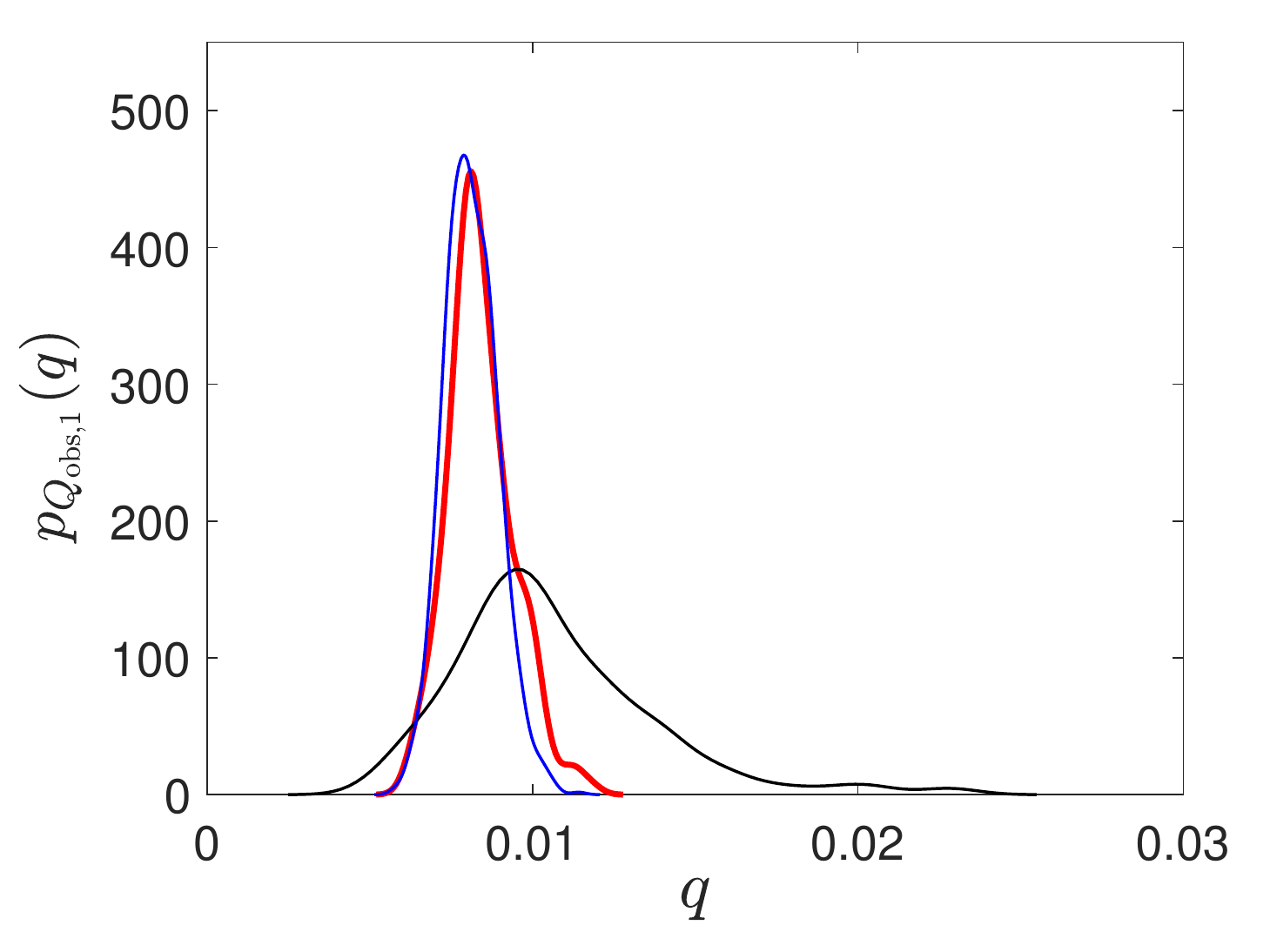}
        \caption{pdf $q\mapsto p_{Q_{\ppobs,1}}(q)$ for $N_d=200$}
        \label{fig:figure10d}
    \end{subfigure}
    \begin{subfigure}[b]{0.25\textwidth}
        \centering
        \includegraphics[width=\textwidth]{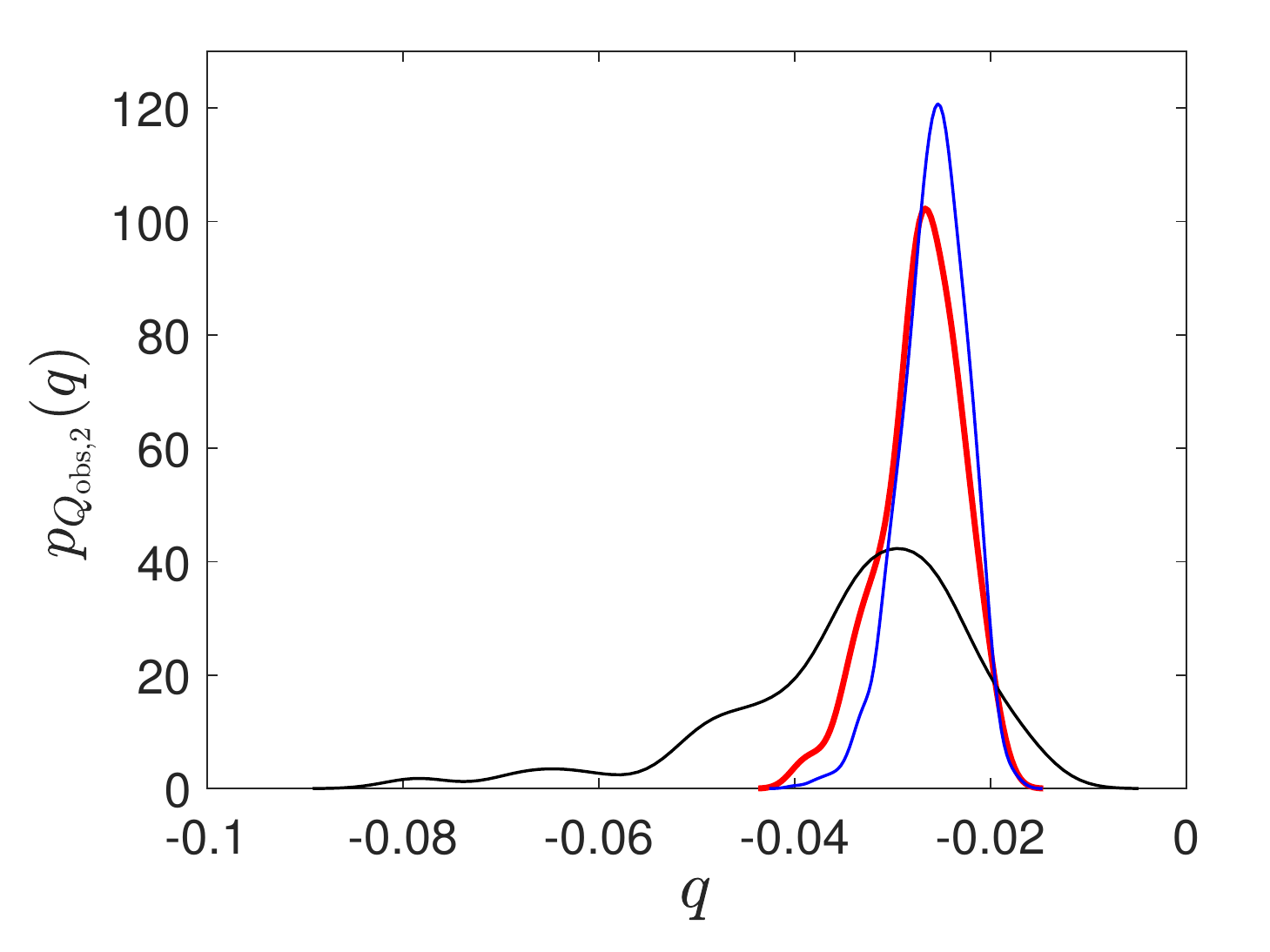}
        \caption{pdf $q\mapsto p_{Q_{\ppobs,2}}(q)$ for $N_d=200$}
        \label{fig:figure10e}
    \end{subfigure}
    \begin{subfigure}[b]{0.25\textwidth}
        \centering
        \includegraphics[width=\textwidth]{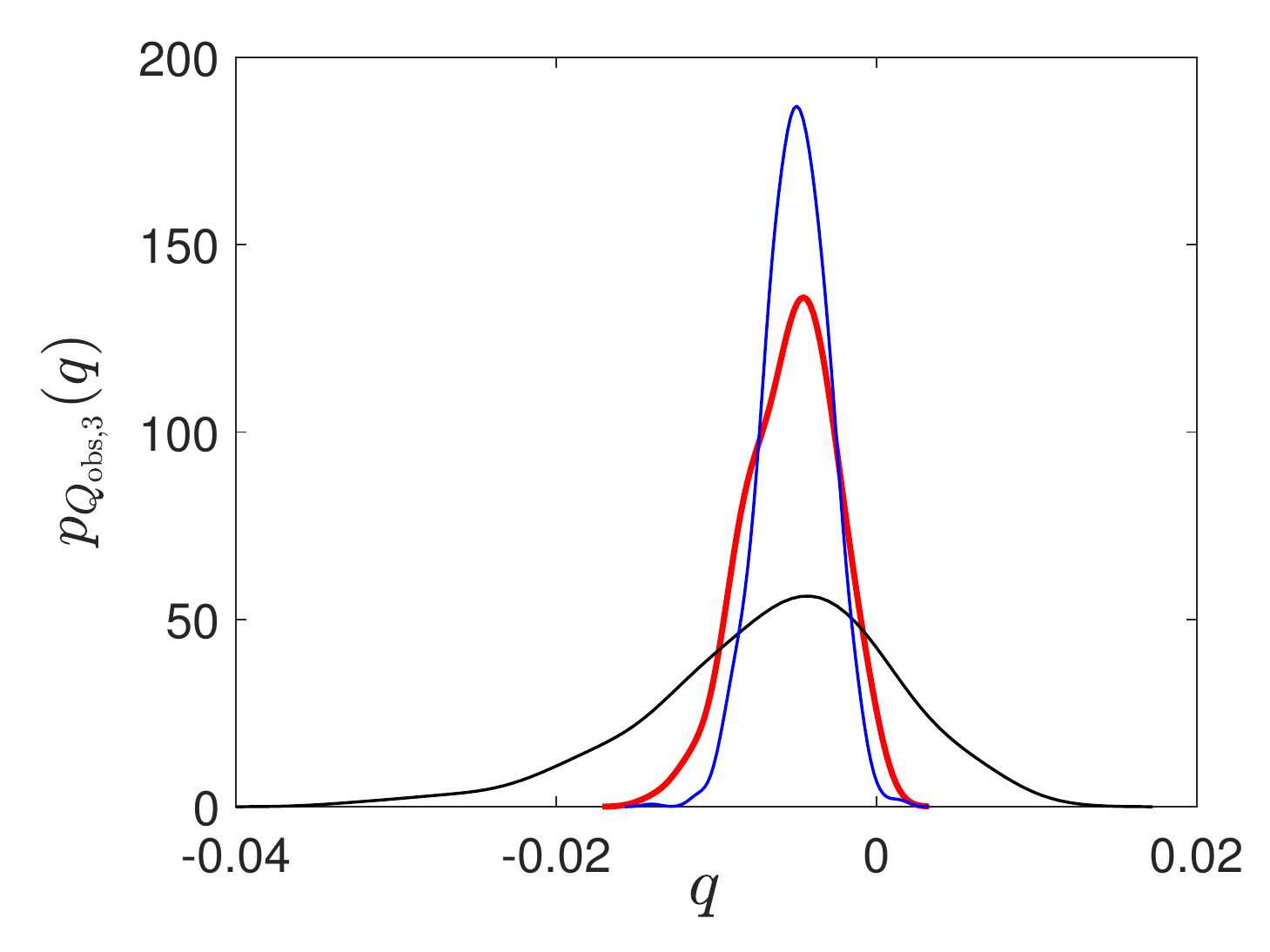}
        \caption{pdf $q\mapsto p_{Q_{\ppobs,3}}(q)$ for $N_d=200$}
        \label{fig:figure10f}
    \end{subfigure}
    \begin{subfigure}[b]{0.25\textwidth}
    \centering
        \includegraphics[width=\textwidth]{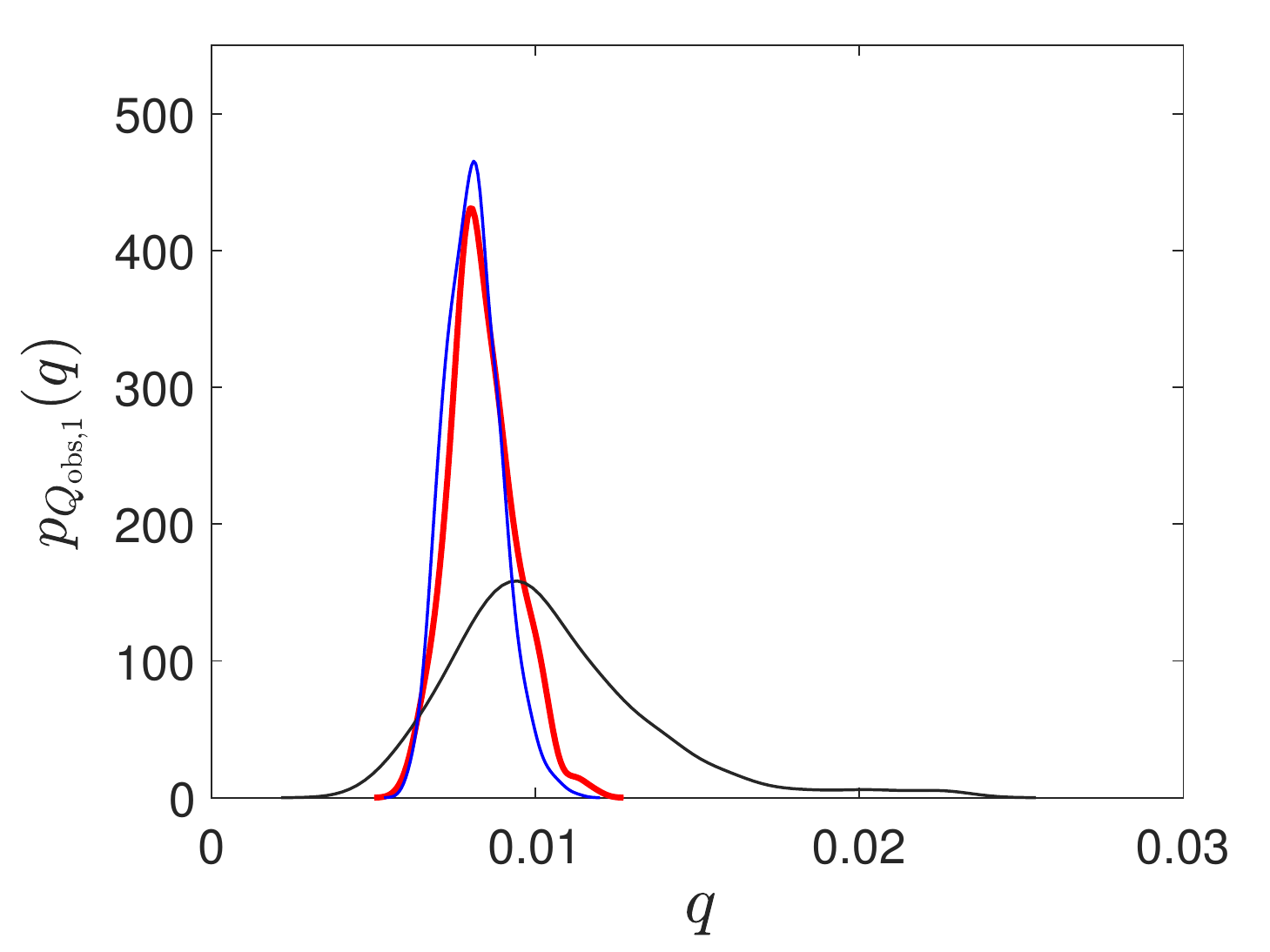}
        \caption{pdf $q\mapsto p_{Q_{\ppobs,1}}(q)$ for $N_d=300$}
        \label{fig:figure10g}
    \end{subfigure}
    \begin{subfigure}[b]{0.25\textwidth}
        \centering
        \includegraphics[width=\textwidth]{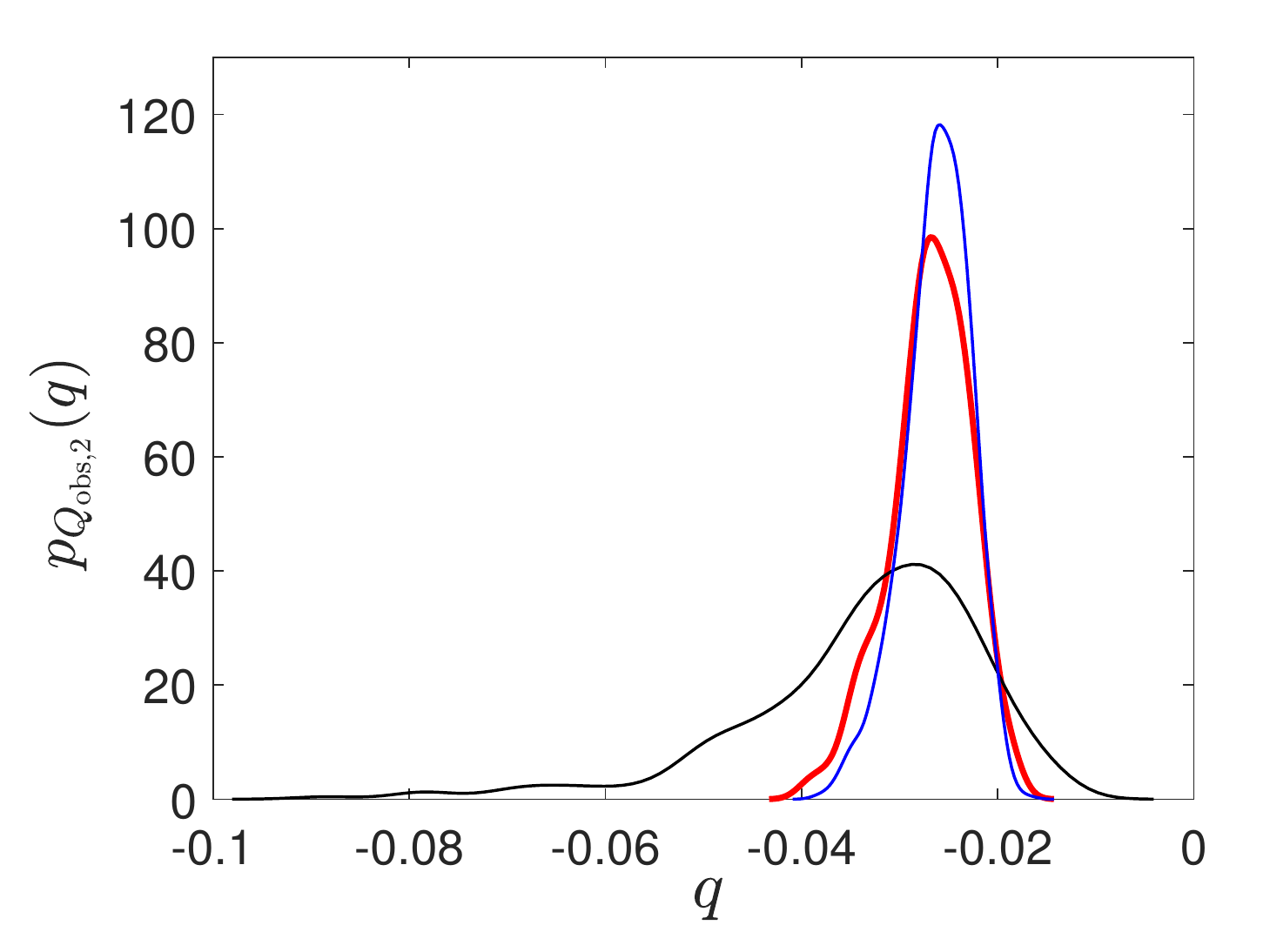}
        \caption{pdf $q\mapsto p_{Q_{\ppobs,2}}(q)$ for $N_d=300$}
        \label{fig:figure10h}
    \end{subfigure}
    \begin{subfigure}[b]{0.25\textwidth}
        \centering
        \includegraphics[width=\textwidth]{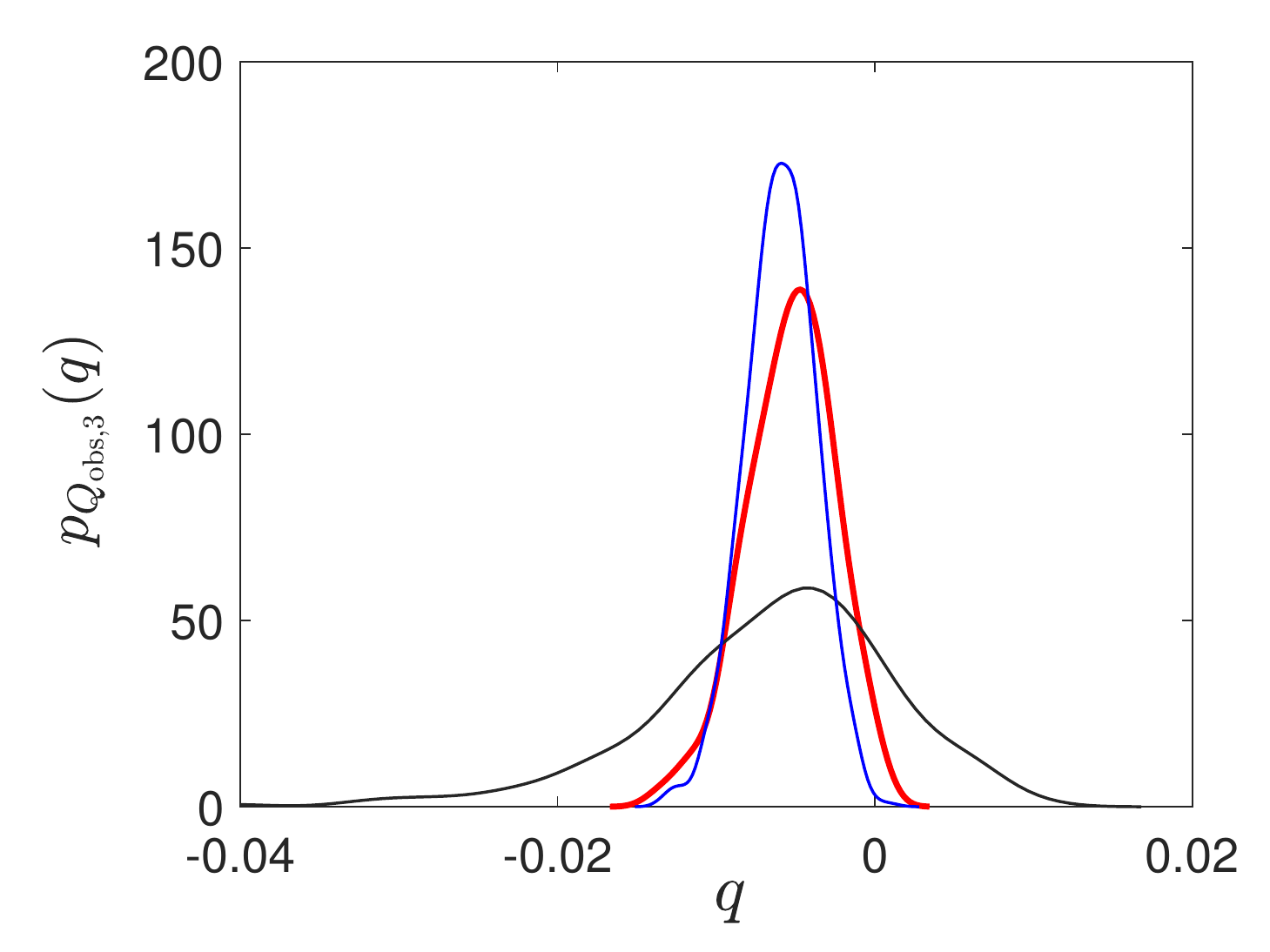}
        \caption{pdf $q\mapsto p_{Q_{\ppobs,3}}(q)$ for $N_d=300$}
        \label{fig:figure10i}
    \end{subfigure}
    \centering
    \begin{subfigure}[b]{0.25\textwidth}
    \centering
        \includegraphics[width=\textwidth]{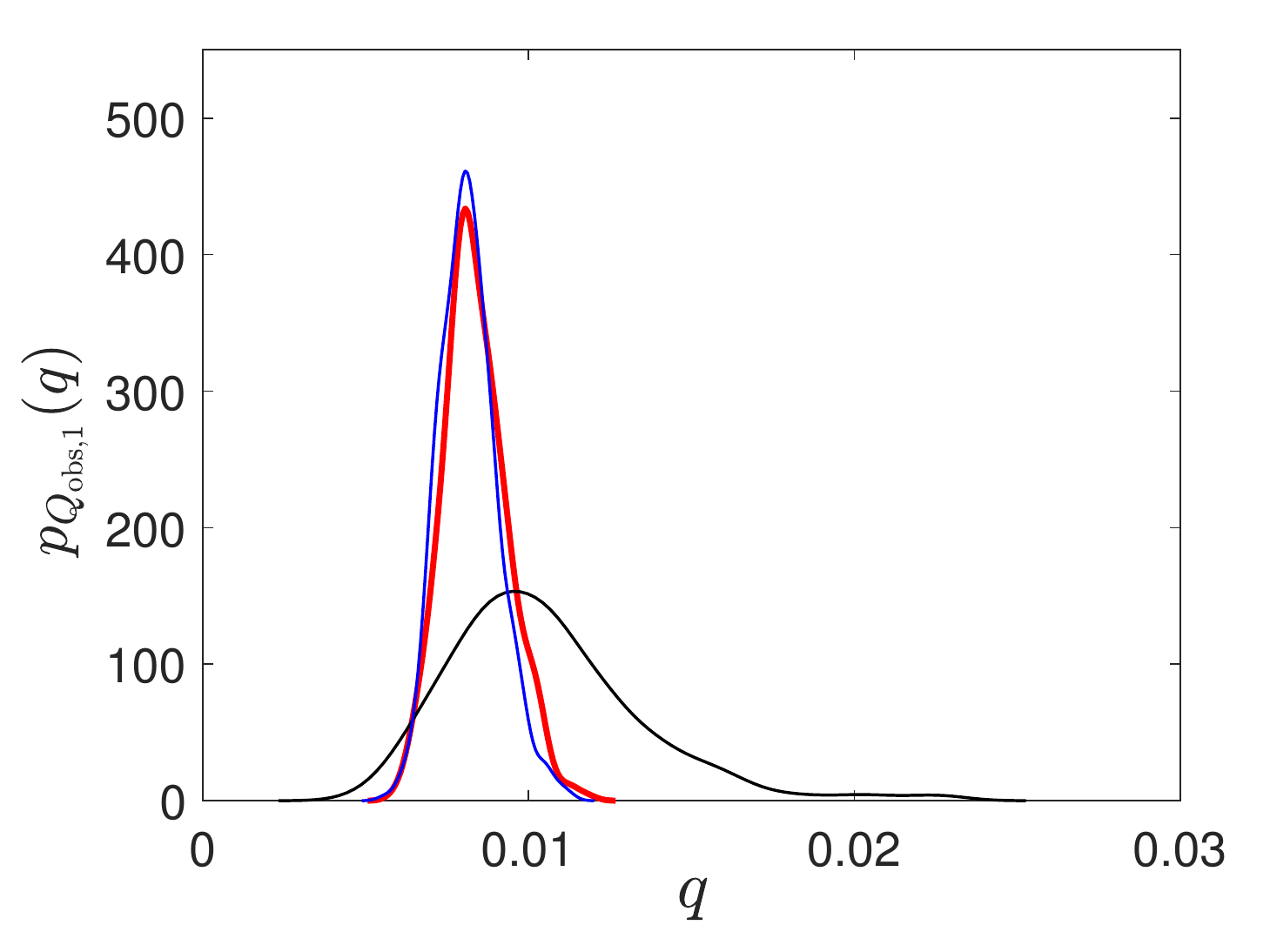}
        \caption{pdf $q\mapsto p_{Q_{\ppobs,1}}(q)$ for $N_d=400$}
        \label{fig:figure10j}
    \end{subfigure}
    \begin{subfigure}[b]{0.25\textwidth}
        \centering
        \includegraphics[width=\textwidth]{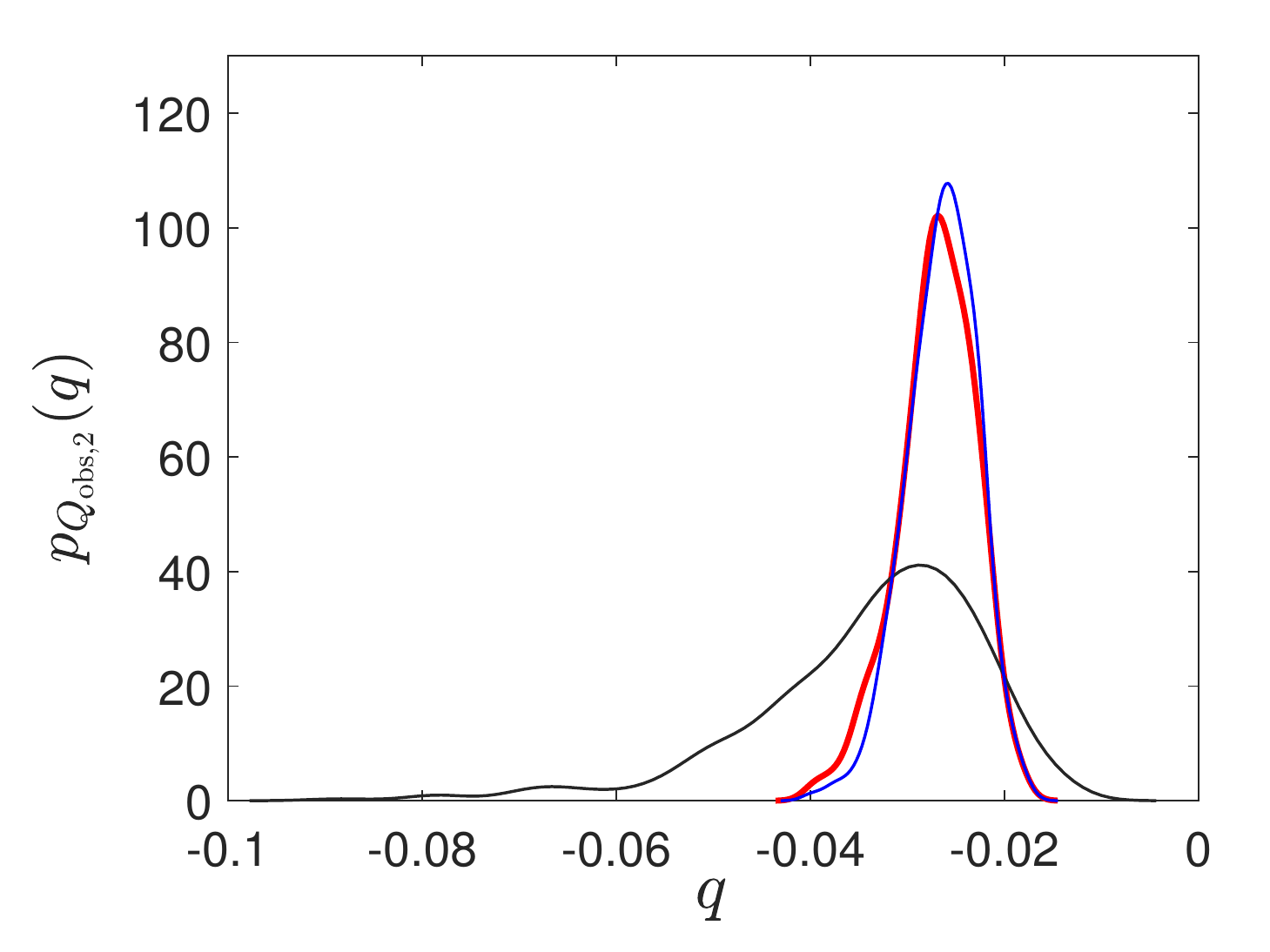}
        \caption{pdf $q\mapsto p_{Q_{\ppobs,2}}(q)$ for $N_d=400$}
        \label{fig:figure10k}
    \end{subfigure}
    \begin{subfigure}[b]{0.25\textwidth}
        \centering
        \includegraphics[width=\textwidth]{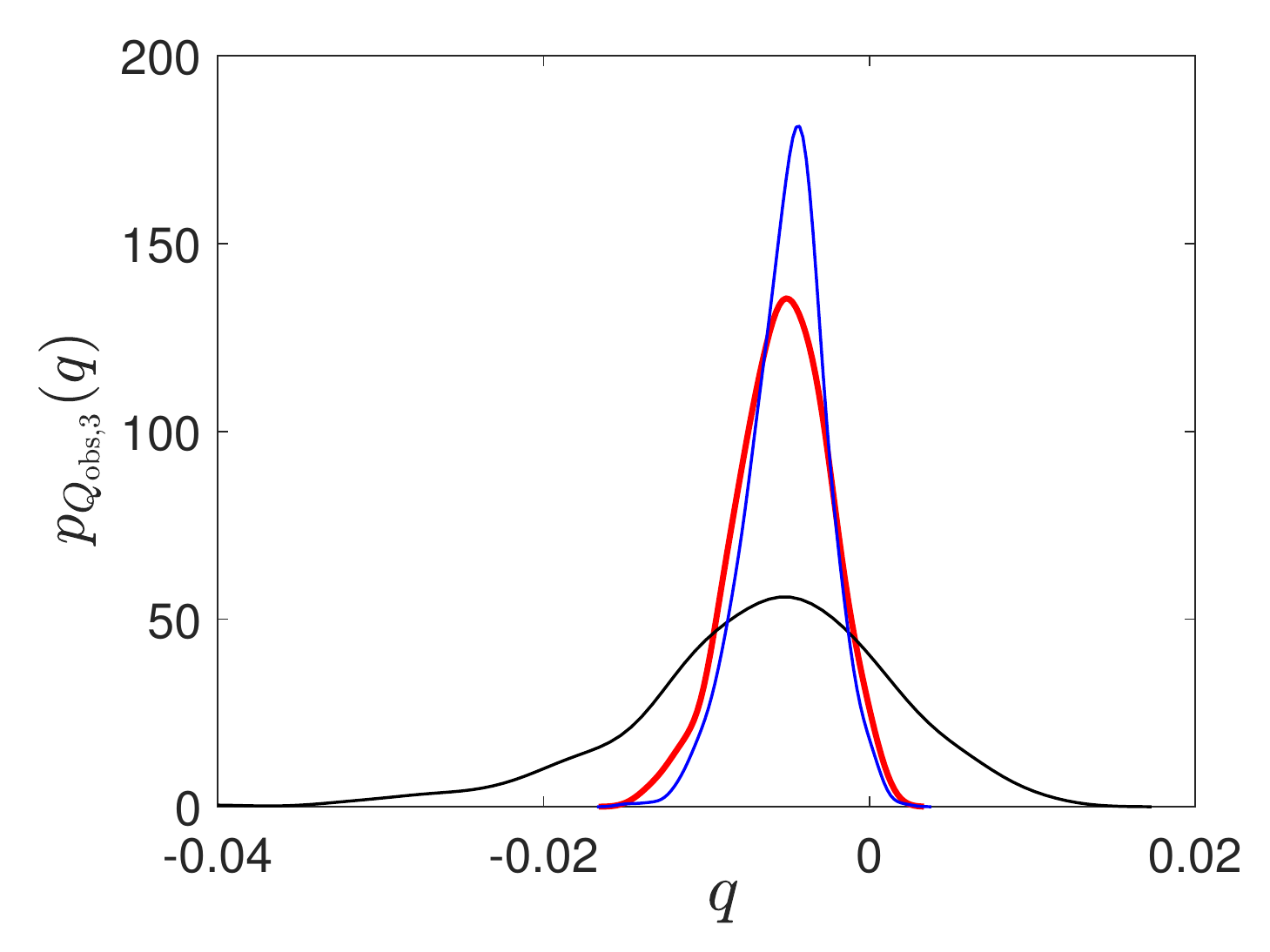}
        \caption{pdf $q\mapsto p_{Q_{\ppobs,3}}(q)$ for $N_d=400$}
        \label{fig:figure10l}
    \end{subfigure}
    \caption{For $N_r=100$, convergence analysis in $N_d$,
         for $N_d=100$ (Figs. (a), (b), and (c)),
         for $N_d=200$ (Figs. (d), (e), and (f)),
         for $N_d=300$ (Figs. (g), (h), and (i)),
     and for $N_d=400$ (Figs. (j), (k), and (l)),
     probability density functions of the random variables $Q_{\pobs,1}$, $Q_{\pobs,2}$, and $Q_{\pobs,3}$,
     for $\bfQ$ estimated with the training set (black line), for $\bfQ^\pQA = \bff(\bfW_\ppost)$ estimated with the posterior learned set of $\bfW_\ppost$ (blue line), and for $\bfQ_\pexp$ corresponding to the reference (red thick line).}
    \label{fig:figure10}
\end{figure}
In addition to the convergence analysis of the mean-square norm  with respect to $N_d$,  Fig.~\ref{fig:figure10} displays the probability density functions of the random variables $Q_{\pobs,1}$, $Q_{\pobs,2}$, and $Q_{\pobs,3}$,
for $\bfQ$ estimated with the training set, for $\bfQ^\pQA = \bff(\bfW_\ppost)$ estimated with $\bfW_\ppost$, and for $\bfQ_\pexp$ corresponding to the reference. Similarly to the convergence of the mean-square norm, this figure shows a clear convergence with respect to $N_d$.
For $N_d=100$ and $N_r=20$ or even $N_r=100$, compared to the reference, the posterior $\bfQ^\pQA$ evaluated with $\bfW_\ppost$ thanks to the knowledge of $\bff$, is less good than the prediction of the posterior $\bfQ_\ppost$.
This is mainly due to the use of the reduced representation for a problem in high dimension.
Fig.~\ref{fig:figure10} shows that the prediction can be improved by increasing the value of $N_d$, that is to say, by increasing the value of $\nu$, which requires to increase the number of points in the training set and consequently, which can induce potential difficulties if the numerical cost for constructing each point of the training set is high.
However, the presented numerical illustration shows that the proposed method allows for integrating,with a good quality, a target set of realizations (i.e. data) in a supervised model, which is defined only by a small number of points in  a training set and for which the target  set of realizations are specified only for the quantities of interest (output) and not for the control variable (input).
Finally, it should be noted that, when the training set is generated using a stochastic boundary value problem, there is also another method as we have proposed and validated in \cite{Soize2021a,Soize2022bb}. It consists, for the generation of the constrained learned set,  to introduce an additional scalar constraint to minimize the norm of the residue of the partial differential equations of the boundary value problem. This procedure can be implemented without difficulty in the methodology presented in this paper, involving only one additional component in the vector-valued function $\bfh^C$ and the vector $\bfb^c$.
\section{Conclusion}
\label{sec:Section8}
In this paper, we have presented a novel functional approach that makes it possible to take into account a target set of realizations
in the Kullback-Leibler minimum principle for constructing a posterior probability measure from a prior probability measure defined by a given training set of realizations. This approach thus allows for integrating a target set of realizations in a supervised model, which is defined only by a small number of points in a training set.
It consists in constructing and analyzing a weak formulation of the Fourier transform of the probability measure (characteristic function) of the observed quantities of interest and to derive from it a finite representation of the functional constraint. On the basis of the positive Hermitian form associated with the Fourier transform of the probability measure, we have constructed and analyzed the properties of a functional family of functions, which only depends on the target set of the given realizations. These properties have allowed us to show the existence and the uniqueness of the posterior probability measure constructed by using  the Kullback-Leibler minimum principle. The numerical aspects have been detailed in order to facilitate the implementation of the algorithms. The presented numerical illustration that is in high dimension demonstrates the efficiency and the robustness of the proposed method.
\appendix
\section{Generation of the training set, target set, and numerical values of the parameters}
\label{appendix:A}
The training set $D_d = \{ \bfx^1,\ldots , \bfx^{N_d}\}$ with $\bfx^j\!\! = \!(\bfq_d^j,\bfw_d^j) \in \RRnx \!\! = \!\RRnq\!\times\!\RRnw$
is made up of $N_d$ independent realizations of random variable $\bfX=(\bfQ,\bfW)$, which are generated by using a stochastic computational model corresponding to the finite element discre\-ti\-za\-tion of a stochastic elliptic boundary value problem for which $n_x=430\, 098$, $n_q=10\, 098$, and $n_w= 420\, 000$. The target set $D_\pexp = \{ \bfq^1_\pexp,\ldots , \bfq^{N_r}_\pexp\}$ is generated using the stochastic computational model with another values of the parameters (see \ref{appendix:A.3}).
\subsection{Definition of the stochastic boundary value problem}
\label{appendix:A.1}
Let $\Omega = ]\,0\, , 1\,[\, \times\, ]\,0\, , 0.2\,[\,\times \,]\,0 \, , 0.1\,[ \, m^3$ be the bounded open set of $\RR^3$, with generic point $\bfomega = (\omega_1,\omega_2,\omega_3)$, and with boundary
$\partial\Omega=\Gamma_0 \cup\Gamma_1\cup\Gamma_2$ in which
$\Gamma_0 =\{\omega_1=1\, , \, 0\leq \omega_2 \leq 0.2 \, , \, 0 \leq \omega_3 \leq 0.1 \}$,
$\Gamma_1 =\{\omega_1=0\, , \, 0\leq \omega_2 \leq 0.2 \, , \, 0 \leq \omega_3 \leq 0.1 \}$,
 and
$\Gamma_2 =\partial\Omega\backslash \{\Gamma_0\cup\Gamma_1 \}$. Let be $\overline\Omega=\Omega\cup\partial\Omega$.
The outward unit normal to $\partial\Omega$ is denoted by $\nn = (\nn_1,\nn_2,\nn_3)$. We use the usual convention of summation on repeated Latin indices. Domain $\Omega$ is occupied by a  heterogeneous and aniso\-tropic elastic random medium for which the elastic properties are defined by the fourth-order tensor-valued non-Gaussian random field $\AAeclair = \bigg \{ \{\AAeclair_{k  m n q}(\bfomega)\}_{k  m n q}, \bfomega\in\Omega\bigg\}$.
Let $\UU=(\UU_1, \UU_2, \UU_3)$ be the $\RR^3$-valued displacement random field defined in $\Omega$.
A Dirichlet condition $\UU=\bfzero$ is given on $\Gamma_0$ while a Neumann condition is given on $\Gamma_1\cup\Gamma_2$.
The stochastic boundary value problem is written, for $k=1,2,3$  and almost surely, as
\begin{align}
- \frac{\partial\sigmasigma_{k m}}{\partial\omega_m} & = 0 \,\,\,\, \hbox{in} \,\,\, \Omega \, , \label{eq:eq140} \\
\UU_k & = 0                      \,\,\,\,\hbox{on}   \,\,\, \Gamma_0                         \, , \label{eq:eq141} \\
\sigmasigma_{km}\, \nn_m & = p_k  \,\,  \hbox{on}  \,\,\, \Gamma_1                         \, , \label{eq:eq142} \\
\sigmasigma_{km}\, \nn_m & = 0    \,\,\,\,  \hbox{on}  \,\,\, \Gamma_2                       \, , \label{eq:eq143}
\end{align}
in which the stress tensor $\sigmasigma$ is related to the strain tensor $\epsilonepsilon$ by
$\epsilonepsilon_{nq} = (\partial\UU_n/\partial\omega_q + \partial\UU_q/\partial\omega_n)/2$ by the constitutive equation,
$\sigmasigma_{km}(\bfomega)= \AAeclair_{kmnq}(\bfomega) \, \epsilonepsilon_{nq}(\UU(\bfomega))$.
For $k=1,2,3$, the applied stresses $p_k$ on $\Gamma_1$ are defined as follows:

\noindent $p_1 = 0$ on $\Gamma_1$, except:

$\quad p_1 = -1.8\times 10^8\, N/m^2$ for $\bfomega\in\{\omega_1=0\, , \, 0\leq \omega_2\leq 0.02 \, , \, 0\leq \omega_3 \leq 0.1\}$.

$\quad p_1 = +9.0\times 10^7\, N/m^2$ for $\bfomega\in\{\omega_1=0\, , \, 0.18\leq \omega_2\leq 0.2 \, , \, 0\leq \omega_3 \leq 0.1\}$.

\noindent $p_2 = 0$ on $\Gamma_1$, except:

$\quad p_2 = +1.0\times 10^7\, N/m^2$ for $\bfomega\in\{\omega_1=0\, , \, \{0\leq \omega_2\leq 0.02\}\cup \{0.18\leq \omega_2\leq 0.20\}
                                 \, , \, 0\leq \omega_3 \leq 0.02\}$.

$\quad p_2 = -1.5\times 10^7\, N/m^2$ for $\bfomega\in\{\omega_1=0\, , \, \{0\leq \omega_2\leq 0.02\}\cup \{0.18\leq \omega_2\leq 0.20\}
                                 \, , \, 0.08\leq \omega_3 \leq 0.1\}$.

\noindent $p_3 = 0$ on $\Gamma_1$, except:

$\quad p_3 = -2.40\times 10^7\, N/m^2$ for $\bfomega\in\{\omega_1=0\, , \, 0\leq \omega_2\leq 0.02 \, , \, 0\leq \omega_3 \leq 0.1\}$.

$\quad p_3 = +2.64\times 10^7\, N/m^2$ for $\bfomega\in\{\omega_1=0\, , \, 0.18\leq \omega_2\leq 0.2 \, , \, 0\leq \omega_3 \leq 0.1\}$.

\noindent Using the matrix representation in Voigt notation, the random elasticity field is rewritten, for $k$, $m$, $n$, and $q$ in $\{1,2,3\}$, as
$[\bfA(\bfomega)]_{\pbfi\pbfj} =\AAeclair_{kmnq}(\bfomega)$ with $\bfi=(k,m)$ with $1\leq k\leq m\leq 3$ and $\bfj=(n,q)$ with $1\leq n \leq q \leq 3$ in which indices  $\bfi$ and $\bfj$ belong to  $\{1,\ldots , 6\}$. The  $\MM_6^+$-valued random field
$\{ [\bfA(\bfomega)] ,\bfomega \in \Omega\}$ is a non-Gaussian, second order, and  statistically homogeneous.
Its mean function is the given $\bfomega$-independent matrix $[\,\underline \bfA\,] =E\{[\bfA(\bfomega)] \}\in\MM_6^+$ corresponding to  a homogeneous isotropic elastic material whose Young modulus is $10^{10}\, N/m^2$ and Poisson coefficient $0.15$ (note that the fluctuations around the mean are those of a heterogeneous  anisotropic elastic material).
The non-Gaussian $\MM_6^+$-valued  random field  $\{ [\bfA(\bfomega)]\, ,\bfomega\in \Omega \}$ is constructed using the stochastic model  \cite{Soize2006,Soize2008,Soize2017b} of random elasticity fields for heterogeneous anisotropic elastic media that are isotropic in statistical mean and exhibit anisotropic statistical fluctuations, for which the parameterization consists of spatial-correlation lengths and of a positive-definite lower bound. The random field $\{[\bfA(\bfomega)],\bfomega\in\Omega\}$  is written as,
\begin{equation}\label{eq:eq144}
[\bfA(\bfomega)] = \frac{1}{1+\epsilon}\, [\underline\LL]^T \, \bigg ( \epsilon \, [I_6] + [\bfG(\bfomega)] \bigg )\, [\underline\LL]
\quad , \quad \forall\, \bfomega\in\Omega\, ,
\end{equation}
in which $[\underline\LL]$ is the upper triangular $(6\times 6)$ real matrix such that $[\,\underline \bfA\,] = [\underline\LL]^T[\underline\LL]$, where $\epsilon$ is a given positive number (which can be chosen arbitrarily small),
and where $\{ [\bfG(\bfomega)],\bfomega\in\RR^3\}$ is a $\MM^+_6$-valued random field (by construction), defined on $(\Theta,\curT,\curP)$, indexed by $\RR^3$. Then $ [\bfG]$ is homogeneous, mean-square continuous, and such that
$E\{[\bfG(\bfomega))]\} = [I_6]$ for all $\bfomega\in \RR^3$. Note that the lower bound $\epsilon\,[\,\underline\bfA\, ]/(1+\epsilon)$ used in Eq.~\eqref{eq:eq144} could be replaced by a more general lower bound $[A_b]$ in $\MM_6^+$ as proposed in \cite{Guilleminot2013a,Soize2017b}.
For all $\bfomega$ fixed in $\RR^3$, the $\MM^+_6$-valued random variable $[\bfG(\bfomega)]$ has been constructed by using the Maximum Entropy Principle under the following available information, $E\{[\bfG(\bfomega)]\} \! = \! [I_6]$ and $E\{ \log (\det[\bfG(\bfomega)] ) \}\! =\! b_G$ with $\vert b_G  \vert \, < \! +\infty$, which has been introduced in order  that the random matrix $[\bfG(\bfomega)]^{-1}$ (that exists almost surely) be such that $E\{\Vert[\bfG(\bfomega)]^{-1}\Vert^2\} \leq $ $E\{\Vert[\bfG(\bfomega)]^{-1}\Vert_F^2\} < +\infty$.
In this construction, for all $\bfomega$ fixed in $\RR^3$,
$[\bfG(\bfomega)] = [g\big ( \, \{ \curG_{mn}(\bfomega), 1 \! \leq \! m \! \leq \! n \! \leq \! 6\}\, \big ) ]$ is a $\MM_6^+$-valued nonlinear function $[g(.)]$ of $6\times(6+1)/2 = 21$ independent normalized Gaussian real-valued random variables denoted by $\{ \curG_{mn}(\bfomega), 1 \leq m \leq n\leq 6\}$ and such that $E\{\curG_{mn}(\bfomega)\} = 0$ and $E\{\curG_{mn}(\bfomega)^2\} = 1$.
The spatial correlation structure of random field $\{[\bfG(\bfomega)],$ $\bfomega\in\RR^3\}$ is introduced by considering  $21$ independent  real-valued random fields $\{\curG_{mn}(\bfomega),\bfomega\in\RR^3\}$  for $1\leq m \leq n \leq 6$,  corresponding to $21$ independent copies of a unique normalized Gaussian homogeneous mean-square continuous real-valued random field $\{\curG(\bfomega),\bfomega\in\RR^3\}$  whose normalized spectral measure is given and has a support that is controlled by three spatial correlation lengths  $L_{c1} = L_{c2} = L_{c3} = 0.4$.
Note that this Gaussian field $\curG$ can be replaced by a non-Gaussian field for taking into account uncertainties in the spectral measure \cite{Soize2021d}.
The constant $b_G$ is eliminated in favor of a hyperparameter $\delta_G > 0$,  which allows for controlling the level of statistical fluctuations of $[\bfG(\bfomega)]$, defined by $\delta_G =(E\{\Vert [\bfG(\bfomega)] - [I_6]\Vert_F^2\} / 6)^{1/2}$,
which is independent of $\bfomega$ and such that $\delta_G= 0.6$.
\subsection{Stochastic computational model for generating the training set $D_d$ and observed quantities of interest}
\label{appendix:A.2}
The stochastic boundary value problem defined by Eqs.~\eqref{eq:eq140} to \eqref{eq:eq143} is discretized by the finite element method. Domain $\Omega$ is meshed with $50\!\times\! 10\!\times\! 5 = 2\,500$ finite elements
using $8$-nodes finite elements. There are $3\,366$ nodes and $10\,098$ dofs (degrees of freedom).
The displacements are locked  at all the $66$ nodes belonging to surface $\Gamma_0$ and therefore, there are $198$ zero Dirichlet conditions. There are $8$ integration points in each finite element. Consequently, there are  $N_p= 20\, 000$ integration points $\bfomega^1,\ldots, \bfomega^{N_p}$.
The $\RRnw$-valued random variable $\bfW$ is generated as follows.
For all $p=1,\ldots, N_p$, let $[ \bfG_p^{\rm{log}} ] = \log_M ( [\bfG(\bfomega^p)] ) \in \MM_6$ in which $\log_M$ is the logarithm of positive-definite matrices. The $\RRnw$-valued random variable $\bfW$ is then defined as the vector that is the reshaping of the upper triangular part of the $N_p$ matrices $\{\, [\bfG_p^{\rm{log}}], p=1,\ldots , N_p\}$.We then have $n_w = 21\times N_p = 420\, 000$.
The finite element discretization of random field $\{\UU(\bfomega) ,\bfomega\in \overline\Omega\,\}$ is the $\RRnq$-valued random variable $\bfQ$ with $n_q= 10\,098$. Consequently $\bfX=(\bfQ,\bfW)$ is a random variable with values in $\RRnx$ with $n_x=n_q+n_w = 430\,098$. The stochastic computational model is then represented by a stochastic linear matrix equation that is solved by using the Monte Carlo numerical simulation method yielding the training set $D_d = \{ \bfx^1,\ldots , \bfx^{N_d}\}$ in which $\bfx^j\!\! = \!(\bfq_d^j,\bfw_d^j) \in \RRnx \!\! = \!\RRnq\!\times\!\RRnw$ is a realization of random variable $\bfX=(\bfQ,\bfW)$, the computed realizations being independent. For studying the convergence properties, the considered values of $N_d$  are $N_d\in\{100, 200, 300, 400 \}$.

The components of the quantity of interest $\bfQ$, which will be observed for presenting the results, are the $3$ components denoted by
$Q_{\pobs,1}$, $Q_{\pobs,2}$, and $Q_{\pobs,3}$ that correspond to the $3$ dofs along directions $\omega_1$, $\omega_2$, and $\omega_3$ of the finite element node of coordinates $(0,0,0.1)$ (located at top corner in which the displacements are significant and result from tension, torsion, and two bendings contributions).
\subsection{Target set of realizations}
\label{appendix:A.3}
The target set $D_\pexp = \{\bfq_\pexp^1,\ldots \bfq_\pexp^{N_r}\}$ is generated using the stochastic boundary value problem defined in Section~\ref{appendix:A.1}  for which  the elasticity matrix $[\underline\bfA^\pexp]$ is the one of a homogeneous and isotropic elastic material with a Young modulus $9\times 10^9\, N/m^2$ and a Poisson coefficient $\nu=0.15$. The level of statistical fluctuations of the random field $\{\bfG^\pexp(\bfomega),\bfomega\in\RR^3\}$ is $\delta_G^\pexp = 0.3$. In order to analyze the convergence with respect to $N_r$, we have considered, in consistency with the values of $N_d$, the intervals $N_r \in [50\, , N_\pexp]$ with
$N_\pexp\in\{100,200,300,400\}$.
%
%
\section*{Conflict of interest}

The author declares that he has no conflict of interest.



%

\end{document}